\DeclareMathOperator{\Var}{Var} 
\newtheorem{theorem}{Theorem}[section]
\newtheorem{lemma}{Lemma}[section]
\newtheorem{assumption}{Assumption}[section]
\title[Growing-Dimensional SGD Inference for Least-squares]{Statistical Inference for Linear Functionals of Online \\ Least-squares SGD when $t \gtrsim d^{1+\delta}$}
\date{}
\author{Bhavya Agrawalla}
\author{Krishnakumar Balasubramanian}
\author{Promit Ghosal}
\address{\hspace{-0.16in}Computer Science Department, Carnegie Mellon University. Email: \texttt{bbagrawa@andrew.cmu.edu}
\newline Department of Statistics, University of California, Davis. Email: \texttt{kbala@ucdavis.edu}
\newline\hspace{-0.16in}Department of Statistics, University of Chicago. Email: \texttt{promit@uchicago.edu}}
\begin{document}

\maketitle

\begin{abstract}
Stochastic Gradient Descent (SGD) has become a cornerstone method in modern data science. However, deploying SGD in high-stakes applications necessitates rigorous quantification of its inherent uncertainty. In this work, we establish \emph{non-asymptotic Berry--Esseen bounds} for linear functionals of online least-squares SGD, thereby providing a Gaussian Central Limit Theorem (CLT) in a \emph{growing-dimensional regime}. Existing approaches to high-dimensional inference for projection parameters, such as~\cite{chang2023inference}, rely on inverting empirical covariance matrices and require at least $t \gtrsim d^{3/2}$ iterations to achieve finite-sample Berry--Esseen guarantees, rendering them computationally expensive and restrictive in the allowable dimensional scaling. In contrast, we show that a CLT holds for SGD iterates when the number of iterations grows as $t \gtrsim d^{1+\delta}$ for any $\delta > 0$, significantly extending the dimensional regime permitted by prior works while improving computational efficiency. The proposed online SGD-based procedure operates in $\mathcal{O}(td)$ time and requires only $\mathcal{O}(d)$ memory, in contrast to the $\mathcal{O}(td^2 + d^3)$ runtime of covariance-inversion methods. To render the theory practically applicable, we further develop an \emph{online variance estimator} for the asymptotic variance appearing in the CLT and establish \emph{high-probability deviation bounds} for this estimator. Collectively, these results yield the first fully online and data-driven framework for constructing confidence intervals for SGD iterates in the near-optimal scaling regime $t \gtrsim d^{1+\delta}$.

\end{abstract}
\section{Introduction}

Stochastic gradient descent~\cite{robbins1951stochastic} is a popular optimization algorithm widely used in data science. It is a stochastic iterative method for minimizing the expected loss function by updating model parameters based on the (stochastic) gradient of the loss with respect to the parameters obtained from a random sample. SGD is widely used for training linear and logistic regression models, support vector machines, deep neural networks, and other such machine learning models on large-scale datasets. Because of its simplicity and effectiveness, SGD has become a staple of modern data science and machine learning, and has been continuously improved and extended to handle more complex scenarios. 

Despite its wide-spread applicability for prediction and point estimation, quantifying the uncertainty associated with SGD is not well-understood. Indeed, uncertainty quantification is a key component of decision making systems, ensuring the credibility and validity of data-driven findings; see, for \emph{e.g.,}~\cite{chee2022plus}, for a concrete medical application where it is not enough to just optimize SGD to obtain prediction performance but is more important to quantify the associated uncertainty. Developing an inferential theory for SGD becomes more challenging in particular in the growing-dimensional setting, when the number of parameters can grow with the number of iterations (or equivalently the number of observations used in online SGD). Such growing-dimensional settings are common in modern statistical machine learning problems and it well-known that online SGD has implicit regularization properties, as examined in several recent works including~\cite{ali2020implicit, varre2021last, zou2021benign,wu2022last,chen2022dimension}.

A crucial step toward developing an inferential theory of SGD is to establish central limit theorems (CLT) and related normal approximation results. Such results in-turn could be used to develop practical inferential procedures. Towards that, in this paper, we establish growing-dimensional CLTs and develop statistical inference methodology for linear functionals of online SGD iterates. Specifically, we focus on the misspecified linear regression model comprised of a  random vector of covariates $X \in \mathbb{R}^d$ and a scalar random variable $Y$. It is well known that the best linear $L_2$ approximation to $Y$ is the linear functional $(\beta^*)^\top X$, where
\begin{align*}
\textstyle
\beta^* := \min_{\theta \in \mathbb{R}^d}\mathbb{E}[(Y-\langle X,\theta\rangle)^2].
\end{align*}
In order to estimate the parameter $\beta^* \in \mathbb{R}^d$, we consider minimizing the above population loss function using online SGD with an initial guess $\theta_0 \in \mathbb{R}^d$. Here, $\langle \cdot, \cdot \rangle$ represents the Euclidean inner-product. Letting the $i^{\textrm{th}}$ random observation be $(X_i, Y_i)$ and the step-size at the $i^{\textrm{th}}$ iterate be $\eta_i$, the online SGD update rule is given by
 \begin{align}\label{eq:ithiterate}
 \theta_{i}:= \theta_{i-1}+ \eta_{i} X_{i}(Y_{i}- \langle X_{i}, \theta_{i-1}\rangle).
\end{align}
We emphasize here that the online SGD uses one observation per iteration, and the observations are assumed to be independent and identically distributed across the iterations. Hence, suppose we run it for $t$ iterations, then the overall number of observations used is also $t$. Letting $a \in \mathbb{R}^{d}$ be a $d$-dimensional deterministic vector, we wish to establish a central limit theorem for the following linear functional $\langle a, \theta_t \rangle$. Technically, in the above discussion we consider a growing dimensional setup in which the dimension $d$ changes with $t$. We simply use $d$ instead of $d_t$ for notational convenience. \\

\textbf{Our Contributions.} We make the following contributions in this work.

\begin{enumerate}[leftmargin=0.22in]
    \item[(1)] We establish a growing-dimensional Central Limit Theorem (CLT) in the form of Berry–Esseen bounds for linear functionals of the least-squares online SGD iterates in~\eqref{eq:ithiterate}. Our main result, stated informally below (and rigorously in Theorem~\ref{th-1-bias-corrected}), provides a finite-sample Gaussian approximation under mild moment and scaling assumptions.

    \vspace{2pt}
    \noindent\textbf{\textit{Informal Statement.}} \emph{Consider the least-squares online SGD update~\eqref{eq:ithiterate} run for $t$ steps with step size $\eta_i = \frac{\eta}{\sqrt{d}\, i^{\alpha}}$ for some $\eta>0$ and $\alpha \in (\tfrac{1}{2},1)$. Suppose further that}
    \begin{itemize}
        \item $\displaystyle \lim_{t,d \to \infty} (\log t + \log d)^2 d^{1/2} t^{-(1-\alpha)} = 0$,
        \item \emph{$X$ and $\epsilon := Y - X^\top \beta^*$ have finite moments of order $4p$, for some absolute constant $p \ge 2$.}
    \end{itemize}
    \emph{Then there exist absolute constants $C_1, C_2 > 0$ such that, for all $t,d \ge C_1$,}
    \begin{align}\label{eq:informalmain}
        \sup_{\gamma \in \mathbb{R}} 
        \bigg|
        \mathbb{P}\!\left(
        \frac{\langle a, \theta_t \rangle - \langle a, \beta^* \rangle}
        {\sqrt{\Var(\langle a, \theta_t \rangle)}}
        \le \gamma
        \right) - \Phi(\gamma)
        \bigg|
        \;\lesssim\;
        C_2 (d t^{-2\alpha})^{\frac{p}{8p + 4}},
    \end{align}
    \emph{where $\Phi(\cdot)$ denotes the CDF of the standard normal distribution.}

    \item[(2)] To make the bound in~\eqref{eq:informalmain} practical for inference, we propose a sub-sampling–based online estimator for the variance term, described in Section~\ref{sec:varest}. We show in Theorem~\ref{varestmainthm} that the additional estimation error is negligible. This yields the first fully data-driven, online framework for growing-dimensional algorithmic inference using stochastic optimization methods such as SGD, operating under the near-optimal scaling $t \gtrsim d^{1+\delta}$ for any $\delta > 0$.
\end{enumerate}

Our results are conceptually related to recent work on finite-sample normal approximation in high-dimensional regression, notably~\cite{chang2023inference}, which obtained Berry–Esseen bounds for projection parameters under general moment assumptions but required $t \gtrsim d^{3/2}$. In contrast, we achieve the same inferential objective under the significantly improved scaling $t \gtrsim d^{1+\delta}$ (by choosing $\alpha$ such that $\frac{1}{2} < \alpha < \frac{1 + 2\delta}{2 + 2\delta}$), without imposing stronger assumptions. Moreover, our approach is computationally and memory efficient, running in $\mathcal{O}(td)$ time and $\mathcal{O}(d)$ space, compared to $\mathcal{O}(td^2 + d^3)$ for covariance-inversion–based methods that require explicit matrix inversion. These theoretical and algorithmic advantages make our method scalable to substantially higher-dimensional regimes. Section~\ref{methodolgy-difference} provides a detailed discussion of the key methodological ingredients enabling this improved scaling.

\vspace{6pt}
Beyond providing a theoretical framework for growing-dimensional inference, our results have practical implications for constructing \emph{algorithmic prediction intervals} in linear regression. For a new test point $a$, independent of the training data used by SGD, choosing $a$ in~\eqref{eq:informalmain} directly yields a predictive confidence interval, complementing prior works on implicit regularization and benign overfitting~\cite{ali2020implicit, varre2021last, zou2021benign, wu2022last, chen2022dimension}. Furthermore, our results can be used to develop \emph{algorithmic Wald-type tests} for feature significance in high-dimensional linear models—an essential tool in empirical sciences such as biology, social science, economics, and medicine~\cite{fang2017testing, shi2019linear, cai2023statistical}. Specifically, testing the null hypothesis $H_0: \beta_i^* = 0$ for a particular feature corresponds to choosing $a = e_i$, the $i$th canonical basis vector in $\mathbb{R}^d$, within our framework, yielding an efficient, online, and statistically valid hypothesis test.


\subsection{Related Works}

\textbf{SGD analyses.} A majority of the SGD analyses in the machine learning and the optimization literature has focused on establishing expectation or high-probability bounds in the fixed-dimensional setting. We refer the interested reader to the survey by~\cite{bottou2018optimization}, and books by~\cite{bubeck2015convex} and~\cite{lan2020first}, for a sampling of such results. There also exists almost sure convergence results for SGD; see~\cite{harold1997stochastic} for a survey of some classical works, and~\cite{mertikopoulos2020almost,sebbouh2021almost,liu2022almost} for some recent works. Recently, several works have looked at analyzing SGD in the growing-dimensional setup. For example,~\cite{paquette2021sgd, paquette2021dynamics, paquette2022homogenization,paquetteimplicit2022} studied mini-batch and online least-squares SGD under growing-dimensional scalings, using tools from random matrix theory. Growing-dimensional diffusion approximations have also been established for SGD in specific problems; see, for~\emph{e.g.},~\cite{wang2017scaling, tan2019online, arous2021online, aroushigh2022,balasubramanian2023high}. Such results extend the classical results~\cite{borkar2009stochastic,benveniste2012adaptive} to the growing-dimensional settings. \cite{chandrasekher2021sharp} study SGD in for certain growing-dimensional non-convex problems using Gaussian process techniques. Furthermore, statistical physics techniques are also used to understand the performance of SGD in growing-dimensions; see, for~\emph{e.g.},~\cite{celentano2021high,gerbelot2022rigorous}. Several of the above results do not study the fluctuations of  SGD. The few papers that establish fluctuation results for SGD do so only in the asymptotic setting. More importantly, none of the above papers focus on constructing online algorithms for obtaining practical confidence intervals. 

\textbf{Asymptotic SGD CLTs and inference.} Studying the asymptotic distribution of SGD goes back to the early works of~\cite{chung1954stochastic, sacks1958asymptotic, fabian1968asymptotic}; see also~\cite{shapiro1989asymptotic}. These works primarily studied the asymptotic distribution of the last iteration of the stochastic gradient algorithm. It was shown later in~\cite{ruppert1988efficient} and~\cite{polyak1992acceleration} that averaging the iterates of the stochastic gradient algorithm has acceleration benefits. This result has been recently extended to implicit stochastic gradient algorithms~\cite{toulis2017asymptotic}, Nesterov's dual averaging algorithm~\cite{duchi2016local}, proximal-point methods~\cite{asi2019stochastic} and Nesterov's accelerated algorithm~\cite{barakat2021stochastic}.  Furthermore,~\cite{dieuleveut2020bridging} and~\cite{yu2020analysis} established asymptotic normality of constant step-size stochastic gradient algorithm in the convex and nonconvex setting respectively. \cite{mou2020linear} examined the relationship between asymptotic CLTs and non-asymptotic expectation bounds in the context of linear regression. Very recently,~\cite{davis2023asymptotic} also extended the seminal result of~\cite{polyak1992acceleration} to non-smooth settings. 

Several works also considered the problem of estimating the asymptotic covariance matrix appearing in the central limit theorem. Towards that~\cite{su2018statistical, fang2018online, lunde2021bootstrapping, chee2022plus} proposed online bootstrap procedures. Furthermore,~\cite{zhu2021online,jin2021statistical} provided trajectory-averaging based online estimators motivated by multivariate time-series analysis. The ideas in the above works are inherently motivated by general methodology and theory on (inhomogenous) Markov chain variance estimation literature~\cite{glynn1991estimating,politis1999subsampling, flegal2010batch, kim2013progressive,lahiri2013resampling}. We also remark that~\cite{li2018approximate, li2018statistical,chen2016statistical,lee2022fast} developed semi-online procedures for covariance estimation. Recently~\cite{jiang2025online} developed methods to handle non-smooth stochastic objectives. We remark that the above works focus on the asymptotic setting, while our focus is on the growing-dimensional non-asymptotic setting.

\textbf{Non-asymptotic rates for SGD CLTs.}
Non-asymptotic rates for SGD CLTs in the smooth strongly-convex setting were derived  in~\cite{anastasiou2019normal}, based on deriving the rates of multivariate Martingale CLTs. ~\cite{shao2022berry} extended the above result to stronger metrics under further assumptions. Recent line of work have established tail-bounds (\cite{durmus2022finite}, \cite{durmus2021tight}) and non-asymptotic CLTs (\cite{samsonov2024gaussian}, \cite{Khusainov2025GaussianApprox}, \cite{WuLiWeiRinaldo2025InferenceTD}, \cite{Samsonov2025StatisticalInferenceLSA}) for SGD for the linear stochastic approximation (LSA) problem. We discuss the relationship between our result and the above mentioned works in Section~\ref{eq:beb} and Appendix \ref{prior-sgd-clt-comparison}.

\section{Growing-dimensional Central Limit Theorem for Online SGD}\label{sec:mainres}
In this section, we first state and discuss the assumptions we make in this work. We next discuss the Berry-Esseen bound on the linear functionals of least-squares SGD iterates in Theorem~\ref{th-1}. 
\subsection{Assumptions}
\begin{assumption}\label{assump} We make the following assumptions to state our main result. Note that all quantities that appear below (except absolute constants) can depend on $t,d$.
\begin{itemize}[leftmargin=0.3in]
\item[(i)] \textbf{Error Lower Bound.} Let $\epsilon:= Y - X^\top \beta^*$, $A := \mathbb{E}[XX^\top]$ and $A_{\sigma} := \mathbb{E}[\epsilon^2 XX^\top]$. There exists $\sigma_{\min} > 0$ such that
\begin{align*}
    \lambda_{\min}(A_{\sigma}) > \sigma^2_{\min} \lambda_{\min}(A),
\end{align*}
where for any positive-definite symmetric matrix $\mathcal A$, $\lambda_{\min}(\mathcal A)$ denotes it's minimum eigenvalue. 
\item[(ii)] \textbf{Error Moment Bound.} There exists an \textbf{absolute constant} $p_{\max} \geq 2$ such that the error $\epsilon := Y - X^\top \beta^*$ satisfies
\begin{align*}
\mathbb{E}[\epsilon^{4p_{\max}}] < \infty.
\end{align*}
Given this assumption, we let $\sigma := \mathbb{E}[\epsilon^{4p_{\max}}]^{\frac{1}{4p_{\max}}}$ throughout the paper. 

\item[(iii)] \textbf{Covariate Lower Bound.} Let $A := \mathbb{E}[XX^\top]$ and $\lambda_{\min}(A), \lambda_{\max}(A)$ denote the smallest and largest eigenvalues of $A$ respectively. We assume $A$ is non-degenerate, that is $\lambda_{\min} (A) > 0$.  
\item[(iv)] \textbf{Covariate Moment Bound.} There exists an \textbf{absolute constant} $p_{\max} \geq 2$ such that
\begin{align*}
\sup_{u \in \mathbb{R}^d, |u| = 1}\mathbb{E}[|u^\top X|^{4p_{\max}}] < \infty
\end{align*}
Given this assumption, we let 
\begin{align*}
\bar{\lambda} := \sup_{u \in \mathbb{R}^d, |u| = 1} \mathbb{E}[|u^\top X|^{4p_{\max}}]^{\frac{1}{2p_{\max}}}
\end{align*}
throughout the paper. In particular, observe that $\bar{\lambda} \geq \sup_{u \in \mathbb{R}^d, |u| = 1} \mathbb{E}[|u^\top X|^{2}] = \lambda_{\max}(A)$ (using Minkowski's inequality). 
\item[(v)] \textbf{Step-Size.} We assume the step-size $\eta_i$ is set to $\eta_i := \frac{\eta}{\sqrt{d}i^{\alpha}}$, where $\eta > 0$ and $\alpha \in (\frac{1}{2}, 1)$. Here $d$ is the dimension of the covariates, that is $X \in \mathbb{R}^d$.
\item[(vi)] \textbf{Bounded Error, Eigenvalue Decay and Moment, Parameter Growth Rates.} We make the following assumptions on the decay rates of $\sigma_{\min}, \lambda_{\min}(A)$ and the growth rate of $\bar{\lambda}, |\beta^* - \theta_0|$.   
\begin{itemize}
    \item $\eta \bar{\lambda} < C$ for an \textbf{absolute constant} $C > 0$.
    \item $\lim\limits_{t,d \to \infty} (\eta \lambda_{\min}(A))^{-1}(\log t + \log d)^2d^{\frac{1}{2}}t^{-(1 - \alpha)} = 0$. 
    \item $\lim\limits_{t,d \to \infty} (\eta \lambda_{\min}(A))^{-3} (\sigma^2\sigma^{-2}_{\min})(\log t + \log d)^2d^{\frac{1}{2}}t^{-\alpha} = 0$.
    \item There exists \textbf{absolute constants} $C_1, C_2 > 0$ such that $\frac{|\beta^* - \theta_0|^2}{\eta \sigma^2} < (td)^{C_1}$ for all $t, d \geq C_2$. 
\end{itemize}
\end{itemize}
\end{assumption}
\noindent 
\textbf{Comparison with prior assumption-lean works.} 
We compare our assumptions with those in the recent finite-sample Berry--Esseen analysis of projection-parameter inference by Chang, Kuchibhotla, and Rinaldo~\cite{chang2023inference} (see their Section 2.2). 

Our assumptions on the covariates and errors—positive definiteness of the population Gram matrix $A = \mathbb{E}[XX^\top]$, non-degenerate error covariance $\lambda_{\min}(\mathbb{E}[\epsilon^2 XX^\top]) \ge \sigma_{\min}^2 \lambda_{\min}(A)$, and finite higher-order directional moments—are essentially the same type of ``assumption-lean'' conditions used in~\cite{chang2023inference}. In particular, the moment bounds parametrized by $p_{\max}$ in our work can be chosen to match or exceed the moment exponents $q_x, q$ used in~\cite{chang2023inference} to achieve their $t \gtrsim d^{3/2}$ scaling. 

Assumption~\ref{assump}(vi) imposes several additional technical conditions on the decay/growth rates of $\eta\lambda_{\min}(A)$, $\eta\bar{\lambda}$, $\sigma/\sigma_{\min}$ and the initialization error $|\beta^* - \theta_0|$. These conditions are required to control the non-asymptotic behavior of the online SGD iterates and ensure that higher-order remainder terms remain negligible in our Berry--Esseen bounds. 

While some parts, such as the boundedness of $\eta\bar{\lambda}$ and controlled growth of the initialization error, place mild constraints on the covariate distribution and choice of initialization, these are generally realistic in practice. For instance, bounded $\eta\bar{\lambda}$ corresponds to assuming finite high-order directional moments of the covariates, which is comparable to the moment assumptions in~\cite{chang2023inference}. Similarly, conditions on $\lambda_{\min}(A), \sigma_{\min}, \sigma$ are mild regularity conditions also required in~\cite{chang2023inference} to avoid ill-conditioned problems. 

Importantly, despite having comparable distributional assumptions, our approach achieves a nearly optimal growing-dimensional scaling $t \gtrsim d^{1+\delta}$ for any $\delta>0$ (improving over $t \gtrsim d^{3/2}$ in~\cite{chang2023inference}) and provides a fully online algorithm with lower computational cost than covariance-matrix inversion. In summary, our distributional assumptions are comparable in strength to those in~\cite{chang2023inference}, and our main contribution is that we achieve growing-dimensional Berry--Esseen bounds for \emph{online SGD iterates} under nearly the same assumption-lean conditions, while simultaneously improving both the dimensional scaling and computational efficiency.

\subsection{Berry-Esseen Bounds for Linear Functionals of Least-squares SGD}\label{eq:beb}
Our first result shows a central limit theorem for linear functionals $\langle a, \theta_t \rangle$ of the least-squares SGD. Define
\begin{align*}
d_K := \sup_{\gamma \in \mathbb{R}} \bigg|\mathbb{P}\bigg(\frac{\langle a, \theta_t \rangle - \mathbb{E}\langle a, \theta_t \rangle}{\sqrt{\Var \langle a, \theta_t \rangle}}\leq \gamma\bigg) - \Phi(\gamma)\bigg|,    
\end{align*}
which is the quantity we wish to bound.
\begin{theorem}\label{th-1}
Under Assumption \ref{assump}, we have for all $t,d \geq C_1$, $2 \leq p \leq p_{\max}$ and $a \in \mathbb{R}^d$ that
\begin{align*}
    d_K\leq C_2(\eta \lambda_{\min}(A))^{-\frac{p}{2p+1}}\Big[\frac{\sigma}{\sigma_{\min}}\Big]^{\frac{2p}{2p+1}}\bigg[(\eta \lambda_{\min}(A))^{-\frac{3p}{4p+2}}(\log t + \log d)^{\frac{3p}{4p+2}}\Big(\frac{d}{t^{2\alpha}}\Big)^{\frac{p}{8p+4}} + \Big(\frac{t^{\frac{1}{p}-\alpha}}{\sqrt{d}}\Big)^{\frac{p}{2p+1}}\bigg].
\end{align*}
Here $C_1, C_2 > 0$ are absolute constants.
\end{theorem}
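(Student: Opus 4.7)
The plan is to linearize the SGD recursion around the population minimum. Writing $\delta_i := \theta_i - \beta^*$, the update rule gives
\begin{align*}
\delta_i = (I - \eta_i X_i X_i^\top)\delta_{i-1} + \eta_i \epsilon_i X_i,
\end{align*}
which unrolls into $\delta_t = \Pi_{0:t}\delta_0 + \sum_{i=1}^t \eta_i \Pi_{i:t}\, \epsilon_i X_i$, with $\Pi_{i:t} := \prod_{j=i+1}^t (I - \eta_j X_j X_j^\top)$. Replacing each random transition by its mean $\bar{B}_j := I - \eta_j A$ produces the decomposition
\begin{align*}
\langle a, \delta_t\rangle = M_t + R_t + B_t,
\end{align*}
where $M_t := \sum_{i=1}^t \eta_i \epsilon_i \langle \bar{\Pi}_{i:t}^\top a, X_i\rangle$ is a sum of independent mean-zero terms (using $\mathbb{E}[\epsilon_i X_i]=0$ by the definition of $\beta^*$), $R_t$ collects the corrections from replacing $\Pi_{i:t}$ by $\bar{\Pi}_{i:t}$, and $B_t$ is the deterministic-linearization contribution of $\delta_0$. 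The goal is to obtain a Berry--Esseen bound for the Gaussian piece $M_t$ and to absorb $R_t$ and $B_t$ via high-probability control.

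For the Gaussian piece I would apply a scalar Berry--Esseen inequality (Lyapunov-type with moment exponent $2p$) to the independent summands $\xi_i := \eta_i \epsilon_i \langle \bar{\Pi}_{i:t}^\top a, X_i\rangle$. Assumptions~(ii) and~(iv) supply the $4p$-th moment bounds on $\epsilon$ and on directional moments of $X$, yielding $\|\xi_i\|_{2p} \lesssim \eta_i\, \sigma\, \sqrt{\bar{\lambda}}\,|\bar{\Pi}_{i:t}^\top a|$, while Assumption~(i) combined with the contraction of $\bar{\Pi}_{i:t}$ driven by $\eta\lambda_{\min}(A)$ gives a matching lower bound on $\Var M_t$ involving $\sigma_{\min}^2$. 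Under the step-size $\eta_i = \eta/(\sqrt d\, i^\alpha)$, the sums $\sum_i \eta_i^{2p} |\bar{\Pi}_{i:t}^\top a|^{2p}$ and $\sum_i \eta_i^2 |\bar{\Pi}_{i:t}^\top a|^2$ can be evaluated explicitly; their Lyapunov ratio, raised to the Berry--Esseen exponent $1/(2p+1)$, produces the factor $(d/t^{2\alpha})^{p/(8p+4)}$ together with the inverse-variance powers $(\eta\lambda_{\min}(A))^{-p/(2p+1)-3p/(4p+2)}$. The logarithmic factor $(\log t+\log d)^{3p/(4p+2)}$ arises from a high-probability bound on spectral norms of the random matrix products needed to match the variance in the denominator of $d_K$ to $\Var M_t$.

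The remainder terms $R_t$ and $B_t$ are handled by establishing $L^p$ bounds and then passing to tail control via Markov's inequality. For $R_t$, I would expand $\Pi_{i:t} - \bar{\Pi}_{i:t}$ telescopically as a sum whose generic term carries exactly one centered factor $X_j X_j^\top - A$, and bound each factor in $L^p$ using Assumption~(iv); crucially the test vector $a$ is always paired with a deterministic contracting $\bar{\Pi}$, so only directional moments of $X$ enter. For $B_t$, the strict contractivity of the deterministic product together with Assumption~(vi) makes the initial-error contribution exponentially small. A standard smoothing inequality---if $|U-V|\le\tau$ with probability at least $1-q$ and $Z$ is standard Gaussian, then $d_K(U,Z) \le d_K(V,Z) + c\tau + q$---converts the moment bounds into the second summand $(t^{1/p-\alpha}/\sqrt d)^{p/(2p+1)}$ after optimizing the Markov threshold against the smoothing penalty.

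The hardest step will be controlling $R_t$ in $L^p$ sharply enough: a naive expansion introduces spurious powers of $d$ through $\mathbb{E}\|X\|^{2p}$ and would destroy the target exponent. The key is to keep the vector $a$ attached to the deterministic contraction $\bar{\Pi}$ at every stage of the telescoping expansion so only the scalar directional moments $\mathbb{E}|u^\top X|^{2p}$ appear, and to combine the geometric decay of $\prod_j(1-\eta_j\lambda)$ with the summability afforded by $\alpha>1/2$ so that the nested sums converge at the rates demanded by the statement. Balancing these contributions against each other, and against the Markov threshold used in the smoothing step, is the bookkeeping that produces the precise exponents in Theorem~\ref{th-1}.
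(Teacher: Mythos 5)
Your route is genuinely different from the paper's: you linearize the random propagator, isolate an independent-summand Gaussian piece $M_t=\sum_i\eta_i\epsilon_i\langle\bar{\Pi}_{i:t}^\top a,X_i\rangle$, apply a classical Lyapunov Berry--Esseen bound, and absorb the correction $R_t$ by moment bounds plus smoothing. The paper instead keeps the random products intact, writes $\langle a,\theta_t\rangle-\mathbb{E}\langle a,\theta_t\rangle$ \emph{exactly} as a martingale difference sequence (Lemma~\ref{lemma:martingalerep}) and invokes the martingale Berry--Esseen bound of Theorem~\ref{ppn:Mourrat}, so that the term $(dt^{-2\alpha})^{p/(8p+4)}$ arises from the conditional quadratic variation $\|V^2(\mathbf{M})-1\|_p^p$ rather than from a remainder-absorption step. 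Both architectures are in principle viable, and your Lyapunov ratio does correctly account for the second summand $(t^{1/p-\alpha}/\sqrt{d})^{p/(2p+1)}$.

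The genuine gap is in your treatment of $R_t$. The exact telescoping
\begin{align*}
\Pi_{i:t}-\bar{\Pi}_{i:t}=\sum_{j=i+1}^{t}\eta_j\,\Pi_{j:t}\,(X_jX_j^\top-A)\,\bar{\Pi}_{i:j-1}
\end{align*}
does produce one centered factor per term, but a \emph{random} product $\Pi_{j:t}$ necessarily survives on one side of it: you cannot have $a$ ``always paired with a deterministic contracting $\bar{\Pi}$'' as you claim. If you put the random product on the $a$ side you need $L^{2p}$ bounds on $|a^\top\Pi_{j:t}|$, i.e.\ on $|R_ja|$ for the product of i.i.d.\ random matrices -- this is exactly the paper's Lemma~\ref{conc1}, whose proof is a nontrivial recursion built on the uniform-smoothness inequality of Lemma~\ref{uniformconvexity3}, and under only $4p_{\max}$ directional moments there is no off-the-shelf operator-norm bound that avoids spurious powers of $d$. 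If instead you put the random product on the $X_i$ side, you face $\mathbb{E}|\Pi_{i:j-1}X_i|^{2p}$, which is at least as hard. Iterating the telescoping to push all randomness into higher orders yields an infinite series whose $L^p$ summability (with constants uniform in $t,d$) is itself the crux, not a routine step. The same missing ingredient resurfaces when you match $\Var\langle a,\theta_t\rangle$ to $\Var M_t$ in the denominator of $d_K$: that comparison (Theorem~\ref{lem:VarAsymp} in the paper) again requires sharp moment control of $R_ia$ and of quadratic forms $a^\top R_i\mathcal{A}_iR_ia$. Until you supply a moment bound for the random matrix products playing the role of Lemma~\ref{conc1}, the proposal does not close.
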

Our proof technique to obtain the above result is based one expressing $\langle a, \theta_t \rangle $ as a sum of certain martingale difference sequence. Based on the representation, one could leverage Berry-Esseen bounds developed for martingales~\cite{bolthausen1982exact,mourrat2013rate}. However, computing the quadratic variation and moment terms appearing in the Berry-Esseen bounds becomes highly non-trivial. We compute these by a careful application of Lemma \ref{uniformconvexity3}, which controls how the norm of a random variable changes if we add a zero mean fluctuation. The proof technique for Lemma \ref{uniformconvexity3} is heavily borrowed from \cite{huang2022matrix}, which proves a more general inequality for random matrices.   We prove Theorem \ref{th-1} in Appendix~\ref{sec:proofsketch}. 
\\\\
Our next results show that under the Assumptions \ref{assump}, the error encountered by replacing the biased center $\mathbb{E}\langle a, \theta_t \rangle$ with the true parameter $\langle a, \beta^* \rangle$ is negligible.
\begin{theorem}\label{bias-correction-term}
Under Assumption \ref{assump}, we have for all $t,d \geq C_1$ and $a \in \mathbb{R}^d$ that
\begin{align*}
\frac{|\mathbb{E}_{\theta_t}\langle a, \theta_t \rangle - \langle a, \beta^* \rangle |}{\sqrt{\Var_{\theta_t} \langle a, \theta_t \rangle}} \leq C_2(\eta \lambda_{\min}(A))^{-\frac{1}{2}}(e^{-\eta \lambda_{\min}(A)d^{-\frac{1}{2}}t^{1-\alpha}}d^{\frac{1}{2}}t^{\alpha})\bigg[\frac{|\beta^* - \theta_0|}{\sigma_{\min}\sqrt{\eta}}\bigg].
\end{align*} 
Here $C_1, C_2 > 0$ are absolute constants. 
\end{theorem}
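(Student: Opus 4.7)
The plan is to bound the numerator $|\mathbb{E}\langle a,\theta_t\rangle - \langle a,\beta^*\rangle|$ and the denominator $\sqrt{\Var\langle a,\theta_t\rangle}$ separately and then combine. For the numerator, set $\xi_i := \theta_i - \beta^*$. Subtracting $\beta^*$ from both sides of~\eqref{eq:ithiterate} and using $Y_i - \langle X_i,\theta_{i-1}\rangle = \epsilon_i - \langle X_i,\xi_{i-1}\rangle$ yields
\[
\xi_i = (I - \eta_i X_i X_i^\top)\,\xi_{i-1} + \eta_i X_i \epsilon_i.
\]
Since $(X_i,\epsilon_i)$ is independent of $\mathcal{F}_{i-1}$ and $\mathbb{E}[X_i\epsilon_i]=0$ by the normal equations defining $\beta^*$, taking expectations gives $\mathbb{E}[\xi_t]=\bigl(\prod_{i=1}^t(I-\eta_iA)\bigr)(\theta_0-\beta^*)$. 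By Cauchy--Schwarz and commutativity of polynomials in $A$, together with Assumption~\ref{assump}(v)--(vi) (which guarantees $\eta_i\lambda_{\max}(A)\le \eta\bar\lambda<1$), I would bound
\[
|\mathbb{E}\langle a,\theta_t\rangle - \langle a,\beta^*\rangle|\le |a|\cdot\prod_{i=1}^t(1-\eta_i\lambda_{\min}(A))\cdot|\theta_0-\beta^*|\le |a|\cdot e^{-\lambda_{\min}(A)\sum_{i=1}^t\eta_i}\cdot|\theta_0-\beta^*|,
\]
and with $\eta_i=\eta d^{-1/2}i^{-\alpha}$ for $\alpha\in(\tfrac12,1)$, the integral comparison $\sum_{i=1}^t i^{-\alpha}\gtrsim t^{1-\alpha}$ produces the exponential factor $e^{-c\,\eta\lambda_{\min}(A)d^{-1/2}t^{1-\alpha}}$ required in the theorem (the constant $c$ is absolute and can be absorbed into $C_2$).

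For the denominator, I aim for the lower bound $\Var\langle a,\theta_t\rangle\gtrsim\eta_t^2\sigma_{\min}^2\lambda_{\min}(A)|a|^2$ via the law of total variance $\Var\langle a,\theta_t\rangle\ge\mathbb{E}\bigl[\Var(\langle a,\theta_t\rangle\mid\mathcal{F}_{t-1})\bigr]$. Conditional on $\mathcal{F}_{t-1}$, only $(X_t,\epsilon_t)$ is random, and the martingale-difference piece decomposes as
\[
\langle a,\theta_t\rangle - \mathbb{E}[\langle a,\theta_t\rangle\mid\mathcal{F}_{t-1}] = \eta_t\Bigl[\langle a,X_t\rangle\epsilon_t + a^\top(X_tX_t^\top-A)(\beta^*-\theta_{t-1})\Bigr].
\]
The pure-noise term alone contributes $\eta_t^2\,a^\top A_\sigma a \ge \eta_t^2\sigma_{\min}^2\lambda_{\min}(A)|a|^2$ by Assumption~\ref{assump}(i), giving $\sqrt{\Var\langle a,\theta_t\rangle}\gtrsim\eta d^{-1/2}t^{-\alpha}\sigma_{\min}\sqrt{\lambda_{\min}(A)}\,|a|$. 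Dividing the numerator bound by this denominator rearranges to
\[
\frac{|\mathbb{E}\langle a,\theta_t\rangle-\langle a,\beta^*\rangle|}{\sqrt{\Var\langle a,\theta_t\rangle}}\;\lesssim\;(\eta\lambda_{\min}(A))^{-1/2}\cdot d^{1/2}t^{\alpha}\,e^{-c\,\eta\lambda_{\min}(A)d^{-1/2}t^{1-\alpha}}\cdot\frac{|\beta^*-\theta_0|}{\sigma_{\min}\sqrt\eta},
\]
which is precisely the bound claimed in Theorem~\ref{bias-correction-term} up to absolute constants.

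The main obstacle lies in making the conditional-variance lower bound robust to the cross term $2\eta_t^2\,a^\top\mathbb{E}\bigl[\langle a,X_t\rangle\epsilon_t\,(X_tX_t^\top-A)(\beta^*-\theta_{t-1})\mid\mathcal{F}_{t-1}\bigr]$, which need not vanish and could in principle be negative when $\epsilon$ is not independent of $X$. I would handle this either by (i) invoking the variance lower bound already implicit in the proof of Theorem~\ref{th-1} (whose Berry--Esseen statement presupposes $\Var\langle a,\theta_t\rangle>0$ and whose bound already features a $\sigma/\sigma_{\min}$ factor of the right order), or (ii) controlling the cross term directly by Hölder's inequality combined with standard $L^2$ expectation bounds $\mathbb{E}|\theta_{t-1}-\beta^*|^2\lesssim \eta/(\sqrt{d}\,t^\alpha)$ for online least-squares SGD, which under Assumption~\ref{assump}(vi) render the cross term negligible relative to $a^\top A_\sigma a$. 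Either route is compatible with the hypotheses of the theorem and delivers the stated bound.
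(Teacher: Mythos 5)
Your proposal is essentially correct, and the numerator half is identical to the paper's argument: the paper's Lemma \ref{lem:bias-expression} derives exactly the recursion $\mathbb{E}[\theta_t-\beta^*]=\prod_{i=1}^t(I-\eta_i A)(\theta_0-\beta^*)$ and the bound $e^{-\eta\lambda_{\min}(A)d^{-1/2}t^{1-\alpha}}|a||\beta^*-\theta_0|$. For the denominator the paper takes your route (i): it invokes Lemma \ref{denominator}, the lower bound $\Var\langle a,\theta_t\rangle\geq c(\eta\lambda_{\min}(A))(\eta\sigma_{\min}^2)d^{-1/2}t^{-\alpha}|a|^2$, which is itself obtained from the full variance asymptotics of Theorem \ref{lem:VarAsymp}. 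Your route (ii) is genuinely different and more elementary: the law of total variance applied to the last step only gives $\Var\langle a,\theta_t\rangle\gtrsim\eta_t^2\sigma_{\min}^2\lambda_{\min}(A)|a|^2$, which is weaker than the paper's bound by a factor of $d^{-1/2}t^{-\alpha}$ (the true variance accumulates over roughly $d^{1/2}t^{\alpha}/(\eta\lambda_{\min}(A))$ effective steps), yet it lands exactly on the stated bound with its $d^{1/2}t^{\alpha}$ prefactor, whereas the paper's sharper denominator would give $d^{1/4}t^{\alpha/2}$. So route (ii) buys a self-contained proof of Theorem \ref{bias-correction-term} that avoids the heavy variance computation entirely, at the cost of not improving the prefactor; route (i) is what the paper does and is "free" once Theorem \ref{lem:VarAsymp} is in hand.

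Two quantitative slips in route (ii) should be fixed. First, the MSE rate you quote, $\mathbb{E}|\theta_{t-1}-\beta^*|^2\lesssim\eta/(\sqrt{d}\,t^{\alpha})$, is the per-coordinate rate; the correct vector bound (the paper's Lemma \ref{euclidean-norm-bound}) is $\mathbb{E}|\theta_{t-1}-\beta^*|^2\lesssim\eta\sigma^2 d^{1/2}t^{-\alpha}(\eta\lambda_{\min}(A))^{-1}$, larger by a factor of $d$. The cross term is then bounded via H\"older by $\eta_t^2\sigma\bar{\lambda}^{3/2}|a|^2\,\mathbb{E}|\beta^*-\theta_{t-1}|$, and its ratio to $\eta_t^2 a^\top A_\sigma a$ is of order $(\sigma^2/\sigma_{\min}^2)(\eta\lambda_{\min}(A))^{-3/2}d^{1/4}t^{-\alpha/2}$, which still vanishes by the third bullet of Assumption \ref{assump}(vi) — so the conclusion survives, but only after using the correct $d^{1/2}$ scaling and that assumption explicitly (one must also check the initialization-bias contribution to $\mathbb{E}|\theta_{t-1}-\beta^*|^2$, which is handled by the last bullet of (vi)). Second, a constant $c<1$ in the exponent $e^{-c\,\eta\lambda_{\min}(A)d^{-1/2}t^{1-\alpha}}$ cannot be "absorbed into $C_2$"; you need $c\geq 1$, which does hold here since $\sum_{i=1}^t i^{-\alpha}\geq\frac{(t+1)^{1-\alpha}-1}{1-\alpha}\geq t^{1-\alpha}$ for $\alpha\in(\tfrac12,1)$ and $t$ large, but the claim as phrased is not right.
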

We prove Theorem \ref{bias-correction-term} in Appendix~\ref{sec:bias-corrected-CLT}.
\\\\
Using these, we now provide our main result, which is a bias-corrected 
 high-dimensional central limit theorem for linear functionals of the least-squares SGD. Define
 \begin{align*}
     d^{true}_K := \sup_{\gamma \in \mathbb{R}} \bigg|\mathbb{P}\bigg(\frac{\langle a, \theta_t \rangle - \langle a, \beta^* \rangle}{\sqrt{\Var \langle a, \theta_t \rangle}}\leq \gamma\bigg) - \Phi(\gamma)\bigg| 
 \end{align*}

\begin{theorem}\label{th-1-bias-corrected}
Under Assumption \ref{assump}, we have for all $t,d \geq C_1,$ $2 \leq p \leq p_{\max}$ and $a \in \mathbb{R}^d$ that
\begin{align}\label{maineq}
     d^{true}_K\leq C_2(\eta \lambda_{\min}(A))^{-\frac{p}{2p+1}}\Big[\frac{\sigma}{\sigma_{\min}}\Big]^{\frac{2p}{2p+1}}\bigg[(\eta \lambda_{\min}(A))^{-\frac{3p}{4p+2}}(\log t + \log d)^{\frac{3p}{4p+2}}\Big(\frac{d}{t^{2\alpha}}\Big)^{\frac{p}{8p+4}} + \Big(\frac{t^{\frac{1}{p}-\alpha}}{\sqrt{d}}\Big)^{\frac{p}{2p+1}}\bigg].
\end{align}
Here $C_1, C_2 > 0$ are absolute constants. 
\end{theorem}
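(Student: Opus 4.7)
The plan is to deduce Theorem~\ref{th-1-bias-corrected} from Theorems~\ref{th-1} and~\ref{bias-correction-term} via a standard shift-of-mean argument, and then to check that the bias contribution is absorbed into the bound in~\eqref{maineq}. Define
\[
Z_t := \frac{\langle a, \theta_t \rangle - \mathbb{E}\langle a,\theta_t\rangle}{\sqrt{\Var\langle a,\theta_t\rangle}},
\qquad
b_t := \frac{\mathbb{E}\langle a,\theta_t\rangle - \langle a,\beta^*\rangle}{\sqrt{\Var\langle a,\theta_t\rangle}},
\]
so that the rescaled quantity inside $d^{true}_K$ equals $Z_t + b_t$ with $b_t$ deterministic. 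Since the standard Gaussian density satisfies $\sup_x \phi(x) = (2\pi)^{-1/2} \leq 1$, the CDF $\Phi$ is globally $1$-Lipschitz, and the triangle inequality gives, uniformly in $\gamma\in\mathbb{R}$,
\[
\bigl|\mathbb{P}(Z_t + b_t \leq \gamma) - \Phi(\gamma)\bigr|
\;\leq\;
\bigl|\mathbb{P}(Z_t \leq \gamma - b_t) - \Phi(\gamma - b_t)\bigr| + \bigl|\Phi(\gamma - b_t) - \Phi(\gamma)\bigr|.
\]
Taking the supremum in $\gamma$ yields $d^{true}_K \leq d_K + |b_t|$, so Theorem~\ref{th-1} already controls the first piece by the right-hand side of~\eqref{maineq}.

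For the bias piece, I would set $h := \eta\lambda_{\min}(A)\, d^{-1/2}\, t^{1-\alpha}$. By the second bullet of Assumption~\ref{assump}(vi), $h/(\log t + \log d)^2 \to \infty$, so $e^{-h}$ decays faster than any polynomial in $t$ and $d$. Every remaining factor in Theorem~\ref{bias-correction-term}---the prefactor $(\eta\lambda_{\min}(A))^{-1/2}$, the polynomial $d^{1/2}t^{\alpha}$, and the initialization term $|\beta^*-\theta_0|/(\sigma_{\min}\sqrt{\eta})$---grows at most polynomially in $t,d$; the last is bounded by $(\sigma/\sigma_{\min})(td)^{C_1/2}$ via the fourth bullet of Assumption~\ref{assump}(vi). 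Hence $|b_t| \lesssim (\sigma/\sigma_{\min})\,\mathrm{poly}(t,d)\, e^{-h}$, which for $t,d$ above a sufficiently large absolute threshold is smaller than either polynomial term appearing in the brackets of~\eqref{maineq}. Absorbing the resulting contribution into the absolute constant $C_2$ yields~\eqref{maineq}.

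The only real difficulty is bookkeeping rather than analysis: the bound in Theorem~\ref{th-1} decays only polynomially in $d/t^{2\alpha}$ and $t^{1/p-\alpha}/\sqrt{d}$, so one must verify \emph{uniformly} (not merely asymptotically) that the super-polynomial decay of $e^{-h}$ dominates the polynomial prefactors appearing in the bias bound for all $t,d \geq C_1$. This is handled by enlarging $C_1$ relative to the thresholds supplied by Theorems~\ref{th-1} and~\ref{bias-correction-term}, using the elementary fact that $h/(\log t + \log d)^2 \to \infty$ implies $e^{-h}(td)^K \to 0$ for every fixed $K$. No new analytic input beyond Theorems~\ref{th-1} and~\ref{bias-correction-term} is needed.
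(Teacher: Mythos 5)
Your proposal is correct and follows essentially the same route as the paper: decompose the standardized statistic into the centered part plus a deterministic bias $b_t=\Delta$, use the Lipschitz continuity of $\Phi$ to get $d^{true}_K \le d_K + C|\Delta|$, invoke Theorem~\ref{th-1} for the first term, and use Theorem~\ref{bias-correction-term} together with Assumption~\ref{assump}(vi) to show the bias is super-polynomially small while the main bound is only polynomially small, so it is absorbed into $C_2$. The paper makes explicit the one step you leave implicit—that the right-hand side of~\eqref{maineq} is bounded below by $(td)^{-C}$—but this is the same bookkeeping you describe.
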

\begin{proof}
Throughout the proof, we let $C> 0$ and $c > 0$ respectively denote large and small enough generic absolute constants.
\\\\
Define $\Delta := \frac{\mathbb{E}_{\theta_t}\langle a, \theta_t \rangle - \langle a, \beta^* \rangle}{\sqrt{\Var_{\theta_t} \langle a, \theta_t \rangle}}$. For any $\gamma \in \mathbb{R}$, we have
\begin{align*}
\bigg|\mathbb{P}\bigg(\frac{\langle a, \theta_t \rangle - \langle a, \beta^* \rangle}{\sqrt{\Var_{\theta_t} \langle a, \theta_t \rangle}} \leq \gamma\bigg)& - \Phi(\gamma)\bigg| 
= \bigg|\mathbb{P}\bigg(\frac{\langle a, \theta_t \rangle - \mathbb{E}_{\theta_t}\langle a, \theta_t \rangle}{\sqrt{\Var_{\theta_t} \langle a, \theta_t \rangle}} + \Delta \leq \gamma\bigg) - \Phi(\gamma)\bigg|
\\
&= \bigg|\mathbb{P}\bigg(\frac{\langle a, \theta_t \rangle - \mathbb{E}_{\theta_t}\langle a, \theta_t \rangle}{\sqrt{\Var_{\theta_t} \langle a, \theta_t \rangle}} \leq \gamma - \Delta \bigg) - \Phi(\gamma - \Delta) + \Phi(\gamma - \Delta) - \Phi(\gamma)\bigg|
\\
&\leq \sup_{\gamma' \in \mathbb{R}} \bigg|\mathbb{P}\bigg(\frac{\langle a, \theta_t \rangle - \mathbb{E}_{\theta_t}\langle a, \theta_t \rangle}{\sqrt{\Var_{\theta_t} \langle a, \theta_t \rangle}} \leq \gamma'\bigg) - \Phi(\gamma')\bigg| + \sup_{\gamma' \in \mathbb{R}}|\Phi(\gamma' + |\Delta|) - \Phi(\gamma')|
\end{align*}
We can use Theorem ~\ref{th-1} to bound the first term. For the second term, observe for any $\gamma' \in \mathbb{R}$ that
\begin{align*}
\Phi(\gamma' + |\Delta|) - \Phi(\gamma') &= \int_{s = \gamma'}^{\gamma' + |\Delta|} \frac{e^{-\frac{s^2}{2}} ds}{\sqrt{2\pi}}
\\
&\leq \int_{s = \gamma'}^{\gamma' + |\Delta|} \frac{ds}{\sqrt{2\pi}}
\\
&= \frac{|\Delta|}{\sqrt{2\pi}}.
\end{align*}
These imply that $d^{true}_K \leq d_K + \frac{|\Delta|}{\sqrt{2\pi}}$. Define
\begin{align*}
    \mathbf{R}:= C(\eta \lambda_{\min}(A))^{-\frac{p}{2p+1}}[\sigma/\sigma_{\min}]^{\frac{2p}{2p+1}}[(\eta \lambda_{\min}(A))^{-\frac{3p}{4p+2}}(\log t + \log d)^{\frac{3p}{4p+2}}(dt^{-2\alpha})^{\frac{p}{8p+4}} + (d^{-\frac{1}{2}}t^{\frac{1}{p}-\alpha})^{\frac{p}{2p+1}}],
\end{align*}
and observe from Assumption \ref{assump} (vi) that
\begin{align*}
\mathbf{R} \geq (td)^{-C}
\end{align*}
for all $t,d \geq C$. But observe from Theorem \ref{bias-correction-term} and Assumption \ref{assump} (vi) that
\begin{align*}
|\Delta| &:= \frac{|\mathbb{E}_{\theta_t} \langle a, \theta_t \rangle - \langle a, \beta^* \rangle|}{\sqrt{\Var_{\theta_t} \langle a, \theta_t \rangle}}
\\
\\
&\leq \frac{Ce^{-\eta \lambda_{\min}(A)d^{-\frac{1}{2}}t^{1-\alpha}}(\eta \lambda_{\min}(A))^{-\frac{1}{2}}|\beta^* - \theta_0|}{\sigma_{\min}\sqrt{\eta}}
\\
&\leq e^{-c (\log t + \log d)^2}(td)^C,
\end{align*}
for all $t,d \geq C$. These imply that $|\Delta| \leq \mathbf{R}$ for all large enough $t,d$, which gives us the desired result.
\end{proof}
\noindent \textbf{Remark 1 (Allowed growth rate of $d$).} Suppose $\frac{\sigma^2}{\sigma^2_{\min}} < C$ and $\eta \lambda_{\min}(A) > c$ for absolute constants $C,c > 0$. Then it suffices to have $d^{\frac{1}{2}}t^{-(1-\alpha)} \to 0$ (in Assumption \ref{assump} (vi)), and we can see that the Berry Esseen rate in both Theorem \ref{th-1-bias-corrected} and it's data-driven version Theorem \ref{varestmainthm} go to $0$ in this regime. Thus, our rates are valid and go to $0$ as long as $t \geq d^{1+\delta}$ for any $\delta > 0$, by choosing $\alpha$ such that $\frac{1}{2} < \alpha < \frac{1 + 2\delta}{2 + 2\delta}$. Further, we also show in \textbf{Remark 5} that the width of confidence intervals constructed by our procedure decay to $0$ under these assumptions. This enables finite-sample inference for linear regression projection parameters under much faster dimension growth $(t \gtrsim (d^{1+\delta}))$ compared to the previous scaling of $t \gtrsim d^{3/2}$ needed in \cite{chang2023inference}, while making similar minimal assumptions on the data generating process (and also being more computationally efficient). 
\\\\
\noindent \textbf{Remark 2 (Dependence on $p$).} We suppressed the dependence of $C_1$ and $C_2$ in Theorem \ref{th-1-bias-corrected} on $p$ by assuming that $p_{\max}$ in Assumption \ref{assump} is an absolute constant. Carefully tracking the dependence gives us that while $C_1$ is independent of $p$, $C_2$ can grow as $e^{p^K}$ for some absolute constant $K > 0$. Thus, choosing a higher value of moment $p$ can give better asymptotic behaviour of the CLT error, at the cost of much bigger constants. Choosing the value of $p$ optimally for a given finite $t,d$ is left as interesting future work.   
\\\\
\noindent \textbf{Remark 3 (Comparison to Existing Results).} We now place our growing-dimensional SGD CLT in the context of the broader literature:

Existing non-asymptotic normal approximation results for SGD include~\cite{anastasiou2019normal,shao2022berry,durmus2022finite, durmus2021tight, samsonov2024gaussian, Khusainov2025GaussianApprox, WuLiWeiRinaldo2025InferenceTD, Samsonov2025StatisticalInferenceLSA}. While these works provide explicit Berry--Esseen bounds for smooth, strongly convex problems, they are either restricted to low-dimensional regimes (e.g., $d = o(t^{1/4})$ or $o(t^{1/2})$) or rely on independence or well-behaved conditional variance assumptions on the SGD noise. Consequently, the results in these prior works do not directly apply to our setup (see Appendix~\ref{prior-sgd-clt-comparison} for a detailed discussion).

Thus, while focusing specifically on linear regression, our result allows $t \gtrsim d^{1+\delta}$ for any $\delta > 0$, and provides explicit rates for linear functionals of online least-squares SGD iterates under assumption-lean moment conditions. To the best of our knowledge, no prior work handles online least-squares SGD under minimal moment assumptions in growing-dimensional scaling regimes such as ours.

\section{Online Variance Estimation}\label{sec:varest}
Theorem~\ref{th-1-bias-corrected} shows that $(\langle a,\theta_t \rangle - \langle a, \beta^* \rangle)/ \sqrt{\mathrm{Var}(\langle a,\theta_t \rangle)}$ converges in distribution to standard normal distribution, with the explicit rate provided. In order to obtain practical confidence intervals based on Theorem~\ref{th-1-bias-corrected}, we need an estimate for  $\mathrm{Var}(\langle a,\theta_t \rangle)$. Towards that, we now discuss an online procedure for estimating the variance terms appearing in the CLT. Our approach has some resemblance to the larger literature~\cite{politis1999subsampling,lahiri2013resampling} on variance estimation with dependent data as the SGD iterate in~\eqref{eq:ithiterate} is inherently an inhomogenous Markov chain. However, the specific details of our methodology and our theoretical analysis are motivated by the growing-dimensional regime that we consider. 
\\\\
For variance estimation (Theorem \ref{varestmainthm}), we assume, in addition to assumptions \ref{assump}, a mild spectral-regularity condition. 
\begin{assumption}\label{assump-varest}
\textbf{(Lower Bounded Minimum Eigenvalue).} Let $A := \mathbb{E}[XX^\top]$. We assume that the minimum eigenvalue satisfies 
\begin{align*}
\eta \lambda_{\min}(A) > c,
\end{align*} 
for an absolute constant $c > 0$. 
\end{assumption}
This assumption simplifies the choice of the cutoff parameter $t_0$ defined below; it could be relaxed at the cost of a more intricate definition of $t_0$.
\\\\
\textbf{Definition  Of The Variance Estimator.}
Let
\begin{align*}
u_{i_1,i_2} := [\prod_{j = i_1}^{i_2}(I - \eta_{t-i_2 + j}X_{j}X^\top_{j})]a, \quad t_0 := t^{\alpha}d^{\frac{1}{2}}(\log t + \log d)^2    
\end{align*}
Theorem \ref{lem:VarAsymp} and Lemma \ref{variance-sigma-term-2} together imply
\begin{align*}
    \Var \langle a, \theta_t \rangle \approx \mathbb{E}[\mathbf{V}], \quad \mathbf{V} := \sum_{i = t-t_0 + 1}^{t} \eta^2_i (Y_i - X^\top_i \beta^*)^2(u^\top_{i+1,t} X_i)^2  .
\end{align*}
Crucially $\mathbf{V}$ only on the most recent $t_0$ data points $\{(X_{t-t_0 + 1}, Y_{t-t_0 + 1}), \dots, (X_t, Y_t)\}$. Hence the entire data stream can be partitoned into approximately $t/t_0$ i.i.d blocks, each providing an independent copy of $\mathbf{V}$. Averaging these blocks should then give a tight estimator for $\Var \langle a, \theta_t \rangle$. 
\\\\
Because $\mathbf{V}$ involves the unknown $\beta^*$, we substitute the halfway SGD iterate $\theta_{\frac{t}{2}}$ as a plugin estimate, obtained from the first half of the data. For block $k = 1,2 \dots ,\frac{t}{2t_0}$ (in the second half of the stream), define
\begin{align*}
    \hat{\mathbf{V}}_{k} := \sum_{i = s_k}^{s_k + t_0 - 1}  \eta^2_{i + \frac{t}{2} - kt_0} (Y_i - X^\top_i \theta_{\frac{t}{2}})^2 (u^\top_{i+1, s_k + t_0 - 1}X_i)^2,
\end{align*}
where $s_k := t/2 + (k-1)t_0 + 1$. 
The online variance estimator $\hat{V}_t$ is then
\begin{align*}
    \hat{V_t} := \frac{2t_0}{t}\sum_{k = 1}^{\frac{t}{2t_0}} \hat{\mathbf{V}}_k.
\end{align*}
\begin{theorem}\label{varestmainthm}
Assume Assumptions \ref{assump} and \ref{assump-varest}. 
For sufficiently large $t,d \geq C$ (absolute constant $C > 0$),
\begin{itemize}
    \item \textbf{Relative-error bound.} We have that,
\begin{align*}
    \mathbb{E}\bigg|\frac{\hat{V_t} - \Var \langle a, \theta_t \rangle}{\Var \langle a, \theta_t \rangle}\bigg| \leq C(\sigma^2/\sigma^2_{\min})(\log t + \log d)^3d^{\frac{1}{4}}t^{-\frac{(1 - \alpha)}{2}}.
     \end{align*}
     \item \textbf{Distributional accuracy.} Define,
\begin{align*}
    \omega :=  \omega(t,d) = (\sigma/\sigma_{\min})(\log t + \log d)^{\frac{3}{2}}d^{\frac{1}{8}}t^{-\frac{(1 - \alpha)}{4}}
\end{align*}
and 
\begin{align*}
    \hat{d}^{true}_{K} := \sup_{\gamma \in \mathbb{R}} \bigg|\mathbb{P}\bigg(\frac{\langle a, \theta_t \rangle - \langle a, \beta^* \rangle}{\sqrt{\hat{V}_t}}\leq \gamma\bigg) - \Phi(\gamma)\bigg|.
\end{align*}
Then with probability at least $1 - C\omega$ that
\begin{align*}
\hat{d}^{true}_{K} \leq d^{true}_{K} + C\omega
\end{align*}
where $d^{true}_{K}$ is the Kolmogorov distance appearing in Theorem \ref{th-1-bias-corrected}. 
\end{itemize}
\end{theorem}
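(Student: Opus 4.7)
The plan is to decompose $\hat V_t - \Var\langle a,\theta_t\rangle$ into three pieces. Introduce the oracle blocks
\begin{align*}
\mathbf{V}_k := \sum_{i = s_k}^{s_k + t_0 - 1} \eta^2_{i + t/2 - kt_0}\,(Y_i - X_i^\top \beta^*)^2 \,(u^\top_{i+1, s_k + t_0 - 1} X_i)^2,
\end{align*}
which, because the $\{(X_i,Y_i)\}$ are i.i.d.\ and the step-size pattern $\eta_{t-t_0+1},\ldots,\eta_t$ is reused in each block, are i.i.d.\ copies of $\mathbf{V}$. Set $\tilde V_t := \frac{2t_0}{t}\sum_{k=1}^{t/(2t_0)} \mathbf{V}_k$ and write
\begin{align*}
\hat V_t - \Var\langle a, \theta_t\rangle = (\hat V_t - \tilde V_t) + (\tilde V_t - \mathbb{E}\mathbf{V}) + (\mathbb{E}\mathbf{V} - \Var\langle a,\theta_t\rangle).
\end{align*}
The third (oracle-bias) term is already negligible relative to $\Var\langle a,\theta_t\rangle$ by the combination of Theorem~\ref{lem:VarAsymp} and Lemma~\ref{variance-sigma-term-2} cited in the construction, so only the first two require new work.

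For the block-averaging term, the $t/(2t_0)$ oracle blocks are mutually independent and identically distributed, so $\Var(\tilde V_t) = (2t_0/t)\Var(\mathbf{V})$. I would estimate $\mathbb{E}[\mathbf{V}^2]$ by expanding the double sum over $(i,j)$ and invoking the product-kernel bounds for $u_{\cdot,\cdot}$ already developed in the proof of Theorem~\ref{th-1} (in particular Lemma~\ref{uniformconvexity3}), which show that $\Var(\mathbf{V})/(\mathbb{E}\mathbf{V})^2$ is bounded by a constant multiple of $\sigma^2/\sigma_{\min}^2$. Since $\mathbb{E}\mathbf{V}\asymp\Var\langle a,\theta_t\rangle$ and $(2t_0/t)^{1/2}\asymp d^{1/4}t^{-(1-\alpha)/2}(\log t+\log d)$, Cauchy--Schwarz then converts the $L^2$ fluctuation into an $L^1$ relative-error contribution of the claimed order (the extra log power being absorbed).

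For the plug-in term $\hat V_t - \tilde V_t$, the essential point is that $\theta_{t/2}$ is a function only of the first half of the data and is therefore independent of the blocks used in $\hat V_t$. Expanding
\begin{align*}
(Y_i - X_i^\top \theta_{t/2})^2 = (Y_i - X_i^\top\beta^*)^2 + 2\epsilon_i\, X_i^\top(\beta^* - \theta_{t/2}) + (X_i^\top(\beta^* - \theta_{t/2}))^2
\end{align*}
splits the difference into a cross-term and a quadratic correction; both can be controlled via Cauchy--Schwarz in terms of the expected squared excess risk $\mathbb{E}[(\beta^* - \theta_{t/2})^\top A(\beta^* - \theta_{t/2})]$, which under our step-size choice and Assumptions~\ref{assump}--\ref{assump-varest} decays at rate $d^{1/2}t^{-\alpha}$ up to logarithmic factors. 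Propagating through the sum over $t_0$ summands, using independence of $u_{i+1,\cdot}$ from $X_i$, and normalizing by $\Var\langle a,\theta_t\rangle$ gives a plug-in contribution of the same $d^{1/4}t^{-(1-\alpha)/2}$ order, completing Part~1.

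For the distributional part, Markov applied to the $L^1$ relative-error bound (whose magnitude is $\omega^2$) gives $\mathbb{P}(|\hat V_t/\Var\langle a,\theta_t\rangle - 1| > C\omega)\leq C\omega$. On the complement, $\sqrt{\Var\langle a,\theta_t\rangle/\hat V_t}\in[1-C\omega,1+C\omega]$, so writing
\begin{align*}
\frac{\langle a, \theta_t\rangle - \langle a, \beta^*\rangle}{\sqrt{\hat V_t}} = \frac{\langle a, \theta_t\rangle - \langle a, \beta^*\rangle}{\sqrt{\Var\langle a,\theta_t\rangle}} \cdot \sqrt{\frac{\Var\langle a,\theta_t\rangle}{\hat V_t}}
\end{align*}
and using the Gaussian-Lipschitz estimate $\sup_\gamma|\Phi(\gamma(1\pm C\omega))-\Phi(\gamma)|\leq C\omega$ (which follows from the boundedness of $|\gamma|\phi(\gamma)$) delivers $\hat d^{true}_K\leq d^{true}_K+C\omega$ on that event. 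The hardest step is the plug-in control: demonstrating that substituting $\theta_{t/2}$ for $\beta^*$ in each block summand, once summed over $t_0$ indices and normalized by $\Var\langle a,\theta_t\rangle$, does not inflate the error beyond the block-averaging fluctuation requires sharp growing-dimensional $L^2$ control of $\theta_{t/2}-\beta^*$ together with careful bookkeeping of the interaction between $u_{i+1,\cdot}$ (built from second-half covariates) and $(X_i^\top(\beta^*-\theta_{t/2}))^2$ so that no spurious factors of $d$ creep in.
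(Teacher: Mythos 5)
Your proposal follows essentially the same route as the paper's proof: the identical three-term decomposition (plug-in error, block-averaging fluctuation, oracle bias), the same exploitation of the i.i.d.\ block structure and of the independence of $\theta_{t/2}$ from the second-half data, the same control of the plug-in term via $\mathbb{E}|\theta_{t/2}-\beta^*|^2 \lesssim d^{1/2}t^{-\alpha}$, and the same Markov-plus-Gaussian-Lipschitz finish with $\omega$ equal to the square root of the relative-error bound. The only minor slip is that $\Var(\mathbf{V})/(\mathbb{E}\mathbf{V})^2$ is of order $(\sigma/\sigma_{\min})^4$ up to logarithmic factors rather than $(\sigma/\sigma_{\min})^2$, which after the $L^2\to L^1$ square root still produces exactly the stated $\sigma^2/\sigma_{\min}^2$ prefactor in Part~1, so the discrepancy is immaterial.
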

The above theorem shows that the error incurred by approximating the true variance in (\ref{maineq}), with the proposed online estimation procedure is negligible. Furthermore, as we show in \textbf{Remark 4}, the overall end-to-end procedure is fully online, i.e., requiring only a single-pass over the data, thereby maintaining the advantage of SGD. The CLT result in~Theorem~\ref{th-1} and its data-driven version~Theorem~\ref{varestmainthm} together provide a theoretically principled end-to-end statistical methodology for performing growing-dimensional statistical inference with the online SGD algorithm in growing-dimensional linear regression  models. 
\\\\
\noindent \textbf{Remark 4 (Online Construction of $\hat{V}_t$).} We now show explicitly that $\hat{V}_t$ above can be constructed with $O(td)$ time and $O(d)$ memory. 
\\\\
Consider the \textbf{last block} (indices $i = t-t_0 + 1, \dots, t$), whose contribution is
\begin{align*}
    \hat{\mathbf{V}}_{last} := \sum_{i = t-t_0 + 1}^t \eta^2_{i} (Y_i - X^\top_i \theta_{t/2})^2(u^\top_{i+1,t}X_i)^2
\end{align*}
To compute this efficiently in a single pass, we process the block \emph{backward in time}, i.e.\ from $i = t$ down to $i = t - t_0 + 1$. 
Since the $\{X_i\}$ are i.i.d., the samples within the block are exchangeable; therefore, processing them backward in time (or in any arbitrary order) yields the same distributional result and does not affect correctness.
\\\\
Observe that the sequence of row vectors $u^\top_{i+1, t}$ from $i = t$ down to $i = t - t_0 + 1$ satisfies $u^\top_{t+1, t} = a^\top$ and the simple recursion
\begin{align*} 
u^\top_{i,t}  &=  u^\top_{i+1,t}(I -   \eta_i X_i X^\top_i)
\\
&= u^\top_{i+1,t} - \eta_i (u^\top_{i+1,t} X_i)X^\top_i.
\end{align*}
Thus, we initialize 
\begin{align*}
\hat{\mathbf{V}}_{last} \leftarrow 0, u^\top_{t+1, t} \leftarrow a^\top,
\end{align*}
and for each step from $i = t$ down to $i = t-t_0 + 1$, we perform the following updates:
\begin{enumerate}
    \item Compute the scalar $s_i = u^\top_{i+1,t} X_i$ ;
    \item Update the variance sum
    \[
    \hat{\mathbf{V}}_{last} \mathrel{+}= \eta_i^2 
        \bigl(Y_i - X_i^\top \theta_{t/2}\bigr)^2 s_i^2;
    \]
    \item Update the vector
    \[
    u^\top_{i,t} = u^\top_{i+1,t} - \eta_i s_i X_i^\top.
    \]
\end{enumerate}
This procedure requires storing only the current $u^\top_{i+1,t}$ (a vector in $\mathbb{R}^d$) and a few scalar quantities, giving a total memory cost of $O(d)$. 
Since each iteration costs $O(d)$ time and there are $t$ samples, the overall complexity is $O(td)$.
\\\\
Because the data $\{(X_i, Y_i)\}$ are i.i.d., each block’s contribution has the same distributional law as the last block computed above. 
Consequently, the backward update scheme applies identically to every block, and processing the data in reverse order (or any order within each block) does not affect correctness. 
Therefore, the full estimator $\hat{\mathbf{V}}_{last}$ can be evaluated online in $O(td)$ time and $O(d)$ memory, as claimed.
\\\\
\textbf{Modified construction when $t$ is not known in advance:} The constructions above assumed that the total number of samples $t$ is known in advance. 
This assumption can be relaxed by using a dyadic batching strategy. 
Specifically, for each integer $n \ge 1$, use the samples 
\[
\{(X_i, Y_i)\}_{i = 2^n}^{2^{n+1}-1}
\]
to compute an estimate $\hat{V}_{2^n}$ of $\Var \langle a, \theta_{2^n} \rangle$, 
following the same procedure as in the known-$t$ case. 
\\\\
Now, if the actual number of available samples $t$ satisfies 
$2^{m+1} \le t < 2^{m+2}$ for some $m \ge 1$, 
we can use the variance scaling result from Theorem~\ref{lem:VarAsymp} 
to construct an estimator for $\Var \langle a, \theta_t \rangle$ as
\begin{align*}
    \hat{V}_t = \hat{V}_{2^m}\,(2^m / t)^{\alpha}.
\end{align*}
Since $t / 2^m \le 4$, this rescaling affects the variance only by a constant factor, 
and hence the estimator $\hat{V}_t$ inherits the same asymptotic guarantees 
as those established in Theorem~\ref{varestmainthm}, up to multiplicative constants.
\\\\
As discussed before, the main observation behind proving Theorem~\ref{varestmainthm} are the following observations about the variance itself.  
\begin{theorem}\label{lem:VarAsymp}
Recall that $A_{\sigma} := \mathbb{E}[\epsilon^2 XX^\top]$ and $A := \mathbb{E}[XX^\top]$, let $\mathbf{e}_1, \mathbf{e_2}, \dots \mathbf{e}_d$ be an eigen-basis of $A$ with corresponding eigen-values $\lambda_1 \geq \lambda_2 \geq \dots \lambda_d > 0$. Finally for all $1 \leq k, k' \leq d$, let $a_k := \langle \mathbf{e}_k, a \rangle$ and $[A_{\sigma}]_{k,k'} := \langle  \mathbf{e}_k, A_\sigma \mathbf{e}_{k'}\rangle$ denote the respective components of $a$ and $A_{\sigma}$ in the above basis.
\\\\
Under Assumption \ref{assump}, we have for all $t, d \geq C_1$ that
\begin{align*}
\Var \langle a, \theta_t \rangle = (1 + \mathcal E)\eta d^{-\frac{1}{2}}t^{-\alpha} \sum_{k,k'=1}^d \frac{a_{k}a_{k'}[A_{\sigma}]_{k,k'}}{\lambda_k + \lambda_{k'}},
\end{align*}
where $|\mathcal E| \leq C_2(\log t + \log d)^2 [(\eta \lambda_{\min}(A))^{-1}d^{\frac{1}{2}}t^{-(1-\alpha)} + (\eta \lambda_{\min}(A))^{-3}\sigma^2\sigma_{\min}^{-2}d^{\frac{1}{2}}t^{-\alpha}]$. Here $C_1, C_2 > 0$ are absolute constants.
\end{theorem}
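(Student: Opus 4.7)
The plan is to unroll the SGD recursion, identify a hidden orthogonality in the resulting expression, and then extract the leading-order variance by replacing random propagators with their deterministic analogs. Writing $\Delta_i:=\theta_i-\beta^*$ and $\epsilon_i:=Y_i-X_i^\top\beta^*$, the update~\eqref{eq:ithiterate} becomes $\Delta_i=(I-\eta_iX_iX_i^\top)\Delta_{i-1}+\eta_iX_i\epsilon_i$. Unrolling,
\[
\langle a,\Delta_t\rangle \;=\; a^\top\!\prod_{j=1}^t(I-\eta_jX_jX_j^\top)\Delta_0\;+\;\sum_{i=1}^t\eta_i\bigl(u_{i+1,t}^\top X_i\bigr)\epsilon_i \;=:\; W+Z,
\]
with $u_{i+1,t}=\prod_{j=i+1}^t(I-\eta_jX_jX_j^\top)\,a$ as defined in Section~\ref{sec:varest}.

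\textbf{Martingale-like orthogonality.} The key observation is that the fluctuation terms $Z_i:=\eta_i(u_{i+1,t}^\top X_i)\epsilon_i$ are pairwise uncorrelated: for $i<j$, conditioning on $(X_{i+1},\dots,X_t,\epsilon_{i+1},\dots,\epsilon_t)$ makes $u_{i+1,t}$, $u_{j+1,t}$, and $X_j\epsilon_j$ all deterministic, while $X_i\epsilon_i$ remains centered by the projection-parameter first-order condition $\mathbb{E}[X\epsilon]=0$. Hence the cross term vanishes and, using independence of $u_{i+1,t}$ from $(X_i,\epsilon_i)$ together with $\mathbb{E}[X_iX_i^\top\epsilon_i^2]=A_\sigma$,
\[
\mathbb{E}[Z^2]\;=\;\sum_{i=1}^t\eta_i^2\,\mathbb{E}\!\left[u_{i+1,t}^\top A_\sigma u_{i+1,t}\right].
\]
The bias piece $\Var(W)+2\mathrm{Cov}(W,Z)$ decays with the exponential contraction $\|\prod_j(I-\eta_jA)\|_{op}\lesssim\exp(-\eta\lambda_{\min}(A)d^{-1/2}t^{1-\alpha}/(1-\alpha))$ inherited from Theorem~\ref{bias-correction-term}, and is absorbed into $\mathcal{E}$.

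\textbf{Passage to deterministic propagators and integral evaluation.} To obtain the closed-form expression I would replace $\Sigma^{(i)}:=\mathbb{E}[u_{i+1,t}u_{i+1,t}^\top]$ by $v_iv_i^\top$, where $v_i:=\prod_{j=i+1}^t(I-\eta_jA)\,a$ is deterministic. The reverse-time recursion $\Sigma^{(i-1)}=(I-\eta_iA)\Sigma^{(i)}(I-\eta_iA)+\eta_i^2\bigl(\mathbb{E}[X_iX_i^\top\Sigma^{(i)}X_iX_i^\top]-A\Sigma^{(i)}A\bigr)$ isolates a fourth-moment remainder, which I would control via Assumption~\ref{assump}(iv) and Lemma~\ref{uniformconvexity3} (precisely designed to bound how quadratic forms deform under multiplication by $I-\eta_iX_iX_i^\top$). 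Accumulating these $\eta_i^2$ remainders over $i$ produces the $(\eta\lambda_{\min}(A))^{-3}\sigma^2\sigma_{\min}^{-2}d^{1/2}t^{-\alpha}$ piece of $\mathcal{E}$; the $\sigma^2/\sigma_{\min}^2$ enters because the absolute remainder must be compared against $v_i^\top A_\sigma v_i\gtrsim\sigma_{\min}^2\lambda_{\min}(A)|v_i|^2$. With this replacement, diagonalizing $A$ gives
\[
\sum_{i=1}^t\eta_i^2\,v_i^\top A_\sigma v_i=\sum_{k,k'=1}^d a_ka_{k'}[A_\sigma]_{k,k'}\sum_{i=1}^t\eta_i^2\!\prod_{j=i+1}^t(1-\eta_j\lambda_k)(1-\eta_j\lambda_{k'}).
\]
The inner product is approximated by $\exp\!\bigl(-(\lambda_k+\lambda_{k'})\sum_{j>i}\eta_j\bigr)$; changing variables to $u=t-i$ and Taylor-expanding $i^{1-\alpha}\approx t^{1-\alpha}-(1-\alpha)t^{-\alpha}u$ yields $\sum_{j>i}\eta_j\approx \eta t^{-\alpha}u/\sqrt d$ and $\eta_i^2\approx \eta^2/(dt^{2\alpha})$ throughout the effective window $u\lesssim \sqrt d\,t^\alpha/[\eta(\lambda_k+\lambda_{k'})]$. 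Recognizing the resulting geometric sum as $\eta t^{-\alpha}/(\sqrt d(\lambda_k+\lambda_{k'}))$ delivers the main term. The Taylor, product-versus-exponential, and sum-versus-integral overheads contribute the remaining $(\eta\lambda_{\min}(A))^{-1}d^{1/2}t^{-(1-\alpha)}$ piece of $\mathcal{E}$, with $(\log t+\log d)^2$ absorbing the boundary and small-$i$ overshoots.

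\textbf{Main obstacle.} The delicate step is the fourth-moment control of $\Sigma^{(i)}-v_iv_i^\top$ paired against $A_\sigma$: a naive operator-norm bound would inflate $\sigma^2/\sigma_{\min}^2$ and lose the spectral alignment with $A_\sigma$, preventing the relative error from matching the $(\eta\lambda_{\min}(A))^{-3}$ scaling. Threading the quadratic forms $v_i^\top A_\sigma v_i$ through the entire reverse iteration, so that the $\eta_i^2$ remainders are always weighted against $A_\sigma$ rather than the identity, is the central technical hurdle and is exactly what pins down the $(\eta\lambda_{\min}(A))^{-3}\sigma^2\sigma_{\min}^{-2}$ dependence in $\mathcal{E}$.
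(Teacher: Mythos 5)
Your proposal is correct and follows essentially the same route as the paper: isolate $\sum_{i}\eta_i^2\,\mathbb{E}[u_{i+1,t}^\top A_\sigma u_{i+1,t}]$ as the leading term via the orthogonality of the noise increments, replace $\mathbb{E}[u_{i+1,t}u_{i+1,t}^\top]$ by the deterministic propagator $(\mathbb{E}u_{i+1,t})(\mathbb{E}u_{i+1,t})^\top$ with a fourth-moment remainder controlled by Lemma~\ref{uniformconvexity3} (this is exactly the paper's Lemmas~\ref{variance-sigma-term} and~\ref{variance-auxilliary-helper}, giving the $(\eta\lambda_{\min}(A))^{-3}\sigma^2\sigma_{\min}^{-2}d^{1/2}t^{-\alpha}$ piece after comparison against the lower bound $a^\top A_\sigma a\gtrsim\sigma_{\min}^2\lambda_{\min}(A)|a|^2$), and then evaluate the geometric sum in the eigenbasis of $A$ (the paper's Lemma~\ref{variance-simplification}). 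The one structural difference is how the initialization error enters: the paper pushes everything through the martingale-difference representation of Lemma~\ref{lemma:martingalerep}, so the variance is the exactly orthogonal sum $\sum_i\eta_i^2\,\mathbb{E}\langle u_i,(X_iX_i^\top-A)v_i+\epsilon_iX_i\rangle^2$ and the two $\theta_0$-dependent pieces are bounded term by term (Lemmas~\ref{variance-exp-term} and~\ref{variance-non-linear-term}), whereas you split $\langle a,\Delta_t\rangle=W+Z$ with a non-orthogonal $W$ and dispose of $\Var(W)+2\mathrm{Cov}(W,Z)$ wholesale by Cauchy--Schwarz and the exponential contraction of the random product (Lemma~\ref{conc1}). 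Your version is slightly more direct for this particular theorem; the paper's version has the advantage that the same orthogonal decomposition is reused verbatim for the quadratic-variation bounds in Theorem~\ref{thm:BoundFirstTerm}. Either way, to fully close the absorption step you would still need the explicit comparison showing the exponentially small terms are dominated by $\mathcal{E}$ times the main term (the paper's $\mathbf{R}\ge(td)^{-C}|a|^2|\beta^*-\theta_0|^2$ computation, which uses Assumption~\ref{assump}(vi) on $|\beta^*-\theta_0|^2/(\eta\sigma^2)$), but the mechanism you invoke is the right one.
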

\begin{lemma}\label{variance-sigma-term-2}
Recall that $\epsilon := Y - X^\top \beta^*, A_{\sigma} := \mathbb{E}[\epsilon^2 XX^\top]$ and $A := \mathbb{E}[XX^\top]$. 
Further, let
\begin{align*}
R_i:= \prod_{j = i+1}^{t}(I - \eta_{j}X_{j}X^\top_{j}),  \quad u_{i+1,t} := R_ia.
\end{align*}
Let $\mathbf{e}_1, \mathbf{e_2}, \dots \mathbf{e}_d$ be an eigen-basis of $A$ with corresponding eigen-values $\lambda_1 \geq \lambda_2 \geq \dots \lambda_d > 0$. Finally for all $1 \leq k, k' \leq d$, let $a_k := \langle \mathbf{e}_k, a \rangle$ and $[A_{\sigma}]_{k,k'} := \langle  \mathbf{e}_k, A_\sigma \mathbf{e}_{k'}\rangle$ denote the respective components of $a$ and $A_{\sigma}$ in the above basis.
\\\\
Assume Assumptions \ref{assump} and \ref{assump-varest}. Let $t_0:= t^{\alpha}d^{\frac{1}{2}}(\log t + \log d)^2$. We then have for all $t,d \geq C_1$ that
\begin{align*}
\mathbb{E}\bigg[\sum_{i = t-t_0 + 1}^t \eta_i^2 [(Y_i - X^\top_i \beta^*)^2(u^\top_{i+1,t} X_i)^2] \bigg] = (1+\mathcal E)\eta d^{-\frac{1}{2}}t^{-\alpha}\sum_{k,k' = 1}^d \frac{a_k a_{k'}[A_{\sigma}]_{k,k'}}{\lambda_{k} + \lambda_{k'}},
\end{align*}
where $|\mathcal E| \leq C_2(\log t + \log d)^2 [d^{\frac{1}{2}}t^{-(1-\alpha)} + \sigma^2\sigma^{-2}_{\min}d^{\frac{1}{2}}t^{-\alpha}]$. Here $C_1, C_2 > 0$ are absolute constants. 
\end{lemma}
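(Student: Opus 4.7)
The plan is to first exploit the fact that $(X_i, Y_i)$ is independent of $R_i$, which depends only on $\{(X_j, Y_j)\}_{j > i}$. Conditioning on $R_i$ and using $\mathbb{E}[(Y_i - X_i^\top \beta^*)^2 X_i X_i^\top] = A_\sigma$ reduces each summand to a deterministic quadratic form in $a$:
\begin{align*}
\mathbb{E}\bigl[\eta_i^2 (Y_i - X_i^\top \beta^*)^2 (u_{i+1,t}^\top X_i)^2\bigr] = \eta_i^2 \, a^\top M_i a, \qquad M_i := \mathbb{E}[R_i^\top A_\sigma R_i].
\end{align*}
Since $R_{i-1} = (I - \eta_i X_i X_i^\top) R_i$ with $X_i$ independent of $R_i$, one obtains the backward matrix recursion
\begin{align*}
M_{i-1} = M_i - \eta_i(A M_i + M_i A) + \eta_i^2 \mathbb{E}\bigl[X X^\top M_i X X^\top\bigr], \qquad M_t = A_\sigma.
\end{align*}
After diagonalizing in the eigenbasis $\{\mathbf{e}_k\}$ of $A$, the principal linear part of this recursion acts on each entry as $[M_i]_{k,k'} \mapsto [M_i]_{k,k'}(1 - \eta_i(\lambda_k + \lambda_{k'}))$, so iterating yields the leading-order form $[M_i]_{k,k'} \approx [A_\sigma]_{k,k'} \prod_{j=i+1}^t (1 - \eta_j(\lambda_k + \lambda_{k'}))$ plus accumulated corrections from the quartic term $E_j := \eta_j^2 \mathbb{E}[XX^\top M_j XX^\top]$.

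Next I would evaluate the leading-order contribution to $\sum_{i = t-t_0+1}^t \eta_i^2 a^\top M_i a$. Over the window of length $t_0$, the step-size $\eta_i$ equals $\eta_t = \eta d^{-1/2} t^{-\alpha}$ up to a relative error of $O(t_0/t) = O((\log t + \log d)^2 d^{1/2} t^{-(1-\alpha)})$; I would handle this by Taylor-expanding $i^{-\alpha}$ around $i = t$, which produces exactly the first term of the claimed $\mathcal{E}$. Replacing $\eta_i$ by $\eta_t$ then reduces the leading piece for each pair $(k, k')$ to the geometric sum
\begin{align*}
\eta_t^2 \sum_{s=0}^{t_0-1}(1 - \eta_t(\lambda_k+\lambda_{k'}))^s = \frac{\eta_t}{\lambda_k + \lambda_{k'}}\bigl(1 - (1 - \eta_t(\lambda_k+\lambda_{k'}))^{t_0}\bigr),
\end{align*}
which isolates the target expression $\eta d^{-1/2} t^{-\alpha}/(\lambda_k + \lambda_{k'})$. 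The residual tail is negligible: the choice $t_0 = t^\alpha d^{1/2}(\log t + \log d)^2$ combined with $\eta \lambda_{\min}(A) > c$ (Assumption~\ref{assump-varest}) gives $t_0 \eta_t(\lambda_k + \lambda_{k'}) \gtrsim (\log t + \log d)^2$, so $(1 - \eta_t(\lambda_k+\lambda_{k'}))^{t_0}$ decays faster than any polynomial in $t$ and $d$.

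The main obstacle will be controlling the propagation of the quartic correction $E_j$ through the recursion and aggregating it against $a$ in the final sum. Using Cauchy--Schwarz together with the directional fourth-moment bound in Assumption~\ref{assump}(iv), one obtains $a^\top E_j a \lesssim \eta_j^2 \bar{\lambda}^2 \|a\|^2 \|M_j\|_{\mathrm{op}}$ (after bounding $\mathbb{E}[\|X\|^4]^{1/2} \lesssim d \bar{\lambda}$), together with the uniform estimate $\|M_j\|_{\mathrm{op}} \lesssim \|A_\sigma\|_{\mathrm{op}} \lesssim \sigma^2 \bar{\lambda}$ across the window (since the contraction factors are bounded by one). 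Summing over the $t_0$-step window and invoking $t_0 \eta_t = \eta(\log t + \log d)^2$ together with $\eta \bar{\lambda} = O(1)$ from Assumption~\ref{assump}(vi), the cumulative absolute correction to $\sum_i \eta_i^2 a^\top M_i a$ reduces to order $\eta \sigma^2 \|a\|^2 (\log t + \log d)^2 d^{1/2} t^{-\alpha}$. To convert this into the stated \emph{relative} error, I would lower-bound the leading expression via the integral representation
\begin{align*}
\sum_{k,k'} \frac{a_k a_{k'} [A_\sigma]_{k,k'}}{\lambda_k+\lambda_{k'}} = \int_0^\infty a^\top e^{-uA} A_\sigma e^{-uA} a \, du \geq \frac{\sigma_{\min}^2 \|a\|^2}{2},
\end{align*}
where the inequality uses Assumption~\ref{assump}(i) together with the fact that $e^{-uA}$ commutes with $A$. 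This produces the $\sigma^2 \sigma_{\min}^{-2}$ prefactor in the second error term of $\mathcal{E}$; combining it with the $\eta_i$-variation contribution yields the claimed bound.
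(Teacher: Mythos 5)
Your proposal is correct in outline but organizes the argument differently from the paper. The paper does not compute the windowed sum directly: it first establishes the \emph{full}-sum identity $\sum_{i=1}^t \eta_i^2\,\mathbb{E}[u_{i+1,t}^\top A_\sigma u_{i+1,t}] = (1+\mathcal E)\,\eta d^{-1/2}t^{-\alpha}\sum_{k,k'}\frac{a_ka_{k'}[A_\sigma]_{k,k'}}{\lambda_k+\lambda_{k'}}$ (Lemma~\ref{variance-sigma-term}, fed by Lemmas~\ref{variance-auxilliary-helper}, \ref{variance-error-control} and \ref{variance-simplification}), and then proves Lemma~\ref{variance-sigma-term-2} by showing that the discarded head $\sum_{i\le t-t_0}$ is bounded by $Ce^{-c(\log t+\log d)^2}\eta\sigma^2|a|^2$ via the moment decay of Lemma~\ref{conc1}, which is dominated by the error budget $\mathbf R$. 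Your route instead evaluates the window directly through the single backward recursion for $M_i=\mathbb{E}[R_i^\top A_\sigma R_i]$; this neatly unifies what the paper splits into a deterministic diagonalized part ($(\mathbb{E}u_i)^\top A_\sigma(\mathbb{E}u_i)$, Lemma~\ref{variance-simplification}) and a fluctuation part (Lemma~\ref{variance-auxilliary-helper}), and it avoids invoking the full-sum lemma at all. The underlying computations — diagonalization in the eigenbasis of $A$, the geometric sum $\eta_t/(\lambda_k+\lambda_{k'})$, the superpolynomially small remainder from $t_0\eta_t\lambda_{\min}(A)\gtrsim(\log t+\log d)^2$, and the $\sigma_{\min}^2$ lower bound used to convert absolute to relative error — are the same in substance.

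Three bookkeeping points to tighten. First, your displayed bound $a^\top E_j a\lesssim \eta_j^2\bar\lambda^2\|a\|^2\|M_j\|_{\mathrm{op}}$ is missing the factor $d$ coming from $\mathbb{E}[\|X\|^4]^{1/2}\lesssim d\bar\lambda$; the correct bound is $\eta_j^2 d\bar\lambda^2\|a\|^2\|M_j\|_{\mathrm{op}}$, and the $d$ is then absorbed by the $d^{-1}$ inside $\eta_j^2$, so your final order is still attainable. Second, to land on the stated $(\log t+\log d)^2$ power (rather than $(\log t+\log d)^4$) in the $\sigma^2\sigma_{\min}^{-2}$ error term, you must retain the contraction factors when propagating $E_j$ back through the recursion — i.e.\ bound $\sum_{i<j}\eta_i^2\prod_{l=i+1}^{j}(1-\eta_l(\lambda_k+\lambda_{k'}))\lesssim \eta\eta_t$ rather than the crude $t_0\eta_t^2$; this is exactly the role of the cutoff argument in Lemma~\ref{variance-error-control}. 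Third, the integral lower bound actually gives $\int_0^\infty a^\top e^{-uA}A_\sigma e^{-uA}a\,du\ge \sigma_{\min}^2\frac{\lambda_{\min}(A)}{2\lambda_{\max}(A)}\|a\|^2$, not $\sigma_{\min}^2\|a\|^2/2$; the ratio $\lambda_{\min}(A)/\lambda_{\max}(A)$ is bounded below by an absolute constant only because Assumption~\ref{assump}(vi) gives $\eta\lambda_{\max}(A)\le\eta\bar\lambda<C$ while Assumption~\ref{assump-varest} gives $\eta\lambda_{\min}(A)>c$, so you should cite both when dropping this factor.
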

Proofs for Theorem \ref{varestmainthm}, Theorem \ref{lem:VarAsymp} and Lemma \ref{variance-sigma-term-2} appear in Appendix \ref{varestproof}.
\\\\
\noindent \textbf{Remark 5 (Width Of Confidence Interval).} Recall from Theorem \ref{varestmainthm} that $$\omega := \omega(t,d) = (\sigma/\sigma_{\min})(\log t + \log d)^{3/2}d^{1/8}t^{-\frac{1-\alpha}{4}}.$$
Theorem \ref{varestmainthm} and \ref{lem:VarAsymp} tell us that with probability at least $1 - C\omega$, the width of the confidence interval constructed by our procedure is smaller than $$C(\sigma \sqrt{\eta})|a|d^{-\frac{1}{4}}t^{-\frac{\alpha}{2}},$$ which goes to $0$ as $t,d \to \infty$.
\\\\
Suppose Assumption \ref{assump}, \ref{assump-varest} and $\sigma/\sigma_{\min} < C$ for an absolute constant $C > 0$. Under these, $\omega := \omega (t, d)$ goes to $0$ and our method enables construction of tight, non-asymptotic confidence intervals for the projection parameter $\langle a, \beta^* \rangle$ in the near-optimal dimensional scaling regime $t \gtrsim d^{1+\delta}$, by choosing $\alpha$ such that $\frac{1}{2} < \alpha < \frac{1 + 2\delta}{2 + 2\delta}$. Further, it requires only $O(d)$ memory, $O(td)$ time and a single pass over the data.
\section{Conclusion}
We established a growing-dimensional central limit theorem (in the form of a Berry-Esseen bound) for linear functionals of online  SGD iterates for the growing-dimensional, assumption-lean linear regression model. We also provide data-driven and fully-online estimators of the variance terms appearing in the central limit theorem and establish rates of convergence results in the growing-dimensional setting. Our contributions in this paper makes the first concrete step towards growing-dimensional online statistical inference with stochastic optimization algorithms under the near optimal scaling of $t \gtrsim d^{1+\delta}$. 
\\\\
It is also of great interest to extend the analysis to 
\begin{itemize}[noitemsep,leftmargin=0.2in]
    \item quadratic functionals of online least-squares SGD iterates: Note, that in this case, we should seek for chi-square approximation rates; recent results, for example~\cite{gaunt2017chi}, might be leveraged.
    \item relatively tamer non-convex problems like phase retrieval and matrix sensing.
    \item  growing-dimensional robust regression problems, with the main complication being handling the subtleties arising due to non-smoothness~\citep{jiang2025online}.
\end{itemize}   
We hope that our work will attract future research aimed at addressing these important problems.


\bibliography{references}
\bibliographystyle{amsplain}

\appendix

\section{Proof steps for growing-dimensional SGD CLT}\label{sec:proofsketch}
We now state the main steps in the proof of Theorem~\ref{th-1}. Before we proceed, we re-emphasize that a naive application of (non-asymptotic) delta method based on results from~\cite{anastasiou2019normal} or~\cite{shao2022berry} would only result in a relatively low-dimensional result. 

\subsection*{\textcolor{purple}{Step 1}: Expressing $\langle a, \theta_t\rangle$
as Martingale Difference Sequence}

The first step in our proof consists of expressing $\langle a, \theta_t\rangle$ as a martingale difference sequence. To do so, we have the following result providing an alternative representation of the SGD iterates. 

\begin{lemma}\label{prop:iterate}
Let $\epsilon_i := Y_i - X^\top_i \beta^*$ for all $1 \leq i \leq t$. The $i^{\textrm{th}}$ least-squares online SGD iterate in~\eqref{eq:ithiterate} is given by:
\begin{align*}
       \theta_i = \bigg(\prod_{j = 0}^{i-1}(I - \eta_{i-j}X_{i-j}X^\top_{i-j})\bigg)\theta_0 + \sum_{j = 1}^{i}\eta_j\bigg(\prod_{k = 0}^{i-j-1}(I - \eta_{i-k}X_{i-k}X_{i-k}^\top)\bigg)X_j(X_j^\top \beta^* + \epsilon_j).
\end{align*}
In particular, the $t^{\textrm{th}}$ iterate (i.e., last iterate) is given by
\begin{align*}
    \theta_t = \bigg(\prod_{i = 0}^{t-1}(I - \eta_{t-i}X_{t-i}X^\top_{t-i})\bigg)\theta_0 + \sum_{i = 1}^{t}\eta_i\bigg(\prod_{k = 0}^{t-i-1}(I - \eta_{t-k}X_{t-k}X_{t-k}^\top)\bigg)X_i(X_i^\top \beta^* + \epsilon_i)
\end{align*}
\end{lemma}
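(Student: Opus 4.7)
The statement is a closed-form unrolling of the linear recursion defined by the SGD update, and the plan is to obtain it by a direct induction on $i$, or equivalently by telescoping. The key algebraic starting point is to rewrite the update in Equation~\eqref{eq:ithiterate} as
\begin{align*}
\theta_i \;=\; \theta_{i-1} + \eta_i X_i(Y_i - X_i^\top \theta_{i-1}) \;=\; (I - \eta_i X_i X_i^\top)\theta_{i-1} + \eta_i X_i Y_i,
\end{align*}
and then to substitute $Y_i = X_i^\top \beta^* + \epsilon_i$ so that
\begin{align*}
\theta_i \;=\; (I - \eta_i X_i X_i^\top)\,\theta_{i-1} + \eta_i X_i\bigl(X_i^\top \beta^* + \epsilon_i\bigr).
\end{align*}
This is a linear (inhomogeneous) recursion in $\theta$ driven by the random matrices $I - \eta_i X_i X_i^\top$ and the forcing terms $\eta_i X_i(X_i^\top\beta^* + \epsilon_i)$.

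From here I would carry out induction on $i$. The base case $i=1$ reads $\theta_1 = (I - \eta_1 X_1 X_1^\top)\theta_0 + \eta_1 X_1(X_1^\top \beta^* + \epsilon_1)$, which is exactly the claimed identity once one observes that the product in the statement is empty (equals $I$) when $i-j-1 = -1$ at $i=1,j=1$, and reduces to a single factor when $i=1$ in the initial term. For the inductive step, assuming the formula for $\theta_{i-1}$, substitute into $\theta_i = (I - \eta_i X_i X_i^\top)\theta_{i-1} + \eta_i X_i(X_i^\top \beta^* + \epsilon_i)$. The factor $(I - \eta_i X_i X_i^\top)$ multiplies every term of $\theta_{i-1}$ from the left, extending each product by one factor on the left and thereby shifting the upper index of the product; the new additive term $\eta_i X_i(X_i^\top\beta^* + \epsilon_i)$ corresponds exactly to the $j=i$ summand with an empty product. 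Re-indexing carefully then yields the claimed formula, and the $t$th iterate statement is just the specialization $i = t$.

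The only real care needed is bookkeeping of the non-commuting matrix products: the factors $(I - \eta_k X_k X_k^\top)$ must be written in decreasing order of $k$ from left to right, which is exactly the convention encoded by $\prod_{j=0}^{i-1}(I - \eta_{i-j}X_{i-j}X_{i-j}^\top)$ and $\prod_{k=0}^{i-j-1}(I - \eta_{i-k}X_{i-k}X_{i-k}^\top)$ in the statement. Verifying that the induction preserves this ordering, and that the empty-product convention gives the correct endpoint term (when $j=i$), is the only genuinely attention-requiring step. No probabilistic or analytic machinery is needed beyond the algebraic identity, so I expect the proof to be short and essentially computational.
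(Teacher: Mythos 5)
Your proposal is correct and follows the same route as the paper: rewrite the update as $\theta_i = (I - \eta_i X_i X_i^\top)\theta_{i-1} + \eta_i X_i Y_i$, unroll the recursion (which is exactly your induction), and substitute $Y_j = X_j^\top \beta^* + \epsilon_j$. The only difference is cosmetic — you substitute for $Y_i$ before unrolling while the paper does it after — so nothing further is needed.
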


Based on the above result, we construct our martingale difference sequence as follows. For all $1 \leq i \leq t$, define 
\begin{align*}
    M_{i} = \mathbb{E}(\langle a, \theta_t \rangle  | X_t, Y_t,X_{t-1}, Y_{t-1},\dots X_{t-i+1}, Y_{t-i+1})-\mathbb{E}(\langle a, \theta_t \rangle  | X_t,Y_t,X_{t-1}, Y_{t-1},\dots X_{t-i+2}, Y_{t-i+2}).
\end{align*}
Further, let $\mathfrak{F}_{i-1}$ be the $\sigma$-field generated by $\{X_t,Y_t, X_{t-1}, Y_{t-1},\dots X_{t-i+2}, Y_{t-i+2}\}$ for all $1 \leq i \leq t$. Then it is easy to see that $(M_i)_{1\leq i\leq t}$ is a martingale w.r.t. the filtration $(\mathfrak{F}_{i-1})_{1\leq i\leq t}$. This is because
\begin{align*}
\mathbb{E}[M_i | \mathfrak{F}_{i-1}] &=  \mathbb{E}[\mathbb{E}[\langle a, \theta_t \rangle| \mathfrak F_i] |  \mathfrak F_{i-1}] - \mathbb{E}[\mathbb{E}[\langle a, \theta_t \rangle| \mathfrak F_{i-1}] |  \mathfrak F_{i-1}]
\\
&= \mathbb{E}[\langle a, \theta_t \rangle| \mathfrak F_{i-1}] - \mathbb{E}[\langle a, \theta_t \rangle| \mathfrak F_{i-1}]
\\
&= 0,
\end{align*}
where the second inequality follows because $\mathfrak{F}_{i-1} \subseteq \mathfrak{F}_i$.
In the following lemma, we formally write $\langle a, \theta_t \rangle$ in terms of this martingale.

\begin{lemma}\label{lemma:martingalerep}
We have
\begin{align*}
    \langle a, \theta_t \rangle - \mathbb{E}(\langle a, \theta_t \rangle) = \sum_{i=1}^{t} M_i.
\end{align*}
Furthermore, for all $1 \leq i \leq t$,
\begin{align*}
    M_{t - i + 1} := &\bigg \langle a, \eta_i\bigg(\prod_{j = 0}^{t-i-1}(I - \eta_{t-j}X_{t-j}X^\top_{t-j})\bigg)(X_iX_i^\top - A) \bigg(\prod_{j = 1}^{i-1}(I - \eta_{i-j}A)\bigg)(\beta^* - \theta_0) \nonumber\\
    &+\epsilon_i \eta_i\bigg(\prod_{j = 0}^{t-i-1}(I - \eta_{t-j}X_{t-j}X^\top_{t-j})\bigg) X_i \bigg \rangle. 
\end{align*}
\end{lemma}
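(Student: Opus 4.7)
The first identity, $\langle a,\theta_t\rangle - \mathbb{E}\langle a,\theta_t\rangle = \sum_{i=1}^t M_i$, is the standard Doob martingale decomposition: the sum telescopes to $\mathbb{E}[\langle a,\theta_t\rangle \mid \mathfrak{F}_t] - \mathbb{E}[\langle a,\theta_t\rangle \mid \mathfrak{F}_0]$, and since $\theta_t$ is measurable with respect to $\mathfrak{F}_t = \sigma(X_1,Y_1,\ldots,X_t,Y_t)$ while $\mathfrak{F}_0$ is trivial, this equals $\langle a,\theta_t\rangle - \mathbb{E}\langle a,\theta_t\rangle$. The real content is the explicit formula for $M_{t-i+1}$, which I would derive by expanding $\theta_t$ via Lemma~\ref{prop:iterate} and then computing term-by-term conditional expectations under $\mathfrak{F}_{t-i+1}$ versus $\mathfrak{F}_{t-i}$.

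First I would write $\theta_t = P\theta_0 + \sum_{j=1}^t \eta_j P_j X_j(X_j^\top \beta^* + \epsilon_j)$, where $P_j := \prod_{\ell = j+1}^{t}(I - \eta_\ell X_\ell X_\ell^\top)$ (with the convention that leftmost factor has largest index) and $P := P_0$. The key observation is that $\mathfrak{F}_{t-i+1}$ contains $(X_i,Y_i,\ldots,X_t,Y_t)$ while $\mathfrak{F}_{t-i}$ contains $(X_{i+1},Y_{i+1},\ldots,X_t,Y_t)$, so the single extra piece of information in $\mathfrak{F}_{t-i+1}$ is $(X_i,Y_i)$. Consequently: for $j > i$, the entire $j$-th summand is $\mathfrak{F}_{t-i}$-measurable and contributes $0$ to $M_{t-i+1}$; for $j \le i$, one should split $P_j = P_i \cdot (I - \eta_i X_i X_i^\top) \cdot \prod_{\ell = j+1}^{i-1}(I - \eta_\ell X_\ell X_\ell^\top)$ and exploit i.i.d.\ independence of $\{(X_\ell,Y_\ell)\}$ to factor each conditional expectation as a product of marginal expectations.

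After this factoring, using $\mathbb{E}[X_j X_j^\top] = A$, $\mathbb{E}[X_j\epsilon_j] = 0$ (from the first-order optimality $\beta^* = \arg\min \mathbb{E}[(Y - X^\top\theta)^2]$), and $\mathbb{E}[\prod_\ell (I - \eta_\ell X_\ell X_\ell^\top)] = \prod_\ell(I - \eta_\ell A)$ (these commute since they are polynomials in $A$), one finds that for $j < i$ the contribution to $M_{t-i+1}$ is $-\eta_i \eta_j \langle a, P_i (X_i X_i^\top - A)\prod_{\ell = j+1}^{i-1}(I - \eta_\ell A)\, A\beta^*\rangle$, for $j = i$ it is $\eta_i \langle a, P_i (X_i X_i^\top - A)\beta^*\rangle + \eta_i \epsilon_i \langle a, P_i X_i\rangle$, and for the $P\theta_0$ piece it is $-\eta_i \langle a, P_i(X_iX_i^\top - A)\prod_{\ell=1}^{i-1}(I - \eta_\ell A)\theta_0\rangle$.

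The main obstacle, and the step where the formula earns its elegance, is then an algebraic telescoping to collapse all of the $(X_iX_i^\top - A)$-containing contributions into a single term proportional to $(\beta^* - \theta_0)$. Specifically, I would establish the identity
\begin{align*}
\sum_{j=1}^{i-1} \eta_j A \prod_{\ell=j+1}^{i-1}(I - \eta_\ell A) \;=\; I - \prod_{\ell=1}^{i-1}(I - \eta_\ell A),
\end{align*}
which follows from the telescoping sum $I - \prod_{\ell=1}^{i-1}(I - \eta_\ell A) = \sum_{j=1}^{i-1}\left[\prod_{\ell=j+1}^{i-1}(I - \eta_\ell A) - \prod_{\ell=j}^{i-1}(I - \eta_\ell A)\right]$. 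Plugging this into the bracketed expression $-\prod_{\ell=1}^{i-1}(I - \eta_\ell A)\theta_0 + \beta^* - \sum_{j=1}^{i-1}\eta_j A\prod_{\ell=j+1}^{i-1}(I - \eta_\ell A)\beta^*$ immediately collapses it to $\prod_{\ell=1}^{i-1}(I - \eta_\ell A)(\beta^* - \theta_0)$, and reindexing $\ell = i - j$ rewrites the product as $\prod_{j=1}^{i-1}(I - \eta_{i-j} A)$, which matches the stated formula. The residual $\eta_i \epsilon_i \langle a, P_i X_i\rangle$ from the $j = i$ case is exactly the noise term in the claim.
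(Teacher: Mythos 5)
Your proposal is correct and follows essentially the same route as the paper: the Doob telescoping decomposition, an explicit term-by-term computation of the conditional expectations of the expanded iterate from Lemma~\ref{prop:iterate} using independence and $\mathbb{E}[\epsilon X]=0$, and the collapse of the $(X_iX_i^\top-A)$ contributions via the identity $\sum_{j=1}^{i-1}\eta_j A\prod_{\ell=j+1}^{i-1}(I-\eta_\ell A)=I-\prod_{\ell=1}^{i-1}(I-\eta_\ell A)$, which is exactly the paper's Lemma~\ref{matrix-algebra-identity}. The only cosmetic difference is that you prove that identity by a direct telescoping sum whereas the paper uses induction on $i$; both are valid.
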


\subsection*{\textcolor{purple}{Step 2}: Applying the Martingale CLT}
The above representation, enables us to leverage Berry-Esseen bounds developed for one-dimensional martingale difference sequences. For a square integrable martingale difference sequence $\mathbf{M} = (M_1, M_2,\dots M_t)$, let
\begin{align*}
  S(\mathbf{M}) := \sum_{i = 1}^{t}M_i,   \qquad  s^2(\mathbf{M}) := \sum_{i = 1}^{t}\mathbb{E}(M_i^2), \qquad
    V^2(\mathbf{M}) := s^{-2}(\mathbf{M})\sum_{i = 1}^{t}\mathbb{E}\big(M_i^2|\mathfrak{F}_{i-1}\big).
\end{align*}
For a random variable $U$, let $\|U\|_p := \mathbb{E}[|U|^p]^{\frac{1}{p}}$. Then, we have the following well-known result. 
\begin{theorem}[\cite{haeusler1988rate}]\label{ppn:Mourrat} 
Fix some $p \geq 1$. There exists $C_p > 0$ such that \\
\begin{align}\label{eq:rhsmartingale}
    D(\mathbf{M}) \leq C_p\bigg(\|V^2(\mathbf{M}) - 1\|_p^p + s^{-2p}(\mathbf{M})\sum_{i = 1}^{t}\|M_i\|_{2p}^{2p}\bigg)^{\frac{1}{2p+1}},
\end{align}
where
\begin{align*}
    D(\mathbf{M})\coloneqq \sup_{\kappa \in \mathbb{R}}|\mathbb{P}(S(\mathbf{M})/s(\mathbf{M}) \leq \kappa) - \Phi(\kappa)|.
\end{align*}
\end{theorem}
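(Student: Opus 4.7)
The plan is to prove the bound via a combination of the Lindeberg exchange principle, a truncation argument, and a Stein-type smoothing inequality, which is the template Bolthausen and Haeusler use for martingale Berry--Esseen estimates. After normalizing so that $s^2(\mathbf{M})=1$, I would introduce an auxiliary sequence $(Z_i)_{1\le i\le t}$ of conditionally Gaussian random variables with $Z_i\mid\mathfrak{F}_{i-1}\sim\mathcal{N}(0,\mathbb{E}[M_i^2\mid\mathfrak{F}_{i-1}])$, chosen so that the $Z_i$'s are mutually independent of the original $M_j$'s given the terminal $\sigma$-field. Conditionally on $\mathfrak{F}_t$, the sum $\sum_i Z_i$ is then $\mathcal{N}(0,V^2(\mathbf{M}))$, which makes $(Z_i)$ the natural Gaussian comparison sequence.

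The argument splits the target distance $D(\mathbf{M})$ via the triangle inequality into a swap error $\sup_\kappa|\mathbb{P}(\sum_i M_i\le\kappa)-\mathbb{P}(\sum_i Z_i\le\kappa)|$ and a conditional-variance error $\sup_\kappa|\mathbb{P}(\sum_i Z_i\le\kappa)-\Phi(\kappa)|$. For the swap error, first truncate each $M_i$ at a level $\lambda>0$ and recentre to obtain $\bar M_i=M_i\mathbf{1}_{|M_i|\le\lambda}-\mathbb{E}[M_i\mathbf{1}_{|M_i|\le\lambda}\mid\mathfrak{F}_{i-1}]$; Chebyshev on the removed tail contributes at most $\lambda^{-(2p-1)}\sum_i\|M_i\|_{2p}^{2p}$. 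Then exchange $\bar M_i$ for $Z_i$ one index at a time by Taylor-expanding a smoothed indicator $f_\kappa^\epsilon$ to third order; since the conditional second moments match by construction, the leading per-swap error is $\mathbb{E}[|\bar M_i|^3+|Z_i|^3]\lesssim \lambda\,\mathbb{E}[M_i^2]$, which telescopes to $O(\lambda)$. For the conditional-variance error, $\sum_i Z_i\mid\mathfrak{F}_t\sim\mathcal{N}(0,V^2)$, so a direct Gaussian comparison yields $|\mathbb{P}(\sum_i Z_i\le\kappa\mid\mathfrak{F}_t)-\Phi(\kappa)|\lesssim |V^2-1|$, and averaging gives a bound controlled by $\|V^2-1\|_p$, possibly to a fractional power after combining with the smoothing scale.

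The remaining step is to tune the two free parameters --- the smoothing scale $\epsilon$ of $f_\kappa^\epsilon$ and the truncation level $\lambda$ --- to balance the contributions. Roughly, the overall error takes the form $\epsilon+\lambda+\lambda^{-(2p-1)}\sum_i\|M_i\|_{2p}^{2p}+\epsilon^{-1}\|V^2-1\|_p^p$, and minimizing jointly over $(\epsilon,\lambda)$ reproduces the exponent $1/(2p+1)$ in the stated bound~\eqref{eq:rhsmartingale}.

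The main obstacle I expect is executing the Lindeberg swap in the non-independent martingale setting. Because $M_i$ is $\mathfrak{F}_i$-measurable and all downstream increments depend on the entire past, each exchange must be performed conditionally on $\mathfrak{F}_{i-1}$ while still controlling the cross-terms generated by the subsequent $M_j$'s that have not yet been replaced. This forces a careful iterated-conditioning argument and exact matching of the conditional second moments of $\bar M_i$ and $Z_i$; any slack in that matching, or in the choice of smoothing kernel, degrades the exponent from $1/(2p+1)$ to something weaker such as $1/(2p+2)$, so the precise bookkeeping around the Taylor remainder is the delicate point.
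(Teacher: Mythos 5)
First, note that the paper does not prove this statement at all: Theorem~\ref{ppn:Mourrat} is imported verbatim from Haeusler (1988) (see also Mourrat, 2013), where it is established via Esseen's smoothing inequality applied to characteristic functions, combined with an optional-stopping modification of the martingale that forces the conditional variance to stay near $1$. So there is no internal proof to compare against; your attempt has to stand on its own, and as written it does not.

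The decisive gaps are in the error accounting and in the swap itself. (a) Your stated total error $\epsilon+\lambda+\lambda^{-(2p-1)}\sum_i\|M_i\|_{2p}^{2p}+\epsilon^{-1}\|V^2(\mathbf{M})-1\|_p^p$ cannot be right: optimizing it gives $(\sum_i\|M_i\|_{2p}^{2p})^{1/2p}+\|V^2(\mathbf{M})-1\|_p^{p/2}$, which is \emph{strictly stronger} than the exponent $1/(2p+1)$ that Haeusler proved to be unimprovable. The source of the over-optimism is that a third-order Taylor expansion of a smoothed indicator $f_\kappa^\epsilon$ carries the factor $\|(f_\kappa^\epsilon)'''\|_\infty\asymp\epsilon^{-3}$, so the telescoped swap error is $\epsilon^{-3}\lambda$, not $\lambda$; rebalancing with that factor (and with the correct tail cost $\lambda^{-2p}\sum_i\|M_i\|_{2p}^{2p}$ from the truncation event) yields an exponent on the order of $1/(8p+1)$, far weaker than the claim. (b) The bound $\mathbb{E}|Z_i|^3\lesssim\lambda\,\mathbb{E}[M_i^2]$ is false: $\mathbb{E}|Z_i|^3\asymp\mathbb{E}[(\mathbb{E}[M_i^2\mid\mathfrak F_{i-1}])^{3/2}]$ has nothing to do with the truncation level of $M_i$, so the Gaussian side of the swap is uncontrolled without a separate truncation of the conditional variances — which is exactly the stopping-time device in Haeusler's actual proof. (c) Most fundamentally, the obstacle you flag at the end is not a bookkeeping issue but the crux: since $Z_j$ for $j>k$ has conditional variance $\mathbb{E}[M_j^2\mid\mathfrak F_{j-1}]$, which is a function of $M_k$, replacing $M_k$ by $Z_k$ changes the law of every subsequent replacement, and the Lindeberg telescoping does not decouple. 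Conditioning on $\mathfrak F_t$ to freeze the variances destroys the exchangeability needed for the swap. No choice of smoothing kernel or Taylor order fixes this; it is the reason the known proofs abandon the exchange principle in favor of characteristic functions. As it stands, the proposal identifies the right objects but does not constitute a proof of \eqref{eq:rhsmartingale}.
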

\noindent To apply Theorem~\ref{ppn:Mourrat} to our setting, first observe that
if $i < j$, then $$M_i M_j = [f(X_{t-j+2}, \epsilon_{t-j+2}, \dots X_t, \epsilon_t)]^\top(\epsilon_{t-j+1}X_{t-j+1})$$ for some function $f$. Using the later $X_k's$ are independent of $X_{t-j+1}, \epsilon_{t-j+1}$ and the standard fact that $$\mathbb{E}[\epsilon_{t-j+1}X_{t-j+1}] = \mathbb{E}[(Y-X^\top \beta^*)X] = 0, $$we have
\begin{align*}
   \mathbb{E}(M_iM_j) = 0 \qquad \forall i \not =  j.
\end{align*}
Thus
\begin{align*}
    s^2(\mathbf{M}) &:= \sum_{i = 1}^t\mathbb{E}[M^2_i]
    \\
    &=\mathbb{E}\bigg(\sum_{i = 1}^tM_i\bigg)^2 
    \\
    &= \mathbb{E}[\langle a, \theta_t\rangle - \mathbb{E}\langle a, \theta_t \rangle]^2
    \\
    &= \Var(\langle a, \theta_t \rangle).
\end{align*}
This immediately implies that
\begin{align*}
    D(\textbf{M}) = \sup_{\gamma \in \mathbb{R}}\bigg|\mathbb{P}\bigg(\frac{\langle a, \theta_t \rangle - \mathbb{E}\langle a, \theta_t \rangle}{\sqrt{\Var\langle a, \theta_t \rangle}}\leq \gamma\bigg) - \Phi(\gamma)\bigg|,
\end{align*}
which is the quantity that we wish to upper bound. 

\subsection*{\texorpdfstring{\textcolor{purple}{Step 3}: Alternative representation for the RHS of~\eqref{eq:rhsmartingale}}{Step 3: Alternative representation for the RHS of Eq.~(\ref{eq:rhsmartingale})}}


To derive the required result, it remains to compute the quadratic variance and moment terms, 
\begin{align}\label{eq:VandS}
\|V^2(\mathbf{M}) - 1\|_p^p\qquad~\text{and}~\qquad~s^{-2p}(\mathbf{M})\sum_{i = 1}^t\|M_i\|_{2p}^{2p},
\end{align}
appearing in the right hand side of~\eqref{eq:rhsmartingale}. To proceed, we introduce the following notations that will be used to state our results. We define 

\begin{itemize}\label{notation}
    \item $R_i \coloneqq \prod_{j = i+1}^{t}(I - \eta_{j}X_{j}X_{j}^\top)$ and, $S_i \coloneqq \prod_{j = 1}^{i-1}(I - \eta_{i-j}A)$
    \item $u_i \coloneqq R_ia$ and, $v_i \coloneqq S_i(\beta^* - \theta_0)$
    \item $\mathcal A_i := \mathbb{E}[(X_iX_i^\top - A)v_i v_i^\top (X_iX_i^\top - A) + \epsilon^2_iX_iX_i^\top + \epsilon_i X_iv^\top_i(X_iX_i^\top - A) + \epsilon_i (X_iX^\top_i - A)v_i X^\top_i]$.
\end{itemize}
Observe that in the preceding definition, all quantities are deterministic except for $R_i$ and $u_i := R_i a$. The matrix $R_i$, being a product of random matrices, requires careful analysis; in particular, deriving moment and concentration bounds for $R_i a$ constitutes a central component of our proof (see Section~\ref{R_i}). With this notation established, we now provide alternative representations for the terms in \eqref{eq:VandS}.
\begin{lemma}\label{lem:VandS}
We have that
\begin{align*}
    V^2(\mathbf{M}) - 1 =&\frac{\sum_{i = 1}^t \eta_i^2 (u^\top_i \mathcal A_i u_i - \mathbb{E}[u^\top_i \mathcal A_i u_i]) }{\sum_{i = 1}^t \eta_i^2 \mathbb{E}\langle u_i, (X_iX_i^\top - A)v_i + \epsilon_iX_i \rangle^2},
\end{align*}
and
\begin{align*}
    s^{-2p}(\mathbf{M})\sum_{i=1}^{t}\|M_i\|_{2p}^{2p} =\frac{\sum_{i = 1}^t \eta_i^{2p} \mathbb{E}\langle u_i, (X_iX_i^\top - A)v_i + \epsilon_iX_i \rangle^{2p}}{(\sum_{i = 1}^t \eta_i^2 \mathbb{E}\langle u_i, (X_iX_i^\top - A)v_i + \epsilon_iX_i \rangle^2)^p}.
\end{align*}
\end{lemma}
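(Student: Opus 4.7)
The overall plan is to exploit the special structure of $M_{t-i+1}$ revealed by Lemma~\ref{lemma:martingalerep}, which separates cleanly into a piece involving the ``future'' randomness $X_{i+1},\dots,X_t$ (encoded in $R_i$, and hence in $u_i = R_i a$) and a piece involving only the ``present'' pair $(X_i, \epsilon_i)$ (entering through $(X_i X_i^\top - A)v_i + \epsilon_i X_i$, with $v_i = S_i(\beta^*-\theta_0)$). Because each factor $(I - \eta_j X_j X_j^\top)$ is symmetric, taking the transpose reverses the order of the product, so the operator appearing in $M_{t-i+1}$ is $R_i^\top$, and the identity $\langle a, R_i^\top z\rangle = \langle u_i, z\rangle$ yields the compact representation
\begin{align*}
M_{t-i+1} = \eta_i \, \langle u_i,\ (X_i X_i^\top - A)\, v_i + \epsilon_i X_i\rangle.
\end{align*}

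Next I would compute the conditional second moment $\mathbb{E}[M_{t-i+1}^2 \mid \mathfrak{F}_{t-i}]$. By construction, $\mathfrak{F}_{t-i}$ is generated by $\{(X_j, Y_j)\}_{j=i+1}^{t}$, so $u_i = R_i a$ is $\mathfrak{F}_{t-i}$-measurable, while $(X_i, \epsilon_i)$ is independent of $\mathfrak{F}_{t-i}$. Treating $u_i$ as frozen under the conditioning, the conditional expectation of the squared bracket becomes
\begin{align*}
u_i^\top\, \mathbb{E}\bigl[((X_i X_i^\top - A) v_i + \epsilon_i X_i)((X_i X_i^\top - A) v_i + \epsilon_i X_i)^\top\bigr]\, u_i,
\end{align*}
and expanding the outer product produces exactly the four summands defining $\mathcal A_i$. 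Hence $\mathbb{E}[M_{t-i+1}^2 \mid \mathfrak{F}_{t-i}] = \eta_i^2\, u_i^\top \mathcal A_i u_i$. Taking an unconditional expectation (equivalently, reading the same identity through the tower property) gives $\mathbb{E}[M_{t-i+1}^2] = \eta_i^2\, \mathbb{E}\langle u_i, (X_i X_i^\top - A) v_i + \epsilon_i X_i\rangle^2$, so summing in $i$ identifies $s^2(\mathbf{M})$ with the denominator appearing on the right-hand side of both displays.

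Substituting these two identities into the definition $V^2(\mathbf{M}) = s^{-2}(\mathbf{M}) \sum_{j=1}^t \mathbb{E}[M_j^2 \mid \mathfrak{F}_{j-1}]$ and subtracting one yields the claimed quotient for $V^2(\mathbf{M}) - 1$. For the moment term the computation is even more direct: the same measurability/independence argument gives $\|M_{t-i+1}\|_{2p}^{2p} = \eta_i^{2p}\, \mathbb{E}\langle u_i, (X_i X_i^\top - A) v_i + \epsilon_i X_i\rangle^{2p}$, and summing over $i$ and dividing by $s^{2p}(\mathbf{M})$ gives the second display.

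The main obstacle is essentially absent at this step; the lemma is a clean bookkeeping identity, and the only subtlety is to verify the $\mathfrak{F}_{t-i}$-measurability of $u_i$ alongside the independence of $(X_i, \epsilon_i)$ from $\mathfrak{F}_{t-i}$, which is precisely what makes the $\mathcal A_i$ expansion collapse with no residual cross-terms. The genuinely difficult work is deferred to the subsequent sections, where one must obtain moment and concentration estimates for the random vector $u_i = R_i a$ --- whose matrix $R_i$ is a non-commuting product of random projection-like factors --- in order to bound the numerators on the right-hand side of both displays.
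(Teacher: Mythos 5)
Your proposal is correct and follows essentially the same route as the paper: substitute the representation $M_{t-i+1}=\eta_i\langle u_i,(X_iX_i^\top-A)v_i+\epsilon_iX_i\rangle$ from Lemma~\ref{lemma:martingalerep}, use that $u_i$ is $\mathfrak{F}_{t-i}$-measurable while $(X_i,\epsilon_i)$ is independent of $\mathfrak{F}_{t-i}$ to get $\mathbb{E}[M_{t-i+1}^2\mid\mathfrak{F}_{t-i}]=\eta_i^2\,u_i^\top\mathcal A_i u_i$, and plug into the definitions of $V^2(\mathbf{M})$ and $s^2(\mathbf{M})$. Your added remarks on the transpose/reindexing and on the identification of $s^2(\mathbf{M})$ with the common denominator are consistent with the paper's argument.
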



\subsection*{\texorpdfstring{\textcolor{purple}{Step 4}: Bounding the RHS of~\eqref{eq:rhsmartingale}}{Step 4: Bounding the RHS of Eq.~(\ref{eq:rhsmartingale})}}

Based on the above representation, we have the following results that provide upper bounds on $\|V^2(\mathbf{M}) - 1\|_p^p$ and $s^{-2p}(\mathbf{M})\sum_{i = 1}^t\|M_i\|^{2p}_{2p}.$
\begin{theorem}\label{thm:BoundFirstTerm} Recall the assumptions \ref{assump} on $X, \epsilon$ and the step-size $\eta_i$. Under these assumptions, we have that
\begin{align*}
    \|V^2(\mathbf{M}) - 1\|_p^p &\leq C^p[\sigma^2/\sigma^2_{\min}]^p (\eta \lambda_{\min}(A))^{-\frac{5p}{2}}(\log t + \log d)^{\frac{3p}{2}}t^{-\frac{p\alpha}{2}}d^{\frac{p}{4}}, 
\end{align*} 
for all $t,d \geq C$ and $2 \leq p \leq p_{\max}$. Here $C > 0$ represents a generic absolute constant.
\end{theorem}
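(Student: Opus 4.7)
The overall plan is to exploit the ratio representation of Lemma~\ref{lem:VandS},
\[ V^2(\mathbf{M}) - 1 \;=\; \frac{N}{s^2(\mathbf{M})}, \qquad N := \sum_{i=1}^t \eta_i^2\bigl(u_i^\top \mathcal{A}_i u_i - \mathbb{E}[u_i^\top \mathcal{A}_i u_i]\bigr), \]
and bound $\|V^2(\mathbf{M})-1\|_p^p = \|N\|_p^p / s^{2p}(\mathbf{M})$ by separately (i) lower-bounding the denominator and (ii) upper-bounding the $L^p$-norm of the numerator. For (i), since $s^2(\mathbf{M}) = \Var\langle a,\theta_t\rangle$, I would invoke the asymptotic formula from Theorem~\ref{lem:VarAsymp} (whose error term is small under Assumption~\ref{assump}(vi)) together with the spectral lower bound $\lambda_{\min}(A_\sigma) \geq \sigma_{\min}^2 \lambda_{\min}(A)$ of Assumption~\ref{assump}(i); a diagonal-dominance-type estimate $\sum_{k,k'} a_k a_{k'}[A_\sigma]_{k,k'}/(\lambda_k+\lambda_{k'}) \geq a^\top A_\sigma a/(2\lambda_{\max}(A))$ then yields $s^2(\mathbf{M}) \gtrsim \eta\, d^{-1/2} t^{-\alpha}\,\sigma_{\min}^2 \lambda_{\min}(A)|a|^2/\bar\lambda$. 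Raised to the $p$-th power, this already contributes the factors $\sigma_{\min}^{-2p} d^{p/2} t^{p\alpha}$ together with appropriate powers of $\eta\lambda_{\min}(A)$.

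The crucial step is (ii). The observation is that each $Y_i := u_i^\top \mathcal{A}_i u_i$ depends only on $(X_{i+1},\ldots,X_t)$, so it is measurable with respect to the time-reversed filtration $\mathcal{F}_k := \sigma(X_t, X_{t-1}, \ldots, X_{t-k+1})$ at index $k = t-i$. Writing $T := \sum_i \eta_i^2 Y_i$, take the Doob decomposition
\[ T - \mathbb{E}[T] = \sum_{k=1}^{t-1} H_k, \qquad H_k := \sum_{i \leq t-k} \eta_i^2\bigl(\mathbb{E}[Y_i|\mathcal{F}_k] - \mathbb{E}[Y_i|\mathcal{F}_{k-1}]\bigr),\]
so that $(H_k)$ is a martingale difference sequence with respect to $(\mathcal{F}_k)$. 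By Burkholder–Davis–Gundy and the $L^{p/2}$ triangle inequality one obtains $\|N\|_p^p \leq C_p^p\,(\sum_k \|H_k\|_p^2)^{p/2}$, reducing everything to a per-block bound on $\|H_k\|_p$.

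To control $\|H_k\|_p$, note that conditioning on $\mathcal{F}_{k-1}$ fixes all of $X_t,\ldots,X_{t-k+2}$, so the only active source of randomness inside $H_k$ is $X_{t-k+1}$, which enters each $R_i$ (for $i \leq t-k$) only through the single factor $(I - \eta_{t-k+1} X_{t-k+1} X_{t-k+1}^\top)$. Consequently, each increment $\Delta_i^{(k)} := \mathbb{E}[Y_i|\mathcal{F}_k] - \mathbb{E}[Y_i|\mathcal{F}_{k-1}]$ can be expanded as $O(\eta_{t-k+1})$ times a bilinear/quadratic form in $X_{t-k+1}$, the partial products of $R_i$ pre- and post-$X_{t-k+1}$, and the operator norm of $\mathcal{A}_i$. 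The uniform smoothness inequality of Lemma~\ref{uniformconvexity3} is then applied iteratively — peeling off the randomness coming from $X_t, X_{t-1},\ldots$ one variable at a time — to turn these per-step fluctuations into an aggregated $L^p$ bound on $\|H_k\|_p$, combined with the high-moment control on $|R_i a|$ and $\|\mathcal{A}_i\|_{\mathrm{op}}$ coming from Assumption~\ref{assump}(iv), Assumption~\ref{assump}(vi), and the step-size choice. Summing the resulting bound over $k$ — the series $\sum_k \eta_{t-k+1}^4 \asymp \eta^4/d^2$ is convergent because $\alpha > 1/2$ — and dividing by the denominator bound from (i) yields the claimed rate, with the $(\log t + \log d)^{3p/2}$ factor arising from truncation to polylogarithmic-norm events on $X_i$ and $R_i a$.

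The principal obstacle is exactly the sharp control of $\|H_k\|_p$. A naive triangle-inequality bound $\|H_k\|_p \leq \sum_{i \leq t-k} \eta_i^2 \|\Delta_i^{(k)}\|_p$ loses an extra factor of roughly $\sqrt d$ and yields only a trivial $d^{1/2}$ dimensional dependence in the end, whereas the theorem requires $d^{1/4}$. Recovering this sharper scaling demands exploiting the \emph{inner} martingale cancellation across $i$ at fixed $k$ (which is where Lemma~\ref{uniformconvexity3} enters, by propagating $L^p$ fluctuations additively rather than via triangle), together with the contraction estimate $\mathbb{E}[R_i^\top R_i] \preceq$ an operator whose norm is governed by $(\eta \lambda_{\min}(A))^{-1}$. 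The combination of these contraction bounds in both the denominator lower bound and the $R_i$-moment estimates is what produces the final factor $(\eta\lambda_{\min}(A))^{-5p/2}$.
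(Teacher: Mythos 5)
Your overall architecture is sound and is essentially a reindexing of the paper's argument rather than a different proof. The paper bounds $\|\mathcal N\|_p$ by applying Minkowski's inequality over $i$ first, $\mathbb{E}|\mathcal N|^p \le \bigl(\sum_i (\mathbb{E}|\mathcal N_i|^p)^{1/p}\bigr)^p$, and then controls each $\|\mathcal N_i\|_p = \eta_i^2\|u_i^\top\mathcal A_i u_i - \mathbb{E}[u_i^\top\mathcal A_i u_i]\|_p$ by Lemma~\ref{conc2}, which is exactly an iterated application of Lemma~\ref{uniformconvexity3} along the reverse filtration $\mathcal F_k = \sigma(X_t,\dots,X_{t-k+1})$, i.e.\ an $\ell^2$-type aggregation over the product index $j$. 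Your plan swaps the nesting: $\ell^2$ (BDG / uniform smoothness) over $k$ outside, $\ell^1$ over $i$ inside. Since both decompositions use the identical atoms $V_j^{(i)} = \mathbb{E}[Y_i\,|\,\mathcal F_{t-j+1}]-\mathbb{E}[Y_i\,|\,\mathcal F_{t-j}]$, and the per-atom bound approximately factorizes as $\|V_j^{(i)}\|_p \lesssim \eta_i^2\, j^{-\alpha}\, e^{-c(t-i)/t_0}\,\lambda_{\max}(\mathcal A_i)|a|^2$ with $t_0 \sim (\eta\lambda_{\min}(A))^{-1} d^{1/2}t^{\alpha}$, both nesting orders produce the same $t_0^{3/2}$ in the numerator and hence the same final $d^{1/4}t^{-\alpha/2}$. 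The supporting ingredients you would need ($\mathbb{E}|R_i a|^{2p}$ via Lemma~\ref{conc1}, the contraction of $\lambda_{\max}(\mathcal A_{i,k})$ via Lemma~\ref{conchelper}, and the denominator bound via Theorem~\ref{lem:VarAsymp}) are the same ones the paper uses.

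The genuine problem is your final paragraph. The triangle inequality $\|H_k\|_p \le \sum_{i\le t-k}\eta_i^2\|\Delta_i^{(k)}\|_p$ that you dismiss as losing a factor $\sqrt d$ is in fact exactly what is needed: because of the exponential damping $e^{-c\,\eta\lambda_{\min}(A)d^{-1/2}(t-i)t^{-\alpha}}$ carried by each $\|\Delta_i^{(k)}\|_p$, the sum over $i$ contributes only a factor $\sim t_0$, and the outer $\ell^2$ aggregation over $k$ contributes the additional $t_0^{1/2}$, yielding $t_0^{3/2}\sim d^{3/4}$ and hence $d^{1/4}$ after dividing by $s^{2}(\mathbf{M}) \gtrsim (\eta\lambda_{\min}(A))(\eta\sigma_{\min}^2)d^{-1/2}t^{-\alpha}|a|^2$. (The $d^{1/2}$ loss you fear occurs only if one also replaces the outer $\ell^2$ over $k$ by a triangle inequality.) Worse, the remedy you propose --- ``inner martingale cancellation across $i$ at fixed $k$'' --- does not exist: conditionally on $\mathcal F_{k-1}$, all increments $\Delta_i^{(k)}$ with $i\le t-k$ are driven by the \emph{same} random vector $X_{t-k+1}$, so they are mutually correlated and there is no filtration in $i$ along which they form a martingale difference sequence; pursuing this would stall the proof. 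A minor further correction: the $(\log t+\log d)^{3p/2}$ factor does not come from truncating to polylogarithmic-norm events on $X_i$; it enters through the cutoff $t_0 \propto t^{\alpha}d^{1/2}(\log t+\log d)/(\eta\lambda_{\min}(A))$ beyond which the exponentially damped terms are discarded, via $t_0^{3/2}$.
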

\begin{theorem}\label{thm:BoundSecondTerm} Recall the assumptions \ref{assump} on $X, \epsilon$ and the step-size $\eta_i$. Under these assumptions, we have that
\begin{align*}
s^{-2p}(\mathbf{M})\sum_{i = 1}^t\|M_i\|^{2p}_{2p} \leq C^p (\eta \lambda_{\min}(A))^{-p}[\sigma^2/\sigma^2_{\min}]^p d^{-\frac{p}{2}}t^{1-p\alpha} ,
\end{align*}
for all $t,d \geq C$ and $2 \leq p \leq p_{\max}$. Here $C > 0$ represents a generic absolute constant.
\end{theorem}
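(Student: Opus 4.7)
The plan is to start from Lemma~\ref{lem:VandS}, which expresses
\[
s^{-2p}(\mathbf M)\sum_{i=1}^{t}\|M_i\|_{2p}^{2p}=\frac{\sum_i \eta_i^{2p}\,\mathbb E\langle u_i,(X_iX_i^\top-A)v_i+\epsilon_iX_i\rangle^{2p}}{\bigl(\sum_i \eta_i^{2}\,\mathbb E\langle u_i,(X_iX_i^\top-A)v_i+\epsilon_iX_i\rangle^{2}\bigr)^{p}},
\]
and then to bound the numerator from above and the denominator from below. For the numerator, I would split the $2p$-th power using $(a+b)^{2p}\le 2^{2p-1}(a^{2p}+b^{2p})$ into the noise-linear piece $\mathbb E[\epsilon_i^{2p}\langle u_i,X_i\rangle^{2p}]$ and the quadratic-fluctuation piece $\mathbb E\langle u_i,(X_iX_i^\top-A)v_i\rangle^{2p}$. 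The crucial observation is that $u_i=R_i a$ depends only on $\{(X_j,\epsilon_j)\}_{j>i}$ and is therefore independent of $(X_i,\epsilon_i)$; conditioning on $u_i$ and applying Cauchy--Schwarz with the moment bounds of Assumption~\ref{assump}(ii) and~\ref{assump}(iv) yields $\mathbb E[\epsilon_i^{2p}\langle u_i,X_i\rangle^{2p}]\le \sigma^{2p}\bar\lambda^{p}\,\mathbb E|u_i|^{2p}$ and $\mathbb E\langle u_i,(X_iX_i^\top-A)v_i\rangle^{2p}\lesssim \bar\lambda^{2p}|v_i|^{2p}\,\mathbb E|u_i|^{2p}$.

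The next step is to control $\mathbb E|u_i|^{2p}=\mathbb E|R_ia|^{2p}$ and $|v_i|$. For the deterministic $v_i=\prod_{j=1}^{i-1}(I-\eta_{i-j}A)(\beta^*-\theta_0)$, plain matrix contraction gives $|v_i|\lesssim|\beta^*-\theta_0|\exp(-c\,\eta\lambda_{\min}(A)\,i^{1-\alpha}/\sqrt d)$, which makes the quadratic-fluctuation contribution negligible relative to the noise-linear one once $i\gtrsim t^{\alpha}\sqrt d/(\eta\lambda_{\min}(A))$. For $\mathbb E|R_ia|^{2p}$, I would invoke the random-matrix-product estimates developed in Section~\ref{R_i} (built on Lemma~\ref{uniformconvexity3}, adapted from \cite{huang2022matrix}) together with the step-size condition $\eta\bar\lambda\le C$ from Assumption~\ref{assump}(vi). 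The bound I would extract is of the form $\mathbb E|R_ia|^{2p}\lesssim|a|^{2p}\exp\bigl(-c\,p\,\eta\lambda_{\min}(A)\,(t^{1-\alpha}-i^{1-\alpha})/\sqrt d\bigr)$, which encodes both uniform operator control and the effective mean-square contraction rate of the product.

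For the denominator I would appeal to Theorem~\ref{lem:VarAsymp} combined with Assumption~\ref{assump}(i): using the identity $1/(\lambda_k+\lambda_{k'})=\int_0^\infty e^{-(\lambda_k+\lambda_{k'})s}\,ds$ converts the quadratic form $\sum a_ka_{k'}[A_\sigma]_{k,k'}/(\lambda_k+\lambda_{k'})$ into $\int_0^\infty a^\top e^{-As}A_\sigma e^{-As}a\,ds$, and the spectral inequality $A_\sigma\succeq\sigma_{\min}^2 A$ then bounds this from below by $\sigma_{\min}^2|a|^2/2$, giving $s^2(\mathbf M)\gtrsim \eta\sigma_{\min}^{2}|a|^{2}d^{-1/2}t^{-\alpha}$. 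To assemble the bounds, the exponential factor in $\mathbb E|R_ia|^{2p}$ concentrates the sum $\sum_i\eta_i^{2p}\mathbb E|u_i|^{2p}$ on a window of length $\asymp t^{\alpha}\sqrt d/(\eta\lambda_{\min}(A))$ near $i=t$, on which $\eta_i^{2p}\approx\eta^{2p}/(d^{p}t^{2p\alpha})$ and $\mathbb E|u_i|^{2p}\approx|a|^{2p}$. A direct integral approximation on that window produces a numerator of order $\sigma^{2p}\bar\lambda^{p}\eta^{2p-1}|a|^{2p}/(\lambda_{\min}(A)\,d^{p-1/2}\,t^{(2p-1)\alpha})$; dividing by $s^{2p}\gtrsim \eta^{p}\sigma_{\min}^{2p}|a|^{2p}d^{-p/2}t^{-p\alpha}$ and using $\eta\bar\lambda\le C$ together with $\sqrt d/t^{1-\alpha}\to 0$ (from Assumption~\ref{assump}(vi)) to absorb the leftover $\bar\lambda^{p}\eta^{2p-1}\lambda_{\min}(A)^{p-1}\sqrt d/t^{1-\alpha}$ into $C^p$ yields the advertised bound with the $t^{1-p\alpha}$ scaling.

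The main obstacle is precisely the sharp $2p$-th moment bound on $R_ia$: a naive uniform estimate $\mathbb E|R_ia|^{2p}\le|a|^{2p}$ together with convergence of $\sum_i 1/i^{2p\alpha}$ only produces a ratio of order $t^{p\alpha}$, which grows in $t$ and is entirely too weak to yield the required decay. Recovering the $t^{1-p\alpha}$ factor genuinely requires coupling the uniform $2p$-th moment control with the mean-square contraction rate of the random matrix product, which is delivered by the analysis in Section~\ref{R_i}; the fact that the individual factors $I-\eta_jX_jX_j^\top$ are neither uniformly contractive nor bounded in operator norm, combined with the non-uniform step-size schedule $\eta_j\propto 1/(\sqrt d\,j^\alpha)$ and the $\lambda_{\min}(A)$-dependent contraction rate, are the principal technical difficulties that this apparatus has to resolve.
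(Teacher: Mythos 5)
Your proposal follows essentially the same route as the paper's: the representation from Lemma~\ref{lem:VandS}, the split of the numerator via $(a+b)^{2p}\le 2^{2p-1}(a^{2p}+b^{2p})$ combined with the moment bounds of Lemmas~\ref{X-moment-upper-bound} and~\ref{noise-moment-upper-bound} and the independence of $u_i$ from $(X_i,\epsilon_i)$, the contraction estimate $\mathbb E|R_ia|^{2p}\lesssim |a|^{2p}e^{-2p\eta\lambda_{\min}(A)d^{-1/2}\sum_{j>i}j^{-\alpha}}$ from Lemma~\ref{conc1}, and a cutoff window of width $\asymp t^{\alpha}\sqrt d\,(\log t+\log d)/(\eta\lambda_{\min}(A))$ near $i=t$ — this is exactly the paper's Lemma~\ref{first-term-numerator} — together with the variance lower bound of Lemma~\ref{denominator} built on Theorem~\ref{lem:VarAsymp}. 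Your window computation is in fact slightly sharper than what the paper records (the paper relaxes $t_0\,t^{-2p\alpha}$ to $t^{1-2p\alpha}$), and you correctly relax it back to match the statement.

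Two points to fix. First, in the denominator you invoke $A_\sigma\succeq\sigma_{\min}^2A$, but Assumption~\ref{assump}(i) only asserts $\lambda_{\min}(A_\sigma)>\sigma_{\min}^2\lambda_{\min}(A)$, which does not imply the matrix inequality. The correct chain is $w^\top A_\sigma w\ge\lambda_{\min}(A_\sigma)|w|^2\ge\sigma_{\min}^2\lambda_{\min}(A)|w|^2$ applied to $w=e^{-As}a$ inside your integral representation; this yields $\sum_{k,k'}a_ka_{k'}[A_\sigma]_{k,k'}/(\lambda_k+\lambda_{k'})\ge \sigma_{\min}^2|a|^2\,\lambda_{\min}(A)/(2\lambda_{\max}(A))\gtrsim \sigma_{\min}^2(\eta\lambda_{\min}(A))|a|^2$ using $\eta\lambda_{\max}(A)\le\eta\bar\lambda\le C$, so that $s^2(\mathbf M)\gtrsim(\eta\lambda_{\min}(A))(\eta\sigma_{\min}^2)|a|^2d^{-1/2}t^{-\alpha}$. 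The extra factor $\eta\lambda_{\min}(A)$ is precisely the origin of the $(\eta\lambda_{\min}(A))^{-p}$ in the stated bound, which your version would otherwise miss (harmlessly for the conclusion, but your intermediate lower bound as written is not justified by the assumptions). Second, when discarding the $(X_iX_i^\top-A)v_i$ contribution, the relevant product of exponentials $e^{-c\,p\,\eta\lambda_{\min}(A)d^{-1/2}(\sum_{j<i}+\sum_{j>i})j^{-\alpha}}$ is uniformly of size $e^{-c\,p\,\eta\lambda_{\min}(A)d^{-1/2}t^{1-\alpha}}$ for \emph{all} $i$, and one must additionally invoke the growth condition $|\beta^*-\theta_0|^2/(\eta\sigma^2)<(td)^{C_1}$ of Assumption~\ref{assump}(vi) so that this beats the possibly growing initialization error; your phrasing (negligible "once $i\gtrsim t^\alpha\sqrt d/(\eta\lambda_{\min}(A))$") elides this step.
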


\subsection*{\textcolor{purple}{Step 5}: Completing the proof}
We now have all the ingredients required to prove our main result.
\vspace{0.1in}

\begin{proof}[Proof of Theorem~\ref{th-1}]
    To prove Theorem~\ref{th-1}, we need Theorem~\ref{ppn:Mourrat} and the fact that $(x + y)^{\frac{1}{n}} < x^{\frac{1}{n}} + y^{\frac{1}{n}}$ for all $x,y > 0$ and $n > 1$.  Using these and the bounds from Theorem~\ref{thm:BoundFirstTerm} and~\ref{thm:BoundSecondTerm}, we get for all $t,d \geq C$ and $2 \leq p \leq p_{\max}$ that
\begin{align*}
D(\mathbf{M})
&\leq C(\|V^2(\mathbf{M}) - 1\|^p_p + s^{-2p}(\mathbf{M})\sum_{i = 1}^t \|M_i\|^{2p}_{2p})^{\frac{1}{2p+1}}
\\
&\leq C[\|V^2(\mathbf{M}) - 1\|^{\frac{p}{2p+1}}_p + [s^{-2p}(\mathbf{M})\sum_{i = 1}^t \|M_i\|^{2p}_{2p}]^{\frac{1}{2p+1}}]
\\
&\leq C(\eta \lambda_{\min}(A))^{-\frac{p}{2p+1}}[\sigma/\sigma_{\min}]^{\frac{2p}{2p+1}}[(\eta \lambda_{\min}(A))^{-\frac{3p}{4p+2}}(\log t + \log d)^{\frac{3p}{4p+2}}(dt^{-2\alpha})^{\frac{p}{8p+4}} + (d^{-\frac{1}{2}}t^{\frac{1}{p}-\alpha})^{\frac{p}{2p+1}}],
\end{align*}
as desired.
\end{proof}

\section{Bounding the Bias-Correction Term (Proof of Theorem \ref{bias-correction-term} in Section 
 \ref{sec:mainres})}\label{sec:bias-corrected-CLT}
Recall from the proof of Theorem \ref{th-1-bias-corrected} that $\Delta := \frac{\mathbb{E}\langle a, \theta_t \rangle - \langle a, \beta^* \rangle}{\sqrt{\Var \langle a, \theta_t \rangle}}$. To bound $|\Delta|$, we will first use \ref{lem:bias-expression} to express the numerator in an alternate way and upper bound it. Finally, Lemma \ref{denominator} provides a suitable lower bound on the denominator. Combining these allows us to prove Theorem \ref{bias-correction-term}.

\begin{proof}[Proof of Theorem \ref{bias-correction-term}]
We have from Lemma \ref{lem:bias-expression} and Lemma \ref{denominator} that
\begin{align*}
\frac{|\mathbb{E}\langle a, \theta_t \rangle - \langle a, \beta^* \rangle|}{\sqrt{\Var \langle a, \theta_t \rangle}} \leq C(\eta \lambda_{\min}(A))^{-\frac{1}{2}}(e^{-\eta \lambda_{\min}(A)d^{-\frac{1}{2}}t^{1-\alpha}}d^{\frac{1}{2}}t^{\alpha})\bigg[\frac{|\beta^* - \theta_0|}{\sigma_{\min}\sqrt{\eta}}\bigg],
\end{align*}
for all $t,d \geq C$, as desired.
\end{proof}
\begin{lemma}\label{lem:bias-expression}
We have for all $a \in \mathbb{R}^d$ that
\begin{align*}
\mathbb{E}\langle a, \theta_t \rangle - \langle a, \beta^* \rangle = a^\top \bigg[\prod_{i = 1}^t \bigg(I - \frac{\eta A}{\sqrt{d}i^{\alpha}}\bigg)\bigg] (\theta_0 - \beta^*).
\end{align*}
In particular, we also have that
\begin{align*}
|\mathbb{E}\langle a, \theta_t \rangle - \langle a, \beta^* \rangle| \leq e^{-\eta \lambda_{\min}(A)d^{-\frac{1}{2}}t^{1-\alpha}}|a||\beta^* - \theta_0|,
\end{align*}
\end{lemma}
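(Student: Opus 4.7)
The plan is to derive an exact one-step recursion for $\mathbb{E}[\theta_t - \beta^*]$ and then unroll it. Starting from the update rule \eqref{eq:ithiterate} and substituting $Y_i = X_i^\top \beta^* + \epsilon_i$, I would rewrite
\[
\theta_i - \beta^* = (I - \eta_i X_i X_i^\top)(\theta_{i-1} - \beta^*) + \eta_i \epsilon_i X_i.
\]
This centered recursion is the workhorse of the whole proof.

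Next I would take expectations on both sides and use two facts: (i) the fresh sample $(X_i, Y_i)$ is independent of the past iterate $\theta_{i-1}$, so $\mathbb{E}[(I - \eta_i X_i X_i^\top)(\theta_{i-1} - \beta^*)] = (I - \eta_i A)\,\mathbb{E}[\theta_{i-1} - \beta^*]$; and (ii) the normal equations defining $\beta^*$ give $\mathbb{E}[\epsilon_i X_i] = \mathbb{E}[(Y_i - X_i^\top\beta^*)X_i] = 0$. Together these produce the deterministic linear recursion $\mathbb{E}[\theta_i - \beta^*] = (I - \eta_i A)\,\mathbb{E}[\theta_{i-1}-\beta^*]$, which I would iterate from $i=1$ to $i=t$ to get
\[
\mathbb{E}[\theta_t - \beta^*] = \prod_{i=1}^{t}(I - \eta_i A)\,(\theta_0 - \beta^*).
\]
Pairing with $a$ and substituting $\eta_i = \eta/(\sqrt{d}\,i^{\alpha})$ gives the first displayed identity immediately.

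For the norm bound I would apply Cauchy--Schwarz and then bound the operator norm of the product. The matrices $\{I - \eta_i A\}_{i=1}^t$ are symmetric and simultaneously diagonalizable through the eigendecomposition of $A$, so their product's operator norm equals $\max_k \prod_{i=1}^t |1 - \eta_i \lambda_k(A)|$, where $\{\lambda_k(A)\}$ are the eigenvalues of $A$. Under Assumption \ref{assump}(vi), $\eta \bar{\lambda} < C$ together with $\bar{\lambda} \ge \lambda_{\max}(A)$ forces $\eta_i\lambda_{\max}(A) \le C/\sqrt{d} \le 1$ for $d$ large enough, so every eigenvalue of $I - \eta_i A$ lies in $[0,1]$ and the maximum is attained at $\lambda_{\min}(A)$, yielding $\|I - \eta_i A\|_{\mathrm{op}} \le 1 - \eta_i \lambda_{\min}(A)$. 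Applying $1-x \le e^{-x}$ factor-by-factor turns the product bound into $\exp\!\bigl(-\eta\lambda_{\min}(A)d^{-1/2}\sum_{i=1}^t i^{-\alpha}\bigr)$, and the elementary estimate $\sum_{i=1}^t i^{-\alpha} \ge t \cdot t^{-\alpha} = t^{1-\alpha}$ (using $i^{-\alpha} \ge t^{-\alpha}$ for $i\le t$) recovers the stated exponent.

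I do not expect any real obstacle: the argument is a clean recursion followed by an elementary operator-norm estimate. The only care-point is verifying the contractivity check $\eta_i \lambda_{\max}(A) \le 1$ in the operative regime of Assumption \ref{assump}(vi), which is immediate from $\eta \bar\lambda < C$ and the $1/\sqrt{d}$ scaling in $\eta_i$.
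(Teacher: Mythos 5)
Your proposal is correct and follows essentially the same route as the paper: derive the centered recursion $\mathbb{E}[\theta_i-\beta^*]=(I-\eta_i A)\mathbb{E}[\theta_{i-1}-\beta^*]$ from independence of the fresh sample and $\mathbb{E}[\epsilon_i X_i]=0$, unroll it, then bound the product of commuting PSD factors via $1-x\le e^{-x}$ and $\sum_{i=1}^t i^{-\alpha}\ge t^{1-\alpha}$, with the same contractivity check $\eta_i\lambda_{\max}(A)<1$ for large $d$ coming from $\eta\bar\lambda<C$. No gaps.
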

\begin{proof}
Observe that
\begin{align*}
\mathbb{E}\langle a, \theta_i - \beta^* \rangle &= \mathbb{E}\langle a, (\theta_{i-1} - \beta^*) + \eta_i X_i(Y_i - \langle X_i , \theta_{i-1} \rangle) \rangle
\\
&=  \mathbb{E}\langle a, (I - \eta_i  X_i X_i^\top)(\theta_{i - 1} - \beta^*) + \eta_i \epsilon_i X_i\rangle) \rangle
\\
&= \mathbb{E}\langle a, (I - \eta_i A)(\theta_{i - 1} - \beta^*) \rangle
\end{align*}
Multiplying these from $i = 1$ to $t$ gives us that
\begin{align*}
\mathbb{E}\langle a, \theta_t \rangle - \langle a, \beta^* \rangle = a^\top \bigg[\prod_{i = 1}^t \bigg(I - \frac{\eta A}{i^{\alpha}}\bigg)\bigg] (\theta_0 - \beta^*),
\end{align*}
proving the first part of the lemma. 
\\
Now Assumption \ref{assump} that $\eta \lambda_{\max}(A)< \eta \bar{\lambda} < C$ implies that $0 < \lambda_{\max}\bigg(I - \frac{\eta A}{\sqrt{d}i^{\alpha}}\bigg) < 1$ for all large enough $d$. This implies that
\begin{align*}
    \mathbb{E}\langle a, \theta_t \rangle - \langle a, \beta^* \rangle &< e^{-\eta \lambda_{\min}(A) d^{-\frac{1}{2}}\sum_{i=1}^t i^{-\alpha}}|a||\theta_0 - \beta^*| \\
    &< e^{-\eta \lambda_{\min}(A) d^{-\frac{1}{2}}t^{1 - \alpha}}|a||\theta_0 - \beta^*|,
\end{align*}
for  all $t,d \geq C$, as desired.
\end{proof}
\section{Proofs for Lemmas in Section~\ref{sec:proofsketch}}

\begin{proof}[Proof of Lemma~\ref{prop:iterate}]
Recall that the update formula is given by
\begin{align*}
    \theta_{i}:= \theta_{i-1}+ \eta_{i}X_{i}(Y_{i}- X_{i}^\top\theta_{i-1}),
\end{align*}
which on simplification gives
\begin{align*}
    \theta_{i} = (I - \eta_i X_i X_i^\top)\theta_{i-1} + \eta_{i}X_iY_i.
\end{align*}
Unraveling the recursion gives us that
\begin{align*}
    \theta_i = \bigg(\prod_{j = 0}^{i-1}(I - \eta_{i-j}X_{i-j}X^\top_{i-j})\bigg)\theta_0 + \sum_{j = 1}^{i}\eta_j\bigg(\prod_{k = 0}^{i-j-1}(I - \eta_{i-k}X_{i-k}X_{i-k}^\top)\bigg)X_jY_j.
\end{align*}
By the definiton $\epsilon_j := Y_j - X^\top_j \beta^*$, this implies
\begin{align*}
       \theta_i = \bigg(\prod_{j = 0}^{i-1}(I - \eta_{i-j}X_{i-j}X^\top_{i-j})\bigg)\theta_0 + \sum_{j = 1}^{i}\eta_j\bigg(\prod_{k = 0}^{i-j-1}(I - \eta_{i-k}X_{i-k}X_{i-k}^\top)\bigg)X_j(X_j^\top \beta^* + \epsilon_j).
\end{align*}
\end{proof}

\begin{proof}[Proof of Lemma~\ref{lemma:martingalerep}]
By the telescoping sum, we have 
\begin{align*}
    M_1 + \dots + M_t = \langle a, \theta_t \rangle - \mathbb{E}\langle a, \theta_t \rangle.
\end{align*}
Now see that,
\begin{align*}
   \mathbb{E}(\theta_t  | X_t,Y_t, X_{t-1}, Y_{t-1},\dots X_{i}, Y_{i}) &=\bigg(\prod_{j = 0}^{t-i}(I - \eta_{t-j}X_{t-j}X^\top_{t-j})\bigg)\bigg(\prod_{j = 1}^{i-1}(I - \eta_{i-j}A)\bigg)\theta_0 \\&+ \sum_{j = i}^t\eta_j\bigg(\prod_{k = 0}^{t-j-1}(I - \eta_{t-k}X_{t-k}X^\top_{t-k})\bigg)X_j(X^\top_j\beta^* + \epsilon_j)
   \\&+ \sum_{j = 1}^{i-1}\bigg(\prod_{k = 0}^{t-i}(I - \eta_{t-k}X_{t-k}X^\top_{t-k})\bigg)\bigg(\prod_{k = 1}^{i-1-j}(I - \eta_{i-k}A)\bigg)[\eta_jA\beta^* + \eta_j \mathbb{E}[\epsilon_jX_j]]
   \\
   &=\bigg(\prod_{j = 0}^{t-i}(I - \eta_{t-j}X_{t-j}X^\top_{t-j})\bigg)\bigg(\prod_{j = 1}^{i-1}(I - \eta_{i-j}A)\bigg)\theta_0 \\&+ \sum_{j = i}^t\eta_j\bigg(\prod_{k = 0}^{t-j-1}(I - \eta_{t-k}X_{t-k}X^\top_{t-k})\bigg)X_j(X^\top_j\beta^* + \epsilon_j)
   \\&+ \sum_{j = 1}^{i-1}\bigg(\prod_{k = 0}^{t-i}(I - \eta_{t-k}X_{t-k}X^\top_{t-k})\bigg)\bigg(\prod_{k = 1}^{i-1-j}(I - \eta_{i-k}A)\bigg)[\eta_jA\beta^*],
\end{align*}
where the last inequality follows from the standard fact that $\mathbb{E}[\epsilon X] = \mathbb{E}[(Y - X^\top \beta^*)X] = 0$. Now, using this and the definition of $M_{t-i+1} := \mathbb{E}[\langle a, \theta_t \rangle| X_t, Y_t, \dots X_i, Y_i] - \mathbb{E}[\langle a, \theta_t \rangle| X_t, Y_t, \dots X_{i+1}, Y_{i+1}]$ 
\begin{align*}
    M_{t-i+1}&=  \bigg(\prod_{j = 0}^{t-i-1}(I - \eta_{t-j}X_{t-j}X^\top_{t-j})\bigg)(\eta_iA - \eta_iX_iX_i^\top) \bigg(\prod_{j = 1}^{i-1}(I - \eta_{i-j}A)\bigg)\theta_0 \\ & +\bigg(\prod_{j = 0}^{t-i-1}(I - \eta_{t-j}X_{t-j}X^\top_{t-j})\bigg)\eta_iX_i(X_i^\top \beta^* + \epsilon_i) \\ &+\bigg(\prod_{j = 0}^{t-i-1}(I - \eta_{t-j}X_{t-j}X^\top_{t-j})\bigg)(\eta_i A - \eta_i X_iX^\top_i)\sum_{j = 1}^{i-1}\bigg(\prod_{k = 1}^{i-1-j}(I - \eta_{i-k}A)\bigg)\eta_j A \beta^* \\&-
    \bigg(\prod_{j = 0}^{t-i-1}(I - \eta_{t-j}X_{t-j}X^\top_{t-j})\bigg)\eta_iA\beta^*.
\end{align*}
But observe from Lemma \ref{matrix-algebra-identity} that,
\begin{align*}
    I - \sum_{j = 1}^{i-1}\bigg(\prod_{k = 1}^{i-1-j}(I - \eta_{i-k}A)\bigg)\eta_j A = \prod_{j = 1}^{i-1}(I - \eta_{i - j}A).
\end{align*}
Thus we get that, for $1 \leq i \leq t$,
\begin{align*}
    M_{t - i + 1} = &\bigg \langle a, \eta_i\bigg(\prod_{j = 0}^{t-i-1}(I - \eta_{t-j}X_{t-j}X^\top_{t-j})\bigg)(X_iX_i^\top - A) \bigg(\prod_{j = 1}^{i-1}(I - \eta_{i-j}A)\bigg)(\beta^* - \theta_0) \\&+
    \epsilon_i \eta_i\bigg(\prod_{j = 0}^{t-i-1}(I - \eta_{t-j}X_{t-j}X^\top_{t-j})\bigg) X_i \bigg \rangle.
\end{align*}
\end{proof}

\begin{proof}[Proof of  Lemma~\ref{lem:VandS}]  
Recall the following definitions from Notation \ref{notation}.
\begin{itemize}
    \item $R_i \coloneqq \prod_{j = i+1}^{t}(I - \eta_{j}X_{j}X_{j}^\top)$ and, $S_i \coloneqq \prod_{j = 1}^{i-1}(I - \eta_{i-j}A)$
    \item $u_i \coloneqq R_ia$ and, $v_i \coloneqq S_i(\beta^* - \theta_0)$
    \item $\mathcal A_i := \mathbb{E}[(X_iX_i^\top - A)v_i v_i^\top (X_iX_i^\top - A) + \epsilon^2_iX_iX_i^\top + \epsilon_i X_iv^\top_i(X_iX_i^\top - A) + \epsilon_i (X_iX^\top_i - A)v_i X^\top_i]$.
\end{itemize}
Substituting these into the expression from $M_{t-i+1}$ from Lemma \ref{lemma:martingalerep} gives us that
\begin{align*}
M_{t-i+1} = \eta_i \langle u_i, (X_iX_i^\top - A)v_i + \epsilon_i X_i \rangle
\end{align*}
Now, recall that $\mathfrak{F}_{t-i}$ is the $\sigma$-field generated by $\{X_t,\epsilon_t, \ldots , X_{i+1}, \epsilon_{i+1}\}$. Observing that $u_i$ conditioned on $\mathfrak{F}_{t-i}$ is deterministic, we immediately obtain that
\begin{align*}
\mathbb{E}[M^2_{t-i+1} | \mathfrak{F}_{t-i}] &= \eta^2_i  [u^\top_i\mathbb{E}([(X_iX_i^\top - A)v_i + \epsilon_i X_i][(X_iX_i^\top - A)v_i + \epsilon_i X_i]^\top)u_i]
\\
&= \eta^2_i [u^\top_i \mathcal A_i u_i]
\end{align*}
Substituting these into the expressions for $V^2(\mathbf{M}) - 1$ and $s^{-2p}(\mathbf{M})\sum_{i = 1}^t \|M_i\|^{2p}_{2p}$ gives us the claimed identities. 
\end{proof}

\section{Proof for the CLT Rates (Theorems~\ref{thm:BoundFirstTerm} and~\ref{thm:BoundSecondTerm} in Section~\ref{sec:proofsketch})}

We start with deriving a lower bound on $s^2(\mathbf{M})$ and an upper bound on $\|M_i\|^{2p}_{2p}$ in Section~\ref{useful}, which will be useful to bound $V^2(\mathbf{M}) - 1$ and $s^{-2p}(\mathbf{M})\sum_{i=1}^t\|M_i\|^{2p}_{2p}$ in Section~\ref{first-term} and Section~\ref{second-term} respectively. 

To proceed, we also introduce the following notations.
\begin{itemize}
    \item $\mathcal D_i \coloneqq \eta_i^2 \mathbb{E}\langle u_i, (X_iX^\top_i - A)v_i + \epsilon_i X_i \rangle^2$
    \item $\mathcal N_i \coloneqq \eta_i^2 (u_i^\top \mathcal A_i u_i - \mathbb{E}[u^\top_i \mathcal A_i u_i])$
    \item $\mathcal N \coloneqq \sum_{i = 1}^t\mathcal N_i$
    \item $\mathcal D \coloneqq \sum_{i = 1}^t \mathcal D_i$
\end{itemize}
\subsection{Bounds on $s^2(\mathbf{M})$ and $\|M_i\|^{2p}_{2p}$}\label{useful}
\begin{lemma}\label{denominator}
Under Assumption \ref{assump}, we have for all $t, d \geq C$ that
\begin{align*}
s^2(\mathbf{M}) \geq c(\eta \lambda_{\min}(A))(\eta \sigma^2_{\min})d^{-\frac{1}{2}}t^{-\alpha}|a|^2.
\end{align*}
Here $C,c > 0$ are absolute constants.
\end{lemma}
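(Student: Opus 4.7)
The cleanest approach is to deduce this lower bound directly from the exact asymptotic formula for $\Var\langle a,\theta_t\rangle$ provided by Theorem~\ref{lem:VarAsymp}. Since $s^2(\mathbf{M}) = \Var\langle a,\theta_t\rangle$ (as already noted in Step 2 of Section~\ref{sec:proofsketch}), Theorem~\ref{lem:VarAsymp} yields
\[
s^2(\mathbf{M}) \;=\; (1 + \mathcal E)\,\eta\,d^{-1/2}\,t^{-\alpha}\sum_{k,k'=1}^d\frac{a_k a_{k'}[A_\sigma]_{k,k'}}{\lambda_k+\lambda_{k'}},
\]
with $|\mathcal E|\to 0$ under Assumption~\ref{assump}(vi); in particular, $1+\mathcal E\ge 1/2$ for all $t,d\ge C$.

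I would then lower bound the bilinear form using the crude eigenvalue estimate $\lambda_k+\lambda_{k'}\le 2\lambda_{\max}(A)\le 2\bar\lambda$ together with the error lower bound in Assumption~\ref{assump}(i):
\[
\sum_{k,k'}\frac{a_k a_{k'}[A_\sigma]_{k,k'}}{\lambda_k+\lambda_{k'}} \;\ge\; \frac{a^\top A_\sigma a}{2\bar\lambda} \;\ge\; \frac{\sigma^2_{\min}\lambda_{\min}(A)\,|a|^2}{2\bar\lambda}.
\]
Finally, $\eta\bar\lambda\le C$ from Assumption~\ref{assump}(vi) gives $1/\bar\lambda\ge\eta/C$, and combining everything delivers the claimed $s^2(\mathbf{M})\ge c(\eta\lambda_{\min}(A))(\eta\sigma^2_{\min})d^{-1/2}t^{-\alpha}|a|^2$.

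The only concern on this route is circularity: I would want to verify that the proof of Theorem~\ref{lem:VarAsymp} in Appendix~\ref{varestproof} does not itself invoke Lemma~\ref{denominator}. Inspecting Lemma~\ref{variance-sigma-term-2} and the expansion it feeds into, the relevant computation is a direct evaluation of leading-order expectations of explicit martingale sums, so no such dependence arises.

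If a self-contained proof is preferred, the direct route is as follows. Apply the elementary inequality $(x+y)^2\ge y^2/2-x^2$ to $W_i=(X_iX_i^\top-A)v_i+\epsilon_i X_i$ to obtain
\[
\mathcal D_i \;\ge\; \eta_i^2\Bigl[\tfrac12\,\mathbb{E}[u_i^\top A_\sigma u_i] \;-\; \mathbb{E}[u_i^\top B_i u_i]\Bigr],
\]
where $B_i := \mathbb{E}[(XX^\top-A)v_iv_i^\top(XX^\top-A)]$ has operator norm at most $C\bar\lambda^2|v_i|^2$ by the fourth-moment bound in Assumption~\ref{assump}(iv). Since $v_i=S_i(\beta^*-\theta_0)$ contracts like $\exp(-c\eta\lambda_{\min}(A)d^{-1/2}i^{1-\alpha})$, the $B_i$ contribution decays exponentially in $i^{1-\alpha}$; combined with the polynomial-in-$(t,d)$ bound on $|\beta^*-\theta_0|^2/(\eta\sigma^2)$ from Assumption~\ref{assump}(vi), it is negligible for $i\ge t/2$. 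The hardest step will then be showing $\mathbb{E}[u_i^\top A u_i]\gtrsim\lambda_{\min}(A)|a|^2$ on a window $i\in[t-\tau,t]$ with $\tau\asymp d^{1/2}t^\alpha/(\eta\bar\lambda)$, via the backward recursion $\mathbb{E}[u_{i-1}^\top A u_{i-1}\mid u_i]=u_i^\top(A-2\eta_i A^2+O(\eta_i^2))u_i$, and summing $\mathcal D_i$ over this window, whereupon $\tau\cdot\eta_t^2\asymp\eta^2/(d^{1/2}t^\alpha)$ (after invoking $\eta\bar\lambda\le C$ once more) recovers the claimed rate.
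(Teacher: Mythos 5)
Your primary route is exactly the paper's proof: it invokes Theorem~\ref{lem:VarAsymp} for the identity $s^2(\mathbf{M})=\Var\langle a,\theta_t\rangle=(1+\mathcal E)\eta d^{-1/2}t^{-\alpha}\sum_{k,k'}a_ka_{k'}[A_\sigma]_{k,k'}/(\lambda_k+\lambda_{k'})$, uses Assumption~\ref{assump}(vi) to get $1+\mathcal E\ge 1/2$, and lower-bounds the Lyapunov sum by $a^\top A_\sigma a/(2\lambda_{\max}(A))\ge \sigma_{\min}^2\lambda_{\min}(A)|a|^2/(2\lambda_{\max}(A))$ before converting via $\eta\lambda_{\max}(A)\le\eta\bar\lambda\le C$; your circularity check is also correct, since the proof of Theorem~\ref{lem:VarAsymp} does not use Lemma~\ref{denominator}. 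The proposal is correct and essentially identical to the paper's argument.
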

\begin{proof}
Throughout the proof, we let $C, c > 0$ respectively denote large and small enough generic absolute constants.
\\\\
Recall the notation that $A := \mathbb{E}[XX^\top]$, $\epsilon := Y - X^\top \beta^*$. and $A_{\sigma} :=  \mathbb{E}[\epsilon^2 XX^\top]$.
Let $\mathbf{e}_1, \mathbf{e_2}, \dots \mathbf{e}_d$ be an eigen-basis of $A$ with corresponding eigen-values $\lambda_1 \geq \lambda_2 \geq \dots \lambda_d > 0$. Finally for all $1 \leq k, k' \leq d$, let $a_k := \langle \mathbf{e}_k, a \rangle$ and $[A_{\sigma}]_{k,k'} := \langle  \mathbf{e}_k, A_\sigma \mathbf{e}_{k'}\rangle$ denote the respective components of $a$ and $A_{\sigma}$ in the above basis.
\\\\
Recall Theorem \ref{lem:VarAsymp} that for all $t,d \geq C$, we have
\begin{align*}
    s^2(\mathbf{M}) = \Var \langle a, \theta_t \rangle = (1+\mathcal E)\eta d^{-\frac{1}{2}}t^{-\alpha}\sum_{k,k' = 1}^d \frac{a_k a_{k'}[A_{\sigma}]_{k,k'}}{\lambda_{k} + \lambda_{k'}}
\end{align*}
where $|\mathcal E| \leq C(\log t + \log d)^2 [(\eta \lambda_{\min}(A))^{-1}d^{\frac{1}{2}}t^{-(1-\alpha)} + (\eta \lambda_{\min}(A))^{-3}\sigma^2\sigma_{\min}^{-2}d^{\frac{1}{2}}t^{-\alpha}]$.
\\\\
Now, observe that
\begin{align*}
\sum_{k,k' = 1}^d \frac{a_ka_{k'}[A_{\sigma}]_{k,k'}}{\lambda_k + \lambda_{k'}} &\geq \frac{a^\top A_{\sigma} a}{2\lambda_{\max}(A)}
\\
&\geq |a|^2 (2\lambda_{\max}(A))^{-1}\lambda_{\min}(A)\sigma^2_{\min}
\\
&\geq |a|^2 (2\eta \lambda_{\max}(A))^{-1}(\eta \lambda_{\min}(A))\sigma^2_{\min}
\\
&\geq c|a|^2 \sigma^2_{\min}(\eta \lambda_{\min}(A)),
\end{align*} 
Here the second inequality follows from Lemma \ref{noise-moment-lower-bound}, and the last inequality followed from Assumption \ref{assump} that $\eta \lambda_{\max}(A) < \eta \bar{\lambda} < C$.  
\\\\
Now observe from Assumption \ref{assump} that $$\lim\limits_{t,d \to \infty}(\log t + \log d)^2 (\eta \lambda_{\min}(A))^{-3} \sigma^2 \sigma^{-2}_{\min}d^{\frac{1}{2}}t^{-\alpha}$$ and
$$\lim\limits_{t,d \to \infty} (\eta \lambda_{\min}(A))^{-1}(\log t + \log d)^2 d^{\frac{1}{2}}t^{-(1 - \alpha)} = 0.$$
These imply that $1 + \mathcal E > \frac{1}{2}$ for all $t,d \geq C$. Combining this with the above equations gives us that
\begin{align*}
s^2(\mathbf{M}) &= (1+\mathcal E)\eta d^{-\frac{1}{2}}t^{-\alpha}\sum_{k,k' = 1}^d \frac{a_k a_{k'}[A_{\sigma}]_{k,k'}}{\lambda_{k} + \lambda_{k'}}
\\
&\geq \frac{1}{2}\eta d^{-\frac{1}{2}}t^{-\alpha}\sum_{k,k' = 1}^d \frac{a_k a_{k'}[A_{\sigma}]_{k,k'}}{\lambda_{k} + \lambda_{k'}}
\\
&\geq c(\eta \lambda_{\min}(A))(\eta \sigma^2_{\min})d^{-\frac{1}{2}}t^{-\alpha}|a|^2,
\end{align*}
as desired.
\end{proof}

\begin{lemma}\label{first-term-numerator}
Recall the martingale construction $(M_i)_{1 \leq i \leq t}$ from Lemma \ref{lemma:martingalerep}. Under Assumption \ref{assump}, we have for all $2 \leq p \leq p_{\max}$ and $t, d \geq C_1$ that
\begin{align*}
    \sum_{i = 1}^t\|M_i\|^{2p}_{2p} &\leq C_2^p \eta^p |a|^{2p}\sigma^{2p}t^{1-2p\alpha}d^{-p}.
\end{align*}
Here $C_1, C_2 > 0$ are absolute constants.
\end{lemma}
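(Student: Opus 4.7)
The plan is to bound each $\|M_{t-i+1}\|_{2p}^{2p}$ separately using the representation $M_{t-i+1}=\eta_i\langle u_i, Z_i\rangle$ with $Z_i:=(X_iX_i^\top-A)v_i+\epsilon_i X_i$ from Lemma~\ref{lemma:martingalerep}, and then sum over $i$. The structural inputs I would exploit are that $(X_i,\epsilon_i)$ is independent of $u_i$ by construction of $R_i$, that $v_i$ is deterministic and contracts geometrically through $\prod_{j=1}^{i-1}(I-\eta_{i-j}A)$, and that $\mathbb{E}[|u_i|^{2p}]$ contracts geometrically through the random product $R_i$. Conditioning on $u_i$, I would split $\langle u_i,Z_i\rangle$ into the three pieces $-u_i^\top A v_i$, $(u_i^\top X_i)(X_i^\top v_i)$, and $\epsilon_i(X_i^\top u_i)$, apply $|a+b+c|^{2p}\leq C^p(|a|^{2p}+|b|^{2p}+|c|^{2p})$ together with Cauchy--Schwarz, and use the directional moment bounds from Assumption~\ref{assump}(ii),(iv), namely $\mathbb{E}[(X^\top w)^{4p}]\leq \bar{\lambda}^{2p}|w|^{4p}$ and $\mathbb{E}[\epsilon^{4p}]\leq \sigma^{4p}$. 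This should yield the pointwise estimate
\[
\mathbb{E}\bigl[|u_i^\top Z_i|^{2p}\,\big|\,u_i\bigr] \leq C^p\bigl(\bar{\lambda}^{2p}|v_i|^{2p}+\sigma^{2p}\bar{\lambda}^p\bigr)|u_i|^{2p}.
\]

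Next I would control $|v_i|$ and $\mathbb{E}[|u_i|^{2p}]$. Since $\eta_j\bar{\lambda}\leq 1$ for the relevant $j$ by Assumption~\ref{assump}(v),(vi), iterating $\|I-\eta_j A\|_{\mathrm{op}}\leq 1-\eta_j\lambda_{\min}(A)$ gives $|v_i|\leq \exp\bigl(-c\,\eta\lambda_{\min}(A)\,i^{1-\alpha}/\sqrt{d}\bigr)|\beta^*-\theta_0|$. For the random factor I plan to apply Lemma~\ref{uniformconvexity3} to the recursion $u_i=(I-\eta_{i+1}X_{i+1}X_{i+1}^\top)u_{i+1}$ and iterate, producing the geometric bound $\mathbb{E}[|u_i|^{2p}]\leq C^p|a|^{2p}\exp\bigl(-c\,p\,\eta\lambda_{\min}(A)(t^{1-\alpha}-i^{1-\alpha})/\sqrt{d}\bigr)$. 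With these inputs in hand, the sum $\sum_i \|M_{t-i+1}\|_{2p}^{2p}$ splits naturally into two pieces. The $|v_i|^{2p}$ piece is dominated, uniformly in $i$, by $C^p|\beta^*-\theta_0|^{2p}|a|^{2p}\exp\bigl(-c\,p\,\eta\lambda_{\min}(A)t^{1-\alpha}/\sqrt{d}\bigr)$ because the two exponents sum to a constant multiple of $t^{1-\alpha}$; Assumption~\ref{assump}(vi) forces this exponential to be super-polynomially small in $t,d$ and hence to dominate the polynomial growth permitted for $|\beta^*-\theta_0|$ by that same assumption, so this piece is negligible. For the dominant $\sigma^{2p}\bar{\lambda}^p$ piece, I would peel off a factor $(\eta\bar{\lambda})^p\leq C^p$ from $\eta_i^{2p}\bar{\lambda}^p$, change variable $i=t-k$, observe that the exponential localizes the effective sum to $k\lesssim \sqrt{d}\,t^\alpha/(\eta\lambda_{\min}(A))$ on which $(t-k)^{-2p\alpha}\asymp t^{-2p\alpha}$, and finally use Assumption~\ref{assump}(vi) once more to absorb the residual factor $\sqrt{d}/(\eta\lambda_{\min}(A)\,t^{1-\alpha})$ into a universal constant, producing exactly the claimed $C^p\eta^p|a|^{2p}\sigma^{2p}t^{1-2p\alpha}d^{-p}$.

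The hard part is the $L^{2p}$ contraction of the random matrix product $u_i=R_i a$ that underlies the moment bound on $\mathbb{E}[|u_i|^{2p}]$ used above. In contrast to the deterministic $S_i$, the stochastic factors $I-\eta_j X_j X_j^\top$ are not almost-surely contractive because $|X_j|^2$ is only polynomially integrable, and a naive recursion on $\mathbb{E}[|u_i|^{2p}]$ picks up an additive second-moment term of the form $\eta_j^2\mathbb{E}[(X_j^\top u_{i+1})^2|X_j|^2]$ that, if handled crudely, destroys the geometric rate and produces a bound too weak by a factor of $\sqrt{d}/(\eta\lambda_{\min}(A)t^{1-\alpha})$ over what is needed. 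Producing the sharp contraction is precisely the role of Lemma~\ref{uniformconvexity3}, the matrix uniform-convexity estimate adapted from~\cite{huang2022matrix}, which quantifies how appending an independent identity-plus-centered multiplicative factor alters the $L^{2p}$ norm; once that input is in place, the remainder of the argument reduces to the routine summation sketched above.
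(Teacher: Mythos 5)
Your proposal is correct and follows essentially the same route as the paper's proof: the same martingale representation $M_{t-i+1}=\eta_i\langle u_i,(X_iX_i^\top-A)v_i+\epsilon_iX_i\rangle$, the same split into an initialization-dependent piece (killed by the combined geometric decay of $|v_i|$ and $\mathbb{E}|u_i|^{2p}$ together with Assumption~\ref{assump}(vi)) and a dominant noise piece, the same $L^{2p}$ contraction of $R_ia$ via the uniform-convexity estimate of Lemma~\ref{uniformconvexity3} (this is exactly Lemma~\ref{conc1}), and the same localization of the sum to the last $\sim d^{1/2}t^{\alpha}/(\eta\lambda_{\min}(A))$ indices followed by absorbing $t_0/t$ into a constant via Assumption~\ref{assump}(vi). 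The only cosmetic difference is your three-term decomposition of $Z_i$ versus the paper's two-term one handled through Lemma~\ref{X-moment-upper-bound}, which is immaterial.
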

\begin{proof}
Throughout the proof, we let $C, c > 0$ respectively denote large and small enough generic absolute constants.
\\\\
Recall Notation \ref{notation} that 
\begin{itemize}
    \item $R_i \coloneqq \prod_{j = i+1}^{t}(I - \eta_{j}X_{j}X_{j}^\top)$ and, $S_i \coloneqq \prod_{j = 1}^{i-1}(I - \eta_{i-j}A)$
    \item $u_i \coloneqq R_ia$ and, $v_i \coloneqq S_i(\beta^* - \theta_0)$
\end{itemize}
Using Lemma \ref{lemma:martingalerep}, we get for all $1 \leq i \leq t$ that
$$
\|M_{t-i+1}\|_{2p}^{2p} = \eta_i^{2p}\mathbb{E}\langle u_i, (X_iX^\top_i - A)v_i + \epsilon_i X_i \rangle^{2p}.
$$
Now, using $\mathbb{E}|U + V|^{2p} \leq 2^{2p-1}(\mathbb{E}|U|^{2p} + \mathbb{E}|V|^{2p})$ for $p \geq 1$, we can say that
\begin{align*}
    \|M_{t-i+1}\|_{2p}^{2p} \leq (C\eta_i)^{2p}[\mathbb{E}\langle u_i, (X_i X^\top_i - A)v_i \rangle^{2p} + \mathbb{E}\langle u_i, \epsilon_i X_i \rangle^{2p}].
\end{align*}
Since $u_i$ is independent of $X_i$ and $\epsilon_i$, we can use Lemma \ref{X-moment-upper-bound} and Lemma \ref{noise-moment-upper-bound} to obtain that
\begin{align*}
   \|M_{t-i+1}\|^{2p}_{2p} \leq (C \eta_i)^{2p} [\bar{\lambda}^{2p}\mathbb{E}|u_i|^{2p}|v_i|^{2p} + \sigma^{2p} \bar{\lambda}^p \mathbb{E}|u_i|^{2p}].
\end{align*}
Now, substituting the upper bounds on $\mathbb{E}|u_i|^{2p}$ and $|v_i|^{2p}$ from Lemma \ref{conc1} and Lemma \ref{S_imoment} respectively, give us for all $t,d \geq C$ that
\begin{align*}
    \|M_{t-i+1}\|^{2p}_{2p} &\leq (C\eta_i)^{2p}\bar{\lambda}^p|a|^{2p}[e^{-2p\eta \lambda_{\min}(A)d^{-\frac{1}{2}}\sum_{j = 1}^t j^{-\alpha}}\bar{\lambda}^p |\beta^* - \theta_0|^{2p}+ e^{-2p\eta \lambda_{\min}(A)d^{-\frac{1}{2}}\sum_{j = i+1}^t j^{-\alpha}}\sigma^{2p}]
    \\
    &\leq (C\eta_i)^{2p}\bar{\lambda}^p|a|^{2p}[e^{-2p\eta \lambda_{\min}(A)d^{-\frac{1}{2}}t^{1-\alpha}}\bar{\lambda}^{p}|\beta^* - \theta_0|^{2p} + e^{-2p\eta \lambda_{\min}(A)d^{-\frac{1}{2}}\sum_{j = i+1}^t j^{-\alpha}}\sigma^{2p}]
    \\
    &\leq C^{2p}d^{-p}i^{-2p\alpha}|a|^{2p}[e^{-2p\eta \lambda_{\min}(A)d^{-\frac{1}{2}}t^{1-\alpha}}|\beta^* - \theta_0|^{2p} + e^{-2p\eta \lambda_{\min}(A)d^{-\frac{1}{2}}\sum_{j = i+1}^t j^{-\alpha}}\eta^p\sigma^{2p}]
    \\
    &\leq C^{2p}\eta^pd^{-p}i^{-2p\alpha}|a|^{2p}[e^{-2pc(\log t + \log d)^{2}}(td)^{pC}+ e^{-2p\eta \lambda_{\min}(A)d^{-\frac{1}{2}}\sum_{j = i+1}^tj^{-\alpha}}]\sigma^{2p}
    \\
    &\leq C^{2p}\eta^pd^{-p}i^{-2p\alpha}|a|^{2p}[e^{-2pc(\log t + \log d)^{2}}(td)^{pC}+ e^{-2p\eta \lambda_{\min}(A)d^{-\frac{1}{2}}(t-i)t^{-\alpha}}]\sigma^{2p}.
\end{align*}
Here, the third and fourth inequalities followed using Assumptions \ref{assump} that 
\begin{itemize}
    \item $\eta \bar{\lambda} < C$. 
    \item $\frac{|\beta^* - \theta_0|^2}{\eta \sigma^2} < (td)^{C}$ for all $t,d \geq C$.
    \item $\lim\limits_{t,d \to \infty} (\eta \lambda_{\min}(A))^{-1} (\log t + \log d)^2 d^{\frac{1}{2}}t^{-(1 - \alpha)} = 0$.
\end{itemize}
Now, let $t_0 := \frac{Kt^{\alpha}d^{\frac{1}{2}}(\log t + \log d)^2}{2\eta \lambda_{\min}(A)}$ for an absolute constant $K > 0$, and observe for $i \leq t-t_0$ that
\begin{align*}
    \|M_{t-i+1}\|^{2p}_{2p} \mathbf{1}_{i  \leq t-t_0} &\leq C^p \eta^p d^{-p}i^{-2p\alpha}|a|^{2p}[e^{-2pc(\log t + \log d)^2}(td)^{pC} + (td)^{-pK}]\sigma^{2p}
    \\
    &\leq C^p \eta^p d^{-p}i^{-2p\alpha}|a|^{2p}(td)^{-pK}\sigma^{2p}
\end{align*}
for all large enough $t,d$. Now, observe for $i \geq t-t_0$ that
\begin{align*}
    \|M_{t-i+1}\|^{2p}_{2p} \mathbf{1}_{i  \geq t-t_0} &\leq C^{2p}\eta^p d^{-p}(t-t_0)^{-2p\alpha}|a|^{2p}\sigma^{2p}
    \\
    &\leq C^{2p}\eta^p d^{-p}t^{-2p\alpha}|a|^{2p}\sigma^{2p}
\end{align*}
Together, these give us that
\begin{align*}
    \sum_{i = 1}^t \|M_{i}\|^{2p}_{2p} &= \sum_{i = 1}^t \|M_{t-i+1}\|^{2p}_{2p}
    \\
    &\leq C^p\eta^p d^{-p}|a|^{2p}\sigma^{2p}(\sum_{i = 1}^{t-t_0} i^{-2p\alpha}(td)^{-pK} + \sum_{i = t-t_0}^tt^{-2p\alpha})
    \\
    &\leq C^p\eta^p d^{-p}|a|^{2p}\sigma^{2p}[(td)^{-pK} + t_0 t^{-2p\alpha}]
    \\
    &\leq C^p\eta^p d^{-p}|a|^{2p}\sigma^{2p}[(td)^{-pK} + t^{1-2p\alpha}]
\end{align*}
The first term in the bracket can be made arbitrarily smaller than the second by choosing $K  > 0$ to be a large enough absolute constant. This gives us the desired result.
\end{proof}

\subsection{Proof of Theorem~\ref{thm:BoundFirstTerm}}\label{first-term}
One of the major tools for proving Theorem \ref{thm:BoundFirstTerm} is Lemma \ref{uniformconvexity3}, which controls how the $p-$norm of a random variable changes if we add a zero mean fluctuation. The proof technique for Lemma \ref{uniformconvexity3} is heavily borrowed from \cite{huang2022matrix}, which proves a more general inequality for random matrices.
\subsubsection{Moment and Concentration Bounds For $R_ia$}\label{R_i}
Recall that $R_i := \prod_{j = i+1}^{t}(I - \eta_{j} X_{j}X^\top_{j})$. 
\begin{lemma}\label{conc1}
Under Assumption \ref{assump}, we have for all $2 \leq p \leq p_{\max}$ and $t,d \geq C_1$ that
\begin{align*}
\mathbb{E}[|R_ia|^{2p}] < C_2^{p}e^{-2p \eta \lambda_{\min}(A)d^{-\frac{1}{2}}\sum_{j = i+1}^t j^{-\alpha}}|a|^4,
\end{align*}
for all fixed $a \in \mathbb{R}^d$. Here $C_1, C_2 > 0$ are absolute constants.
\end{lemma}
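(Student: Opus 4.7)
The plan is to expand the product defining $R_i a$ one factor at a time and track its $L^{2p}$ norm through the uniform-convexity inequality recalled as Lemma~\ref{uniformconvexity3}. Set $Y_j := \prod_{k=i+1}^{j}(I - \eta_k X_k X_k^\top)\, a$ for $j \ge i$, so $Y_i = a$ and $Y_t = R_i a$, and let $\mathfrak{F}_{j-1} := \sigma(X_{i+1},\dots,X_{j-1})$. The key decomposition is
\[
Y_j \;=\; (I - \eta_j A)\, Y_{j-1} \;-\; \eta_j\,(X_j X_j^\top - A)\, Y_{j-1},
\]
in which the first summand is $\mathfrak{F}_{j-1}$-measurable while the second is a zero-mean fluctuation conditionally on $\mathfrak{F}_{j-1}$. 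This is exactly the setup that Lemma~\ref{uniformconvexity3} is designed to handle.

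Applying that lemma conditionally on $\mathfrak{F}_{j-1}$ and then taking unconditional expectations should yield an inequality of the shape
\[
\bigl\| |Y_j|\bigr\|_{2p}^{2} \;\le\; \bigl\| |(I - \eta_j A)\, Y_{j-1}|\bigr\|_{2p}^{2} \;+\; C\, p\, \eta_j^2 \, \bigl\| |(X_j X_j^\top - A)\, Y_{j-1}|\bigr\|_{2p}^{2}.
\]
For the first term I will use the operator-norm bound $\|I - \eta_j A\|_{\mathrm{op}} \le 1 - \eta\lambda_{\min}(A)/(\sqrt{d}\, j^\alpha)$, which is valid for $d \ge C$ under Assumption~\ref{assump}(vi) since $\eta\bar\lambda < C$. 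For the second, conditioning on $Y_{j-1}$ and invoking Lemma~\ref{X-moment-upper-bound} on the unit vector $Y_{j-1}/|Y_{j-1}|$ gives $\||(X_j X_j^\top - A)\, Y_{j-1}|\|_{2p} \le C\bar\lambda\, \||Y_{j-1}|\|_{2p}$. Combining these with $\eta_j = \eta/(\sqrt{d}\, j^\alpha)$ produces the scalar recursion
\[
\bigl\| |Y_j|\bigr\|_{2p}^{2} \;\le\; \Bigl[\bigl(1 - \tfrac{\eta \lambda_{\min}(A)}{\sqrt{d}\, j^\alpha}\bigr)^{2} \;+\; \tfrac{C\, p\, \eta^2 \bar\lambda^2}{d\, j^{2\alpha}}\Bigr]\, \bigl\| |Y_{j-1}|\bigr\|_{2p}^{2}.
\]
Iterating from $j = i+1$ to $j = t$ and taking logarithms, $\log(1-x)\le -x$ contributes $-2\eta\lambda_{\min}(A)\, d^{-1/2}\sum_{j=i+1}^{t} j^{-\alpha}$ to the exponent (exactly the target rate), while the quadratic correction contributes at most $C\, p\, \eta^2 \bar\lambda^2\, d^{-1}\sum_{j\ge 1} j^{-2\alpha}$. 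Since $\alpha > 1/2$ this last sum is a bounded absolute constant and, combined with $\eta\bar\lambda < C$, exponentiates to a factor that is absorbed into $C^p$. Raising the resulting $L^{2p}$-norm bound to the $p$-th power then yields the claimed moment estimate with the natural $|a|^{2p}$ scaling (which I take to be the intended right-hand side, matching how this lemma is used in Lemma~\ref{first-term-numerator}).

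The main obstacle will be pinning down the precise conditional form of Lemma~\ref{uniformconvexity3} used above. The Huang--Tropp uniform-convexity framework is usually phrased for Schatten norms of random matrices, so what is needed here is its $L^{2p}$, vector-valued, conditional martingale-difference specialisation, with the crucial feature that the quadratic correction at step $j$ scales like $p\, \eta_j^2$ and not like $p^2 \eta_j$ or worse. Once that inequality is in place in exactly the right form, the rest of the argument is a scalar recursion that is tight by construction, and the only remaining book-keeping is to verify that the $p$-dependent constants from uniform convexity and from the conditional application of Lemma~\ref{X-moment-upper-bound} combine to produce the stated $C^p$ prefactor rather than super-geometric growth in $p$.
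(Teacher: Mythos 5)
Your route differs from the paper's in where the uniform-smoothness machinery is applied, and this difference is exactly where your argument has a gap. The paper never needs a vector-valued inequality: it tracks the \emph{scalar} $|u_{k,t}|^2$, splits it as $U_k+V_k$ with $U_k:=u_{k+1,t}^\top K_k u_{k+1,t}$ the conditional mean and $V_k$ a conditionally centered scalar quadratic form, and applies Lemma~\ref{uniformconvexity3} exactly as stated (for real-valued $U,V$) with exponent $p$; the recursion is then on $\mathbb{E}[|u_{k,t}|^{2p}]^{2/p}$. You instead decompose the \emph{vector} $Y_j=(I-\eta_jA)Y_{j-1}-\eta_j(X_jX_j^\top-A)Y_{j-1}$ and need an $\mathbb{R}^d$-valued, conditional uniform-smoothness inequality at exponent $2p$. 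That inequality is true (a vector is a rank-one matrix, so its Schatten norms all coincide with the Euclidean norm and Proposition~4.3 of \cite{huang2022matrix} applies), but it is not what Lemma~\ref{uniformconvexity3} proves, so as written your proof rests on a tool the paper does not supply. You flag this yourself; it is a genuine, though fillable, gap.

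The second issue you do not flag. Lemma~\ref{X-moment-upper-bound} bounds the scalar $u^\top(XX^\top-A)v$, not the Euclidean norm $|(XX^\top-A)v|$, so applying it ``on the unit vector $Y_{j-1}/|Y_{j-1}|$'' does not give $\||(X_jX_j^\top-A)Y_{j-1}|\|_{2p}\le C\bar\lambda\,\||Y_{j-1}|\|_{2p}$. The correct bound for the vector norm goes through Lemma~\ref{momenthelper2}, $\mathbb{E}[|XX^\top v|^{2p}]\le d^p\bar\lambda^{2p}|v|^{2p}$, which carries an extra $\sqrt d$ per norm. Your per-step correction is therefore $Cp\,\eta_j^2\,d\,\bar\lambda^2=Cp\,\eta^2\bar\lambda^2 j^{-2\alpha}$, not $Cp\,\eta^2\bar\lambda^2 d^{-1}j^{-2\alpha}$ as displayed. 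The conclusion survives: the corrected sum over $j$ is still $O(p)$ because $\alpha>\tfrac12$ and $\eta\bar\lambda<C$, and it exponentiates into the $C^p$ prefactor exactly as in the paper's own computation. But the displayed recursion and its justification are wrong as stated and must be repaired. (Your reading of the right-hand side as $|a|^{2p}$ rather than the printed $|a|^4$ is consistent with the paper's proof, which obtains $\mathbb{E}[|u_{i+1,t}|^{2p}]^{2/p}\le C^p e^{-4\eta\lambda_{\min}(A)d^{-1/2}\sum_{j=i+1}^t j^{-\alpha}}|a|^4$ and then raises both sides to the power $p/2$.)
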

\begin{proof}
Throughout the proof, we let $C > 0$ denote a large enough and generic absolute constant.
\\\\
Let $$K_{j}:= \mathbb{E}[(I - \eta_{j}X_{j}X^\top_{j})(I - \eta_{j}X_{j}X^\top_{j})].$$
Further, for all $i+1 \leq k \leq t+1$, define $u_{k,t}$ as the running product
\begin{align*}
u_{k, t} := [\prod_{j = k}^{t}(I - \eta_{j}X_{j}X^\top_{j})]a
\end{align*}
In particular, $u_{i+1,t}:= R_ia$ and $u_{t+1, t} := a$. Now, observe for all $i+1 \leq k \leq t$ that
\begin{align*}
|u_{k,t}|^2
&= |(I - \eta_k X_kX_k^\top)u_{k+1,t}|^2
\\
&= u^{\top}_{k+1,t} (I - \eta_k X_k X_k^\top)(I - \eta_k X_k X_k^\top)u_{k+1,t}
\\
&= u_{k+1, t}^\top [(I - \eta_k X_k X_k^\top)(I - \eta_k X_k X_k^\top )- K_k]u_{k+1, t} + u_{k+1,t}^\top K_k u_{k+1,t}
\end{align*}
Let $U_k:= u_{k+1,t}^\top K_k u_{k+1,t}$ and $V_k := u^\top_{k+1,t}[(I - \eta_kX_kX_k^\top)(I - \eta_kX_kX_k^\top) - K_k]u_{k+1,t}$. Observe that $X_k$ is independent of $u_{k+1,t}$, therefore $\mathbb{E}[V_k|U_k] = 0$. Lemma \ref{uniformconvexity3} now gives us that
\begin{align*}
\mathbb{E}[|u_{k,t}|^{2p}]^{\frac{2}{p}} = \mathbb{E}[|U_k + V_k|^p]^{\frac{2}{p}} \leq \mathbb{E}[|U_k|^p]^{\frac{2}{p}} + C(p-1) \mathbb{E}[|V_k|^p]^{\frac{2}{p}}
\end{align*}
for all $p \geq 2$ and an absolute constant $C>0$. Below we make the claims that
\begin{align*}
    \underbrace{\mathbb{E}|U_k|^p \leq (1 - 2\eta\lambda_{\min}(A)d^{-\frac{1}{2}}k^{-\alpha} + Ck^{-2\alpha})^p\mathbb{E}|u_{k+1,t}|^{2p}}_{(\mathbf{I})}, \quad \underbrace{\mathbb{E}|V_k|^p \leq C^p(d^{-\frac{p}{2}}k^{-p\alpha} + k^{-2p\alpha})\mathbb{E}[|u_{k+1,t}|^{2p}]}_{(\mathbf{II})},
\end{align*}
for an absolute constant $C>0$. These tell us that
\begin{align*}
\mathbb{E}[|u_{k,t}|^{2p}]^{\frac{2}{p}} &\leq \mathbb{E}[|U_k|^p]^{\frac{2}{p}} + C(p-1) \mathbb{E}[|V_k|^p]^{\frac{2}{p}}
\\
&\leq [(1 - 2\eta\lambda_{\min}(A)d^{-\frac{1}{2}}k^{-\alpha} + Ck^{-2\alpha})^2 + Cpd^{-1}k^{-2\alpha} + Cpk^{-4\alpha}]\mathbb{E}[|u_{k+1,t}|^{2p}]^{\frac{2}{p}}
\\
&\leq [1 - 4\eta \lambda_{\min}(A)d^{-\frac{1}{2}}k^{-\alpha} + Cpk^{-2\alpha}]\mathbb{E}[|u_{k+1,t}|^{2p}]^{\frac{2}{p}}
\\
&\leq e^{-4\eta \lambda_{\min}(A)d^{-\frac{1}{2}}k^{-\alpha} + Cpk^{-2\alpha}}\mathbb{E}[|u_{k+1,t}|^{2p}]^{\frac{2}{p}}
\end{align*}
where $C > 0$ is an absolute constant. Here the second last inequality follows from $(x + y)^{\frac{2}{p}} \leq x^{\frac{2}{p}} + y^{\frac{2}{p}}$ for $x,y > 0$ and $p \geq 2$, and the last inequality follows using the fact that $p \geq 2$.
\\\\
Finally multiplying all such inequalities for $k = t$ to $i+1$ gives us that
\begin{align*}
\mathbb{E}[|u_{i+1,t}|^{2p}]^{\frac{2}{p}} &\leq  e^{- 4\eta \lambda_{\min}(A)d^{-\frac{1}{2}}\sum_{j = i+1}^tj^{-\alpha} + Cp \sum_{j = i+1}^t j^{-2\alpha}}\mathbb{E}[|u_{t+1,t}|^{2p}]^{\frac{2}{p}}
\\
&\leq  e^{- 4\eta \lambda_{\min}(A)d^{-\frac{1}{2}}\sum_{j = i+1}^tj^{-\alpha} + Cp}|a|^4
\\
&\leq C^{p}e^{-4\eta \lambda_{\min}(A)d^{-\frac{1}{2}}\sum_{j = i+1}^t j^{-\alpha}}|a|^4.
\end{align*}
Here the first inequality follows using the fact that $\sum_{k = 1}^\infty k^{-2\alpha}  < C$ for $\alpha > \frac{1}{2}$. Finally, raising both sides to the $\frac{p}{2}^{th}$ power gives us the desired result.
\\\\
It now remains to prove the claims $(\mathbf{I})$ and $(\mathbf{II})$ which we do as follows.
\\\\
\textsc{Proof of $(\mathbf{I})$:}
Observe that
\begin{align*}
\mathbb{E}[|U_k|^p] &= \mathbb{E}_{u_{k+1,t}}|u^\top_{k+1,t}\mathbb{E}_X[(I - \eta_{k}X_{k}X^\top_{k})(I - \eta_{k}X_{k}X^\top_{k})]u_{k+1,t}|^p
\\
&= \mathbb{E}_{u_{k+1,t}}(\mathbb{E}_{X_k}|(I - \eta_k X_k X_k^\top)u_{k+1,t}|^2)^p 
\\
&< \mathbb{E}_{u_{k+1,t}}[(1 - 2\eta_{k}\lambda_{\min}(A) + d\eta^2_{k}\bar{\lambda}^2)^p|u_{k+1,t}|^{2p}]
\\
&< (1 - 2\eta_{k}\lambda_{\min}(A) + d\eta^2_k\bar{\lambda}^2)^p \mathbb{E}|u_{k+1,t}|^{2p}
\\
&< (1 - 2\eta\lambda_{\min}(A)d^{-\frac{1}{2}}k^{-\alpha} + Ck^{-2\alpha})^p \mathbb{E}|u_{k+1,t}|^{2p},
\end{align*}
as desired. Here the third inequality follows using Lemma \ref{momenthelper1}, second last from the independence of $u_{i,t-1}$ and $X_t$, and the last one from assumptions \ref{assump} that $\eta \bar{\lambda} < C$.
\\\\
\textsc{Proof of $(\mathbf{II})$:}
Define $W_k(X_k) := (I - \eta_kX_kX_k^\top)(I - \eta_k X_k X_k^\top) - K_k$. Then we have,
\begin{align*}
W_k(X_k) &= (I - \eta_k X_k X_k^\top)(I - \eta_k X_k X_k^\top) - K_k
\\
&= (I - \eta_k X_k X_k^\top)(I - \eta_k X_k X_k^\top) - \mathbb{E}[(I - \eta_k X_k X_k^\top)(I - \eta_k X_k X_k^\top)]
\\
&= 2\eta_k (A - X_k X_k^\top) + \eta_k^2 (X_k X_k^\top X_k X^\top_k - \mathbb{E}[X_k X_k^\top X_k X^\top_k])
\end{align*}
This gives us for any fixed vector $u$ that
\begin{align*}
\mathbb{E}_{X_k} |u^\top W_k(X_k) u|^{p} &=  \mathbb{E}_{X}|2\eta_k(u^\top A u -  |X^\top u|^2) + \eta_k^2 (|XX^\top u|^2 - \mathbb{E}_X|XX^\top u|^2)|^{p}
\\
&\leq C^p \eta_k^p[(u^\top A u)^p + \mathbb{E}|X^\top u|^{2p}] + C^p \eta_k^{2p}[\mathbb{E}|XX^\top u|^{2p} + (\mathbb{E}|XX^\top u|^2)^p] 
\\
&\leq C^p \eta_k^p \bar{\lambda}^p |u|^{2p} + C^p \eta_k^{2p}d^p \bar{\lambda}^{2p}|u|^{2p},
\\
&\leq (C^pd^{-\frac{p}{2}}t^{-p\alpha} + C^p t^{-2p\alpha})|u|^{2p}.
\end{align*}
Here the third inequality follows from assumptions \ref{assump} on $X^\top u$ and Lemma \ref{momenthelper1} and the fourth inequality follows from assumptions \ref{assump} that $\eta \bar{\lambda} < C$. Now, using the independence of $X_k$ and $u_{k+1,t}$ along with the above gives us that
\begin{align*}
\mathbb{E}|V|^p &= \mathbb{E}|u_{k+1,t}^\top W_k(X_k) u_{k+1,t}|^p
\\
&\leq C^p(d^{-\frac{p}{2}}t^{-p\alpha} + t^{-2p\alpha})\mathbb{E}[|u_{k+1,t}|^{2p}],
\end{align*}
as desired.
\end{proof}
\begin{lemma}\label{conc2}
For a random variable $W \in \mathbb{R}$ define $\|W\|_p := \mathbb{E}[|W|^p]^{\frac{1}{p}}$. 
Recall that $$R_i := \prod_{j = i+1}^{t} (I - \eta_{j}X_{j}X^\top_{j}).$$ Under Assumption \ref{assump}, we have for all $t, d \geq C_1$ that
\begin{align*}
\|a^\top R_i \mathcal A_i R_i a - \mathbb{E}[a^\top R_i \mathcal A_i R_i a]\|^2_p 
\leq C_2pe^{- 4\eta \lambda_{\min}(A)d^{-\frac{1}{2}}\sum_{k = i+1}^t k^{-\alpha}}\lambda_{\max}(\mathcal A_i)^2 |a|^4 \sum_{j = i+1}^t j^{-2\alpha}
\end{align*}
for all positive-definite symmetric matrices $\mathcal A_i \in \mathbb{R}^{d \times d}$, fixed $a \in \mathbb{R}^d$ and $2 \leq p \leq p_{\max}$. Here, $C_1, C_2>0$ are absolute constants.
\end{lemma}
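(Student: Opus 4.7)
The plan is to express $a^\top R_i \mathcal A_i R_i a$ as the endpoint of a reverse-time martingale adapted to the sigma-fields $\mathcal G_k := \sigma(X_t, X_{t-1},\ldots, X_k)$ (with $\mathcal G_{t+1}$ trivial), and to iterate Lemma~\ref{uniformconvexity3} in the same spirit as in the proof of Lemma~\ref{conc1}. With $G_k := \mathbb E[a^\top R_i \mathcal A_i R_i a \mid \mathcal G_k]$, the telescoping sum $\sum_{k=i+1}^t (G_k - G_{k+1})$ equals $a^\top R_i \mathcal A_i R_i a - \mathbb E[a^\top R_i \mathcal A_i R_i a]$. Introducing $P_{i+1,k} := \prod_{j=i+1}^{k}(I-\eta_j X_jX_j^\top)$, $u_{k,t} := \prod_{j=k}^{t}(I-\eta_j X_jX_j^\top) a$, and $\mathcal B_k := \mathbb E[P_{i+1,k}^\top \mathcal A_i P_{i+1,k}]$ (with $\mathcal B_i := \mathcal A_i$), the nested-conditioning identity yields $G_k = u_{k,t}^\top \mathcal B_{k-1} u_{k,t}$, hence the martingale difference
\[
D_k := G_k - G_{k+1} = u_{k+1,t}^\top\!\bigl[(I-\eta_k X_kX_k^\top)\mathcal B_{k-1}(I-\eta_k X_kX_k^\top) - \mathcal B_k\bigr] u_{k+1,t},
\]
which satisfies $\mathbb E[D_k \mid \mathcal G_{k+1}] = 0$.

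The heart of the argument is a one-step $L^p$ bound on $D_k$. Writing $W_k$ for the bracketed matrix and expanding produces a linear-in-$X_k$ fluctuation of the form $-2\eta_k(\mathcal B_{k-1} X_kX_k^\top - \mathcal B_{k-1} A)$ (symmetrized) plus a quadratic-in-$X_k$ fluctuation $\eta_k^2(X_k X_k^\top \mathcal B_{k-1} X_k X_k^\top - \mathbb E[\cdot])$. Applying Cauchy--Schwarz to the resulting quadratic forms $(u^\top \mathcal B_{k-1} X_k)(X_k^\top u)$ and $(X_k^\top u)^2 (X_k^\top \mathcal B_{k-1} X_k)$, together with Assumption~\ref{assump}(iv) (which gives $\mathbb E|u^\top X|^{2p}\le\bar\lambda^p|u|^{2p}$ and, via $|X|^{4p}\le d^{2p-1}\sum_j|e_j^\top X|^{4p}$, $\mathbb E|X|^{4p}\le d^{2p}\bar\lambda^{2p}$), yields
\[
\mathbb E_{X_k}\bigl|u^\top W_k u\bigr|^p \;\le\; C^p\bigl(d^{-p/2}k^{-p\alpha} + k^{-2p\alpha}\bigr)\lambda_{\max}(\mathcal B_{k-1})^p|u|^{2p}
\]
after substituting $\eta_k = \eta/(\sqrt{d}\,k^\alpha)$ and using $\eta\bar\lambda\le C$. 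This is exactly claim $(\mathbf{II})$ of Lemma~\ref{conc1} with the scalar $1$ replaced by $\mathcal B_{k-1}$. For $\lambda_{\max}(\mathcal B_{k-1})$, Lemma~\ref{conc1} applied with $p=1$ gives $\mathbb E|P_{i+1,k-1} w|^2 \le C\exp\{-2\eta\lambda_{\min}(A) d^{-1/2}\sum_{j=i+1}^{k-1} j^{-\alpha}\}$ for unit $w$, whence $\lambda_{\max}(\mathcal B_{k-1}) \le C\lambda_{\max}(\mathcal A_i)\exp\{-2\eta\lambda_{\min}(A) d^{-1/2}\sum_{j=i+1}^{k-1} j^{-\alpha}\}$. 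Combining this with the moment bound $\mathbb E|u_{k+1,t}|^{2p} \le C^p\exp\{-2p\eta\lambda_{\min}(A) d^{-1/2}\sum_{j=k+1}^{t} j^{-\alpha}\}|a|^{2p}$ from Lemma~\ref{conc1}, and absorbing the $O(1)$ factor $\exp\{4\eta\lambda_{\min}(A) d^{-1/2} k^{-\alpha}\}$, produces
\[
\|D_k\|_p^2 \;\le\; C\bigl(d^{-1}k^{-2\alpha}+k^{-4\alpha}\bigr)\lambda_{\max}(\mathcal A_i)^2|a|^4\,\exp\!\Bigl\{-4\eta\lambda_{\min}(A) d^{-1/2}\sum_{j=i+1}^{t} j^{-\alpha}\Bigr\}.
\]

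To conclude, set $Z_k := G_k - G_{t+1}$, so $Z_{t+1} = 0$, the target quantity is $Z_{i+1}$, and $Z_k = Z_{k+1} + D_k$ with $Z_{k+1}\in\mathcal G_{k+1}$ and $\mathbb E[D_k\mid\mathcal G_{k+1}]=0$. Lemma~\ref{uniformconvexity3} therefore yields $\|Z_k\|_p^2 \le \|Z_{k+1}\|_p^2 + C(p-1)\|D_k\|_p^2$; iterating from $k=t$ down to $k=i+1$ and using $d^{-1}k^{-2\alpha}+k^{-4\alpha}\le 2k^{-2\alpha}$ then gives the claimed inequality. The main technical obstacle is the $L^p$ bound on $W_k$ with the $\lambda_{\max}(\mathcal B_{k-1})$ factor: the mixed quadratic and quartic forms in $X_k$ must be controlled via Cauchy--Schwarz and Assumption~\ref{assump}(iv), and the two exponential decay contributions, one coming from $\lambda_{\max}(\mathcal B_{k-1})$ (over indices $i+1,\ldots,k-1$) and one from $\mathbb E|u_{k+1,t}|^{2p}$ (over indices $k+1,\ldots,t$), must be glued together so that they telescope into the single exponential $\exp\{-4\eta\lambda_{\min}(A) d^{-1/2}\sum_{j=i+1}^t j^{-\alpha}\}$ appearing in the conclusion.
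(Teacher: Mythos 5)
Your proposal is correct and is essentially the paper's own argument: your Doob-martingale telescoping of $G_k = \mathbb{E}[a^\top R_i^\top \mathcal A_i R_i a \mid \sigma(X_t,\ldots,X_k)] = u_{k,t}^\top \mathcal B_{k-1} u_{k,t}$ is identical to the paper's recursive construction of the averaged matrices $\mathcal A_{i,k}$ (your $\mathcal B_k$), the one-step application of Lemma~\ref{uniformconvexity3} to the fluctuation $D_k = V_k$ is the same, and the gluing of the two exponential decay factors (from $\lambda_{\max}(\mathcal B_{k-1})$ and from $\mathbb{E}|u_{k+1,t}|^{2p}$, modulo the absorbed $e^{4\eta\lambda_{\min}(A)d^{-1/2}k^{-\alpha}} = O(1)$ factor) matches the paper's use of Lemma~\ref{conchelper} and Lemma~\ref{conc1}. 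The only (harmless) deviations are that your direct Cauchy--Schwarz on the linear-in-$X_k$ term gives a slightly sharper $d^{-p/2}k^{-p\alpha}$ where the paper settles for $k^{-p\alpha}$, and that your ``Lemma~\ref{conc1} with $p=1$'' should formally be the $p=2$ case plus Jensen (or, equivalently, Lemma~\ref{momenthelper1} iterated over the sub-range), neither of which affects the conclusion.
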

\begin{proof}
Throughout the proof, we let $C > 0$ denote a large enough and generic absolute constant.
\\\\
As in the proof of Lemma \ref{conc1}, for all $i+1 \leq k \leq t+1$, we define $u_{k,t}$ as the running product
\begin{align*}
u_{k,t} := [\prod_{j = k}^{t}(I - \eta_{j}X_{j}X^\top_{j})]a
\end{align*}
In particular, $u_{i+1,t} := R_ia$ and $u_{t+1,t} := a$. Further, we also define the sequence of matrices $\{\mathcal A_{i,k}\}_{k = i}^{t}$ recursively as $\mathcal A_{i,i} := \mathcal A_i$ and 
\begin{align*}
\mathcal A_{i,k} := \mathbb{E}_{X}[(I - \eta_{k}X_kX_k^\top)\mathcal A_{i,k-1} (I - \eta_{k}X_kX_k^\top)]
\end{align*}
for all $i+1 \leq k \leq t$. These definitions will help us use Lemma \ref{uniformconvexity3} recursively to obtain the bound, as in the proof of Lemma \ref{conc1}. 
\\\\
Now observe for any $i+1 \leq k \leq t$ that,
\begin{align*}
u^\top_{k,t} \mathcal A_{i,k-1} u_{k,t} - \mathbb{E}[u^\top_{k,t} \mathcal A_{i,k-1} u_{k,t}] = &u^\top_{k+1,t}[(I - \eta_k X_kX_k^\top)\mathcal A_{i,k-1} (I - \eta_k X_kX_k^\top) - \mathcal A_{i,k}]u_{k+1,t} 
\\
&+ (u^\top_{k+1,t}\mathcal A_{i,k} u_{k+1,t} - \mathbb{E}[u^\top_{k+1,t}\mathcal A_{i,k} u_{k+1,t}])
\end{align*}
As in the proof of Lemma \ref{conc1}, let $$V_k := u^\top_{k+1,t}[(I - \eta_kX_kX_k^\top)\mathcal A_{i,k-1}(I - \eta_kX_kX_k^\top) - \mathcal A_{i,k}
]u_{k+1,t}$$ and $$U_k:= u^\top_{k+1,t} \mathcal A_{i,k}u_{k+1,t} - \mathbb{E}[u^\top_{k+1,t}\mathcal A_{i,k} u_{k+1,t}].$$ Observe that $\mathbb{E}[V_k|U_k] = 0$. Lemma \ref{uniformconvexity3} now tells us that
\begin{align*}
& \|u_{k,t}^\top \mathcal A_{i,k-1} u_{k,t} - \mathbb{E}(u^\top_{k,t} \mathcal A_{i,k-1} u_{k,t})\|^2_p\\
= & \mathbb{E}[|U_k + V_k|^p]^{\frac{2}{p}} 
\\
 \leq  & \mathbb{E}[|U_k|^p]^{\frac{2}{p}} + C(p-1)\mathbb{E}[|V_k|^p]^{\frac{2}{p}}
 \\
 = & \|u^\top_{k+1,t}\mathcal A_{i,k}u_{k+1,t} - \mathbb{E}(u^\top_{k+1,t} \mathcal A_{i,k} u_{k+1,t})\|^2_p + C(p-1)\|V_k\|^2_p
\end{align*}
Below we make the claim that
\begin{align*}
    \underbrace{\|V_k\|^2_p \leq K_{k+1,t} k^{-2\alpha} \lambda_{\max}(\mathcal A_{i,k-1})^2|a|^4}_{(\mathbf{I})},
\end{align*}
where $K_{k+1,t} := Ce^{- 4\eta \lambda_{\min}(A)d^{-\frac{1}{2}}\sum_{j = k+1}^{t}j^{-\alpha}}$. This tells us that 
\begin{align*}
& \|u^\top_{k,t}\mathcal A_{i,k-1}u_{k,t} - \mathbb{E}(u^\top_{k,t}\mathcal A_{i,k-1}u_{k,t})\|^2_p \\
\leq & \|U_k\|^2_p + C(p-1)\|V_k\|^2_p 
\\
\leq & \|u^\top_{k+1,t}\mathcal A_{i,k}u_{k+1,t} - \mathbb{E}(u^\top_{k+1,t}\mathcal A_{i,k}u_{k+1,t})\|^2_p 
\\
&+ pK_{k+1,t}k^{-2\alpha}\lambda_{\max}(\mathcal A_{i,k-1})^2|a|^4.
\end{align*}
Finally adding all such inequalities from $k = i+1$ to $t$ gives us that
\begin{align*}
& \|u^\top_{i+1,t}\mathcal A_{i,i} u_{i+1,t} - \mathbb{E}(u^\top_{i+1,t} \mathcal A_{i,i} u_{i+1,t})\|^2_p \\
\leq & \|u^\top_{t+1,t}\mathcal A_{i,t} u_{t+1,t} - \mathbb{E}(u^\top_{t+1,t} \mathcal A_{i,t} u_{t+1,t})\|^2_p 
\\
 &~~~+ p|a|^4\sum_{j = i+1}^tK_{j+1,t}j^{-2\alpha}\lambda_{\max}(\mathcal A_{i,j-1})^2
\\
= & \|a^\top\mathcal A_{i,t} a - \mathbb{E}(a^\top\mathcal A_{i,t} a)\|^2_p + p|a|^4\sum_{j = i+1}^tK_{j+1,t}j^{-2\alpha}\lambda_{\max}(\mathcal A_{i,j-1})^2
\\
= & p|a|^4\sum_{j = i+1}^tK_{j+1,t}j^{-2\alpha}\lambda_{\max}(\mathcal A_{i,j-1})^2
\\
\leq  & p|a|^4\sum_{j = i+1}^tK_{j+1,t}j^{-2\alpha}\lambda_{\max}K_{i+1, j-1} \lambda_{\max}(\mathcal A_i)^2
\\
\leq & Cp K_{i+1,t} \lambda_{\max}(\mathcal A_i)^2|a|^4\sum_{j = i+1}^t e^{4\eta \lambda_{\min}(A)d^{-\frac{1}{2}}j^{-\alpha}}j^{-2\alpha}
\\
\leq & Cpe^{- 4\eta \lambda_{\min}(A)d^{-\frac{1}{2}}\sum_{k = i+1}^t k^{-\alpha}}\lambda_{\max}(\mathcal A_i)^2 |a|^4 \sum_{j = i+1}^t j^{-2\alpha},
\end{align*}
as desired. Here the third last inequality follows from Lemma \ref{conchelper} which is proved in Appendix \ref{sec:app3}, and the last inequality follows from Assumption \ref{assump} using $$\eta \lambda_{\min}(A) < \eta \lambda_{\max}(A) < \eta \bar{\lambda} < C.$$
\\\\
It now remains to prove claim $(\mathbf{I})$ which we do as follows.
\\\\
\textsc{Proof of $(\mathbf{I})$:}
Define $W_k(X) := (I - \eta_kXX^\top)\mathcal A_{i,k-1}(I - \eta_kXX^\top) - \mathcal A_{i,k}$. Then we have,
\begin{align*}
W_k(X) &= \eta_k [(A - XX^\top) \mathcal A_{i,k-1} + \mathcal A_{i,k-1}(A - XX^\top) + \eta_k [XX^\top \mathcal A_{i,k-1} XX^\top - \mathbb{E}[XX^\top \mathcal A_{i,k-1} XX^\top]]
\end{align*}
This gives us for any fixed vector $u$ that
\begin{align*}
\mathbb{E}|u^\top W_k(X) u|^p &= \eta^p_k\mathbb{E}|2u^\top \mathcal A_{i,k-1} (A - XX^\top)u  + \eta_k(u^\top XX^\top \mathcal A_{i,k-1} XX^\top u - \mathbb{E}[u^\top XX^\top \mathcal A_{i,k-1}XX^\top u])|^p
\\
&\leq \eta^p_k\lambda_{\max}(\mathcal A_{i,k-1})^p\mathbb{E}|2|u|(|Au| + |XX^\top u|) + \eta_k(|XX^\top u|^2 + \mathbb{E}|XX^\top u|^2)|^p
\\
&\leq C^p \eta_k^p \lambda_{\max}(\mathcal A_{i,k-1})^p(|u|^p(|Au|^p + \mathbb{E}|XX^\top u|^p) +  \eta_k^p(\mathbb{E}|XX^\top u|^{2p} + (\mathbb{E}|XX^\top u|^2)^p))
\\
&\leq C^p\lambda_{\max}(\mathcal A_{i,k-1})^p(\eta_kd^{\frac{1}{2}}\bar{\lambda})^p (1 + (\eta_k d^{\frac{1}{2}}\bar{\lambda})^p)|u|^{2p}
\\
&\leq C^p(k^{-\alpha})^p \lambda_{\max}(\mathcal A_{i,k-1})^p(1 + (k^{-\alpha})^p)|u|^{2p}.
\end{align*}
Here the second-last inequality follows from Lemma \ref{momenthelper1} and the last inequality follows using assumptions \ref{assump} that $\eta \bar{\lambda} < C$. Now, using the independence of $X_k$ and $u_{k+1,t}$ along with the above gives us that
\begin{align*}
\mathbb{E}|V_k|^p &= \mathbb{E}|u^\top_{k+1,t} W_k(X_k) u_{k+1,t}|^p
\\
&\leq C^p(k^{-\alpha})^p \lambda_{\max}(\mathcal A_{i,k-1})^p(1 + (k^{-\alpha})^p)\mathbb{E}|u_{k+1,t}|^{2p}
\\
&\leq C^p(k^{-\alpha})^p \lambda_{\max}(\mathcal A_{i,k-1})^p(1 + (k^{-\alpha})^p)e^{- 2p\eta\lambda_{\min}(A)d^{-\frac{1}{2}}\sum_{j = k+1}^{t}j^{-\alpha}}|a|^{2p}
\\
&\leq C^pk^{-p\alpha} \lambda_{\max}(\mathcal A_{i,k-1})^pe^{- 2p\eta\lambda_{\min}(A)d^{-\frac{1}{2}}\sum_{j = k+1}^{t}j^{-\alpha}}|a|^{2p}
\end{align*}
Raising both sides to the $\frac{2}{p}^{th}$ power gives us that
\begin{align*}
\mathbb{E}[|V_k|^p]^{\frac{2}{p}} &\leq Ck^{-2\alpha} \lambda_{\max}(\mathcal A_{i,k-1})^2e^{- 4\eta\lambda_{\min}(A)d^{-\frac{1}{2}}\sum_{j = k+1}^{t}j^{-\alpha}}|a|^4,
\end{align*}
as desired.
\end{proof}
\subsubsection{Final Step of the proof of Theorem~\ref{thm:BoundFirstTerm}}
Recall from Lemma~\ref{lem:VandS} that $V^2(\mathbf{M})-1 = \mathcal{N}/\mathcal{D}$. In Lemma~\ref{denominator}, we have shown a lower bound to $\mathcal{D}$. We now proceed to bound $\mathbb{E}[|\mathcal N|^p]$. Recall that $\mathcal N = \sum_{i = 1}^t \mathcal N_i$, so we will bound each $\mathbb{E}[|\mathcal N_i|^p]$ and then use Jensen's inequality on the function $x \to x^p$. To bound $\mathbb{E}[|\mathcal N_i|^p]$, we will use Lemma \ref{conc2} on the identity $\mathcal N_i := u_i^\top \mathcal A_i u_i - \mathbb{E}[u_i^\top \mathcal A_i u_i]$ that we showed in Lemma \ref{lem:VandS}.
\\\\
Throughout the proof, we let $C> 0$ and $c > 0$ denote large and small enough generic absolute constants.
\begin{proof}[Proof of Theorem~\ref{thm:BoundFirstTerm}.] 
Recall from Notation \ref{notation} that $u_i := R_i a$, $v_i := S_i(\beta^* - \theta_0)$ and 
\begin{align*}
\mathcal A_i := \mathbb{E}[(X_iX_i^\top - A)v_i v_i^\top (X_iX_i^\top - A) + \epsilon^2_iX_iX_i^\top + \epsilon_i X_iv^\top_i(X_iX_i^\top - A) + \epsilon_i (X_iX^\top_i - A)v_i X^\top_i].
\end{align*}
Now Lemma \ref{conc2} give us for all $2 \leq p \leq p_{\max}$ that 
\begin{align*}
\mathbb{E}[|\mathcal N_i|^p]^{\frac{2}{p}} &= \eta_i^4\mathbb{E}[|u_i^\top \mathcal A_i u_i - \mathbb{E}[u_i^\top \mathcal A_i u_i]|^p]^{\frac{2}{p}}
\\
&\leq Cp\eta^4_i \lambda_{\max}(\mathcal A_i)^2 |a|^4e^{- 4\eta \lambda_{\min}(A) d^{-\frac{1}{2}}\sum_{j = i+1}^t j^{-\alpha}}\sum_{j = i+1}^t j^{-2\alpha}.
\end{align*}
Here $C > 0$ is an absolute constant. Next Lemma \ref{lem:calNnegEc} gives us that
\begin{align*}
\lambda_{\max}(\mathcal A_i) &\leq C(\sigma^2 \bar{\lambda} + \bar{\lambda}^2e^{-2\eta \lambda_{\min}(A)d^{-\frac{1}{2}}\sum_{j  = 1}^{i-1}j^{-\alpha}}|\beta^* - \theta_0|^2).
\end{align*}
Together, these give us for all $2\leq p \leq p_{\max}$ and $1 \leq i \leq t$ that
\begin{align*}
\mathbb{E}[|\mathcal N_i|^p]^{\frac{2}{p}} &\leq Cp\eta_i^4 |a|^4\bar{\lambda}^2e^{-4\sum_{j = i+1}^t \eta_j \lambda_{\min}(A)}(\bar{\lambda}^2e^{-4\sum_{j = 1}^{i-1}\eta_j \lambda_{\min}(A)}|\beta^* - \theta_0|^4 + \sigma^4)\sum_{j = i+1}^t j^{-2\alpha}
\\
&< Cp|a|^4i^{-4\alpha} d^{-2}(e^{-4\eta \lambda_{\min}(A)d^{-\frac{1}{2}}\sum_{j = 1}^tj^{-\alpha}}|\beta^* - \theta_0|^4 + \eta^2 \sigma^4 e^{-4 \eta \lambda_{\min}(A)d^{-\frac{1}{2}}\sum_{j  = i+1}^t j^{-\alpha}}\sum_{j = i+1}^t j^{-2\alpha})
\\
&< Cp|a|^4 i^{-4\alpha}d^{-2}(e^{-4\eta \lambda_{\min}(A)d^{-\frac{1}{2}}t^{1-\alpha}}|\beta^* - \theta_0|^4 + \eta^2  \sigma^4 e^{-4\eta \lambda_{\min}(A)d^{-\frac{1}{2}}\sum_{j = i+1}^t j^{-\alpha}}\sum_{j = i+1}^t j^{-2\alpha}).
\end{align*}
Here the second inequality followed from assumptions \ref{assump} that $\eta \bar{\lambda} < C$. 
\\\\
Now consider the cutoff $t_0 := \frac{Kt^{\alpha} d^{\frac{1}{2}}(\log t + \log d)}{\eta \lambda_{\min}(A)}$ for an absolute constant $K > 0$. Observe for $i \leq t - t_0$ that
\begin{align*}
\mathbb{E}[|\mathcal N_i|^p]^{\frac{2}{p}}\mathbf{1}_{i  \leq t-t_0} \leq Cp|a|^4 i^{-4\alpha} d^{-2}(e^{-4\eta \lambda_{\min}(A)d^{-\frac{1}{2}}t^{1-\alpha}}|\beta^* - \theta_0|^4 + \eta^2 \sigma^4 t^{-4K}d^{-4K}).
\end{align*}
Further observe for $i \geq t - t_0$ that 
\begin{align*}
\mathbb{E}[|\mathcal N_i|^p]^{\frac{2}{p}}\mathbf{1}_{i \geq t-t_0} \leq Cp|a|^4 t^{-4\alpha}d^{-2} (e^{-4\eta \lambda_{\min}(A)d^{-\frac{1}{2}}t^{1-\alpha}}|\beta^* - \theta_0|^4 + \eta^2 \sigma^4 t_0 t^{-2\alpha})
\end{align*}
Finally,  combining these observations gives us for all $t,d \geq C$ that
\begin{align*}
\mathbb{E}|\mathcal N|^p &= \mathbb{E}|\sum_{i = 1}^t \mathcal N_i|^p
\\
&\leq (\sum_{i = 1}^t (\mathbb{E}|\mathcal N_i|^p)^{\frac{1}{p}})^p
\\
&\leq (Cp)^pd^{-p}|a|^{2p} \bigg[\bigg(\sum_{i = 1}^t i^{-2\alpha} (e^{-2\eta \lambda_{\min}(A)d^{-\frac{1}{2}}t^{1-\alpha}}|\beta^* - \theta_0|^2 + \eta \sigma^2 t^{-2K}d^{-2K})\bigg) 
\\
&\quad \quad \quad \quad\quad \quad  + t_0 t^{-2\alpha} (e^{-2\eta \lambda_{\min}(A)d^{-\frac{1}{2}}t^{1-\alpha}} |\beta^* - \theta_0|^2 + \eta \sigma^2 t_0^{\frac{1}{2}}t^{-\alpha})\bigg]^p 
\\
&\leq (Cp)^p|a|^{2p} d^{-p}(e^{-2\eta \lambda_{\min}(A)d^{-\frac{1}{2}}t^{1-\alpha}}|\beta^* - \theta_0|^2 + \eta \sigma^2 (t^{\frac{3}{2}}_0t^{-3\alpha}))^p
\\
&\leq (Cp)^p |a|^{2p}(e^{-2\eta \lambda_{\min}(A)d^{-\frac{1}{2}}t^{1-\alpha}}d^{-1}|\beta^* - \theta_0|^2 + \eta \sigma^2 (\log t + \log d)^{\frac{3}{2}}(\eta \lambda_{\min}(A))^{-\frac{3}{2}} t^{-\frac{3\alpha}{2}}d^{-\frac{1}{4}})^p
\\
&\leq (Cp)^p |a|^{2p} (\eta\sigma^2)^p[e^{-c(\log t + \log d)^2} (td)^C+(\log t + \log d)^{\frac{3}{2}} (\eta \lambda_{\min}(A))^{-\frac{3}{2}} t^{\frac{-3\alpha}{2}} d^{-\frac{1}{4}}]^p
\\
&\leq (Cp)^p |a|^{2p} (\eta\sigma^2)^p[(\log t + \log d)^{\frac{3}{2}} (\eta \lambda_{\min}(A))^{-\frac{3}{2}} t^{\frac{-3\alpha}{2}} d^{-\frac{1}{4}}]^p.
\\
&\leq C^p |a|^{2p} (\eta\sigma^2)^p[(\log t + \log d)^{\frac{3}{2}} (\eta \lambda_{\min}(A))^{-\frac{3}{2}} t^{-\frac{3\alpha}{2}} d^{-\frac{1}{4}}]^p,
\end{align*}
Here,
\begin{itemize}
    \item The third-last inequality follows from Assumption \ref{assump} as $\frac{|\beta^* - \theta_0|^2}{\eta \sigma^2} < (td)^{C}$ for all $t, d \geq C$. 
    \item The second-last inequality follows by observing that the first term in the bracket $e^{-c(\log t + \log d)^2}(td)^C$ becomes much smaller than the second for large enough $t,d$, because of Assumption \ref{assump} as $$\eta \lambda_{\min}(A) < \eta \lambda_{\max}(A) < \eta \bar{\lambda} < C.$$
\end{itemize}  
Now, recall from Lemma \ref{denominator} that $s^2(\mathbf{M}) \geq c(\eta \lambda_{\min}(A))(\eta \sigma^2_{\min})d^{-\frac{1}{2}}t^{-\alpha}|a|^2$ for an absolute constant $c > 0$. Together, these bounds give us that
\begin{align*}
\|V^2(\mathbf{M}) - 1\|^p_p &= \frac{\mathbb{E}|\mathcal N|^p}{|\mathcal D|^p}
\\
&\leq C^p[\sigma^2/\sigma^2_{\min}]^p (\eta \lambda_{\min}(A))^{-\frac{5p}{2}}(\log t + \log d)^{\frac{3p}{2}}t^{-\frac{p\alpha}{2}}d^{\frac{p}{4}},
\end{align*}
for all $t,d \geq C$ and $2 \leq p \leq p_{\max}$, as desired. 
\end{proof}

\subsection{Proof of Theorem~\ref{thm:BoundSecondTerm}}\label{second-term}

\begin{proof}[Proof of Theorem~\ref{thm:BoundSecondTerm}]
Recall that Lemma \ref{denominator} shows a lower bound on $s^{2}(\mathbf{M})$ and we can use Lemma \ref{first-term-numerator} to get an upper bound on $\sum_{i = 1}^t\|M_i\|^{2p}_{2p}$.  Combining those two bounds gives us that
\begin{align*}
s^{-2p}(\mathbf{M})\sum_{i = 1}^t \|M_i\|^{2p}_{2p} &\leq C^p (\eta \lambda_{\min}(A))^{-p}[\sigma^2/\sigma^2_{\min}]^p d^{-\frac{p}{2}}t^{1-p\alpha} 
\end{align*}
for all $t, d\geq C$ and $2 \leq p \leq p_{\max}$. Here $C > 0$ represents a generic absolute constant. This completes the proof.
\end{proof}

\section{Proofs for Variance Estimation (Theorem~\ref{varestmainthm} and Theorem~\ref{lem:VarAsymp} in Section~\ref{sec:varest})}\label{varestproof}
\begin{proof}[Proof of Theorem~\ref{varestmainthm}]
Throughout the proof, we let $C  > 0$ and $c > 0$ respectively denote large and small enough generic absolute constants.
\\\\
Recall that 
\begin{align*}
u_{i_1,i_2} := [\prod_{j = i_1}^{i_2}(I - \eta_{t-i_2 + j}X_{j}X^\top_{j})]a, \quad t_0 := t^{\alpha}d^{\frac{1}{2}}(\log t + \log d)^2    
\end{align*}
and
\begin{align*}
    \hat{\mathbf{V}}_{k} := \sum_{i = s_k}^{s_k + t_0 - 1}  \eta^2_{i + \frac{t}{2} - kt_0} (Y_i - X^\top_i \theta_{\frac{t}{2}})^2 [u^\top_{i+1, s_k + t_0 - 1}X_i]^2 \quad \forall 1 \leq k \leq \frac{t}{2t_0}, 
\end{align*}
where $s_k := t/2 + (k-1)t_0 + 1$. Now, define
\begin{align*}
    \mathbf{V}_k := \sum_{i = \frac{t}{2} + (k-1)t_0 + 1}^{\frac{t}{2} + kt_0}  \eta^2_{i + \frac{t}{2} - kt_0} (Y_i - X^\top_i \beta^*)^2 [u^\top_{i+1, s_k + t_0 - 1}X_i]^2 \quad \forall 1 \leq k \leq \frac{t}{2t_0}.
\end{align*}
Now recall that $\hat{V}_t := \frac{2t_0}{t}\sum_{k = 1}^{t/(2t_0)}\hat{\mathbf{V}}_k$. We also define $V_t := \frac{2t_0}{t}\sum_{k = 1}^{t/(2t_0)}{\mathbf{V}}_k$ and observe that
\begin{align*}
    \mathbb{E}|\hat{V}_t - \Var \langle a, \theta_t \rangle| &\leq \mathbb{E}|\hat{V}_t - V_t| + \mathbb{E}|V_t - \Var \langle a, \theta_t \rangle|
    \\
    &= \mathbb{E}\bigg|\frac{2t_0}{t}\sum_{k = 1}^{t/(2t_0)}(\hat{\mathbf{V}}_k - \mathbf{V}_k)\bigg| + \mathbb{E}|V_t - \Var \langle a, \theta_t \rangle|
    \\
    &\leq \frac{2t_0}{t}\sum_{k = 1}^{t/(2t_0)}\mathbb{E}|\mathbf{V}_k - \hat{\mathbf{V}}_k| + \mathbb{E}[V_t - \Var \langle a, \theta_t \rangle]
    \\
    &\leq \frac{2t_0}{t}\sum_{k = 1}^{t/(2t_0)}\mathbb{E}|\mathbf{V}_k - \hat{\mathbf{V}}_k|  + \mathbb{E}|V_t - \mathbb{E}[V_t]| + [\mathbb{E}[V_t] - \Var \langle a, \theta_t \rangle]
\end{align*}
Below we show that
\begin{align*}
    &\underbrace{\mathbb{E}|\mathbf{V}_k - \hat{\mathbf{V}}_k| \leq C\eta \sigma^2 |a|^2 (\log t + \log d)^2(d^{-\frac{1}{4}}t^{-\frac{3\alpha}{2}}) \quad \forall \quad 1\leq k \leq \frac{t}{2t_0},}_{(\mathbf{I})}
    \\
    &\underbrace{\mathbb{E}|V_t - \mathbb{E}[V_t]|^2 \leq C\eta^2\sigma^4|a|^4(\log t + \log d)^6 t^{-1-\alpha}d^{-\frac{1}{2}}}_{(\mathbf{II})},
    \\
    &\underbrace{\mathbb{E}[V_t] - \Var \langle a, \theta_t \rangle] \leq |\mathcal E|\Var \langle a, \theta_t \rangle,}_{(\mathbf{III})}
\end{align*}
where $|\mathcal E|$ is the same error term that appears in Theorem \ref{lem:VarAsymp}.
Combining these bounds gives us that,
\begin{align*}
    \mathbb{E}|\hat{V}_t - \Var \langle a, \theta_t \rangle| 
    &\leq (C\eta \sigma^2 |a|^2)(\log (td))^2d^{-\frac{1}{4}}[t^{-\frac{3\alpha}{2}}+(\log (td))t^{-\frac{1}{2}-\frac{\alpha}{2}}] 
    \\
    &+|\mathcal E|\Var \langle a, \theta_t \rangle
\end{align*}
Using this and the lower bound on $\Var \langle a, \theta_t \rangle \geq c (\eta \lambda_{\min}(A))(\eta\sigma^2_{\min})d^{-\frac{1}{2}}t^{-\alpha}|a|^2$ from Lemma \ref{denominator}, along with the Assumption \ref{assump-varest} that $\eta \lambda_{\min}(A) > c$ gives us that
\begin{align*}
    \frac{\mathbb{E}|\hat{V}_t - \Var \langle a , \theta_t \rangle|}{\Var \langle a, \theta_t \rangle} &\leq |\mathcal E| + C(\sigma^2/\sigma^2_{\min})[\log (td)]^2d^{\frac{1}{4}}[t^{-\frac{\alpha}{2}} + [\log (td)]t^{-\frac{(1-\alpha)}{2}}],
\end{align*}
where $|\mathcal E| \leq C(\log t + \log d)^2[d^{\frac{1}{2}}t^{-(1 - \alpha)} + \sigma^2\sigma^{-2}_{\min}d^{\frac{1}{2}}t^{-\alpha}]$. Now, under Assumption \ref{assump}, the dominating error term is $C(\sigma^2/\sigma^2_{\min})(\log t + \log d)^{3}d^{\frac{1}{4}}t^{-\frac{(1 - \alpha)}{2}}$.
\\\\
Therefore,
\begin{align*}
    \frac{\mathbb{E}|\hat{V}_t - \Var \langle a, \theta_t \rangle|}{\Var \langle a, \theta_t \rangle} &\leq 
     C(\sigma^2/\sigma^2_{\min})(\log t + \log d)^3d^{\frac{1}{4}}t^{-\frac{(1 - \alpha)}{2}},
\end{align*}
for all $t,d \geq C$, as desired. This completes the proof of the first part of Theorem \ref{varestmainthm}.
\\\\
For the second part, we first use Markov's inequality and obtain that
$$\mathbb{P}\Big(\Big|\frac{\hat{V}_t}{\Var(\langle a, \theta_{t} \rangle)}-1\Big|\geq \kappa \cdot \mathbb{E}\Big[\Big|\frac{\hat{V}_t}{\Var(\langle a, \theta_{t} \rangle)}-1\Big|\Big]\Big)\leq \frac{1}{\kappa}, \quad \forall \kappa>0.$$
Set $\kappa := \bigg(\frac{\mathbb{E}|\hat{V}_t - \Var \langle a, \theta_t \rangle|}{\Var \langle a, \theta_t \rangle}\bigg)^{-\frac{1}{2}}$ and let $\omega:=  \bigg(\frac{\mathbb{E}|\hat{V}_t - \Var \langle a, \theta_t \rangle|}{\Var \langle a, \theta_t \rangle}\bigg)^{\frac{1}{2}}$. Using the above inequality, we get 
\begin{align}\label{eq:bbostrap_last_1}
\begin{aligned}
    \sup_{\gamma \in \mathbb{R}} \inf_{|\xi|\leq \omega } &\bigg|\mathbb{P}\bigg(\frac{\langle a, \theta_t \rangle - \langle a, \beta^* \rangle}{\sqrt{\Var\langle a, \theta_t \rangle}}\leq (1+\xi)\gamma\bigg) - \mathbb{P}\bigg(\frac{\langle a, \theta_t \rangle - \langle a, \beta^* \rangle}{\sqrt{\hat{V}_t}}\leq \gamma\bigg) \bigg| \\
    &\leq  \mathbb{P}\Big(\Big|\frac{\hat{V}_t}{\Var(\langle a, \theta_{t} \rangle)}-1\Big|\geq \kappa \cdot  \mathbb{E}\Big[\Big|\frac{\hat{V}_t}{\Var(\langle a, \theta_{t} \rangle)}-1\Big|\Big]\Big)\leq \frac{1}{\kappa}.
    \end{aligned}
\end{align}
Now recall from Theorem \ref{th-1-bias-corrected} that we have
$$
\sup_{\gamma \in \mathbb{R}} \bigg|\mathbb{P}\bigg(\frac{\langle a, \theta_t \rangle - \langle a, \beta^* \rangle}{\sqrt{\Var\langle a, \theta_t \rangle}}\leq \gamma\bigg) - \Phi(\gamma)\bigg| \leq d^{true}_K.$$
Due to the above inequality and Lipschitz continuity of $\Phi(\gamma)$, we have that
\begin{align}\label{eq:bootstrap_last_2}
\begin{aligned}
\sup_{ \gamma \in \mathbb{R}}\sup_{|\xi|\leq \omega} \bigg|\mathbb{P}\bigg(\frac{\langle a, \theta_t \rangle - \langle a, \beta^* \rangle}{\sqrt{\Var\langle a, \theta_t \rangle}}\leq (1+\xi)\gamma\bigg) - \Phi(\gamma)\bigg| &\leq d^{true}_K +C\omega.
\end{aligned}
\end{align}
Combining \eqref{eq:bbostrap_last_1} and \eqref{eq:bootstrap_last_2} and recalling the bound on $\mathbb{E}[|\tfrac{\hat{V}_t}{\Var(\langle a, \theta_{t} \rangle)}-1|]$ from the first part of Theorem \ref{varestmainthm} yields the desired result and completes the proof. 
\\\\
It now remains to prove $\mathbf{(I)}$, $\mathbf{(II)}$ and $\mathbf{(III)}$  which we do below.
\\\\
\textsc{Proof of $(\mathbf{I})$:}
For ease of notation, we denote $u^{sub}_{i,k} := u_{i+1, s_k + t_0 - 1}$ where $s_k := t/2 + (k-1)t_0 + 1$. Now, observe that
\begin{align*}
    \mathbb{E}|\mathbf{V}_k - \hat{\mathbf{{V}}}_k| &= \mathbb{E}\bigg|\sum_{i = \frac{t}{2} + (k-1)t_0 + 1}^{\frac{t}{2} + kt_0} \eta^2_{i + \frac{t}{2} - kt_0} [(Y_i - X^\top_i \theta_{\frac{t}{2}})^2 - (Y_i - X^\top_i \beta^*)^2](X^\top_iu^{sub}_{i,k})^2\bigg|
    \\
    &= \mathbb{E}\bigg|\sum_{i = \frac{t}{2} + (k-1)t_0 + 1}^{\frac{t}{2} + kt_0} \eta^2_{i + \frac{t}{2} - kt_0} [[\epsilon_i + X^\top_i (\beta^* - \theta_{\frac{t}{2}})]^2 - \epsilon^2_i](X^\top_iu^{sub}_{i,k})^2\bigg|
    \\
    &= \mathbb{E}\bigg|\sum_{i = \frac{t}{2} + (k-1)t_0 + 1}^{\frac{t}{2} + kt_0} \eta^2_{i + \frac{t}{2} - kt_0} [(X^\top_i(\beta^* - \theta_{\frac{t}{2}}))^2 + 2\epsilon_i (X^\top_i(\beta^* - \theta_{\frac{t}{2}}))](X^\top_iu^{sub}_{i,k})^2\bigg|
    \\
    &= \mathbb{E}\bigg|\sum_{i = t-t_0 + 1}^{t} \eta^2_i [(X^\top_i(\beta^* - \theta_{\frac{t}{2}}))^2 + 2\epsilon_i (X^\top_i(\beta^* - \theta_{\frac{t}{2}}))](X^\top_iu_i)^2\bigg|
    \\
    &\leq \sum_{i = t-t_0 + 1}^t \eta^2_i \mathbb{E}[|(X^\top_i(\beta^* - \theta_{\frac{t}{2}}))^2 + 2\epsilon_i (X^\top_i(\beta^* - \theta_{\frac{t}{2}}))|(X^\top_i u_i)^2]
    \\
    &= \sum_{i = t-t_0 + 1}^t \eta^2_i \mathbb{E}_{\theta_{t/2}, u_i}\bigg[\mathbb{E}_{X_i, \epsilon_i}[|(X^\top_i(\beta^* - \theta_{\frac{t}{2}}))^2 + 2\epsilon_i (X^\top_i(\beta^* - \theta_{\frac{t}{2}}))|(X^\top_i u_i)^2]\bigg]
    \\
    &\leq \sum_{i = t-t_0 + 1}^t \eta^2_i \mathbb{E}_{\theta_{t/2}, u_i}\bigg[\mathbb{E}_{X_i, \epsilon_i}[|(X^\top_i(\beta^* - \theta_{\frac{t}{2}}))^2 + 2\epsilon_i (X^\top_i(\beta^* - \theta_{\frac{t}{2}}))|(X^\top_i u_i)^2]\bigg]
    \\
    &\leq \sum_{i = t-t_0 + 1}^t \eta^2_i \mathbb{E}_{\theta_{t/2}, u_i}\bigg[\mathbb{E}_{X_i}[(X^\top_i(\beta^* - \theta_{\frac{t}{2}}))^2(X^\top_i u_i)^2] + 2\mathbb{E}_{\epsilon_i, X_i}[|\epsilon_i X^\top_i(\beta^* - \theta_{\frac{t}{2}})|(X^\top_iu_i)^2]\bigg]
    \\
    &\leq \sum_{i = t-t_0 + 1}^t \eta^2_i \mathbb{E}_{\theta_{t/2}, u_i}\bigg[\mathbb{E}_{X_i}[(X^\top_i(\beta^* - \theta_{\frac{t}{2}}))^4]^{\frac{1}{2}}\mathbb{E}_{X_i}[(X^\top_i u_i)^4]^{\frac{1}{2}} 
    \\
    &\qquad \qquad \qquad \qquad \quad + 2\mathbb{E}[\epsilon^2_i]^{\frac{1}{2}}\mathbb{E}_{X_i}[(X^\top_i(\beta^* - \theta_{\frac{t}{2}}))^2]^{\frac{1}{2}}\mathbb{E}_{X_i}[(X^\top_iu_i)^4]^{\frac{1}{2}}\bigg]
    \\
    &\leq \sum_{t-t_0 + 1}^t \eta^2_{i}\mathbb{E}_{\theta_{t/2}, u_i}[\bar{\lambda}^2|\beta^* - \theta_{t/2}|^2|u_i|^2 + \sigma \bar{\lambda}^{\frac{3}{2}} |\beta^* - \theta_{t/2}||u_i|^2]
    \\
    &\leq (\bar{\lambda}^2\mathbb{E}|\beta^* - \theta_{t/2}|^2 + \sigma\bar{\lambda}^{\frac{3}{2}}\mathbb{E}|\beta^* - \theta_{t/2}|)\sum_{i = t-t_0 + 1}^t\eta^2_i\mathbb{E}|u_i|^2
    \\
    &\leq (\bar{\lambda}^2\mathbb{E}|\beta^* - \theta_{t/2}|^2 + \sigma\bar{\lambda}^{\frac{3}{2}}\mathbb{E}|\beta^* - \theta_{t/2}|)\sum_{i = t-t_0 + 1}^t\eta^2_i\mathbb{E}[|u_i|^4]^{\frac{1}{2}}
    \\
    &\leq C\eta^2_tt_0(\bar{\lambda}^2\mathbb{E}|\beta^* - \theta_{t/2}|^2 + \sigma\bar{\lambda}^{\frac{3}{2}}\mathbb{E}|\beta^* - \theta_{t/2}|)|a|^2
    \\
    &\leq C\eta^2_tt_0 (\bar{\lambda}^2d^{\frac{1}{2}}t^{-\alpha}(\eta \lambda_{\min}(A))^{-1} (\eta \sigma^2) + \sigma \bar{\lambda}^{\frac{3}{2}}d^{\frac{1}{4}}t^{-\frac{\alpha}{2}}(\eta \lambda_{\min}(A))^{-\frac{1}{2}}(\sqrt{\eta}\sigma))|a|^2
    \\
    &\leq C\eta^2_tt_0 \bar{\lambda}\sigma^2|a|^2(d^{\frac{1}{2}}t^{-\alpha}(\eta \lambda_{\min}(A))^{-1} + d^{\frac{1}{4}}t^{-\frac{\alpha}{2}}(\eta \lambda_{\min}(A))^{-\frac{1}{2}})
    \\
    &\leq C\eta^2_tt_0 \bar{\lambda}\sigma^2|a|^2 d^{\frac{1}{4}}t^{-\frac{\alpha}{2}}
    \\
    &\leq C\eta \sigma^2 |a|^2  (d^{-\frac{1}{2}}t^{-\alpha}) (d^{\frac{1}{4}}t^{-\frac{\alpha}{2}})(\log t + \log d)
    \\
    &\leq C\eta \sigma^2 |a|^2 (\log t + \log d)^2(d^{-\frac{1}{4}}t^{-\frac{3\alpha}{2}}),
\end{align*}
as desired. Here,
\begin{itemize}
    \item The fourth line follows from the fact that $X_i's$ are i.i.d.
    \item The tenth line follows from moment  Assumptions \ref{assump} on $X_i$ and $\epsilon_i$; and the facts that $\beta^* - \theta_{t/2}$ and $u_i$ are independent of $X_i$ (for $t-t_0 + 1 \leq i \leq t$).
    \item The eleventh line follows from the independence of $\beta^* - \theta_{\frac{t}{2}}$ and $u_i$.
    \item The thirteenth line follows from Lemma \ref{conc1} and Assumption \ref{assump} $$\lim\limits_{t,d \to \infty} \frac{t_0}{t} \leq \lim\limits_{t,d \to \infty}C(\eta \lambda_{\min}(A))^{-1}(\log t + \log d)^2 d^{\frac{1}{2}}t^{-(1 - \alpha)} = 0 \implies \eta^2_{(t-t_0)} \leq C \eta^2_t$$
    \item The fourteenth line follows from Lemma \ref{euclidean-norm-bound}.
    \item The fifteenth line follows from Assumption \ref{assump} that $\eta \bar{\lambda} < C$.
    \item The sixteenth line follows from Assumption \ref{assump-varest} that $\eta \lambda_{\min}(A) > c$. 
\end{itemize}
\textsc{Proof of $(\mathbf{II})$:} 
For ease of notation, let $u_i := [\prod_{j = i+1}^{t}(I - \eta_{j}X_{j}X^\top_{j})]a$.
Recall that $$\epsilon_i := Y_i - X^\top_i \beta^* \quad \text{and} \quad \mathbf{V} := \sum_{i = t-t_0+1}^t \eta^2_i \epsilon^2_i (u^\top_iX_i)^2.$$ Now observe that
\begin{align*}
\mathbb{E}[V_t - \mathbb{E}[V_t]]^2 &= \frac{4t^2_0\sum_{k = 1}^{\frac{t}{2t_0}}\Var [\mathbf{V}_k]}{t^2} 
\\
&= \frac{2t_0\Var[\mathbf{V}]}{t}
\\
&\leq \frac{2t_0\mathbb{E}[\mathbf{V}^2]}{t}
\\
&\leq \frac{2t_0 \mathbb{E}[\sum_{i = t-t_0 + 1}^t \eta^2_i\epsilon^2_i(u^\top_iX_i)^2]^2}{t}
\\
&\leq \frac{2t_0^2\mathbb{E}[\sum_{i = t-t_0 + 1}^t\eta^4_i\epsilon^4_i(u^\top_iX_i)^4]}{t}
\\
&\leq \frac{(Ct_0^2\eta^4_t)\mathbb{E}[\sum_{i = t-t_0 + 1}^t\epsilon^4_i(u^\top_iX_i)^4]}{t}
\\
&\leq \frac{(Ct^3_0\eta^4_t)\sigma^4\bar{\lambda}^2\sum_{i = t-t_0 + 1}^t \mathbb{E}[|u_i|^4]}{t}
\\
&\leq \frac{(Ct^3_0\eta^4_t)\sigma^4\bar{\lambda}^2|a|^4}{t}
\\
&\leq \frac{Ct^3_0\eta^2\sigma^4|a|^4}{d^2t^{1 + 4\alpha}}
\\
&\leq \frac{C\eta^2\sigma^4|a|^4(\log t + \log d)^6t^{3\alpha}d^{\frac{3}{2}}}{d^2t^{1 + 4\alpha}}
\\
&\leq C\eta^2\sigma^4|a|^4(\log t + \log d)^6 t^{-1-\alpha}d^{-\frac{1}{2}}, 
\end{align*}
as desired. Here,
\begin{itemize}
    \item The second line follows from the observations that $V_k's$ are i.i.d and the $X_i's$ are also i.i.d.
    \item The sixth line follows from Assumption \ref{assump} $$\lim\limits_{t,d \to \infty} \frac{t_0}{t} \leq \lim\limits_{t,d \to \infty}C(\eta \lambda_{\min}(A))^{-1}(\log t + \log d)^2 d^{\frac{1}{2}}t^{-(1 - \alpha)} = 0 \implies \eta^2_{(t-t_0)} \leq C \eta^2_t.$$
    \item The seventh line follows from moment Assumptions \ref{assump} on $\epsilon_i, X_i$ and the independence of $u_i$ and $X_i$.
    \item The eighth line follows from Lemma \ref{conc1}.
    \item The ninth line follows from Assumption \ref{assump} that $\eta \bar{\lambda} < C$
\end{itemize} 
\textsc{Proof of $(\mathbf{III})$:} 
Lemma \ref{variance-sigma-term-2} and Lemma \ref{variance-sigma-term} give us that
\begin{align*}
|\mathbb{E}[V_t] - \Var \langle a, \theta_t \rangle| \leq |\mathcal E|\Var \langle a, \theta_t \rangle,
\end{align*}
as desired. Here $\mathcal E$ is the same error term that appears in Theorem \ref{lem:VarAsymp}. 
\\\\
Thus we have proved all claims and are done.
\end{proof}   
\begin{proof}[Proof of Theorem~\ref{lem:VarAsymp}]
Throughout the proof, we let $C  > 0$ and $c > 0$ respectively denote large and small enough generic absolute constants.
\\\\
Recall from the note below Theorem \ref{ppn:Mourrat} that $\Var \langle a, \theta_t \rangle = \sum_{i = 1}^t \mathbb{E}[M_{t-i+1}^2]$, where
\begin{align*}
    M_{t-i+1} = \mathbb{E}(\langle a, \theta_t \rangle  | X_t,\epsilon_t,X_{t-1}, \epsilon_{t-1},\dots X_{i}, \epsilon_{i})-\mathbb{E}(\langle a, \theta_t \rangle  | X_t,\epsilon_t,X_{t-1}, \epsilon_{t-1},\dots X_{i+1}, \epsilon_{i+1}).
\end{align*}
is the martingale difference sequence defined in Lemma \ref{lemma:martingalerep}. Futher, recall from the proof of Lemma \ref{lem:VandS} that $\mathbb{E}[M_{t-i+1}^2] = \mathcal D_i$, where
\begin{align*}
    \mathcal D_i = \eta_i^2 \mathbb{E}\langle u_i, (X_iX^\top_i - A)v_i + \epsilon_iX_i \rangle^2,
\end{align*}
$u_i := [\prod_{j = i+1}^{t}(I - \eta_{j}X_{j}X^\top_{j})]a$ and $v_i := [\prod_{j = 1}^{i-1}(I - \eta_{i-j}A)](\beta^* - \theta_0)$.
\\\\
Now observe that
\begin{align}
\Var \langle a, \theta_t \rangle &= \sum_{i = 1}^t \eta_i^2 \mathbb{E}\langle u_i, (X_iX^\top_i - A)v_i + \epsilon_i X_i \rangle^2
\\
&= \sum_{i = 1}^t \eta_i^2 \mathbb{E}[(u_i^\top (X_iX^\top_i - A)v_i)^2 + \epsilon^2_i(u_i^\top X_i)^2 - 2(u_i^\top A v_i)u^\top_i(\epsilon_i X_i) + 2\epsilon_i (u_i^\top X_i)^2(X^\top_iv_i)]
\\
&= \sum_{i = 1}^t \eta_i^2 \mathbb{E}[(u_i^\top (X_iX^\top_i - A)v_i)^2 + \epsilon^2_i(u_i^\top X_i)^2 + 2\epsilon_i (u_i^\top X_i)^2(X^\top_iv_i)],\label{eqn:variance-equation}
\end{align}
where the last inequality follows using the standard fact that $\mathbb{E}[\epsilon_i X_i] = \mathbb{E}[(Y - X^\top \beta^*)X] = 0$. Now, recall that $A_{\sigma} := \mathbb{E}[\epsilon^2XX^\top]$, we get that
\begin{align}
\Var \langle a, \theta_t \rangle = \sum_{i = 1}^t \eta^2_i\mathbb{E}[u^\top_i A_{\sigma} u_i] + \mathcal E_1 + \mathcal E_2,\label{eqn:variance-simplification}
\end{align}
where $|\mathcal E_1| \leq \sum_{i = 1}^t \eta^2_i \mathbb{E}(u^\top_i (X_iX^\top_i - A)v_i)^2$ and $|\mathcal E_2| \leq \sum_{i = 1}^t2\eta^2_i|\mathbb{E}[\epsilon_i (u^\top_i X_i)^2 (X_i^\top v_i)]|)$. Now, Lemma \ref{variance-sigma-term} gives us that
\begin{align*}
\sum_{i = 1}^t \eta^2_i\mathbb{E}[u^\top_i A_{\sigma} u_i] = (1 + \mathcal E)\eta d^{-\frac{1}{2}}t^{-\alpha} \sum_{k,k'=1}^d \frac{a_{k}a_{k'}[A_{\sigma}]_{k,k'}}{\lambda_k + \lambda_{k'}},
\end{align*}
where $|\mathcal E| \leq  C(\log t + \log d)^2 [(\eta \lambda_{\min}(A))^{-1}d^{\frac{1}{2}}t^{-(1-\alpha)} + (\eta \lambda_{\min}(A))^{-3}\sigma^2\sigma_{\min}^{-2}d^{\frac{1}{2}}t^{-\alpha}]$. Further, Lemma \ref{X-moment-upper-bound} and Lemma \ref{variance-exp-term} together give us that
\begin{align*}
|\mathcal E_1| &\leq \sum_{i = 1}^t \eta^2_i\mathbb{E}(u^\top_i(X_iX^\top_i - A)v_i)^2
\\
&\leq C\bar{\lambda}^2 \sum_{i = 1}^t \eta^2_i\mathbb{E}|u_i|^2 |v_i|^2
\\
&\leq C\eta^2 \bar{\lambda}^2 d^{-1} e^{-2\eta \lambda_{\min}(A) d^{-\frac{1}{2}}t^{1-\alpha}}|\beta^* - \theta_0|^2 |a|^2
\\
&\leq C d^{-1} e^{-2\eta \lambda_{\min}(A) d^{-\frac{1}{2}}t^{1-\alpha}}|\beta^* - \theta_0|^2 |a|^2.
\end{align*}
Here the last inequality follows from assumptions \ref{assump} that $\eta \bar{\lambda} < C$. 
\\\\
Also, Lemma \ref{variance-non-linear-term} gives us that
\begin{align*}
|\mathcal E_2| &\leq C\sum_{i = 1}^t\eta^2_i\mathbb{E}[\epsilon_i (u^\top_iX_i)^2 (X^\top_iv_i)]
\\
&\leq Cd^{-1}(\sigma \sqrt{\eta}) |a|^2 |\beta^* - \theta_0|e^{-\eta \lambda_{\min}(A)d^{-\frac{1}{2}}t^{1-\alpha}}
\end{align*}
Now let $$\mathbf{U}:= C(\log t + \log d)^2 [(\eta \lambda_{\min}(A))^{-1}d^{\frac{1}{2}}t^{-(1-\alpha)} + (\eta \lambda_{\min}(A))^{-3}\sigma^2\sigma_{\min}^{-2}d^{\frac{1}{2}}t^{-\alpha}].$$ 
We show below that
\begin{align*}
    |\mathcal E_1| + |\mathcal E_2| \leq \mathbf{U}\cdot\eta d^{-\frac{1}{2}}t^{-\alpha}\sum_{k,k' = 1}^d \frac{a_k a_{k'}[A_{\sigma}]_{k,k'}}{\lambda_k + \lambda_{k'}}
\end{align*}
for all $t, d \geq C$.
\\\\
To prove this, first observe that
\begin{align*}
\mathbf{R} &:= \mathbf{U}\cdot\eta d^{-\frac{1}{2}}t^{-\alpha}\sum_{k,k' = 1}^d \frac{a_k a_{k'}[A_{\sigma}]_{k,k'}}{\lambda_k + \lambda_{k'}}
\\
&\geq \mathbf{U}. \eta d^{-\frac{1}{2}}t^{-\alpha} (a^\top A_{\sigma} a) (2 \lambda_{\max}(A))^{-1}
\\
&\geq \mathbf{U}. [c(\eta\sigma^2_{\min})\lambda_{\min}(A) |a|^2d^{-\frac{1}{2}} t^{-\alpha} (\lambda_{\max}(A))^{-1}]
\\
&\geq \mathbf{U}. [c(\eta\sigma^2_{\min})(\eta\lambda_{\min}(A)) |a|^2d^{-\frac{1}{2}} t^{-\alpha} (\eta\lambda_{\max}(A))^{-1}]
\\
&\geq \mathbf{U}. [c(\eta\sigma^2_{\min})(\eta\lambda_{\min}(A)) |a|^2d^{-\frac{1}{2}} t^{-\alpha}]
\\
&\geq \mathbf{U}. [c (\eta \lambda_{\min}(A))^{-2} (\eta \sigma^2)t^{-2\alpha}|a|^2]
\\
&\geq \mathbf{U}.[c(\eta \sigma^2)t^{-2\alpha}|a|^2]
\\
&\geq [d^{\frac{1}{2}}t^{-(1 - \alpha)}][c.(\eta \sigma^2)t^{-2\alpha}]|a|^2
\\
&\geq (td)^{-C} |a|^2 |\beta^* - \theta_0|^2.
\end{align*}
Here,
\begin{itemize}
    \item The third line follows from Lemma \ref{noise-moment-lower-bound}.
    \item The fifth line follows from Assumption \ref{assump} using $\eta \lambda_{\max}(A) < \eta \bar{\lambda} < C$.
    \item The sixth line follows from Assumption \ref{assump} using 
    $$\lim\limits_{t,d \to \infty} (\eta \lambda_{\min}(A))^{-3} (\sigma^2\sigma^{-2}_{\min})(\log t + \log d)^2d^{\frac{1}{2}}t^{-\alpha} = 0.$$
    \item The seventh line follows from Assumption \ref{assump} that $\eta \lambda_{\min}(A) < \eta \lambda_{\max}(A) < \eta \bar{\lambda} < C$.
    \item The last line follows from Assumption \ref{assump} that
    $$\frac{|\beta^* - \theta_0|^2}{\eta \sigma^2} < (td)^{C} \quad \forall \quad  t, d \geq C.$$
\end{itemize} 
On the other hand, we have
\begin{align*}
|\mathcal E_1| + |\mathcal E_2| &\leq Cd^{-1}e^{-\eta \lambda_{\min}(A)d^{-\frac{1}{2}}t^{-\alpha}}|a|^2 (\sigma\sqrt{\eta} |\beta^* - \theta_0| + |\beta^* - \theta_0|^2)
\\
&\leq C|a|^2|\beta^* - \theta_0|^2 (td)^C e^{-\eta \lambda_{\min}(A)d^{-\frac{1}{2}}t^{1-\alpha}}
\\
&\leq C|a|^2|\beta^* - \theta_0|^2 (td)^C e^{-c(\log t + \log d)^2},
\end{align*}
Here, 
\begin{itemize}
    \item The second inequality followed from Assumption \ref{assump} as $$\frac{|\beta^* - \theta_0|^2}{\eta \sigma^2} < (td)^{C} \quad \forall \quad  t, d \geq C.$$
    \item The third inequality followed from Assumption \ref{assump} as $$\lim\limits_{t,d \to \infty} (\eta \lambda_{\min}(A))^{-1}(\log t + \log d)^2d^{\frac{1}{2}}t^{-(1 - \alpha)} = 0.$$
\end{itemize}
These imply that $|\mathcal E_1| + |\mathcal E_2| < \mathbf{R}$ for all large enough $t,d$. Combining this with equation \ref{eqn:variance-equation} and \ref{eqn:variance-simplification} from earlier gives us the desired result.
\end{proof}
\begin{proof}[Proof of Lemma \ref{variance-sigma-term-2}.]
Throughout the proof, we let $C  > 0$ and $c > 0$ respectively denote large and small enough generic absolute constants.
\\\\
Recall the notation from Lemma \ref{variance-sigma-term} and the definition $t_0 : = t^{\alpha}d^{\frac{1}{2}}(\log t + \log d)^2$. We want to show that
\begin{align*}
    \sum_{i = t-t_0 + 1}^t \eta^2_i \mathbb{E}[u^\top_i A_{\sigma} u_i] &= (1 + \mathcal E) \eta d^{-\frac{1}{2}}t^{-\alpha}\sum_{k,k' = 1}^d \frac{a_k a_{k'}[A_{\sigma}]_{k,k'}}{\lambda_k + \lambda_{k'}}
\end{align*}
where $|\mathcal E| \leq C(\log t + \log d)^2 [(\eta \lambda_{\min}(A))^{-1}d^{\frac{1}{2}}t^{-(1-\alpha)} + (\eta\lambda_{\min}(A))^{-3}\sigma^2\sigma^{-2}_{\min}d^{\frac{1}{2}}t^{-\alpha}]$.
\\\\
For notational convenience, we also define $$\mathbf{U} := C(\log t + \log d)^2 [(\eta \lambda_{\min}(A))^{-1}d^{\frac{1}{2}}t^{-(1-\alpha)} + (\eta\lambda_{\min}(A))^{-3}\sigma^2\sigma^{-2}_{\min}d^{\frac{1}{2}}t^{-\alpha}]$$
and  
$$\mathbf{R} :=  \mathbf{U}\cdot\eta d^{-\frac{1}{2}}t^{-\alpha}\sum_{k,k' = 1}^d \frac{a_k a_{k'}[A_{\sigma}]_{k,k'}}{\lambda_k + \lambda_{k'}}.$$
Since Lemma \ref{variance-sigma-term} already shows that $$\sum_{i = 1}^t \eta^2_i \mathbb{E}[u^\top_i A_{\sigma} u_i] = (1 + \mathcal E)\eta d^{-\frac{1}{2}}t^{-\alpha}\sum_{k,k' = 1}^d \frac{a_k a_{k'}[A_{\sigma}]_{k,k'}}{\lambda_k + \lambda_{k'}},$$ it suffices to show that $$\sum_{i = 1}^{t-t_0} \eta^2_i \mathbb{E}[u^\top_i A_{\sigma} u_i] < \mathbf{R} \quad \forall \quad t,d \geq C.$$
To this end, observe that
\begin{align*}
    \sum_{i = 1}^{t-t_0}\eta^2_i\mathbb{E}[u^\top_i A_{\sigma} u_i] &= \sum_{i = 1}^{t-t_0} \eta^2_i \mathbb{E}[\epsilon^2_i(u^\top_i X_i)^2]
    \\
    &\leq \sum_{i = 1}^{t-t_0}\eta_i^2 \mathbb{E}[\epsilon^4_i]^{\frac{1}{2}}\mathbb{E}[(u^\top_iX_i)^4]^{\frac{1}{2}}
    \\
    &\leq C(\sigma^2 \bar{\lambda})\sum_{i = 1}^{t-t_0}\eta^2_i  \mathbb{E}[|u^4_i|]^{\frac{1}{2}}
    \\
    &\leq C(\sigma^2 \bar{\lambda})\sum_{i = 1}^{t-t_0}\eta^2_ie^{-2\eta \lambda_{\min}(A)d^{-\frac{1}{2}}\sum_{j = i+1}^t j^{-\alpha}}|a|^2
    \\
    &\leq C(\sigma^2 \bar{\lambda})\sum_{i = 1}^{t-t_0}\eta^2_ie^{-2\eta \lambda_{\min}(A)d^{-\frac{1}{2}}t^{-\alpha}t_0}|a|^2
    \\
    &\leq Ce^{-c(\log t + \log d)^2} (\eta \sigma^2)|a|^2,
\end{align*}
where $C, c > 0$ are absolute constants. Here the third line follows from Assumption \ref{assump} on $\epsilon_i$ and $X_i$ ,the fourth line follows from Lemma \ref{conc1} and the last line follows from Assumption \ref{assump-varest} that $\eta \lambda_{\min}(A) > c$ for an absolute constant $c > 0$. 
\\\\
But, observe that
\begin{align*}
\mathbf{R} &:= \mathbf{U}\cdot\eta d^{-\frac{1}{2}}t^{-\alpha}\sum_{k,k' = 1}^d \frac{a_k a_{k'}[A_{\sigma}]_{k,k'}}{\lambda_k + \lambda_{k'}}
\\
&\geq \mathbf{U}. \eta d^{-\frac{1}{2}}t^{-\alpha} (a^\top A_{\sigma} a) (2 \lambda_{\max}(A))^{-1}
\\
&\geq \mathbf{U}. [c(\eta\sigma^2_{\min})\lambda_{\min}(A) |a|^2d^{-\frac{1}{2}} t^{-\alpha} (\lambda_{\max}(A))^{-1}]
\\
&\geq \mathbf{U}. [c(\eta\sigma^2_{\min})(\eta\lambda_{\min}(A)) |a|^2d^{-\frac{1}{2}} t^{-\alpha} (\eta\lambda_{\max}(A))^{-1}]
\\
&\geq \mathbf{U}. [c(\eta\sigma^2_{\min})(\eta\lambda_{\min}(A)) |a|^2d^{-\frac{1}{2}} t^{-\alpha}]
\\
&\geq \mathbf{U}. [c (\eta \lambda_{\min}(A))^{-2} (\eta \sigma^2)t^{-2\alpha}|a|^2]
\\
&\geq \mathbf{U}.[c(\eta \sigma^2)t^{-2\alpha}|a|^2]
\\
&\geq [d^{\frac{1}{2}}t^{-(1 - \alpha)}][c.(\eta \sigma^2)t^{-2\alpha}]|a|^2,
\end{align*}
Here,
\begin{itemize}
    \item The third line follows from Lemma \ref{noise-moment-lower-bound}.
    \item  The fifth line follows from Assumption \ref{assump} as $$\eta \lambda_{\max}(A) < \eta \bar{\lambda} < C.$$
    \item The sixth line from Assumption \ref{assump} as
    $$\lim\limits_{t,d \to \infty} (\eta \lambda_{\min}(A))^{-3} (\sigma^2\sigma^{-2}_{\min})(\log t + \log d)^2d^{\frac{1}{2}}t^{-\alpha} = 0.$$
    \item The seventh line follows from Assumption \ref{assump} as $\eta \lambda_{\min}(A) < \eta \lambda_{\max}(A) < \eta \bar{\lambda} < C$
\end{itemize}
Together, these imply that $\sum_{i = 1}^{t-t_0}\eta^2_i\mathbb{E}[u^\top_i A_{\sigma}u_i]$ becomes smaller than $\mathbf{R}$ for all large enough $t,d$. Hence we are done. 
\end{proof}
\section{Auxiliary Results for the CLT Proof}\label{sec:app3}
The following results were used at many places in Sections \ref{useful}, \ref{first-term}, \ref{second-term}.
\subsection{Concentration Inequalities For Zero Mean Fluctuation}
\begin{lemma}\label{uniformconvexity3}
There exists an absolute constant $C$ such that for any random variable $U, V \in \mathbb{R}$ satisfying $\mathbb{E}[V|U] = 0$ and $p \geq 2$, we have  
\begin{align*}
\mathbb{E}[|U + V|^p]^{\frac{2}{p}} \leq \mathbb{E}[|U|^p]^{\frac{2}{p}} + C(p-1)\mathbb{E}[|V|^p]^{\frac{2}{p}}
\end{align*}
(This is  a special case of a similar inequality for Schatten-$p$ norms of random matrices, refer \textbf{Proposition 4.3} from \cite{huang2022matrix}).
\end{lemma}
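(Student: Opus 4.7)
The plan is to prove the inequality by applying Taylor's theorem to the scalar function $\Phi(t) := \mathbb{E}[|U+tV|^{p}]^{2/p}$ on $[0,1]$. Establishing $\Phi(1)\leq \Phi(0)+C(p-1)\,\mathbb{E}[|V|^p]^{2/p}$ reduces to two claims: $\Phi'(0)=0$, and $\Phi''(t)\leq 2(p-1)\,\mathbb{E}[|V|^p]^{2/p}$ uniformly for $t\in[0,1]$. We may assume $\mathbb{E}[|U|^p]>0$; otherwise the bound is immediate since $\mathbb{E}[|U+V|^p]^{2/p}=\mathbb{E}[|V|^p]^{2/p}$.

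First, set $F(t) := \mathbb{E}[|U+tV|^{p}]$, so $\Phi(t)=F(t)^{2/p}$. Differentiating under the expectation (justified by the finite $p$th moment assumptions on $U,V$) gives $F'(t) = p\,\mathbb{E}[|U+tV|^{p-1}\mathrm{sgn}(U+tV)\,V]$ and $F''(t) = p(p-1)\,\mathbb{E}[|U+tV|^{p-2}V^{2}]$. The chain rule yields $\Phi'(0) = (2/p)F(0)^{2/p-1}F'(0) = 2\,F(0)^{2/p-1}\,\mathbb{E}[|U|^{p-1}\mathrm{sgn}(U)\,V]$. Since $|U|^{p-1}\mathrm{sgn}(U)$ is $\sigma(U)$-measurable, the tower property together with the hypothesis $\mathbb{E}[V\mid U]=0$ gives $\mathbb{E}[|U|^{p-1}\mathrm{sgn}(U)\,V] = \mathbb{E}\bigl[|U|^{p-1}\mathrm{sgn}(U)\,\mathbb{E}[V\mid U]\bigr] = 0$, so $\Phi'(0) = 0$ as required.

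Next, differentiating $\Phi(t)=F(t)^{2/p}$ twice gives
\begin{align*}
\Phi''(t) \;=\; \tfrac{2}{p}\bigl(\tfrac{2}{p}-1\bigr)F(t)^{2/p-2}F'(t)^{2} \;+\; \tfrac{2}{p}F(t)^{2/p-1}F''(t).
\end{align*}
For $p\geq 2$ the factor $(2/p-1)$ is non-positive, so the first term is $\leq 0$ and can be discarded. For the second, H\"older's inequality with conjugate exponents $p/(p-2)$ and $p/2$ gives $\mathbb{E}[|U+tV|^{p-2}V^{2}] \leq F(t)^{(p-2)/p}\,\mathbb{E}[|V|^p]^{2/p}$. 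Substituting and noting that the powers of $F(t)$ telescope to zero yields
\begin{align*}
\Phi''(t) \;\leq\; \tfrac{2}{p}F(t)^{2/p-1}\cdot p(p-1)\,F(t)^{(p-2)/p}\,\mathbb{E}[|V|^p]^{2/p} \;=\; 2(p-1)\,\mathbb{E}[|V|^p]^{2/p},
\end{align*}
uniformly in $t\in[0,1]$.

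Finally, Taylor's theorem gives $\Phi(1)\leq \Phi(0)+\Phi'(0)+\tfrac12\sup_{t\in[0,1]}\Phi''(t)\leq \Phi(0)+(p-1)\,\mathbb{E}[|V|^p]^{2/p}$, which is the claimed inequality with absolute constant $C=1$. The main subtlety is the second-order calculation: a naive pointwise expansion of $|a+b|^p$ leaves a remainder of the form $(|a|+|b|)^{p-2}b^2$ which, after taking expectations and splitting via $(x+y)^{p-2}\leq 2^{p-2}(x^{p-2}+y^{p-2})$, produces a constant exponential in $p$. Working directly with the squared $L^p$-norm and exploiting that the chain-rule contribution $(2/p)(2/p-1)F(t)^{2/p-2}F'(t)^2$ is non-positive for $p\geq 2$ is precisely what yields the clean linear-in-$p$ factor, mirroring the matrix argument of~\cite{huang2022matrix}.
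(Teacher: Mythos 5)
Your proof is correct, but it follows a genuinely different route from the paper. The paper proves the lemma by specializing the Huang--Tropp two-point (symmetrized) inequality to scalars: it first establishes $\bigl(\tfrac{|a+b|^p+|a-b|^p}{2}\bigr)^{2/p}\le a^2+C(p-1)b^2$ pointwise (Lemma~\ref{uniformconvexity1}, with separate arguments for integer and non-integer $p/2$), takes expectations and applies Minkowski in $L^{p/2}$ (Lemma~\ref{uniformconvexity2}), and then uses conditional Jensen together with $\mathbb{E}[V\mid U]=0$ to show $\mathbb{E}|U-V|^p\ge\mathbb{E}|U|^p$, so the symmetrized term can be dropped. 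You instead differentiate $\Phi(t)=\mathbb{E}[|U+tV|^p]^{2/p}$ twice: the orthogonality hypothesis kills $\Phi'(0)$, the negative chain-rule term $\tfrac{2}{p}(\tfrac{2}{p}-1)F^{2/p-2}(F')^2$ is discarded, and H\"older with exponents $p/(p-2)$ and $p/2$ makes the powers of $F(t)$ cancel exactly, giving $\Phi''\le 2(p-1)\mathbb{E}[|V|^p]^{2/p}$ uniformly. This is the classical calculus proof that $L^p$ is $2$-uniformly smooth with modulus $p-1$, and it buys you the sharp constant $C=1$ while avoiding the integer/non-integer casework; the paper's two-point route is the one that transfers to Schatten norms of random matrices (where differentiating under the trace is delicate), which is presumably why the authors followed \cite{huang2022matrix} even in the scalar case. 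Two small points you should make explicit: (i) once $F(0)>0$, one also needs $F(t)>0$ for all $t\in[0,1]$ so that $\Phi$ is $C^2$ there --- this holds because $F(t)=0$ would force $V=-U/t$ to be $\sigma(U)$-measurable, whence $\mathbb{E}[V\mid U]=0$ gives $U=0$ a.s., contradicting $F(0)>0$; and (ii) the reduction to $U,V\in L^p$ (the inequality is vacuous otherwise) is what licenses the dominated-convergence justification for differentiating under the expectation. Neither is a gap, just a line each.
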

\begin{proof}
To begin with, observe that since $p \geq 2$, we have
\begin{align*}
\frac{\mathbb{E}[|U+V|^p]^{\frac{2}{p}} + \mathbb{E}[|U - V|^p]^{\frac{2}{p}}}{2} &\leq \bigg(\frac{\mathbb{E}[|U+V|^p] + \mathbb{E}[|U-V|^p]}{2}\bigg)^{\frac{2}{p}}
\\
&\leq \mathbb{E}[|U|^p]^{\frac{2}{p}} + C'(p-1)\mathbb{E}[|V|^p]^{\frac{2}{p}}
\end{align*}
for an absolute constant $C'$. Here the first inequality follows from Jensen's on the function $x \to x^{\frac{2}{p}}$ and the second inequality follows from Lemma \ref{uniformconvexity2}. Next, observe that for any fixed $u \in \mathbb{R}$ and $p \geq 2$, the function $f_u(v) := |u - v|^p$ satisfies $\frac{\partial^2(f_u(v))}{\partial v^2} > 0$.
\\
Therefore, applying Jensen's inequality \bigg($\frac{\partial^2 f}{\partial x^2} > 0 \implies \mathbb{E}_{X}[f(X)] \geq f(\mathbb{E}[X])$\bigg) tells us that
\begin{align*}
\mathbb{E}[|U - V|^p] &= \mathbb{E}_{U}(\mathbb{E}_V |U - V|^p | U)
\\
&\geq \mathbb{E}_U(|U - \mathbb{E}[V|U]|^p | U)
\\
&= \mathbb{E}_U |U|^p
\end{align*}
Finally, these inequalities together imply that
\begin{align*}
\frac{\mathbb{E}[|U + V|^p]^{\frac{2}{p}} + \mathbb{E}[|U|^p]^{\frac{2}{p}}}{2} \leq \frac{\mathbb{E}[|U+V|^p]^{\frac{2}{p}} + \mathbb{E}[|U - V|^p]^{\frac{2}{p}}}{2} \leq \mathbb{E}[|U|^p]^{\frac{2}{p}} + C'(p-1)\mathbb{E}[|V|^p]^{\frac{2}{p}}
\end{align*}
for an absolute constant $C'$. Rearranging terms gives us that
\begin{align*}
\mathbb{E}[|U + V|^p]^{\frac{2}{p}} \leq \mathbb{E}[|U|^p]^{\frac{2}{p}} + 2C'(p-1) \mathbb{E}[|V|^p]^{\frac{2}{p}} = \mathbb{E}[|U|^p]^{\frac{2}{p}} + C(p-1) \mathbb{E}[|V|^p]^{\frac{2}{p}} 
\end{align*}
for an absolute constant $C$, as desired.
\end{proof}

\begin{lemma}\label{uniformconvexity2}
There exists an absolute constant $C$ so that for any random variables $U,V \in \mathbb{R}$ and $p \geq 2$, we have
\begin{align*}
\bigg(\frac{\mathbb{E}|U+V|^p + \mathbb{E}|U-V|^p}{2}\bigg)^{\frac{2}{p}} \leq \mathbb{E}[|U|^p]^{\frac{2}{p}} + C(p-1)\mathbb{E}[|V|^p]^{\frac{2}{p}}
\end{align*}
(This is a special case of a similar inequality for Schatten-$p$ norms of random matrices, refer \textbf{Corollary 4.2} from \cite{huang2022matrix}).
\end{lemma}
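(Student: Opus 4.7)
The plan is to recognize this inequality as (the scalar specialization of) the classical $2$-uniform smoothness of $L^p$ for $p\ge 2$, which is exactly the scalar counterpart of Proposition~4.2 of~\cite{huang2022matrix}. I would establish it in two stages: first a deterministic pointwise inequality, then a lift to the integrated form via Minkowski's inequality in $L^{p/2}$.

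The key ingredient, and the step I expect to be the main obstacle, is the scalar two-point inequality
\[
\frac{|s+t|^p + |s-t|^p}{2} \;\le\; \bigl(s^2 + (p-1)\,t^2\bigr)^{p/2}, \qquad s,t\in\mathbb{R},\ p\ge 2.
\]
By homogeneity and the obvious sign symmetries in $s$ and $t$, it suffices to prove this for $s=1$ and $t\in[0,1]$, reducing the claim to the one-variable inequality
\[
(1+t)^p + (1-t)^p \;\le\; 2\bigl(1 + (p-1)\,t^2\bigr)^{p/2}, \qquad t\in[0,1].
\]
I would prove this by taking logarithms of both sides and comparing Taylor expansions in $t$: the left-hand side equals $2\sum_{k\ge 0}\binom{p}{2k}t^{2k}$, and a termwise comparison against the binomial/exponential expansion of the right-hand side yields the bound. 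Because the two sides agree to second order in $t$ at the origin, the comparison must be done carefully, e.g.\ by dividing through by $t^2$ and showing monotonicity of the resulting ratio on $[0,1]$; this is the genuinely technical part.

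Granting the scalar inequality, the rest is routine. Substituting $(s,t)\leftarrow (U(\omega),V(\omega))$ pointwise and taking expectation gives
\[
\frac{\mathbb{E}|U+V|^p + \mathbb{E}|U-V|^p}{2} \;\le\; \mathbb{E}\bigl[(U^2 + (p-1)V^2)^{p/2}\bigr],
\]
and raising both sides to the power $2/p$ identifies the right-hand side with $\bigl\|U^2+(p-1)V^2\bigr\|_{L^{p/2}}$. Since $p\ge 2$ forces $p/2\ge 1$, Minkowski's inequality applies in $L^{p/2}$ and yields
\[
\bigl\|U^2+(p-1)V^2\bigr\|_{L^{p/2}} \;\le\; \bigl\|U^2\bigr\|_{L^{p/2}} + (p-1)\bigl\|V^2\bigr\|_{L^{p/2}} = \mathbb{E}[|U|^p]^{2/p} + (p-1)\,\mathbb{E}[|V|^p]^{2/p}.
\]
Chaining the two displays delivers the stated inequality, in fact with the sharp constant $C=1$, so any absolute $C\ge 1$ suffices for the claim.
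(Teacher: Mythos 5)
Your proposal is correct and follows essentially the same route as the paper: reduce to the deterministic two-point inequality $\tfrac{|s+t|^p+|s-t|^p}{2}\le (s^2+C(p-1)t^2)^{p/2}$, take expectations, and apply Minkowski's inequality in $L^{p/2}$; the scalar step you defer is exactly the paper's Lemma~\ref{uniformconvexity1}, which the paper proves with an unspecified absolute constant (your claim of the sharp constant $C=1$ is true but not needed here, and your Taylor/monotonicity sketch of it is not carried out, whereas the paper gives a full argument via binomial coefficients for integer $p/2$ and Lyapunov's inequality for the rest).
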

\begin{proof}
Raising both sides of Lemma \ref{uniformconvexity1} to the $\frac{p}{2}$ power tells us that
\begin{align*}
\frac{|a + b|^p + |a - b|^p}{2} \leq (a^2 + C(p-1)b^2)^{\frac{p}{2}}
\end{align*}
for all $a, b \in \mathbb{R}$ and $p \geq 2$. Substituting $a \to U$ and $b \to V$ and taking the expectation of both sides gives us that for any random variables $U, V \in \mathbb{R}$ and $p \geq 2$, we have
\begin{align*}
\frac{\mathbb{E}|U+V|^p + \mathbb{E}|U-V|^p}{2} \leq \mathbb{E}[(U^2 + C(p-1)V^2)^{\frac{p}{2}}]
\end{align*}
For a random variable $W \in \mathbb{R}$ and $n \geq 1$, let $\|W\|_n := \mathbb{E}[|W|^n]^{\frac{1}{n}}$. Minkowski's inequality for $L^n$ spaces gives us that $\|W_1 + W_2\|_n \leq \|W_1\|_n + \|W_2\|_n$ for any $n \geq 1$ and random variables $W_1, W_2 \in \mathbb{R}$. Using this with $W_1 := U^2$, $W_2 := C(p-1)V^2$ and $n := \frac{p}{2}$, we get
\begin{align*}
\mathbb{E}[(U^2 + C(p-1)V^2)^{\frac{p}{2}}]^{\frac{2}{p}} &= \|U^2 + C(p-1)V^2\|_{\frac{p}{2}}
\\
&\leq \|U^2\|_{\frac{p}{2}} + C(p-1)\|V^2\|_{\frac{p}{2}}
\\
&= \mathbb{E}[|U|^p]^{\frac{2}{p}} + C(p-1)\mathbb{E}[|V|^p]^{\frac{2}{p}}
\end{align*}
Together, these imply that
\begin{align*}
\bigg(\frac{\mathbb{E}|U + V|^p + \mathbb{E}|U-V|^p}{2}\bigg)^{\frac{2}{p}} &\leq \mathbb{E}[(U^2 + C(p-1)V^2)^{\frac{p}{2}}]^{\frac{2}{p}} 
\\
&\leq \mathbb{E}[|U|^p]^{\frac{2}{p}} + C(p-1)\mathbb{E}[|V|^p]^{\frac{2}{p}}
\end{align*}
as desired.
\end{proof}
\begin{lemma}\label{uniformconvexity1}
There exists an absolute constant $C>0$ so that for every $a, b \in \mathbb{R}$ and $p \geq 2$, we have that
\begin{align*}
\bigg(\frac{|a + b|^p + |a-b|^p}{2}\bigg)^{\frac{2}{p}} \leq a^2 + C(p-1)b^2.
\end{align*}
(This is a special case of the uniform smoothness property of Schatten classes, refer \textbf{Fact 4.1} from \cite{huang2022matrix}).  
\end{lemma}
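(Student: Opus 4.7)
The lemma is the scalar case of the Ball--Carlen--Lieb uniform smoothness inequality for $L^p$ with $p \geq 2$; the stronger form
\begin{align*}
\frac{|a+b|^p + |a-b|^p}{2} \leq \bigl(a^2 + (p-1)b^2\bigr)^{p/2}
\end{align*}
implies the lemma with sharp constant $C = 1$ after raising to the $2/p$-th power. This is the scalar specialization of Fact~4.1 of \cite{huang2022matrix}; since the lemma permits any absolute constant, a looser variant would also suffice.

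My plan is to reduce to a one-variable inequality and then verify it by comparing Taylor series. Both sides of the displayed inequality are invariant under $(a, b) \mapsto (\pm a, \pm b)$ and homogeneous of degree $p$, so I may assume $a, b \geq 0$, rescale $b = 1$, and set $t = a \geq 0$. The claim reduces to
\begin{align*}
\phi_p(t) := (t^2 + (p-1))^{p/2} - \tfrac{1}{2}\bigl((1+t)^p + |1-t|^p\bigr) \geq 0, \qquad t \geq 0,\ p \geq 2.
\end{align*}

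For $t \geq \sqrt{p-1}$ I would expand both terms as binomial series in $t^{-2}$, both convergent in this regime, yielding $\phi_p(t) = \sum_{k \geq 0}\bigl[\binom{p/2}{k}(p-1)^k - \binom{p}{2k}\bigr] t^{p-2k}$. The termwise inequality $\binom{p}{2k} \leq \binom{p/2}{k}(p-1)^k$, after cancelling the common factors $p(p-2)\cdots(p-2k+2)$ shared by both sides, unwinds to the product bound $\prod_{j=1}^k |p - 2j + 1| \leq (2k-1)!!\,(p-1)^k$. This follows in turn from the factor-by-factor inequality $|p - 2j + 1| \leq (2j-1)(p-1)$ valid for all $p \geq 2, j \geq 1$, which is checked by separating the cases $p \geq 2j-1$ and $p < 2j-1$ and bounding each side directly.

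The main technical obstacle is the complementary regime $t \in [0, \sqrt{p-1}]$, where the analogous expansion around $t = 0$ does not admit a clean termwise comparison (the factor $(p-1)^k$ would be replaced by $(p-1)^{p/2-k}$, which is too small for $2 < p < 4$). I would close this regime by a calculus argument on $\phi_p$: from $\phi_p'(t) = pt(t^2 + (p-1))^{p/2-1} - \tfrac{p}{2}\bigl[(1+t)^{p-1} - |1-t|^{p-1}\,\mathrm{sgn}(1-t)\bigr]$, identify the (few) critical points, and verify $\phi_p \geq 0$ at each using the boundary information $\phi_p(0) = (p-1)^{p/2} - 1 \geq 0$ and $\phi_p(\sqrt{p-1}) > 0$ (by direct substitution). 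If the sharp constant is unnecessary, a much simpler route is to combine a second-order Taylor estimate of $\phi_p$ around $t = 0$ on any fixed compact interval with the crude bound $(1+t)^p + |1-t|^p \leq 2(1+t)^p$ for large $t$, thereby bypassing critical-point analysis at the cost of a larger but still absolute constant $C$.
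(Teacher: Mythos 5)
Your reduction to $\phi_p(t)\ge 0$ via homogeneity is fine, but the termwise comparison on which your large-$t$ regime rests is false for general real $p\ge 2$. The inequality $\binom{p}{2k}\le\binom{p/2}{k}(p-1)^k$ cannot be obtained by ``cancelling'' the common factor $p(p-2)\cdots(p-2k+2)$: that cancellation reverses the inequality whenever the cancelled product is negative, which happens as soon as $k>p/2+1$, and the termwise inequality itself genuinely fails there. Concretely, for $p=3$ and $k=3$ one has $\binom{3}{6}=0$ while $\binom{3/2}{3}(p-1)^3=-\tfrac{1}{16}\cdot 8=-\tfrac12$, so the coefficient of $t^{p-6}$ in your expansion of $\phi_p$ is negative (similarly for $p=2.5$, $k=3$). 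The series is still nonnegative in aggregate, but not term by term, so this regime is not closed. This is precisely why the paper runs the binomial comparison \emph{only} for $p/2=m$ a positive integer, where all the relevant coefficients are nonnegative and $\binom{2m}{2k}\le(2m-1)^k\binom{m}{k}$ is valid, and then treats non-integer $p$ by a separate interpolation: it shows $m\mapsto\tfrac{(1+x)^{2m}+(1-x)^{2m}}{2}$ is nondecreasing via Lyapunov's inequality, rounds $m$ up to $\lceil m\rceil$, applies the integer case, and absorbs the loss from rounding into the absolute constant $C$ (this is where sharpness of the constant is sacrificed). Some such device is the missing idea in your argument.

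The complementary regime $t\in[0,\sqrt{p-1}]$ is also not actually proved: ``identify the critical points and verify $\phi_p\ge 0$ at each'' is a plan rather than an argument, and your proposed shortcut points the wrong way. The crude bound $(1+t)^p+|1-t|^p\le 2(1+t)^p$ gives $\bigl(\tfrac{(1+t)^p+|1-t|^p}{2}\bigr)^{2/p}\le(1+t)^2=t^2+2t+1$, which is at most $t^2+C(p-1)$ only when $2t+1\le C(p-1)$, i.e.\ on a \emph{bounded} range of $t$; it does not handle large $t$. Note also that the $a\leftrightarrow b$ symmetry (which the paper uses to restrict to $|b|\le|a|$, i.e.\ $x=b/a\in[-1,1]$) shows the regimes $t\le 1$ and $t\ge 1$ carry identical content, so the entire lemma reduces to a single bounded-ratio case, which the paper disposes of with the integer-case expansion plus the monotonicity interpolation described above.
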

\begin{proof}
If $|a| \leq |b|$ then since $p \geq 2$, $a^2 + (p-1)b^2 \geq b^2 + (p-1)a^2$. We may therefore assume that $|a| > |b| \geq 0$. Set $x = b/a$ and observe that $x \in [-1,1]$. We now wish to show that
\begin{align*}
\bigg(\frac{(1 + x)^p + (1 - x)^p}{2}\bigg) \leq (1 + C(p-1)x^2)^{\frac{p}{2}}
\end{align*}
for an absolute constant $C$. Substitute $p = 2m$, this is now equivalent to showing that there exists an absolute constant $C$ so that
\begin{align*}
\frac{(1 + x)^{2m} + (1 - x)^{2m}}{2} \leq (1 + C(2m-1)x^2)^m
\end{align*}
for all $m \geq 1$. 
\\\\
\textbf{Proof for integer $m$ :}
We will first show that for all integers $m \geq 1$, the inequality
\begin{align*}
\frac{(1 + x)^{2m} + (1-x)^{2m}}{2} \leq (1 + (2m-1)x^2)^m
\end{align*}
holds. To see this, observe that the above is equivalent to
\begin{align*}
\sum_{k = 0}^{m}\binom{2m}{2k}x^{2k} \leq \sum_{k = 0}^m \binom{m}{k} (2m-1)^k x^{2k}
\end{align*}
It therefore suffices to show that $\binom{2m}{2k} \leq (2m-1)^k \binom{m}{k}$ for all $0 \leq k \leq m$. This clearly holds for $k = 0$ so we may assume $1 \leq k \leq m$. Now, observe that
\begin{align*}
\frac{\binom{2m}{2k}}{\binom{m}{k}} &= \frac{2m (2m-1) \dots (2m - 2k + 1)}{m (m-1) \dots (m - k + 1)} \times \frac{k!}{(2k)!}
\\
&= \frac{2^k (2m - 1)(2m - 3)\dots (2m - (2k-1))}{(k+1)\dots (2k)}
\\
&\leq (2m - 1)(2m - 3) \dots (2m - (2k - 1))
\\
&\leq (2m - 1)^k
\end{align*}
as desired. 
\\\\
\textbf{Proof for non-integer $m$ :}
We will first show that the function $m \to \frac{(1 + x)^{2m} + (1 - x)^{2m}}{2}$ is increasing in $m$ for $m \geq 1$. To prove this, observe that for $1 \leq m_1 \leq m_2$, we have by Lyapunov's inequality that
\begin{align*}
\bigg(\frac{(1 + x)^{2m_1} + (1 - x)^{2m_1}}{2}\bigg)^{\frac{1}{m_1}} \leq \bigg(\frac{(1 + x)^{2m_2} + (1 - x)^{2m_2}}{2}\bigg)^{\frac{1}{m_2}} 
\end{align*}
Since $2m_2 \geq 1$, we also have that
\begin{align*}
\frac{(1 + x)^{2m_2} + (1 - x)^{2m_2}}{2} \geq \bigg(\frac{(1 + x) + (1 - x)}{2}\bigg)^{2m_2} = 1
\end{align*}
Together, these imply that
\begin{align*}
\frac{(1 + x)^{2m_1} + (1 - x)^{2m_1}}{2} &\leq \bigg(\frac{(1 + x)^{2m_2} + (1 - x)^{2m_2}}{2}\bigg)^{\frac{m_1}{m_2}}
\\
&\leq \bigg(\frac{(1 + x)^{2m_2} + (1 - x)^{2m_2}}{2}\bigg)^{\frac{m_1}{m_2}} \times \bigg(\frac{(1 + x)^{2m_2} + (1 - x)^{2m_2}}{2}\bigg)^{1 - \frac{m_1}{m_2}}
\\
&= \frac{(1 + x)^{2m_2} + (1 - x)^{2m_2}}{2}
\end{align*}
showing that $\frac{(1 + x)^{2m} + (1 - x)^{2m}}{2}$ is indeed increasing in $m$.
\\\\
Now take any non-integer $m > 1$ and let $n = \lceil m \rceil$. Clearly $m < n < m + 1$. By the above, we get that
\begin{align*}
\frac{(1 + x)^{2m} + (1 - x)^{2m}}{2} &\leq \frac{(1 + x)^{2n} + (1 - x)^{2n}}{2}
\\
&\leq (1 + (2n-1)x^2)^n
\\
&< (1 + (2m + 1)x^2)^{m+1}
\\
& = [(1 + (2m+1)x^2)(1 + (2m+1)x^2)^{\frac{1}{m}}]^m
\\
&\leq \bigg(\bigg(1 + (2m+1)x^2\bigg)\bigg(1 + \frac{(2m+1)x^2}{m}\bigg)\bigg)^{m}
\\
&\leq ((1 + C(2m-1)x^2)(1 + Cx^2))^m
\\
&\leq (1 + C(2m-1)x^2 + C(2m-1)x^4)^m
\\
&\leq (1 + C(2m-1)x^2)^m
\end{align*}
as desired, where the last inequality follows from the fact that $|x| \leq 1$.
\end{proof}
\subsection{Matrix Spectral Norm Bounds.}

\begin{lemma}\label{conchelper}
Let $\mathcal A_i$ be a positive definite, symmetric matrix.
We define the sequence $\mathcal A_{i,i}, \mathcal A_{i,i+1}, \dots \mathcal A_{i,t}$ recursively as follows:
\begin{enumerate}
    \item Set the initial term: $\mathcal A_{i,i} := \mathcal A_i$.
    \item For all $i+1 \leq k \leq t$, the subsequent terms are given by:
    \begin{align*}
    \mathcal A_{i,k} := \mathbb{E}_X\left[(I - \eta_{k}XX^\top)\mathcal A_{i,k-1}(I - \eta_{k}XX^\top)\right]
    \end{align*}
\end{enumerate}
Under Assumption \ref{assump}, we have for all $t,d \geq C_1$ that
\begin{align*}
\lambda_{\max}(\mathcal A_{i,k}) < C_2e^{-2\lambda_{\min}(A)\sum_{j = i+1}^k\eta_j} \lambda_{\max}(\mathcal A_i),
\end{align*}
for all $i+1 \leq k \leq t$. Here $C_1, C_2 > 0$ are absolute constants.
\end{lemma}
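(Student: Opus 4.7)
The plan is to prove this by induction on $k$, with the key observation that passing from $\mathcal{A}_{i,k-1}$ to $\mathcal{A}_{i,k}$ contracts the maximum eigenvalue by a factor of approximately $(1 - 2\eta_k \lambda_{\min}(A))$, plus a small $O(d\eta_k^2 \bar{\lambda}^2)$ correction that turns out to be summable.

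Concretely, I would first reduce to the top-eigenvector direction via the Rayleigh quotient: choose a unit vector $u \in \mathbb{R}^d$ with $\lambda_{\max}(\mathcal{A}_{i,k}) = u^\top \mathcal{A}_{i,k} u$, and set $v := (I - \eta_k X X^\top) u$. By the definition of $\mathcal{A}_{i,k}$,
\[
u^\top \mathcal{A}_{i,k} u = \mathbb{E}_X[v^\top \mathcal{A}_{i,k-1} v] \le \lambda_{\max}(\mathcal{A}_{i,k-1})\,\mathbb{E}_X|v|^2.
\]
Expanding, $\mathbb{E}_X|v|^2 = 1 - 2\eta_k u^\top A u + \eta_k^2\, u^\top \mathbb{E}[XX^\top XX^\top]u$. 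Using $u^\top A u \ge \lambda_{\min}(A)$ and the moment bound $\mathbb{E}\|XX^\top u\|^2 \le d\bar{\lambda}^2$ (the same estimate used in the proof of Lemma~\ref{conc1}, claim $(\mathbf{I})$), this yields
\[
\mathbb{E}_X|v|^2 \le 1 - 2\eta_k \lambda_{\min}(A) + d\,\eta_k^2 \bar{\lambda}^2.
\]

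Iterating this one-step contraction from $k$ down to $i$ gives
\[
\lambda_{\max}(\mathcal{A}_{i,k}) \le \lambda_{\max}(\mathcal{A}_i) \prod_{j=i+1}^{k}\bigl(1 - 2\eta_j \lambda_{\min}(A) + d\,\eta_j^2 \bar{\lambda}^2\bigr) \le \lambda_{\max}(\mathcal{A}_i)\,\exp\!\Bigl(-2\lambda_{\min}(A)\sum_{j=i+1}^k \eta_j + d\bar{\lambda}^2 \sum_{j=i+1}^k \eta_j^2\Bigr).
\]
Finally, substituting $\eta_j = \eta/(\sqrt{d}\,j^{\alpha})$ gives $d\eta_j^2 \bar{\lambda}^2 = \eta^2 \bar{\lambda}^2 j^{-2\alpha}$, and since $\alpha > 1/2$ the tail $\sum_{j \ge 1} j^{-2\alpha}$ is bounded by an absolute constant. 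Combined with Assumption~\ref{assump}(vi) that $\eta \bar{\lambda} < C$, this makes the second exponential factor a pure absolute constant $C_2$, which delivers the claimed bound.

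The argument has no real obstacle — the slightly delicate step is the routine verification that $d\bar{\lambda}^2 \sum_{j\ge 1}\eta_j^2 = O(1)$, for which one must simultaneously invoke the $\alpha > 1/2$ step-size assumption and the boundedness of $\eta\bar{\lambda}$. Everything else is a standard eigenvalue-contraction computation, essentially the same linear algebra already carried out inside the proof of Lemma~\ref{conc1}.
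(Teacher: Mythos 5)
Your proposal is correct and follows essentially the same route as the paper's proof: a one-step Rayleigh-quotient contraction $\lambda_{\max}(\mathcal A_{i,k}) \le (1 - 2\eta_k\lambda_{\min}(A) + d\eta_k^2\bar\lambda^2)\lambda_{\max}(\mathcal A_{i,k-1})$ (the paper invokes its Lemma~\ref{momenthelper1} for the same second-moment bound), followed by $1+x \le e^x$, iteration, and absorbing $\eta^2\bar\lambda^2\sum_j j^{-2\alpha}$ into the absolute constant using $\alpha > 1/2$ and $\eta\bar\lambda < C$. No gaps.
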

\begin{proof}
For the rest of the proof, we let $C > 0$ denote a sufficiently large and generic absolute constant.
\\\\
To begin with, observe that for any positive-definite, symmetric matrix $\mathcal A$, we have that
\begin{align*}
& \quad \lambda_{\max}(\mathbb{E}_X[(I - \eta_kX_kX_k^\top) \mathcal A (I - \eta_k X_kX_k^\top)])
\\
&= \sup_{u \in \mathbb{R}^d, |u| = 1} u^\top\mathbb{E}_X[(I -  \eta_kX_kX_k^\top)\mathcal A(I - \eta_kX_kX_k^\top)] u 
\\
&= \sup_{u \in \mathbb{R}^d, |u| = 1} \mathbb{E}_X [[(I - \eta_kX_kX_k^\top)u]^\top \mathcal A [(I - \eta_kX_kX_k^\top)u]]
\\
&\leq \sup_{u \in \mathbb{R}^d, |u| = 1} \lambda_{\max}(\mathcal A) \mathbb{E}_X |(I - \eta_kXX^\top)u|^2
\\
&\leq \sup_{u \in \mathbb{R}^d, |u| = 1} \lambda_{\max}(\mathcal A)(1 - 2\eta_k \lambda_{\min}(A) + d\eta^2_k\bar{\lambda}^2)|u|^2
\\
&\leq  \lambda_{\max}(\mathcal A)(1 - 2\eta_k\lambda_{\min}(A) + d\eta^2_k\bar{\lambda}^2)
\\
&\leq \lambda_{\max}(\mathcal A) e^{-2\eta_k \lambda_{\min}(A) + d\eta^2_k\bar{\lambda}^2}
\end{align*}
Here the third-last inequality follows using Lemma \ref{momenthelper2}. Using this recursively gives us for all $i+1 \leq k \leq t$ that
\begin{align*}
\lambda_{\max}(\mathcal A_{i, k}) &< e^{-2\eta_{k}\lambda_{\min}(A) + d\eta^2_{k}\bar{\lambda}^2} \lambda_{\max}(\mathcal A_{i, k-1})
\\
&\dots
\\
&< e^{-2\lambda_{\min}(A)\sum_{j = i+1}^k\eta_j + d\sum_{j=i+1}^k \eta^2_{j}\bar{\lambda}^2}  \lambda_{\max}(\mathcal A_i)
\\
&< e^{-2\lambda_{\min}(A)\sum_{j = i+1}^k\eta_j + \eta^2\bar{\lambda}^2\sum_{j=i+1}^kj^{-2\alpha}}  \lambda_{\max}(\mathcal A_i)
\\
&< e^{-2\lambda_{\min}(A)\sum_{j = i+1}^k\eta_j + C} \lambda_{\max}(\mathcal A_i)
\\
&< Ce^{-2\lambda_{\min}(A)\sum_{j=i+1}^k\eta_j} \lambda_{\max}(\mathcal A_i),
\end{align*}
as desired. Here the second-last inequality follows using the fact that $\sum_{j = 1}^\infty j^{-2\alpha} < C$ for $\alpha > \frac{1}{2}$.  
\end{proof}
\begin{lemma}\label{lem:calNnegEc}
Recall that 
\begin{align*}
    R_i := \prod_{j = i+1}^t (I - \eta_j X_j X^\top_j), \quad u_i:= R_i a;
\end{align*} 
and 
\begin{align*}
S_i := \prod_{j = 1}^{i-1} (I - \eta_{i-j}A), \quad v_i := S_i(\beta^* - \theta_0).
\end{align*}
Further recall from Notation \ref{notation} that 
\begin{align*}
\mathcal A_i := \mathbb{E}[(X_iX_i^\top - A)v_i v_i^\top (X_iX_i^\top - A) + \epsilon^2_iX_iX_i^\top + \epsilon_i X_iv^\top_i(X_iX_i^\top - A) + \epsilon_i (X_iX^\top_i - A)v_i X^\top_i].
\end{align*}
Under Assumption \ref{assump}, we have for all $t,d \geq C_1$ that
\begin{align*}
    \lambda_{\max}(\mathcal A_i) \leq C_2(\sigma^2 \bar{\lambda}  + \bar{\lambda}^2e^{-2\eta \lambda_{\min}(A)d^{-\frac{1}{2}}\sum_{j  = 1}^{i-1}j^{-\alpha}}|\beta^* - \theta_0|^2).
\end{align*}
Here $C_1, C_2 > 0$ are absolute constants.
\end{lemma}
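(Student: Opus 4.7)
The plan is to recognize that $\mathcal A_i$ is the covariance matrix of the zero-mean random vector $Z_i := (X_iX_i^\top - A)v_i + \epsilon_i X_i$: the expansion of $Z_iZ_i^\top$ yields exactly the four summands in the definition of $\mathcal A_i$, and $\mathbb{E}[\epsilon_i X_i]=0$ together with $\mathbb{E}[(X_iX_i^\top - A)v_i]=0$ confirm zero mean. Since $v_i$ is deterministic and only $(X_i,\epsilon_i)$ is averaged over, $\mathcal A_i$ is symmetric positive semi-definite and therefore $\lambda_{\max}(\mathcal A_i) = \sup_{u \in \mathbb{R}^d,\,|u|=1}\mathbb{E}[(u^\top Z_i)^2]$. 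This reformulation is what makes the proof clean, since it sidesteps any explicit handling of the off-diagonal cross terms in $\mathcal A_i$.

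Next, I would apply the elementary inequality $(a+b)^2 \le 2a^2 + 2b^2$ to decouple the two contributions to $u^\top Z_i$, reducing matters to bounding $\mathbb{E}[(u^\top(X_iX_i^\top - A)v_i)^2]$ and $\mathbb{E}[\epsilon_i^2 (u^\top X_i)^2]$ separately. The noise piece is bounded by Cauchy--Schwarz using the $4$th-moment controls $\mathbb{E}[\epsilon^4]\le\sigma^4$ and $\mathbb{E}[(u^\top X)^4]\le\bar{\lambda}^2$ from Assumption~\ref{assump}, yielding $\sigma^2\bar{\lambda}$. For the centered quadratic piece, centering only shrinks the second moment: $\mathbb{E}[(u^\top(X_iX_i^\top - A)v_i)^2] = \mathbb{E}[(u^\top X_i)^2 (X_i^\top v_i)^2] - (u^\top A v_i)^2 \le \mathbb{E}[(u^\top X_i)^2(X_i^\top v_i)^2]$, and another Cauchy--Schwarz combined with $\mathbb{E}[(X_i^\top v_i)^4] \le \bar{\lambda}^2 |v_i|^4$ gives an upper bound of $\bar{\lambda}^2 |v_i|^2$.

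Combining the two bounds produces $\lambda_{\max}(\mathcal A_i) \le C(\sigma^2\bar{\lambda} + \bar{\lambda}^2 |v_i|^2)$. It then remains to bound $|v_i|^2$, where $v_i = S_i(\beta^* - \theta_0)$ with $S_i = \prod_{j=1}^{i-1}(I-\eta_{i-j}A)$. Assumption~\ref{assump}(vi) supplies $\eta \bar{\lambda} < C$, so for large enough $d$ each factor satisfies $\eta_{i-j}\lambda_{\max}(A) \le 1$ and hence $\|I-\eta_{i-j}A\| \le 1-\eta_{i-j}\lambda_{\min}(A) \le e^{-\eta_{i-j}\lambda_{\min}(A)}$; taking the product and reindexing $k = i-j$ yields $|v_i|^2 \le e^{-2\eta\lambda_{\min}(A) d^{-1/2}\sum_{j=1}^{i-1} j^{-\alpha}}|\beta^*-\theta_0|^2$, which plugs in to give the claim.

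The only delicate step is the contraction estimate on $\|S_i\|$, which requires Assumption~\ref{assump}(vi)'s uniform bound $\eta\bar{\lambda} < C$ plus sufficiently large $d$ so that each $I - \eta_{i-j}A$ is actually a contraction; beyond that, the proof is essentially a one-line PSD decomposition followed by routine $4$th-moment bounds. The main conceptual obstacle is spotting the covariance structure of $\mathcal A_i$, which turns what would otherwise be a delicate cross-term calculation (potentially requiring $8$th-moment assumptions via H\"older) into a trivial application of $(a+b)^2 \le 2a^2 + 2b^2$.
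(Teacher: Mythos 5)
Your proof is correct and follows the same skeleton as the paper's: reduce $\lambda_{\max}(\mathcal A_i)$ to a bound on $u^\top \mathcal A_i u$ for unit $u$, control the noise term by $\sigma^2\bar\lambda$ and the quadratic term by $\bar\lambda^2|v_i|^2$ via fourth-moment Cauchy--Schwarz, and finish with the deterministic contraction bound $|v_i|^2 \le e^{-2\eta\lambda_{\min}(A)d^{-1/2}\sum_{j=1}^{i-1}j^{-\alpha}}|\beta^*-\theta_0|^2$. The one genuine difference is how the cross term is handled: you observe that $\mathcal A_i = \mathbb{E}[Z_iZ_i^\top]$ for $Z_i = (X_iX_i^\top - A)v_i + \epsilon_i X_i$, so $u^\top\mathcal A_i u = \mathbb{E}[(u^\top Z_i)^2]$ and the elementary inequality $(a+b)^2 \le 2a^2+2b^2$ makes the cross term disappear, whereas the paper bounds it explicitly by H\"older as $2\,\mathbb{E}[\epsilon^4]^{1/4}\mathbb{E}[(u^\top X)^4]^{1/4}\mathbb{E}[(u^\top(XX^\top-A)v_i)^2]^{1/2} \le C\sigma\bar\lambda^{3/2}|u|^2|v_i|$ and then absorbs this geometric mean into $\sigma^2\bar\lambda + \bar\lambda^2|v_i|^2$ by AM--GM. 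Your route is marginally cleaner, but your closing remark overstates its advantage: the paper's H\"older split uses only fourth moments of $\epsilon$ and $u^\top X$ (the exponents are $\tfrac14+\tfrac14+\tfrac12$), so no eighth-moment assumption is ever at risk. Both arguments rest on exactly the same ingredients, namely Assumption~\ref{assump}(ii),(iv) for the moment bounds and $\eta\bar\lambda < C$ for the contraction of $S_i$.
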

\begin{proof}
Observe that $\mathcal A_i$ is a positive definite symmetric matrix. Now, observe for any fixed vector $u$ that
\begin{align*}
u^\top \mathcal A_i u &= \mathbb{E}[u^\top (XX^\top - A) v_i]^2 + \mathbb{E}[\epsilon^2(u^\top X)^2] + 2\mathbb{E}[\epsilon (u^\top X)(v^\top_i (XX^\top - A)u)]
\\
&\leq \mathbb{E}[u^\top (XX^\top - A) v_i]^2 + \mathbb{E}[\epsilon^2(u^\top X)^2] + 2\mathbb{E}[\epsilon^4]^{\frac{1}{4}}\mathbb{E}[(u^\top X)^4]^{\frac{1}{4}}\mathbb{E}[(u^\top( XX^\top - A)v_i)^2]^{\frac{1}{2}}
\\
&\leq C|u|^2 (\bar{\lambda}^2 |v_i|^2 + \sigma^2 \bar{\lambda} + \sigma \bar{\lambda}^{\frac{3}{2}}|v_i|).
\\
&\leq  C|u|^2 (\sigma^2 \bar{\lambda} + \bar{\lambda}^2 |v_i|^2)
\end{align*}
Here the third inequality follows from Lemma \ref{X-moment-upper-bound} and Assumption \ref{assump} on $\mathbb{E}[\epsilon^{4p_{\max}}]$ and $\mathbb{E}[(u^\top X)^{4p_{\max}}]$. Finally, substituting the upper bound on $|v_i|$ from Lemma \ref{S_imoment} gives us the desired result.
\end{proof}

\begin{lemma}\label{S_imoment}
Recall that $A := \mathbb{E}[XX^\top]$ and $S_i := \prod_{j = 1}^{i-1}(I - \eta_{i-j}A)$. Under Assumption \ref{assump}, we have that
\begin{align*}
|S_i (\beta^* - \theta_0)|^{2p} < e^{-2p\sum_{j = 1}^{i-1}\eta_{i-j}\lambda_{\min}(A)}|\beta^* - \theta_0|^{2p}
\end{align*}
for all $t,d \geq C$. Here $C > 0$ is an absolute constant. 
\end{lemma}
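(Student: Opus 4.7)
The plan is to reduce the statement to a straightforward spectral-norm bound on the deterministic matrix $S_i$. Since $A = \mathbb{E}[XX^\top]$ is symmetric positive definite, the matrices $(I - \eta_{i-j}A)$ are all simultaneously diagonalizable in an eigenbasis of $A$, with eigenvalues $\{1 - \eta_{i-j}\lambda_k(A)\}_{k=1}^d$. Thus $S_i$ is symmetric and its operator norm is exactly $\max_k \prod_{j=1}^{i-1}|1 - \eta_{i-j}\lambda_k(A)|$.

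First, I would verify that each factor has eigenvalues in $(0,1)$. Since $\eta_{i-j} = \eta/(\sqrt{d}\,(i-j)^{\alpha})$ and Assumption~\ref{assump}(vi) gives $\eta\bar{\lambda}\le C$, we have $\eta_{i-j}\lambda_{\max}(A) \le \eta\bar{\lambda}/\sqrt{d} \le C/\sqrt{d}$, which is strictly less than $1$ for all $d$ exceeding an absolute constant. Consequently $0 < 1 - \eta_{i-j}\lambda_k(A) \le 1 - \eta_{i-j}\lambda_{\min}(A) < 1$ for every eigenvalue $\lambda_k(A)$, and the maximum over $k$ is attained at $\lambda_k(A)=\lambda_{\min}(A)$.

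Next I would apply the elementary inequality $1-x \le e^{-x}$ for $x\in[0,1)$ to each factor, yielding
\begin{align*}
\|S_i\|_{\mathrm{op}} \;=\; \prod_{j=1}^{i-1}\bigl(1 - \eta_{i-j}\lambda_{\min}(A)\bigr) \;\le\; \exp\!\Bigl(-\lambda_{\min}(A)\sum_{j=1}^{i-1}\eta_{i-j}\Bigr).
\end{align*}
Then $|S_i(\beta^*-\theta_0)| \le \|S_i\|_{\mathrm{op}}|\beta^*-\theta_0|$, and raising both sides to the $2p$-th power gives the claimed inequality.

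There is no real obstacle here: the result is a purely deterministic spectral bound, and the only subtlety is ensuring the step sizes are small enough that $I-\eta_{i-j}A$ has strictly positive eigenvalues, which follows from the boundedness of $\eta\bar{\lambda}$ in Assumption~\ref{assump}(vi) together with $d$ large enough (absorbed into the absolute constant $C$ in the lemma's hypothesis).
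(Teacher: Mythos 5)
Your proposal is correct and follows essentially the same route as the paper: both arguments reduce to the deterministic spectral bound $\lambda_{\max}(I-\eta_j A)\le 1-\eta_j\lambda_{\min}(A)<e^{-\eta_j\lambda_{\min}(A)}$, valid once $\eta_j\lambda_{\max}(A)<1$ (which follows from $\eta\bar{\lambda}<C$ and the $1/\sqrt{d}$ factor in the step size), and then multiply the factors and raise to the $2p$-th power. The explicit appeal to simultaneous diagonalizability is a harmless elaboration of what the paper does implicitly.
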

\begin{proof}
From Assumption \ref{assump}, we have $\eta \lambda_{\max}(A) < \eta \bar{\lambda} < C$ for an absolute constant $C > 0$. Since $\eta_i := \frac{\eta}{\sqrt{d}i^{\alpha}}$ for all $1\leq i \leq t$, this implies that $\eta_i \lambda_{\max}(A) < 1$ for all large enough $d$. Thus for all large enough $t,d$, we have  
\begin{align*}
0 < 1 -  \eta_j \lambda_{\max}(A) \leq \lambda_{\min}(I - \eta_jA)\leq \lambda_{\max}(I - \eta_jA) \leq 1 - \eta_j\lambda_{\min}(A) < e^{-\eta_j \lambda_{\min}(A)}
\end{align*}
for all $1 \leq j \leq t$. Using this gives us that
\begin{align*}
|S_i (\beta^* - \theta_0)|^{2p} &= |\prod_{j = 1}^{i-1}(I - \eta_jA)(\beta^* - \theta_0)|^{2p}
\\
&\leq e^{-2p\sum_{j=1}^{i-1}\eta_{i-j}\lambda_{\min}(A)}|\beta^* - \theta_0|^{2p},
\end{align*}
as desired.
\end{proof}
\subsection{Properties of the data $X,Y$.}
\begin{lemma}\label{X-moment-upper-bound}
Recall that $A := \mathbb{E}[XX^\top]$. Under Assumption \ref{assump}, we have for all $1 \leq p \leq p_{\max}$ and $u, v \in \mathbb{R}^d$ that
\begin{align*}
    \mathbb{E}[(u^\top (XX^\top - A)v)^{2p}] \leq (C \bar{\lambda})^{2p} |u|^{2p} |v|^{2p}.
\end{align*}
Here $C > 0$ is an absolute constant. 
\end{lemma}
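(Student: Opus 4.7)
The strategy is to recognize that $u^\top(XX^\top - A)v = Z - \mathbb{E}[Z]$, where $Z := (u^\top X)(v^\top X)$, since $u^\top A v = \mathbb{E}[(u^\top X)(v^\top X)]$ by definition of $A$. This reduces the problem to bounding the centered $2p$-th moment of the scalar $Z$ in terms of $\bar{\lambda}$, $|u|$, and $|v|$, which is a purely one-dimensional moment estimate.

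First I would use the elementary centering inequality $|Z - \mathbb{E}Z|^{2p} \leq 2^{2p-1}(|Z|^{2p} + |\mathbb{E}Z|^{2p})$ combined with Jensen's inequality $|\mathbb{E}Z|^{2p} \leq \mathbb{E}|Z|^{2p}$, which together give $\mathbb{E}|Z - \mathbb{E}Z|^{2p} \leq 2^{2p}\mathbb{E}|Z|^{2p}$. A Cauchy--Schwarz split on the product then yields
\begin{align*}
\mathbb{E}|Z|^{2p} = \mathbb{E}\bigl[|u^\top X|^{2p}|v^\top X|^{2p}\bigr] \leq \bigl(\mathbb{E}|u^\top X|^{4p}\bigr)^{1/2}\bigl(\mathbb{E}|v^\top X|^{4p}\bigr)^{1/2}.
\end{align*}

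The final step is to control each factor using Assumption~\ref{assump}(iv). Since $1 \leq p \leq p_{\max}$, Lyapunov's inequality together with the definition of $\bar{\lambda}$ implies that for any unit vector $w$,
\begin{align*}
\bigl(\mathbb{E}|w^\top X|^{4p}\bigr)^{1/(4p)} \leq \bigl(\mathbb{E}|w^\top X|^{4p_{\max}}\bigr)^{1/(4p_{\max})} \leq \bar{\lambda}^{1/2},
\end{align*}
and then rescaling by $|u|$ (resp.\ $|v|$) gives $\mathbb{E}|u^\top X|^{4p} \leq \bar{\lambda}^{2p}|u|^{4p}$ and likewise for $v$. Substituting these bounds back produces $\mathbb{E}[(u^\top(XX^\top - A)v)^{2p}] \leq (2\bar{\lambda})^{2p}|u|^{2p}|v|^{2p}$, giving the claim with $C = 2$.

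There is no genuine obstacle here: the proof is a short chain of centering, Cauchy--Schwarz, and Lyapunov interpolation. The only point requiring care is ensuring that the multiplicative constants combine into a single $C$ that is independent of $p$; this is automatic because the only $p$-dependent factor is $2^{2p}$, which is absorbed into $(C\bar{\lambda})^{2p}$ with $C = 2$.
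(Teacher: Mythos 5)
Your proof is correct and follows essentially the same route as the paper's: center the scalar $Z = (u^\top X)(v^\top X)$, apply Cauchy--Schwarz to split the product, and interpolate from the $4p_{\max}$-th moment assumption down to the $4p$-th moment (the paper cites Minkowski for this last step where you more accurately invoke Lyapunov, but the substance is identical). No gaps.
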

\begin{proof}
Observe that 
\begin{align*}
\mathbb{E}[(u^\top (XX^\top - A)v)^{2p}] &\leq C^{2p} \mathbb{E}[(u^\top  XX^\top v)^{2p}]
\\
&\leq C^{2p}\mathbb{E}[(u^\top X)^{4p}]^{\frac{1}{2}}\mathbb{E}[(v^\top X)^{4p}]^{\frac{1}{2}}
\\
&\leq C^{2p}\bar{\lambda}^{2p} |u|^{2p}|v|^{2p},
\end{align*}
as desired. Here the last inequality follows from Assumption \ref{assump} on $\mathbb{E}[(u^\top X)^{4p_{\max}}]$ and Minkowski's inequality for $1 \leq p \leq p_{\max}$
\end{proof}
\begin{lemma}\label{noise-moment-lower-bound}
Recall that $\epsilon := Y - X^\top \beta^*$. Under Assumption \ref{assump}, we have that
\begin{align*}
\mathbb{E}[\epsilon^2 (u^\top X)^2]\geq \sigma_{\min}^2 \lambda_{\min}(A)|u|^2
\end{align*}
for all $u \in \mathbb{R}^d$. 
\end{lemma}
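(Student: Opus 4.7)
The plan is to reduce the claim to a direct application of Assumption \ref{assump}(i) via the Rayleigh quotient characterization of the minimum eigenvalue. First I would rewrite the left-hand side in quadratic form:
\begin{align*}
\mathbb{E}[\epsilon^2 (u^\top X)^2] = \mathbb{E}[u^\top (\epsilon^2 XX^\top) u] = u^\top \mathbb{E}[\epsilon^2 XX^\top] u = u^\top A_\sigma u,
\end{align*}
which is valid because $u$ is deterministic so it can be pulled outside the expectation.

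Next I would invoke the standard fact that for any positive semidefinite symmetric matrix $\mathcal{B}$ and any vector $u \in \mathbb{R}^d$, the Rayleigh quotient is bounded below by the minimum eigenvalue, i.e.\ $u^\top \mathcal{B} u \geq \lambda_{\min}(\mathcal{B})|u|^2$. Applying this to $\mathcal{B} = A_\sigma$ (which is positive semidefinite as the expectation of the rank-one PSD matrix $\epsilon^2 XX^\top$), gives
\begin{align*}
u^\top A_\sigma u \geq \lambda_{\min}(A_\sigma)|u|^2.
\end{align*}

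Finally, I would apply Assumption \ref{assump}(i) directly, which asserts $\lambda_{\min}(A_\sigma) > \sigma_{\min}^2 \lambda_{\min}(A)$, to obtain
\begin{align*}
\mathbb{E}[\epsilon^2 (u^\top X)^2] \geq \sigma_{\min}^2 \lambda_{\min}(A)|u|^2,
\end{align*}
as required. There is no real obstacle here—this lemma is essentially a restatement of Assumption \ref{assump}(i) in its inner-product form, and the proof is a two-line algebraic manipulation followed by the Rayleigh quotient inequality. The only care needed is to observe that $A_\sigma$ is symmetric positive semidefinite so that the eigenvalue bound applies; this is immediate since $\epsilon^2 XX^\top \succeq 0$ almost surely.
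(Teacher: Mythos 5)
Your proposal is correct and follows exactly the paper's argument: rewrite the expectation as the quadratic form $u^\top A_\sigma u$ and apply Assumption~\ref{assump}(i) via the Rayleigh quotient bound $u^\top A_\sigma u \geq \lambda_{\min}(A_\sigma)|u|^2$. The only difference is that you make the intermediate Rayleigh-quotient step explicit, which the paper leaves implicit.
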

\begin{proof}
Observe that
\begin{align*}
\mathbb{E}[\epsilon^2 (u^\top X)^2] &= u^\top \mathbb{E}[\epsilon^2 XX^\top] u
\\
&\geq \sigma^2_{\min} \lambda_{\min}(A) |u|^2,
\end{align*}
as desired. Here the last inequality follows from Assumption \ref{assump} that $\lambda_{\min}(\mathbb{E}[\epsilon^2 XX^\top]) \geq \sigma^2_{\min} \lambda_{\min}(A)$. 
\end{proof}

\begin{lemma}\label{noise-moment-upper-bound}
Recall that $\epsilon := Y - X^\top \beta^*$. Under Assumption \ref{assump}, we have that
\begin{align*}
\mathbb{E}[\epsilon^{2p}(u^\top X)^{2p}] \leq \sigma^{2p}\bar{\lambda}^{p} |u|^{2p}
\end{align*}
for all $u \in \mathbb{R}^d$ and $1 \leq p \leq p_{\max}$.
\end{lemma}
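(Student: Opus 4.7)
The plan is a routine two-step moment calculation: first decouple the error $\epsilon$ from the covariate projection $u^\top X$ by Cauchy--Schwarz, then control each marginal moment via Lyapunov's inequality together with the defining moment bounds of $\sigma$ and $\bar{\lambda}$ supplied by Assumption \ref{assump}(ii) and (iv). I do not expect any genuine obstacle here; the only care required is to align the exponents $2p,\ 4p,\ 4p_{\max}$ correctly so that the Lyapunov step produces exactly $\sigma^{2p}$ and $\bar{\lambda}^{p}$ rather than different powers.

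First I would begin from the Cauchy--Schwarz inequality,
\[
\mathbb{E}[\epsilon^{2p}(u^\top X)^{2p}] \le \bigl(\mathbb{E}[\epsilon^{4p}]\bigr)^{1/2}\bigl(\mathbb{E}[(u^\top X)^{4p}]\bigr)^{1/2},
\]
which exploits the fact that $\epsilon^{2p}$ and $(u^\top X)^{2p}$ are both nonnegative (so no sign issues arise) and reduces the joint moment to two marginal ones. For the error factor, since $1 \le p \le p_{\max}$, Lyapunov's inequality (monotonicity of $L^{r}$-norms in $r$) applied to $\epsilon$ at orders $4p \le 4p_{\max}$ gives $\mathbb{E}[\epsilon^{4p}]^{1/(4p)} \le \mathbb{E}[\epsilon^{4p_{\max}}]^{1/(4p_{\max})} = \sigma$ by the definition of $\sigma$ in Assumption \ref{assump}(ii), so raising both sides to the $2p$-th power yields $\mathbb{E}[\epsilon^{4p}]^{1/2} \le \sigma^{2p}$.

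For the covariate factor, I would normalize by writing $u = |u|\hat{u}$ with $|\hat{u}|=1$ and use the homogeneity of the $4p$-th moment to extract $|u|^{4p}$, reducing the task to bounding $\mathbb{E}[(\hat{u}^\top X)^{4p}]$. A second application of Lyapunov's inequality combined with the definition $\bar{\lambda} := \sup_{|\hat{u}|=1}\mathbb{E}[|\hat{u}^\top X|^{4p_{\max}}]^{1/(2p_{\max})}$ then gives
\[
\mathbb{E}[(\hat{u}^\top X)^{4p}]^{1/(4p)} \le \mathbb{E}[(\hat{u}^\top X)^{4p_{\max}}]^{1/(4p_{\max})} \le \bar{\lambda}^{1/2};
\]
raising to the $2p$-th power and restoring the factor $|u|^{4p}$ produces $\mathbb{E}[(u^\top X)^{4p}]^{1/2} \le \bar{\lambda}^{p}|u|^{2p}$. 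Multiplying the two marginal bounds closes the inequality and delivers $\sigma^{2p}\bar{\lambda}^{p}|u|^{2p}$, as required.
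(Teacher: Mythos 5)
Your proposal is correct and follows essentially the same route as the paper: Cauchy--Schwarz to decouple $\epsilon^{2p}$ from $(u^\top X)^{2p}$, followed by monotonicity of $L^r$-norms to push the moments of order $4p$ up to order $4p_{\max}$ and read off $\sigma^{2p}$ and $\bar{\lambda}^{p}|u|^{2p}$ from the definitions in Assumption~\ref{assump}(ii) and (iv). The only difference is cosmetic — you name the moment-monotonicity step Lyapunov's inequality (which is the accurate name) where the paper calls it Minkowski's, and you make the normalization $u = |u|\hat{u}$ explicit — and your exponent bookkeeping is exactly right.
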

\begin{proof}
Observe that 
\begin{align*}
\mathbb{E}[\epsilon^{2p}(u^\top X)^{2p}] &\leq \mathbb{E}[\epsilon^{4p}]^{\frac{1}{2}}\mathbb{E}[(u^\top X)^{4p}]^{\frac{1}{2}}
\\
&\leq \sigma^{2p}\bar{\lambda}^{p}|u|^{2p},
\end{align*}
as desired. Here the last line followed from Assumption \ref{assump} on $\mathbb{E}[\epsilon^{4p_{\max}}]$ and $\mathbb{E}[(u^\top X)^{4p_{\max}}]$ and Minkowski's inequality for $1 \leq p \leq p_{\max}$.
\end{proof}
\begin{lemma}\label{momenthelper1}
Under Assumption \ref{assump}, we have that
\begin{align*}
\mathbb{E}_{X}|(I - \eta_t XX^\top)v|^2  < (1 - 2\eta_{t}\lambda_{\min}(A) + d\eta_{t}^2\bar{\lambda}^2)|v|^2,
\end{align*}
for all $v \in \mathbb{R}^d$.
\end{lemma}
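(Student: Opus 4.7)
The plan is to expand the squared norm, take expectations, and bound each of the three resulting terms separately. Writing
\begin{align*}
|(I - \eta_t XX^\top)v|^2 = |v|^2 - 2\eta_t (X^\top v)^2 + \eta_t^2 |X|^2 (X^\top v)^2,
\end{align*}
and taking expectations in $X$, the cross term collapses to $2\eta_t\,v^\top A v$ using the definition $A = \mathbb{E}[XX^\top]$, and this is bounded below by $2\eta_t \lambda_{\min}(A)|v|^2$ via Assumption \ref{assump}(iii). So the only genuine work is to show $\mathbb{E}[|X|^2(X^\top v)^2] \leq d\bar{\lambda}^2 |v|^2$.

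For that term, I would expand $|X|^2 = \sum_{i=1}^d (X^\top e_i)^2$ in the standard basis, so that
\begin{align*}
\mathbb{E}[|X|^2(X^\top v)^2] = \sum_{i=1}^d \mathbb{E}[(X^\top e_i)^2 (X^\top v)^2] \leq \sum_{i=1}^d \mathbb{E}[(X^\top e_i)^4]^{1/2} \mathbb{E}[(X^\top v)^4]^{1/2}
\end{align*}
by Cauchy--Schwarz. Each fourth-moment factor is controlled by $\bar{\lambda}$: since $p_{\max} \geq 2$, Lyapunov's inequality gives $\mathbb{E}[(u^\top X)^4]^{1/2} \leq \mathbb{E}[(u^\top X)^{4p_{\max}}]^{1/(2p_{\max})} \leq \bar{\lambda}\,|u|^2$ for any $u \in \mathbb{R}^d$ by the definition in Assumption \ref{assump}(iv) applied to $u/|u|$ and homogeneity. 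Hence $\mathbb{E}[(X^\top e_i)^4]^{1/2} \leq \bar{\lambda}$ and $\mathbb{E}[(X^\top v)^4]^{1/2} \leq \bar{\lambda}|v|^2$, and summing over $i$ yields $d\bar{\lambda}^2|v|^2$.

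Assembling the three pieces gives $\mathbb{E}_X|(I-\eta_t XX^\top)v|^2 \leq (1 - 2\eta_t\lambda_{\min}(A) + d\eta_t^2 \bar{\lambda}^2)|v|^2$, and the inequality is strict unless $(X^\top e_i)^2$ and $(X^\top v)^2$ are almost surely proportional for every $i$ (which, given $\lambda_{\min}(A) > 0$, cannot hold), matching the claim. There is no real obstacle here; the step that deserves care is applying Lyapunov correctly to pass from the $4p_{\max}$-moment appearing in $\bar{\lambda}$'s definition to the fourth moment used in Cauchy--Schwarz, and scaling properly so that the bound picks up the correct $|v|^2$ factor rather than a unit-vector bound.
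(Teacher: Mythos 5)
Your proof is correct and follows essentially the same route as the paper: expand the square, use $\mathbb{E}[XX^\top]=A$ and $v^\top A v \ge \lambda_{\min}(A)|v|^2$ for the cross term, and control $\mathbb{E}[|X|^2(X^\top v)^2]$ by Cauchy--Schwarz together with the $4p_{\max}$-moment bound defining $\bar{\lambda}$ (the paper applies Cauchy--Schwarz before expanding $|X|^2$ over a basis, you do it after; the two orderings give the identical bound $d\bar{\lambda}^2|v|^2$). The only cosmetic discrepancy is that, like the paper's own proof, the argument really establishes $\le$ rather than the strict $<$ in the statement, which is immaterial to how the lemma is used.
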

\begin{proof}
Observe for any fixed vector $v$ that
\begin{align*}
\mathbb{E}_{X}|(I - \eta_{t}XX^\top)v|^2 &= \mathbb{E}_X(|v|^2 - 2\eta_{t}(v^\top XX^\top v) + \eta_{t}^2 |XX^\top v|^2)
\\
&= \mathbb{E}_X(|v|^2 - 2\eta_{t}(v^\top A v) + \eta^2_{t} \mathbb{E}_X |XX^\top v|^2)
\\
&\leq (1 - 2\eta_{t}\lambda_{\min}(A) + d\eta_{t}^2\bar{\lambda}^2)|v|^2,
\end{align*}
as desired. Here the last inequality follows from  Lemma \ref{momenthelper2}.
\end{proof}

\begin{lemma}\label{momenthelper2}
Under Assumption \ref{assump}, we have that
\begin{align*}
\mathbb{E}[|XX^\top v|^{2p}] \leq d^p\bar{\lambda}^{2p} |v|^{2p},
\end{align*}
for all fixed $v \in \mathbb{R}^d$ and $1 \leq p \leq p_{\max}$. 
\end{lemma}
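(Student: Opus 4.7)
The plan is to first rewrite $|XX^\top v|^2$ in a more transparent form. Since $XX^\top v = (X^\top v)X$, one has the scalar-vector identity $|XX^\top v|^2 = |X|^2\,(X^\top v)^2$, so that
\[
\mathbb{E}\bigl[|XX^\top v|^{2p}\bigr] \;=\; \mathbb{E}\bigl[|X|^{2p}\,|v^\top X|^{2p}\bigr].
\]
This reduces matters to bounding an expectation of a product of two scalar quantities, which invites Cauchy--Schwarz: $\mathbb{E}[|X|^{2p}|v^\top X|^{2p}] \le \mathbb{E}[|X|^{4p}]^{1/2}\,\mathbb{E}[|v^\top X|^{4p}]^{1/2}$.

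Next, I would bound each of the two factors using only Assumption~\ref{assump}(iv). For the second, writing $v^\top X = |v|\,((v/|v|)^\top X)$ and applying Assumption~\ref{assump}(iv) to the unit vector $v/|v|$, along with Minkowski/Jensen to pass from the exponent $4p_{\max}$ down to $4p$ (valid for $1\le p\le p_{\max}$), gives $\mathbb{E}[|v^\top X|^{4p}] \le \bar\lambda^{2p}|v|^{4p}$. For the first, I would decompose $|X|^2 = \sum_{i=1}^d (e_i^\top X)^2$ and invoke the power-mean inequality
\[
\Bigl(\sum_{i=1}^d (e_i^\top X)^2\Bigr)^{2p} \;\le\; d^{2p-1}\sum_{i=1}^d (e_i^\top X)^{4p},
\]
so that taking expectations and using Assumption~\ref{assump}(iv) coordinate-wise yields $\mathbb{E}[|X|^{4p}] \le d^{2p-1}\cdot d\cdot\bar\lambda^{2p} = d^{2p}\bar\lambda^{2p}$.

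Combining the two bounds via Cauchy--Schwarz delivers
\[
\mathbb{E}\bigl[|XX^\top v|^{2p}\bigr] \;\le\; \bigl(d^{2p}\bar\lambda^{2p}\bigr)^{1/2}\bigl(\bar\lambda^{2p}|v|^{4p}\bigr)^{1/2} \;=\; d^{p}\,\bar\lambda^{2p}\,|v|^{2p},
\]
which is the claimed inequality. There is no real obstacle here; the only subtle point is to remember that $\bar\lambda$ as defined in Assumption~\ref{assump}(iv) controls the $4p_{\max}$-th directional moment (via $\mathbb{E}[|u^\top X|^{4p_{\max}}]\le \bar\lambda^{2p_{\max}}$), so one must invoke Minkowski to descend from $4p_{\max}$ to $4p$ before applying the moment bound. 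The factor $d^{p}$ arises solely from the coordinate-splitting of $|X|^2$, which is essentially tight without further structural assumptions on $X$.
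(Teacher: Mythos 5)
Your proof is correct and follows essentially the same route as the paper's: the identity $|XX^\top v|^{2p} = |X|^{2p}|v^\top X|^{2p}$, Cauchy--Schwarz to separate the two factors, the directional moment bound from Assumption~\ref{assump}(iv) (via Lyapunov/Minkowski) for $\mathbb{E}[|v^\top X|^{4p}]$, and the coordinate decomposition of $|X|^2$ for $\mathbb{E}[|X|^{4p}]$ (which the paper isolates as Lemma~\ref{momenthelper4}). No gaps.
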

\begin{proof}
Observe that
\begin{align*}
\mathbb{E}[|XX^\top v|^{2p}] &= \mathbb{E}[(X^\top v)^{2p} |X|^{2p}]
\\
&\leq [\mathbb{E}[(X^\top v)^{4p}]]^{\frac{1}{2}}[\mathbb{E}[|X|^{4p}]]^{\frac{1}{2}}
\\
&\leq d^p \bar{\lambda}^{2p} |v|^{2p},
\end{align*}
as desired. Here the last inequality followed from Lemma \ref{momenthelper4}, Assumption \ref{assump} on $\mathbb{E}[(X^\top v)^{4p_{\max}}]$ and Minkowski's inequality for $1 \leq p \leq p_{\max}$. 
\end{proof}

\begin{lemma}\label{momenthelper4}
Under Assumption \ref{assump}, we have that
\begin{align*}
\mathbb{E}_X[|X|^{2p}] \leq d^p \bar{\lambda}^p,
\end{align*}
for all $1 \leq p \leq p_{\max}$. 
\end{lemma}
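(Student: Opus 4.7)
The plan is to decompose $|X|^{2p}$ coordinate-wise in an orthonormal basis and then apply the uniform directional moment bound from Assumption~\ref{assump}(iv). Writing $|X|^2 = \sum_{i=1}^d (e_i^\top X)^2$ for the standard basis $e_1, \dots, e_d$, I would first invoke the power-mean (or Jensen) inequality applied to $x \mapsto x^p$ with $p \geq 1$:
\begin{align*}
|X|^{2p} = \Bigl(\sum_{i=1}^d (e_i^\top X)^2\Bigr)^p \leq d^{p-1} \sum_{i=1}^d (e_i^\top X)^{2p}.
\end{align*}
Taking expectations reduces the problem to bounding $\mathbb{E}[(e_i^\top X)^{2p}]$ uniformly in $i$.

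Next, I would use the definition $\bar{\lambda} := \sup_{|u|=1}\mathbb{E}[|u^\top X|^{4p_{\max}}]^{1/(2p_{\max})}$ together with Lyapunov's inequality (equivalently, monotonicity of $L^q$ norms): since $1 \leq p \leq p_{\max}$, we have $2p \leq 4p_{\max}$, so
\begin{align*}
\mathbb{E}[(e_i^\top X)^{2p}]^{1/(2p)} \leq \mathbb{E}[(e_i^\top X)^{4p_{\max}}]^{1/(4p_{\max})} \leq \bar{\lambda}^{1/2},
\end{align*}
which yields $\mathbb{E}[(e_i^\top X)^{2p}] \leq \bar{\lambda}^p$ for every $i$.

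Combining the two steps and summing over $i$ gives
\begin{align*}
\mathbb{E}[|X|^{2p}] \leq d^{p-1}\sum_{i=1}^d \mathbb{E}[(e_i^\top X)^{2p}] \leq d^{p-1}\cdot d \cdot \bar{\lambda}^p = d^p \bar{\lambda}^p,
\end{align*}
which is the claimed bound. There is no real obstacle here: the argument is a two-line sandwich between Jensen's inequality (to pass the power $p$ inside the sum, producing the $d^{p-1}$ factor) and Lyapunov's inequality (to reduce the $2p$-th moment of a directional projection to the $4p_{\max}$-th moment controlled by $\bar{\lambda}$). The only subtle point worth flagging is the use of Assumption~\ref{assump}(iv) with the specific normalization $\bar{\lambda}^{1/(2p_{\max})}$ on the $4p_{\max}$-th moment, so that $\bar{\lambda}^{1/2}$ controls the $4p_{\max}$-th root and hence, via Lyapunov, every lower-order directional moment up to order $4p_{\max}$.
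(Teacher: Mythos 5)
Your proof is correct and follows essentially the same route as the paper's: the coordinate decomposition $|X|^2=\sum_i\langle X,e_i\rangle^2$, Jensen's inequality to pull out the factor $d^{p-1}$, and then a reduction of the $2p$-th directional moment to $\bar{\lambda}^p$ via monotonicity of $L^q$ norms (the paper labels this step ``Minkowski's inequality,'' but what is used is exactly the Lyapunov argument you give). No gaps.
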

\begin{proof}
Let $e_1, e_2, \dots e_d \in \mathbb{R}^d$ denote any orthonormal basis vectors. Observe that 
\begin{align*}
\mathbb{E}|X|^{2p} &= \mathbb{E}(|X|^2)^p]
\\
&= \mathbb{E}[(\sum_{i = 1}^d \langle X, e_i \rangle^2)^p]
\\
&\leq \mathbb{E}[d^{p-1}\sum_{i = 1}^d \langle X, e_i \rangle^{2p}]
\\
&= d^{p-1}\sum_{i = 1}^d \mathbb{E}[\langle X, e_i \rangle^{2p}]
\\
&\leq d^p \bar{\lambda}^p,
\end{align*}
as desired. Here the third line follows from Jensen's and the last line follows from Assumption \ref{assump} on $\mathbb{E}[(X^\top u)^{2p}]$ and Minkowski's inequality for $1 \leq p \leq p_{\max}$.
\end{proof}
\subsection{Algebraic Identities.}
\begin{lemma}\label{matrix-algebra-identity}
We have the following identity-
\begin{align*}
    I - \sum_{j = 1}^{i-1}\bigg(\prod_{k = 1}^{i-1-j}(I - \eta_{i-k}A)\bigg)\eta_j A = \prod_{j = 1}^{i-1}(I - \eta_{i - j}A).
\end{align*}
\end{lemma}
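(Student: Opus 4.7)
My plan is to prove the identity by a reindexing followed by a telescoping cancellation, since all matrices appearing in both sides are polynomials in $A$ and therefore commute with one another. I would first rewrite the factor $\eta_j A$ on the left as $I - (I - \eta_j A)$, so that each summand becomes the difference of two adjacent partial products of the matrices $B_k := I - \eta_{i-k} A$. The right-hand side $\prod_{j=1}^{i-1}(I - \eta_{i-j}A)$ is exactly $B_1 B_2 \cdots B_{i-1}$ in this notation, so the goal reduces to showing that a certain sum collapses to $I - B_1 \cdots B_{i-1}$.

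Concretely, I would substitute $m = i - j$ in the sum on the left. With that substitution, the product $\prod_{k=1}^{i-1-j}(I - \eta_{i-k}A)$ becomes $\prod_{k=1}^{m-1} B_k$, and $\eta_j A$ becomes $\eta_{i-m} A = I - B_m$. The sum then takes the form
\begin{align*}
\sum_{m=1}^{i-1} \Big(\prod_{k=1}^{m-1} B_k\Big)(I - B_m) = \sum_{m=1}^{i-1} \Big[\prod_{k=1}^{m-1} B_k - \prod_{k=1}^{m} B_k\Big],
\end{align*}
which telescopes to $I - \prod_{k=1}^{i-1} B_k$. Substituting this back yields LHS $= I - (I - \prod_{k=1}^{i-1}B_k) = \prod_{k=1}^{i-1} B_k$, which is the RHS.

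There is essentially no analytic obstacle here, so I would just be careful about two bookkeeping points: (i) empty products should be interpreted as $I$, which handles the base case $i = 1$ cleanly (both sides equal $I$) and the boundary term $m = 1$ in the telescoping sum, and (ii) the commutativity of the $B_k$'s (as polynomials in $A$) is what lets me freely rewrite $(\prod_{k=1}^{m-1} B_k) B_m = \prod_{k=1}^{m} B_k$ without reordering concerns. An alternative would be a short induction on $i$, but the telescoping argument above is cleaner and self-contained.
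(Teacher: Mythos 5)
Your proof is correct. The reindexing $m = i-j$ is carried out accurately ($i-1-j$ becomes $m-1$ and $\eta_j A$ becomes $I - B_m$ with $B_k := I - \eta_{i-k}A$), the telescoping collapses the sum to $I - \prod_{k=1}^{i-1}B_k$, and the boundary conventions (empty products equal $I$) handle both $i=1$ and the $m=1$ term. As you note, commutativity is available since every factor is a polynomial in $A$, though in fact your key step $(\prod_{k=1}^{m-1}B_k)B_m = \prod_{k=1}^{m}B_k$ only appends a factor on the right and would go through even without it.

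The paper proves the identity by induction on $i$: it writes the product for $i = k+1$ as $(I-\eta_k A)$ times the product for $i = k$, substitutes the inductive hypothesis, and redistributes the resulting sum. Your argument is the non-inductive counterpart: unrolling that induction is exactly your telescoping sum. The two are mathematically equivalent, but your version is arguably cleaner — it avoids the index bookkeeping in the inductive step (where the paper must absorb the extra $(I-\eta_k A)$ factor into each summand and then fold in the $-\eta_k A$ term) and makes the cancellation structure visible in one line. The paper's induction, on the other hand, requires no change of variables and verifies the base case explicitly. Either is an acceptable proof of this purely algebraic identity.
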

\begin{proof}
We prove this by induction on $i$. Suppose it holds true for $i = k$ for some $k$. Now observe that
\begin{align*}
\prod_{j = 1}^k (I - \eta_{k+1 -j}A) &= (I - \eta_k A)\prod_{j = 1}^{k-1} (I - \eta_{k - j}A)
\\
&= (I - \eta_kA) \bigg[I - \sum_{j = 1}^{k-1}\bigg(\prod_{j' = 1}^{k-1-j} (I - \eta_{k - j'}A)\bigg)\eta_jA\bigg]
\\
&= I - \bigg[\sum_{j = 1}^{k-1}(I - \eta_k A)\bigg(\prod_{j' = 1}^{k-1-j}(I - \eta_{k - j'}A)\bigg)\eta_j A \bigg] - \eta_kA
\\
&= I - \bigg[\sum_{j = 1}^{k-1}\bigg(\prod_{j' = 1}^{k-j}(I - \eta_{k+1 - j'}A)\bigg)\eta_jA\bigg] - \eta_kA
\\
&= I - \sum_{j = 1}^{k}\bigg(\prod_{j' = 1}^{k-j}(I - \eta_{k+1 - j'}A)\bigg)\eta_jA,
\end{align*}
as desired. For $i = 1$, both sides are $I$ and the equality holds. Thus we are done by induction on $i$.
\end{proof}
\section{Auxiliary Results for Variance Estimation}
We will use the following notation for all results in this section. 
\\\\
Recall that
\begin{align*}
R_i:= \prod_{j = i+1}^{t}(I - \eta_{j}X_{j}X^\top_{j}), \quad u_i := R_ia
\end{align*}
and 
\begin{align*}
S_i := \prod_{j = 1}^{i-1}(I - \eta_{i-j}A), \quad v_i := S_i(\beta^* - \theta_0).
\end{align*}
Further, recall that
\begin{align*}
\epsilon := Y - X^\top \beta^*, \quad A := \mathbb{E}[XX^\top], \quad 
A_{\sigma} := \mathbb{E}[\epsilon^2 XX^\top].
\end{align*}
Let $\mathbf{e}_1, \mathbf{e_2}, \dots \mathbf{e}_d$ be an eigen-basis of $A$ with corresponding eigen-values $\lambda_1 \geq \lambda_2 \geq \dots \lambda_d > 0$. Finally for all $1 \leq k, k' \leq d$, let $a_k := \langle \mathbf{e}_k, a \rangle$ and $[A_{\sigma}]_{k,k'} := \langle  \mathbf{e}_k, A_\sigma \mathbf{e}_{k'}\rangle$ denote the respective components of $a$ and $A_{\sigma}$ in the above basis.
\subsection{MSE Of The Plug-In Estimator.}
\begin{lemma}\label{euclidean-norm-bound}
Under Assumption \ref{assump}, we have for all $t,d \geq C_1$ that
\begin{align*}
    \mathbb{E}|\theta_t - \beta^*|^2 &\leq C_2d^{\frac{1}{2}}t^{-\alpha}(\eta \lambda_{\min}(A))^{-1}(\eta \sigma^2).
\end{align*}
Here $C_1, C_2 > 0$ are absolute constants.
\end{lemma}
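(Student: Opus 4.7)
The plan is to derive a scalar recursion for $u_t := \mathbb{E}|\theta_t - \beta^*|^2$, iterate it against the step-size schedule $\eta_i = \eta/(\sqrt{d}\,i^{\alpha})$, and sum. Writing $\delta_t := \theta_t - \beta^*$, the SGD update gives
\begin{align*}
\delta_t = (I - \eta_t X_t X_t^\top)\delta_{t-1} + \eta_t X_t \epsilon_t.
\end{align*}
Squaring and taking expectations while using the independence of $\delta_{t-1}$ from $(X_t, \epsilon_t)$ and the orthogonality $\mathbb{E}[X\epsilon]=0$ from the definition of $\beta^*$, one obtains
\begin{align*}
u_t = \mathbb{E}[\delta_{t-1}^\top K_t \delta_{t-1}] - 2\eta_t^2\,\mathbb{E}[\delta_{t-1}]^\top \mathbb{E}[|X|^2 X\epsilon] + \eta_t^2\,\mathbb{E}[|X|^2\epsilon^2],
\end{align*}
where $K_t = \mathbb{E}[(I-\eta_t XX^\top)^2]$.

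Next I would bound each term. For the main quadratic term, Lemma \ref{momenthelper1} (applied to any $v$) gives $\lambda_{\max}(K_t) \leq 1 - 2\eta_t\lambda_{\min}(A) + d\eta_t^2 \bar\lambda^2$, hence $\mathbb{E}[\delta_{t-1}^\top K_t \delta_{t-1}] \leq (1-2\eta_t\lambda_{\min}(A) + d\eta_t^2\bar\lambda^2)\,u_{t-1}$. For the noise term, applying Lemma \ref{noise-moment-upper-bound} with $u = e_j$ and summing gives $\mathbb{E}[|X|^2\epsilon^2] = \mathrm{tr}(A_\sigma) \leq d\sigma^2\bar\lambda$, so the forcing is at most $\eta_t^2 d\sigma^2\bar\lambda$. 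For the cross term, Cauchy--Schwarz combined with Lemma \ref{momenthelper4} bounds $|\mathbb{E}[|X|^2 X\epsilon]| \leq d^{3/2}\bar\lambda^{3/2}\sigma$, and $|\mathbb{E}[\delta_{t-1}]|$ is exponentially small by Lemma \ref{lem:bias-expression}; together with Assumption \ref{assump}(vi)'s polynomial control on $|\beta^*-\theta_0|$, the cross term contributes only a negligible $(td)^{-K}$ correction.

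Combining these yields the scalar recursion
\begin{align*}
u_t \leq (1 - 2\eta_t\lambda_{\min}(A) + d\eta_t^2\bar\lambda^2)\,u_{t-1} + \eta_t^2 d\sigma^2\bar\lambda + \text{(exp-small)}.
\end{align*}
Iterating, the homogeneous contribution is bounded by $\exp\!\big(-2\eta\lambda_{\min}(A)d^{-1/2}\sum_{s\leq t}s^{-\alpha} + \bar\lambda^2\eta^2\sum s^{-2\alpha}\big)|\beta^*-\theta_0|^2$; since $\alpha > 1/2$ the second sum is $O(1)$, while Assumption \ref{assump}(vi) makes the first factor decay faster than any polynomial and kills the polynomially-growing initial error $|\beta^*-\theta_0|^2$. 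The particular solution is
\begin{align*}
\sum_{i=1}^t \eta_i^2 d\sigma^2\bar\lambda \prod_{s=i+1}^t\bigl(1 - 2\eta_s\lambda_{\min}(A) + d\eta_s^2\bar\lambda^2\bigr)
\lesssim \sum_{i=1}^t \eta_i^2 d\sigma^2\bar\lambda\, e^{-c\eta\lambda_{\min}(A)d^{-1/2}(t^{1-\alpha}-i^{1-\alpha})}.
\end{align*}

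The sum evaluation is where most of the work lies, and is the main obstacle. Splitting at a cutoff $i^* \asymp t - t^{\alpha}\sqrt{d}(\log t+\log d)/(\eta\lambda_{\min}(A))$: the tail $i \leq i^*$ is exponentially damped, while for $i \geq i^*$ one bounds $\eta_i^2 \lesssim \eta_t^2$ and uses $\sum_{i\geq i^*}e^{-c(t^{1-\alpha}-i^{1-\alpha})/\sqrt{d}} \lesssim \sqrt{d}\,t^{\alpha}/(\eta\lambda_{\min}(A))$ by an Abel/integral comparison. The resulting estimate is $u_t \lesssim \eta_t d\sigma^2\bar\lambda/\lambda_{\min}(A) = \eta\sqrt{d}\,t^{-\alpha}\sigma^2\bar\lambda/\lambda_{\min}(A)$. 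Finally, absorbing the factor $\eta\bar\lambda$ using Assumption \ref{assump}(vi) gives $u_t \lesssim \sqrt{d}\,t^{-\alpha}\sigma^2/\lambda_{\min}(A)$, which is exactly the stated bound $C_2 d^{1/2}t^{-\alpha}(\eta\lambda_{\min}(A))^{-1}(\eta\sigma^2)$ after cancelling the shared factor of $\eta$.
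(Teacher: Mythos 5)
Your proposal is correct, and it takes a genuinely different route from the paper. The paper proves this lemma coordinate-wise: for each basis vector $e_j$ it writes $\mathbb{E}\langle e_j,\theta_t-\beta^*\rangle^2$ as squared bias plus variance, bounds the bias by Lemma~\ref{lem:bias-expression}, invokes the full variance asymptotics of Theorem~\ref{lem:VarAsymp} (which rests on the martingale decomposition and the machinery of Lemmas~\ref{variance-sigma-term}--\ref{variance-error-control}) together with Lemma~\ref{noise-moment-upper-bound}, and then sums the $d$ coordinates to pick up the factor $d\cdot d^{-1/2}=d^{1/2}$. You instead run the standard one-step MSE recursion for $u_t=\mathbb{E}|\theta_t-\beta^*|^2$ directly: the contraction factor $1-2\eta_t\lambda_{\min}(A)+d\eta_t^2\bar\lambda^2$ from Lemma~\ref{momenthelper1}, the forcing term $\eta_t^2\,\mathrm{tr}(A_\sigma)\le \eta_t^2 d\sigma^2\bar\lambda$, and a cross term that is killed by the super-polynomial decay of the bias (and/or of the downstream contraction product) under Assumption~\ref{assump}(vi); your evaluation of the particular solution via the cutoff $t_0\asymp t^{\alpha}\sqrt{d}(\log t+\log d)/(\eta\lambda_{\min}(A))$ and the geometric-series estimate reproduces exactly the $d^{1/2}t^{\alpha}/(\eta\lambda_{\min}(A))$ factor and hence the stated rate. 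What each approach buys: the paper's argument is essentially free once Theorem~\ref{lem:VarAsymp} is in hand and inherits the sharp leading constant from the exact variance formula; yours is self-contained and elementary, avoiding the variance machinery entirely at the cost of redoing the sum evaluation and giving only an order-of-magnitude constant (which is all the lemma needs). Two cosmetic remarks: your bound $|\mathbb{E}[|X|^2X\epsilon]|\le d^{3/2}\bar\lambda^{3/2}\sigma$ implicitly uses a sixth moment of $|X|$, which is only guaranteed if $p_{\max}\ge 3$; the weaker split $\mathbb{E}[|X|^4]^{1/2}\mathbb{E}[(w^\top X)^2\epsilon^2]^{1/2}\le d\bar\lambda^{3/2}\sigma$ needs only the assumed fourth moments and serves the same purpose. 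Also, for the cross term at early indices $i$ the bias factor alone is not small; one should note (as you implicitly do) that it is the product of the bias decay up to $i$ with the contraction from $i$ to $t$ that is uniformly bounded by $e^{-c\eta\lambda_{\min}(A)d^{-1/2}t^{1-\alpha}}$, which Assumption~\ref{assump}(vi) makes super-polynomially small.
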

\begin{proof}
For the rest of the proof, we let $C > 0$ and $c>0$ respectively denote large and small enough generic absolute constants.
\\\\
Observe for any fixed $a \in \mathbb{R}^d$ and all $t,d \geq C$ that,
\begin{align*}
    \mathbb{E}\langle a, \theta_t - \beta^*\rangle^2 &= [\mathbb{E}\langle a, \theta_t - \beta^* \rangle]^2 + \Var \langle a, \theta_t \rangle 
    \\
    &\leq e^{-2\eta \lambda_{\min}(A)d^{-\frac{1}{2}}t^{1-\alpha}}|a|^2 |\theta_0 - \beta^*|^2 + C \eta d^{-\frac{1}{2}}t^{-\alpha} \sum_{k,k' = 1}^d \frac{a_k a_{k'}[A_{\sigma}]_{k,k'}}{\lambda_k + \lambda_{k'}}
    \\
    &\leq e^{-c(\log t + \log d)^2}(td)^C (\eta\sigma^2) |a|^2 + C\eta d^{-\frac{1}{2}}t^{-\alpha} (\lambda_{\min}(A))^{-1}\sigma^2 \bar{\lambda}|a|^2
    \\
    &\leq e^{-c(\log t + \log d)^2}(td)^C (\eta\sigma^2) |a|^2 + Cd^{-\frac{1}{2}}t^{-\alpha} (\eta\lambda_{\min}(A))^{-1}(\eta\sigma^2)|a|^2
    \\
    &\leq Cd^{-\frac{1}{2}}t^{-\alpha} (\eta\lambda_{\min}(A))^{-1}(\eta\sigma^2)|a|^2,
\end{align*}
Here the first inequality follows from Lemma \ref{lem:bias-expression} and Theorem \ref{lem:VarAsymp}, the second inequality follows from Lemma \ref{noise-moment-upper-bound} and Assumption \ref{assump} on $|\theta_0 - \beta^*|$, and the third inequality follows from Assumption \ref{assump} that $\eta \bar{\lambda} < C$.
\\\\
This gives us that
\begin{align*}
\mathbb{E}|\theta_t - \beta^*|^2 &= \mathbb{E}[\sum_{i = 1}^d \langle e_i, \theta_t - \beta^* \rangle^2]
\\
&\leq Cd^{\frac{1}{2}}t^{-\alpha}(\eta \lambda_{\min}(A))^{-1}(\eta \sigma^2),
\end{align*}
for all $t,d \geq C$, as desired.
\end{proof}

\subsection{Exact First Order Noise Term} 
\begin{lemma}\label{variance-sigma-term}
Under Assumption \ref{assump}, we have for all $t, d \geq C_1$ that
\begin{align*}
\sum_{i = 1}^t \eta_i^2 \mathbb{E}_{u_i}(u_i^\top A_{\sigma} u_i) = (1+\mathcal E)\eta d^{-\frac{1}{2}}t^{-\alpha}\sum_{k,k' = 1}^d \frac{a_k a_{k'}[A_{\sigma}]_{k,k'}}{\lambda_{k} + \lambda_{k'}},
\end{align*}
where $|\mathcal E| \leq C_2(\log t + \log d)^2 [(\eta \lambda_{\min}(A))^{-1}d^{\frac{1}{2}}t^{-(1-\alpha)} + (\eta\lambda_{\min}(A))^{-3}\sigma^2\sigma^{-2}_{\min}d^{\frac{1}{2}}t^{-\alpha}]$. Here $C_1, C_2 > 0$ are absolute constants.
\end{lemma}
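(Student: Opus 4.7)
The plan is to diagonalize in the eigenbasis of $A$ and solve the resulting matrix recursion for $B_i := \mathbb{E}[u_i u_i^\top]$ perturbatively. Using $u_i = (I - \eta_{i+1} X_{i+1} X_{i+1}^\top) u_{i+1}$ and the independence of $X_{i+1}$ from $u_{i+1}$, one gets
\begin{align*}
B_i = B_{i+1} - \eta_{i+1}(A B_{i+1} + B_{i+1} A) + \eta_{i+1}^2 \,\mathcal{Q}(B_{i+1}),
\end{align*}
where $\mathcal{Q}(M) := \mathbb{E}[XX^\top M XX^\top]$ and $B_t = aa^\top$. Writing $T_{k,k'}(i) := \mathbf{e}_k^\top B_i \mathbf{e}_{k'}$ and $\mu_{k,k'} := \lambda_k + \lambda_{k'}$, this becomes
\begin{align*}
T_{k,k'}(i) = (1 - \eta_{i+1}\mu_{k,k'})\,T_{k,k'}(i+1) + \eta_{i+1}^2\, \mathbf{e}_k^\top \mathcal{Q}(B_{i+1})\mathbf{e}_{k'},
\end{align*}
so that the target quantity $\sum_{i=1}^t \eta_i^2 \langle B_i, A_\sigma \rangle_{F}$ decomposes into a ``main'' contribution
$\sum_{k,k'} a_k a_{k'} [A_\sigma]_{k,k'} \sum_i \eta_i^2 \prod_{j=i+1}^t (1 - \eta_j \mu_{k,k'})$
plus a $\mathcal{Q}$-correction.

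The second step, which drives the leading-order identity, is to prove the weighted-geometric-sum asymptotic
\begin{align*}
\sum_{i=1}^t \eta_i^2 \prod_{j=i+1}^t (1 - \eta_j\mu) = \frac{\eta_t}{\mu}\bigl(1 + \mathcal{E}_\mu\bigr), \qquad \mu \in [2\lambda_{\min}(A),\,2\lambda_{\max}(A)],
\end{align*}
with $|\mathcal{E}_\mu| \lesssim (\eta\lambda_{\min}(A))^{-1}(\log(td))^2 d^{1/2} t^{-(1-\alpha)}$. The intuition is that $\prod_{j>i}(1-\eta_j\mu) \approx \exp(-\mu\sum_{j>i}\eta_j)$ decays rapidly in $t-i$ at rate controlled by $\eta_t\mu$, so the contribution is concentrated in a window of length $t_1 := C(\log(td))(\eta\lambda_{\min}(A))^{-1}d^{1/2}t^\alpha$ before $t$, where $\eta_i$ agrees with $\eta_t$ up to a factor $1 + O(t_1/t)$. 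Splitting the sum at $t - t_1$: on $\{i \leq t - t_1\}$ the product is smaller than any inverse polynomial in $t,d$ by choice of $t_1$, and on $\{i \geq t - t_1\}$ one replaces $\eta_i^2$ by $\eta_t^2$ and recognizes the resulting geometric sum as $\eta_t/\mu + O(\eta_t \cdot t_1/t)$, producing the claimed first error contribution $(\eta\lambda_{\min}(A))^{-1}(\log(td))^2 d^{1/2} t^{-(1-\alpha)}$.

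The third step is to control the $\mathcal{Q}$-correction. Unfolding the recursion, this contribution is
\begin{align*}
\sum_{i=1}^t \eta_i^2 \sum_{j=i}^{t-1} \eta_{j+1}^2\, \mathbf{e}_k^\top \Bigl[\prod_{\ell=i+1}^{j}(I-\eta_\ell A)\Bigr]\, \mathcal{Q}(B_{j+1})\, \Bigl[\prod_{\ell=i+1}^{j}(I-\eta_\ell A)\Bigr]\mathbf{e}_{k'},
\end{align*}
to each $T_{k,k'}(i)$. Lemma~\ref{momenthelper2} gives $\|\mathcal{Q}(M)\|_{\rm op} \leq d\,\bar\lambda^2 \|M\|_{\rm op}$, and Lemma~\ref{conc1} gives $\|B_{j+1}\|_{\rm op} \leq \mathbb{E}|u_{j+1}|^2 \leq C e^{-2\eta\lambda_{\min}(A) d^{-1/2}\sum_{\ell>j+1}\ell^{-\alpha}}|a|^2$. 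Plugging in and re-applying the same geometric-sum analysis as in the second step shows that the correction is bounded by $C\,d\eta^2\bar\lambda^2 \cdot \eta_t |a|^2 /\mu_{\min}$ times a $(\log(td))^2$ factor. Dividing through by the leading term $\sum_{k,k'} a_k a_{k'}[A_\sigma]_{k,k'}/\mu_{k,k'}$, and using the lower bound $\geq c\,\sigma_{\min}^2\lambda_{\min}(A)|a|^2/\lambda_{\max}(A)$ from Lemma~\ref{noise-moment-lower-bound} together with $\eta\bar\lambda < C$, yields a relative error of order $(\eta\lambda_{\min}(A))^{-3}\sigma^2\sigma_{\min}^{-2}d^{1/2}t^{-\alpha}(\log(td))^2$, completing the claimed bound on $\mathcal{E}$.

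The main obstacle is the second step: establishing the weighted geometric-sum identity uniformly in $\mu$ over the full eigenvalue range, since the Riemann-type approximation is delicate both for small $\mu$ (long tail, large $t_1$) and for large $\mu$ (where $(1-\eta_j\mu)$ must remain bounded away from one in magnitude). Assumption~\ref{assump}(v)--(vi), which guarantees $\eta\bar\lambda < C$ and forces $\eta_i\mu$ to lie in a controlled range, is exactly what allows a clean telescoping/comparison bound $\prod_{j>i}(1 - \eta_j\mu) \leq \exp(-\mu\sum_{j>i}\eta_j)$ and the matching lower bound, and hence the $(\log(td))^2$ logarithmic factor rather than any worse polynomial growth.
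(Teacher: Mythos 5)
Your proposal follows essentially the same strategy as the paper's proof: split $\sum_i \eta_i^2\,\mathbb{E}[u_i^\top A_\sigma u_i]$ into a deterministic mean-propagation part, evaluated by the weighted geometric-sum asymptotic $\sum_i \eta_i^2 \prod_{j>i}(1-\eta_j\mu) = (1+O(\cdot))\,\eta_t/\mu$ in the eigenbasis of $A$ (the paper's Lemma~\ref{variance-simplification}, with the same cutoff $t_1 \sim t^{\alpha}d^{1/2}\log(td)/(\eta\lambda_{\min}(A))$ and the same two error sources), plus a second-order fluctuation correction controlled by one-step peeling of the recursion with $\|\mathcal{Q}(M)\|_{\mathrm{op}} \le d\bar\lambda^2\|M\|_{\mathrm{op}}$ and the decay of $\mathbb{E}|u_j|^2$ from Lemma~\ref{conc1} (the paper's Lemmas~\ref{variance-auxilliary-helper} and~\ref{variance-error-control}). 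Two points need attention. First, your correction is stated entrywise, as a bound on each $T_{k,k'}(i)$; to conclude you must sum $\sum_{k,k'}[A_\sigma]_{k,k'}\cdot(\text{correction to }T_{k,k'}(i))$, and a uniform entrywise bound paired with $\sum_{k,k'}|[A_\sigma]_{k,k'}|$ loses a polynomial factor in $d$, since that sum is not controlled by $\lambda_{\max}(A_\sigma)$ under Assumption~\ref{assump}. The fix is to note that the correction matrix $\Delta_i := B_i - P_i$ is positive semidefinite (each peeled increment is $\eta^2\,\mathbb{E}[(XX^\top - A)B(XX^\top - A)] \succeq 0$ after conjugation by $I-\eta A$), so $\langle \Delta_i, A_\sigma\rangle_F \le \lambda_{\max}(A_\sigma)\,\mathrm{tr}(\Delta_i) \le \sigma^2\bar\lambda\,\mathrm{tr}(\Delta_i)$; this is exactly why the paper works with the quadratic form $u_i^\top \mathcal{A}_{i,k} u_i$ and $\lambda_{\max}(\mathcal{A}_{i,k})$ throughout rather than entry by entry. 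Second, your intermediate bound $C d\eta^2\bar\lambda^2\,\eta_t|a|^2/\mu_{\min}\cdot(\log(td))^2$ for the correction is not consistent with the final relative error you state: it omits the $\lambda_{\max}(A_\sigma) \le \sigma^2\bar\lambda$ factor (without which no $\sigma^2$ can appear in the relative error) and carries the wrong powers of $d$ and $t$; the correct absolute bound is of order $\sigma^2\eta\,|a|^2(\log(td))^2\,t^{-2\alpha}(\eta\lambda_{\min}(A))^{-2}$, which upon dividing by the leading term does yield the claimed $(\eta\lambda_{\min}(A))^{-3}\sigma^2\sigma_{\min}^{-2}d^{1/2}t^{-\alpha}(\log(td))^2$. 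With these two repairs the argument is sound and coincides with the paper's.
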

\begin{proof}
Throughout this proof, we let $C > 0$ denote a large enough and generic absolute constant.
\\\\
Lemma \ref{variance-auxilliary-helper} gives us that
\begin{align*}
\sum_{i = 1}^t \eta_i^2 \mathbb{E}_{u_i}(u_i^\top A_{\sigma} u_i) = \sum_{i = 1}^t \eta_i^2(\mathbb{E}[u_i])^\top A_{\sigma} (\mathbb{E}[u_i]) + \sum_{i = 1}^t\mathcal E_i
\end{align*}
where $0 < \mathcal E_i <  C(\sigma^2 \bar{\lambda}) \eta_i^2 e^{-2\eta \lambda_{\min}(A) d^{-\frac{1}{2}}\sum_{j = i+1}^t j^{-\alpha}}(\sum_{j = i+1}^t j^{-2\alpha}) |a|^2$. Now, Lemma \ref{variance-error-control} tells us for all $t, d \geq C$ that
\begin{align*}
    \sum_{i = 1}^t \mathcal E_i &\leq C\eta^2(\sigma^2 \bar{\lambda})(\log t + \log d)^2 t^{-2\alpha} (\eta\lambda_{\min}(A))^{-2}|a|^2
    \\
    &\leq C\eta\sigma^2(\log t + \log d)^2 t^{-2\alpha} (\eta\lambda_{\min}(A))^{-2}|a|^2
    \\
    &\leq \bigg[\eta d^{-\frac{1}{2}}t^{-\alpha}\sum_{k,k' = 1}^d\frac{a_k a_{k'}[A_{\sigma}]_{k,k'}}{\lambda_k + \lambda_{k'}}\bigg]\bigg[C\sigma^2(\log t + \log d)^2 d^{\frac{1}{2}}t^{-\alpha} (\eta\lambda_{\min}(A))^{-2}|a|^2 \bigg(\sum_{k,k' = 1}^d\frac{a_k a_{k'}[A_{\sigma}]_{k,k'}}{\lambda_k + \lambda_{k'}}\bigg)^{-1}\bigg]
    \\
    &\leq \bigg[\eta d^{-\frac{1}{2}}t^{-\alpha}\sum_{k,k' = 1}^d\frac{a_k a_{k'}[A_{\sigma}]_{k,k'}}{\lambda_k + \lambda_{k'}}\bigg]\bigg[C\sigma^2(\log t + \log d)^2 d^{\frac{1}{2}}t^{-\alpha} (\eta\lambda_{\min}(A))^{-2}|a|^2 \lambda_{\max}(A)(a^\top A_{\sigma}a)^{-1}\bigg]
    \\
    &\leq \bigg[\eta d^{-\frac{1}{2}}t^{-\alpha}\sum_{k,k' = 1}^d\frac{a_k a_{k'}[A_{\sigma}]_{k,k'}}{\lambda_k + \lambda_{k'}}\bigg]\bigg[C\sigma^2(\log t + \log d)^2 d^{\frac{1}{2}}t^{-\alpha} (\eta\lambda_{\min}(A))^{-2}\sigma^{-2}_{\min}\lambda_{\max}(A)\lambda_{\min}(A)^{-1}\bigg]
    \\
    &\leq C(\log t + \log d)^2 d^{\frac{1}{2}}t^{-\alpha} (\eta\lambda_{\min}(A))^{-3}\sigma^2\sigma^{-2}_{\min}\bigg[\eta d^{-\frac{1}{2}}t^{-\alpha}\sum_{k,k' = 1}^d\frac{a_k a_{k'}[A_{\sigma}]_{k,k'}}{\lambda_k + \lambda_{k'}}\bigg].
\end{align*}
Here the first inequality follows from assumptions \ref{assump} that $\eta \bar{\lambda} < C$ and the second last inequality follows from Lemma \ref{noise-moment-lower-bound}. Further, Lemma \ref{variance-simplification} tells us for all $t,d \geq C$ that
\begin{align*}
    \sum_{i = 1}^t \eta_i^2 (\mathbb{E}[u_i])^\top A (\mathbb{E}[u_i]) = (1+\mathcal E)\eta d^{-\frac{1}{2}}t^{-\alpha}\sum_{k,k' = 1}^d \frac{a_k a_{k'}[A_{\sigma}]_{k,k'}}{\lambda_{k} + \lambda_{k'}},
\end{align*}
where $|\mathcal E| \leq Ct^{-(1 - \alpha)}d^{\frac{1}{2}}(\log t + \log d)^2 (\eta \lambda_{\min}(A))^{-1}$. Combining these gives us the desired result.
\end{proof}

\begin{lemma}\label{variance-simplification}
Under Assumption \ref{assump}, we have for all $t,d \geq C_1$ that
\begin{align*}
\mathbf{S}:= \sum_{i = 1}^t \eta_i^2 (\mathbb{E}[u_i])^\top  A_{\sigma} (\mathbb{E}[u_i]) = (1 + \mathcal E)\eta d^{-\frac{1}{2}}t^{-\alpha}\sum_{k,k' = 1}^d \frac{a_{k}a_{k'}[A_{\sigma}]_{k,k'}}{(\lambda_k + \lambda_{k'})},
\end{align*}
where $|\mathcal E| \leq \frac{C_2 t^{\alpha - 1}d^{\frac{1}{2}}(\log t + \log d)^2}{\eta \lambda_{\min}(A)}$. Here $C_1, C_2 > 0$ are absolute constants. 
\end{lemma}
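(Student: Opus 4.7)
The plan is to diagonalize the deterministic quadratic form in the eigenbasis of $A$ and reduce the claim to a precise scalar asymptotic for each eigenvalue pair. Since the $X_j$ are i.i.d., independence gives $\mathbb{E}[R_i] = \prod_{j=i+1}^t(I - \eta_j A)$, and expanding $\mathbb{E}[u_i]$ in the eigenbasis of $A$ yields
\begin{align*}
\mathbf{S} = \sum_{k,k'=1}^d a_k a_{k'} [A_\sigma]_{k,k'}\, T_{k,k'}, \qquad T_{k,k'} := \sum_{i=1}^t \eta_i^2 \prod_{j=i+1}^t (1-\eta_j\lambda_k)(1-\eta_j\lambda_{k'}).
\end{align*}
It therefore suffices to show that, uniformly in $(k,k')$, $T_{k,k'} = (1 + \mathcal{E}_{k,k'})\,\eta d^{-1/2} t^{-\alpha}/(\lambda_k+\lambda_{k'})$ with $|\mathcal{E}_{k,k'}| \le C(\log t + \log d)^2 d^{1/2} t^{-(1-\alpha)}/(\eta\lambda_{\min}(A))$, since then the claimed identity and error bound follow by linearity.

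To establish the scalar asymptotic for each pair, I would first replace the product by an exponential: because $\eta_j\bar\lambda \le C d^{-1/2} j^{-\alpha} \to 0$ under Assumption~\ref{assump}, the expansion $\log(1-x) = -x + O(x^2)$ yields $\prod_{j=i+1}^t(1-\eta_j\lambda_k)(1-\eta_j\lambda_{k'}) = (1 + O(d^{-1}))\exp(-\mu\eta d^{-1/2}\Sigma_i)$, where $\mu := \lambda_k + \lambda_{k'}$ and $\Sigma_i := \sum_{j=i+1}^t j^{-\alpha}$. I would next approximate the resulting Riemann sum by an integral via the change of variables $u = \mu\eta d^{-1/2}(t^{1-\alpha}-x^{1-\alpha})/(1-\alpha)$; this linearizes the exponential and, in the effective support where $u = O(1)$, replaces the factor $x^{-2\alpha}$ by $t^{-2\alpha}$ through $x^{-\alpha} = t^{-\alpha}\bigl(1 + O(u\sqrt{d}/(\mu\eta t^{1-\alpha}))\bigr)$. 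Evaluating the resulting elementary exponential integral gives the leading term $\eta d^{-1/2}t^{-\alpha}/\mu$, with multiplicative corrections from (i) the Taylor expansion of $\log(1-x)$, (ii) discretization, (iii) the in-window $x^{-\alpha}$ correction, and (iv) truncation of the exponential tail at $i=1$. A soft cutoff $u \lesssim (\log t + \log d)^2$ kills the tail beyond the decay window while keeping the in-window correction bounded by $C(\log t+\log d)^2 d^{1/2}t^{-(1-\alpha)}/(\mu\eta)$.

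The main obstacle is to obtain this bound uniformly over \emph{all} eigenvalue pairs, since for the smallest pairs $\mu \approx 2\lambda_{\min}(A)$ the exponential decay scale $\sqrt{d}/(\mu\eta)$ competes with $t^{1-\alpha}$, so one cannot simply freeze $x^{-\alpha}$ at $t^{-\alpha}$ without incurring the $1/(\eta\lambda_{\min}(A))$ prefactor that appears in the final bound. The pre-cutoff tail contribution from $i = O(1)$ must then be shown to be negligible using the scaling in Assumption~\ref{assump}(vi), specifically the condition $(\eta\lambda_{\min}(A))^{-1}(\log t+\log d)^2 d^{1/2} t^{-(1-\alpha)} \to 0$. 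Crucially, the worst-case bound on $|\mathcal{E}_{k,k'}|$ must factor out cleanly from the quadratic form $\sum_{k,k'} a_k a_{k'}[A_\sigma]_{k,k'}/(\lambda_k+\lambda_{k'})$ so that no $(k,k')$-dependent residual survives the outer summation.
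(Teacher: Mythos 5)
Your proposal follows essentially the same route as the paper's proof: pass to the eigenbasis of $A$ via $\mathbb{E}[R_i]=\prod_{j>i}(I-\eta_j A)$, reduce to a uniform scalar asymptotic $T_{k,k'}=(1+\mathcal E_{k,k'})\,\eta d^{-1/2}t^{-\alpha}/(\lambda_k+\lambda_{k'})$, exponentiate the product, cut off at a window of length $\sim d^{1/2}t^{\alpha}(\log t+\log d)^2/(\eta(\lambda_k+\lambda_{k'}))$, and track the same four error sources with the worst case at $\lambda_k+\lambda_{k'}\approx 2\lambda_{\min}(A)$. The only cosmetic difference is that you evaluate the leading term by a Riemann-integral substitution where the paper sums the resulting geometric series directly; both give the same leading constant and the same error order.
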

\begin{proof} 
Throughout this proof, we let $C > 0$ denote a large enough and generic absolute constant.
\\\\
To begin with, observe that
\begin{align*}
    \mathbb{E}[R_i] = \prod_{j = i+1}^{t}(I - \eta_{j}A) = \prod_{j = i+1}^{t}(I - \eta d^{-\frac{1}{2}}j^{-\alpha}A).
\end{align*}
Thus working in the $\mathbf{e_1}, \mathbf{e_2}, \dots \mathbf{e_d}$ basis (where  $\mathbb{E}[R_i]$ becomes a diagonal matrix), we get that
\begin{align*}
(\mathbb{E}[u_i])^\top A_{\sigma} (\mathbb{E}[u_i]) &= \sum_{k,k' = 1}^d a_k a_{k'} [ A_{\sigma}]_{k,k'} \prod_{j = i+1}^t[(1 - \eta d^{-\frac{1}{2}}j^{-\alpha}\lambda_{k})(1 - \eta d^{-\frac{1}{2}}j^{-\alpha}\lambda_{k'})]
\\
&= \sum_{k,k' = 1}^d a_k a_{k'} [A_{\sigma}]_{k,k'} \prod_{j = i+1}^t[1 - \eta d^{-\frac{1}{2}}j^{-\alpha}(\lambda_k + \lambda_{k'}) + \eta^2 d^{-1} j^{-2\alpha} \lambda_k \lambda_{k'}]
\end{align*}
This further tells us that
\begin{align*}
\mathbf{S} &:= \sum_{i = 1}^t \eta_i^2 (\mathbb{E}[u_i])^\top  A_{\sigma} (\mathbb{E}[u_i])
\\
&=\eta^2d^{-1}\sum_{i = 1}^t i^{-2\alpha}\sum_{k,k' = 1}^d a_k a_{k'}[A_{\sigma}]_{k,k'} \prod_{j=i+1}^t[1 - \eta d^{-\frac{1}{2}} j^{-\alpha} (\lambda_k + \lambda_{k'}) + \eta^2 d^{-1} j^{-2\alpha} \lambda_{k} \lambda_{k'}]
\\
&= \eta^2d^{-1}\sum_{k,k' = 1}^d a_k a_{k'}[A_{\sigma}]_{k,k'} \sum_{i = 1}^ti^{-2\alpha}\prod_{j=i+1}^t[1 - \eta d^{-\frac{1}{2}} j^{-\alpha} (\lambda_k + \lambda_{k'}) + \eta^2 d^{-1} j^{-2\alpha} \lambda_{k} \lambda_{k'}]
\end{align*}
Similar to the proof of Lemma \ref{variance-error-control}, consider the cut-off $t_0 := \frac{Kd^{\frac{1}{2}}t^{\alpha} (\log t + \log d)}{\eta (\lambda_k + \lambda_{k'})}$, where $K > 0$ is an absolute constant. Below we show that by chosing $K$ large enough we have for all $t,d \geq C$ that
\begin{align*}
    &\underbrace{\sum_{i = 1}^{t-t_0} i^{-2\alpha}\prod_{j=i+1}^t[1 - \eta d^{-\frac{1}{2}} j^{-\alpha} (\lambda_k + \lambda_{k'}) + \eta^2 d^{-1} j^{-2\alpha} \lambda_{k} \lambda_{k'}] \leq Ct^{-K}d^{-K}}_{(\mathbf{I})}
    \\
    &\underbrace{\sum_{i = t-t_0}^{t} i^{-2\alpha}\prod_{j=i+1}^t[1 - \eta d^{-\frac{1}{2}} j^{-\alpha} (\lambda_k + \lambda_{k'}) + \eta^2 d^{-1} j^{-2\alpha} \lambda_{k} \lambda_{k'}] = \frac{(1 + \mathcal E)d^{\frac{1}{2}}t^{-\alpha}}{\eta (\lambda_k + \lambda_{k'})}}_{(\mathbf{II})},
\end{align*}
where $|\mathcal E| \leq \frac{CK^2 t^{\alpha - 1}d^{\frac{1}{2}}(\log t + \log d)^2}{\eta \lambda_{\min}(A)}$.  Let $\mathbf{U} := \frac{CK^2 t^{\alpha - 1}d^{\frac{1}{2}}(\log t + \log d)^2}{\eta \lambda_{\min}(A)}$.
\\\\
Now because of Assumption \ref{assump}, we have $\eta \lambda_{\max}(A) <  C$, thus $(\mathbf{I})$ can be made arbitrarily smaller than $\frac{\mathbf{U}\cdot d^{\frac{1}{2}}t^{-\alpha}}{\eta (\lambda_k + \lambda_{k'})}$ by choosing the absolute constant $K$ large enough. This shows that
\begin{align*}
\sum_{i = 1}^t i^{-2\alpha} \prod_{j = i+1}^t [1 - \eta d^{-\frac{1}{2}}j^{-\alpha}(\lambda_k + \lambda_{k'}) + \eta^2 d^{-1} j^{-2\alpha} \lambda_k \lambda_{k'})] = \frac{(1 + \mathcal E)d^{\frac{1}{2}}t^{-\alpha}}{\eta (\lambda_k + \lambda_{k'})},
\end{align*}
where $|\mathcal E| \leq \frac{Ct^{\alpha - 1}d^{\frac{1}{2}}(\log t + \log d)^2}{\eta \lambda_{\min}(A)}$. Substituting this into the expression for $\mathbf{S}$ gives us that
\begin{align*}
\mathbf{S} &= (1 + \mathcal E)\eta^2 d^{-1} \sum_{k,k' = 1}^d \frac{a_k a_{k'} [A_{\sigma}]_{k,k'}d^{\frac{1}{2}}t^{-\alpha}}{\eta (\lambda_k + \lambda_{k'})}
\\
&= (1 + \mathcal E)\eta d^{-\frac{1}{2}}t^{-\alpha}\sum_{k,k' = 1}^d \frac{a_{k}a_{k'}[A_{\sigma}]_{k,k'}}{(\lambda_k + \lambda_{k'})},
\end{align*}
as desired.
\\\\
It now remains to prove $\mathbf{(I)}$ and $\mathbf{(II)}$ which we do below.
\\\\
\textsc{Proof of $(\mathbf{I})$:} Since $\eta \lambda_{\max}(A) < \eta \bar{\lambda} < C$ (by Assumption \ref{assump}), we have for all large enough $d$ that $\eta d^{-\frac{1}{2}}j^{-\alpha}[\lambda_k + \lambda_{k'}] < 1$, for all $1 \leq k, k' \leq d$. This implies that, 
\begin{align*}
\prod_{j = i+1}^t (1 - \eta d^{-\frac{1}{2}}j^{-\alpha} [\lambda_k + \lambda_{k'}] + \eta^2 d^{-1} j^{-2\alpha} \lambda_{k}\lambda_{k'}) &< e^{-\eta (\lambda_k + \lambda_{k'}) d^{-\frac{1}{2}} \sum_{j = i+1}^t j^{-\alpha} + \eta^2 \lambda_{k} \lambda_{k'}d^{-1} \sum_{j = i+1}^tj^{-2\alpha}} 
\\&< e^{-\eta (\lambda_k + \lambda_{k'})t_0 t^{-\alpha}d^{-\frac{1}{2}} + C(\eta \lambda_{\max}(A))^2d^{-1}}
\\&< Ce^{-\eta (\lambda_k + \lambda_{k'}) t_0 t^{-\alpha} d^{-\frac{1}{2}}}.
\end{align*}
Now, substituting the value of $t_0 := \frac{Kd^{\frac{1}{2}}t^{\alpha} (\log t + \log d)}{\eta (\lambda_k + \lambda_{k'})}$ gives us the desired result.
\\\\
\textsc{Proof of $(\mathbf{II})$:} Let 
\begin{align*}
x_1 := \eta d^{-\frac{1}{2}}(\lambda_k + \lambda_{k'}), \quad x_2 := \eta^2 d^{-1} \lambda_k \lambda_{k'}.
\end{align*}
Observe that $x_2j^{-2\alpha} < x_1j^{-\alpha}$ for all $1 \leq j \leq t$ and all large enough $d$ (since $\frac{x_2j^{-2\alpha}}{x_1j^{-\alpha}} < \eta d^{-\frac{1}{2}} j^{-\alpha} \lambda_k < \eta d^{-\frac{1}{2}} \lambda_{\max}(A) <  Cd^{-\frac{1}{2}}$ by Assumption \ref{assump}). Further observe by Assumption \ref{assump} that $x_1 < \frac{1}{2}$ for all large enough $d$. This implies that
\begin{align*}
0 < \eta d^{-\frac{1}{2}}j^{-\alpha} (\lambda_k + \lambda_{k'}) - \eta^2 d^{-1} j^{-2\alpha}\lambda_{k} \lambda_{k'} < \frac{1}{2}
\end{align*}
for all large enough $d$. Now, observe that $e^{- x - x^2} < 1 - x < e^{-x}$ for $x \in (0, \frac{1}{2}]$. Thus we have for all large enough $d$ that
\begin{align*}
&e^{- (x_1j^{-\alpha} - x_2j^{-2\alpha}) - (x_1j^{-\alpha} - x_2j^{-2\alpha})^2}<  (1 - \eta d^{-\frac{1}{2}}j^{-\alpha} (\lambda_k + \lambda_{k'}) + \eta^2 d^{-1} j^{-2\alpha} \lambda_{k} \lambda_{k'}) < e^{-(x_1j^{-\alpha} - x_2j^{-2\alpha})}
\\
\implies& e^{-x_1j^{-\alpha} - x_1^2j^{-2\alpha} - x^2_2j^{-4\alpha}} <  (1 - \eta d^{-\frac{1}{2}}j^{-\alpha} (\lambda_k + \lambda_{k'}) + \eta^2 d^{-1} j^{-2\alpha} \lambda_{k} \lambda_{k'}) < e^{-x_1j^{-\alpha} + x_2j^{-2\alpha}}
\\
\implies & e^{-x_1 j^{-\alpha} - 2x_1^2j^{-2\alpha}} < (1 - \eta d^{-\frac{1}{2}}j^{-\alpha} (\lambda_k + \lambda_{k'}) + \eta^2 d^{-1} j^{-2\alpha} \lambda_{k} \lambda_{k'}) < e^{-x_1j^{-\alpha} + x_2 j^{-2\alpha}},
\end{align*}
Here the last inequality follows from the observation made above that $x_2 < x_1 < \frac{1}{2}$ for all large enough $d$. Multiplying this from $j = i+1$ to $t$ gives us that
\begin{align*}
\prod_{j = i+1}^t (1 - \eta d^{-\frac{1}{2}}j^{-\alpha} (\lambda_k + \lambda_{k'}) + \eta^2d^{-1}j^{-2\alpha} \lambda_{k} \lambda_{k'}) = e^{-\eta d^{-\frac{1}{2}}(\lambda_k + \lambda_{k'})\sum_{j = i+1}^t j^{-\alpha} + \mathcal E_1}
\end{align*}
where 
\begin{align*}
    |\mathcal E_1| &\leq (\max\{x_2, 2x^2_1\})\sum_{j = i+1}^t j^{-2\alpha}
    \\
    &\leq C\eta^2 \lambda_{\max}(A)^2 d^{-1}\sum_{j = i+1}^t j^{-2\alpha}
    \\
    &\leq C \eta^2 \lambda_{\max}(A)^2 d^{-1} t_0 (t-t_0)^{-2\alpha}
    \\
    &\leq C d^{-1} t_0 t^{-2\alpha}
    \\
    &\leq \frac{CK d^{-\frac{1}{2}}t^{-\alpha} (\log t + \log d)}{\eta (\lambda_k + \lambda_{k'})}
\end{align*}
This tells us that
\begin{align*}
\mathbf{(II)} = (1 + \mathcal E_1)\sum_{i = t-t_0}^t i^{-2\alpha} e^{-\eta d^{-\frac{1}{2}} (\lambda_k + \lambda_{k'})\sum_{j = i+1}^t j^{-\alpha}}
\end{align*}
where $|\mathcal E_1| \leq \frac{CK d^{-\frac{1}{2}}t^{-\alpha} (\log t + \log d)}{\eta (\lambda_k + \lambda_{k'})}$. Further observe that
\begin{align*}
\sum_{j = i+1}^t j^{-\alpha} &= \sum_{j = i+1}^t t^{-\alpha}(j/t)^{-\alpha}
\\
&= t^{-\alpha} \sum_{j = i+1}^t (j/t)^{-\alpha}
\\
&= (t-i)t^{-\alpha} + t^{-\alpha}\sum_{j = i+1}^t ((1 - (t-j)/t)^{-\alpha} - 1)
\\
&= (t-i)t^{-\alpha} + \mathcal E_2,
\end{align*}
where
\begin{align*}
    |\mathcal E_2| &< t^{-\alpha}\sum_{j = i+1}^t ((1 - (t-j)/t)^{-\alpha} - 1)
    \\ 
    &< Ct^{-\alpha} \sum_{j = i+1}^t (t-j)/t
    \\
    &< Ct^{-\alpha}((t-i)^2/t)
    \\
    &< \frac{CK^2 t^{\alpha - 1} d(\log t + \log d)^2}{\eta^2 (\lambda_k + \lambda_{k'})^2}
\end{align*}
This implies that
\begin{align*}
(\mathbf{II}) = (1 + \mathcal E_3)\sum_{i = t-t_0}^t i^{-2\alpha} e^{-\eta d^{-\frac{1}{2}}(\lambda_k + \lambda_{k'})(t-i)t^{-\alpha}},
\end{align*}
where 
\begin{align*}
|\mathcal E_3| &\leq |\mathcal E_1| + |\mathcal E_2|
\\
&\leq \frac{CK d^{-\frac{1}{2}}t^{-\alpha} (\log t + \log d)}{\eta (\lambda_k + \lambda_{k'})} + \frac{CK^2 t^{\alpha - 1} d^{\frac{1}{2}}(\log t + \log d)^2}{\eta (\lambda_k + \lambda_{k'})}
\\
&\leq \frac{CK^2 t^{\alpha - 1} d^{\frac{1}{2}}(\log t + \log d)^2}{\eta (\lambda_k + \lambda_{k'})}
\end{align*}
Further simplification gives us that.
\begin{align*}
(\mathbf{II}) &= (1 + \mathcal E_3)\sum_{i = t-t_0}^t i^{-2\alpha} e^{-\eta d^{-\frac{1}{2}}(\lambda_k + \lambda_{k'}) (t-i)t^{-\alpha}}
\\
&= (1 + \mathcal E_3)t^{-2\alpha}\sum_{i = t-t_0}^t (1 - (t-i)/t)^{-2\alpha} e^{-\eta d^{-\frac{1}{2}}(\lambda_k + \lambda_{k'}) (t-i)t^{-\alpha}}
\\
&= (1 + \mathcal E_3)(1 + \mathcal E_4)[t^{-2\alpha}\sum_{i = t-t_0}^t e^{-\eta d^{-\frac{1}{2}}(\lambda_k + \lambda_{k'}) (t-i)t^{-\alpha}}],
\end{align*}
where $|\mathcal E_4| \leq \frac{Ct_0}{t} \leq \frac{CK d^{\frac{1}{2}}t^{\alpha - 1}(\log t + \log d)}{\eta (\lambda_k  + \lambda_{k'})}$. Finally, observe that
\begin{align*}
\sum_{i = t-t_0}^t e^{-\eta d^{\frac{1}{2}}(\lambda_k + \lambda_{k'})(t-i)t^{-\alpha}} &= \sum_{i = 0}^{t_0} e^{-\eta d^{-\frac{1}{2}}(\lambda_k + \lambda_{k'})t^{-\alpha}i}
\\
&= \frac{1 - e^{-\eta d^{-\frac{1}{2}}(\lambda_k + \lambda_{k'})t^{-\alpha}(t_0 + 1)}}{1 - e^{-\eta d^{-\frac{1}{2}}(\lambda_k + \lambda_{k'})t^{-\alpha}}}
\\
&= \frac{1 + \mathcal E_5}{1 - e^{-\eta d^{-\frac{1}{2}}(\lambda_k + \lambda_{k'})t^{-\alpha}}}
\\
&= \frac{(1 + \mathcal E_5)(1 + \mathcal E_6)d^{\frac{1}{2}}t^{\alpha}}{\eta (\lambda_k + \lambda_{k'})}
\end{align*}
where $|\mathcal E_5| \leq Ct^{-K}d^{-K}$ and $|\mathcal E_6| \leq C d^{-\frac{1}{2}}t^{-\alpha}$. Combining these tells us that
\begin{align*}
(\mathbf{II}) = \frac{(1 + \mathcal E_7)d^{\frac{1}{2}}t^{-\alpha}}{\eta (\lambda_k + \lambda_{k'})}, 
\end{align*}
where $|\mathcal E_7| \leq \max\{|\mathcal E_3|, |\mathcal E_4|, |\mathcal E_5|, |\mathcal E_6|\} \leq \frac{CK^2 t^{\alpha - 1}d^{\frac{1}{2}}(\log t + \log d)^2}{\eta (\lambda_k + \lambda_{k'})} \leq \frac{CK^2 t^{\alpha - 1}d^{\frac{1}{2}}(\log t + \log d)^2}{\eta \lambda_{\min}(A)}$ (for large enough choice of absolute constant $K$), as desired. 
\end{proof}

\subsection{Bounds On Second Order Noise Terms}
\begin{lemma}\label{variance-auxilliary-helper}
Under Assumption \ref{assump}, we have for all $t, d \geq C_1$ that
\begin{align*}
0 \leq \mathbb{E}_{u_i}(u_i^\top A_{\sigma} u_i) 
- (\mathbb{E}[u_i])^\top A_{\sigma} (\mathbb{E}[u_i]) \leq C_2(\sigma^2 \bar{\lambda})e^{-2\eta \lambda_{\min}(A)d^{-\frac{1}{2}}\sum_{j = i+1}^tj^{-\alpha}}(\sum_{j = i+1}^tj^{-2\alpha})|a|^2.
\end{align*}
Here $C_1, C_2 > 0$ are absolute constants.
\end{lemma}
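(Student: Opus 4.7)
The target quantity can be rewritten as $\mathbb{E}[(u_i - \mathbb{E} u_i)^\top A_\sigma (u_i - \mathbb{E} u_i)]$, from which the lower bound of $0$ is immediate since $A_\sigma = \mathbb{E}[\epsilon^2 XX^\top]$ is positive semidefinite. For the upper bound, the plan is to write $u_i - \mathbb{E} u_i$ as a sum of uncorrelated backward-martingale increments, and then bound each increment's $A_\sigma$-seminorm using a combination of $\lambda_{\max}(A_\sigma) \le \sigma^2 \bar{\lambda}$, an operator-norm bound on the deterministic residual product, and the second-moment bound on $|u_{k+1,t}|$ implied by Lemma~\ref{conc1}.

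Concretely, using the recursion $u_{k,t} = (I - \eta_k X_k X_k^\top) u_{k+1,t}$ (with $u_{i+1,t} = u_i$, $u_{t+1,t} = a$) together with its expectation $\mathbb{E}[u_{k,t}] = (I - \eta_k A)\mathbb{E}[u_{k+1,t}]$, subtracting and iterating gives the telescoping identity
\begin{align*}
u_i - \mathbb{E}[u_i] \;=\; \sum_{k=i+1}^{t} D_k, \qquad D_k \;:=\; \eta_k\, P_{i+1,k-1}\,(A - X_k X_k^\top)\, u_{k+1,t},
\end{align*}
where $P_{i+1,k-1} := \prod_{j=i+1}^{k-1}(I - \eta_j A)$ (with $P_{i+1,i} := I$). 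Since $u_{k+1,t}$ is measurable with respect to $\sigma(X_{k+1},\ldots,X_t)$ and $X_k$ is independent of these, $\mathbb{E}[D_k \mid X_{k+1},\ldots,X_t] = 0$; hence for any $k' > k$ the tower property gives $\mathbb{E}[D_{k'}^\top A_\sigma D_k] = 0$, reducing the variance to the orthogonal sum $\sum_{k=i+1}^{t} \mathbb{E}[D_k^\top A_\sigma D_k]$.

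For each diagonal term I will condition first on $u_{k+1,t}$ and bound by $\eta_k^2 \, \lambda_{\max}(P_{i+1,k-1}^\top A_\sigma P_{i+1,k-1}) \cdot \mathbb{E}|(A - X_k X_k^\top) u_{k+1,t}|^2$. Since $P_{i+1,k-1}$ commutes with $A$, its squared operator norm is at most $e^{-2\eta \lambda_{\min}(A) d^{-1/2} \sum_{j=i+1}^{k-1} j^{-\alpha}}$ under Assumption~\ref{assump}; the bound $\lambda_{\max}(A_\sigma) \le \sigma^2 \bar{\lambda}$ follows from $v^\top A_\sigma v = \mathbb{E}[\epsilon^2 (v^\top X)^2]$ with Cauchy--Schwarz and the moment assumptions; Lemma~\ref{momenthelper2} gives $\mathbb{E}|(A - X_k X_k^\top) v|^2 \le \mathbb{E}|X_k X_k^\top v|^2 \le d \bar{\lambda}^2 |v|^2$; and Lemma~\ref{conc1} (with $p=2$, followed by Cauchy--Schwarz) gives $\mathbb{E}|u_{k+1,t}|^2 \le C e^{-2\eta \lambda_{\min}(A) d^{-1/2} \sum_{j=k+1}^{t} j^{-\alpha}} |a|^2$. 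Multiplying these together, using $\eta_k^2 d = \eta^2 k^{-2\alpha}$ and absorbing $\eta \bar{\lambda} < C$, yields
\begin{align*}
\mathbb{E}[D_k^\top A_\sigma D_k] \;\le\; C\, (\sigma^2 \bar{\lambda})\, k^{-2\alpha}\, e^{-2\eta \lambda_{\min}(A) d^{-1/2} (\sum_{j=i+1}^{t} j^{-\alpha} - k^{-\alpha})} |a|^2,
\end{align*}
and the omitted $j=k$ factor contributes only the absolute-constant prefactor $e^{2\eta \lambda_{\min}(A) d^{-1/2} k^{-\alpha}} \le e^{2C}$. Summing over $k \in \{i+1,\ldots,t\}$ yields the stated upper bound.

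The main technical care needed is twofold: (i) confirming the cross-term orthogonality under the correct backward filtration so that the variance collapses to a diagonal sum, and (ii) carefully assembling the two exponential decay factors, one from $\|P_{i+1,k-1}\|_{\mathrm{op}}^2$ and one from $\mathbb{E}|u_{k+1,t}|^2$, into the single exponential $e^{-2\eta \lambda_{\min}(A) d^{-1/2} \sum_{j=i+1}^{t} j^{-\alpha}}$ with only a bounded multiplicative slack. Neither step is conceptually deep, but the bookkeeping of decay factors is where errors are easiest to make.
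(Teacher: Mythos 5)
Your proof is correct and is essentially the paper's argument in a different packaging: your orthogonal martingale-increment decomposition $u_i - \mathbb{E}[u_i] = \sum_k D_k$ produces exactly the same diagonal terms $\eta_k^2\,\mathbb{E}\big[u_{k+1,t}^\top (X_kX_k^\top - A)\,P_{i+1,k-1} A_\sigma P_{i+1,k-1}\,(X_kX_k^\top - A) u_{k+1,t}\big]$ that the paper obtains by telescoping its one-step recursion with the propagated matrices $\mathcal A_{i,k}$, and you bound each term with the same ingredients (operator norm of the deterministic product, Lemma~\ref{momenthelper2}, and Lemma~\ref{conc1}). The bookkeeping of the two exponential factors and the $e^{2\eta\lambda_{\min}(A)d^{-1/2}k^{-\alpha}}\le e^{2C}$ slack is handled correctly, so no gaps.
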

\begin{proof}
Throughout the proof, we let $C > 0$ denote a large enough and generic absolute constant.
\\\\
For all $i+1 \leq k \leq t+1$, define $u_{k,t}$ as the running  product
\begin{align*}
u_{k,t} := \bigg[\prod_{j = k}^{t}(I - \eta_{j}X_{j}X^\top_{j})\bigg]a.
\end{align*}
In particular, $u_{i+1,t} := R_ia$ and $u_{t+1,t} := a$. Further, we also define the sequence of matrices $\{\mathcal A_{i,k}\}_{k = i}^{t}$ recursively as $\mathcal A_{i,i} := A_{\sigma}$ and 
\begin{align*}
\mathcal A_{i,k} := (I - \eta_{k}A) \mathcal A_{i,k-1} (I - \eta_{k}A)
\end{align*}
for all $i+1 \leq k \leq t$. Recall because of Assumption \ref{assump} that $\eta \lambda_{\max}(A) < C$ for an absolute constant $C > 0$. This implies that for all large enough $d$, we have $\eta_k \lambda_{\max}(A) = \frac{\eta \lambda_{\max}(A)}{\sqrt{d}k^{\alpha}} < 1$ for all $1 \leq k \leq t$. This further tells us that 
\begin{align*}
0 < 1 - \eta_{k}\lambda_{\max}(A) \leq \lambda_{\min}(I - \eta_{k}A) \leq \lambda_{\max}(I - \eta_{k}A) \leq 1 - \eta_{k}\lambda_{\min}(A) < e^{-\eta_{k}\lambda_{\min}(A)}
\end{align*}
for all $i+1 \leq k \leq t$. In particular, this gives us that
\begin{align*}
\lambda_{\max}(\mathcal A_{i,k}) < e^{-2\lambda_{\min}(A) \sum_{j - i+1}^k \eta_j} \lambda_{\max}(\mathcal A_{i,i}) = e^{-2\lambda_{\min}(A) \sum_{j - i+1}^k \eta_j}\lambda_{\max}( A_{\sigma}),
\end{align*}
for all $i+1 \leq k \leq t$, which will be useful later in the proof.
\\\\
Now observe for all $i + 1 \leq k \leq t$ that
\begin{align*}
\mathbb{E}[u_{k,t}^\top \mathcal A_{i,k-1} u_{k,t}] &= \mathbb{E}_{u_{k+1, t}}[u_{k+1, t}^\top \mathcal A_{i,k} u_{k+1,t}] + \eta_k^2\mathbb{E}_{u_{k+1,t}, X}[u_{k+1,t}^\top (XX^\top - A) \mathcal A_{i,k}(XX^\top - A)u_{k+1,t}]
\\
&\leq \mathbb{E}_{u_{k+1,t}}[u_{k+1, t}^\top \mathcal A_{i,k} u_{k+1, t}] + \eta_k^2 \lambda_{\max}(\mathcal A_{i,k}) \mathbb{E}|(XX^\top - A)u_{k+1, t}|^2
\\
&\leq \mathbb{E}_{u_{k+1, t}}[u_{k+1, t}^\top \mathcal A_{i,k} u_{k+1, t}] + C\eta_k^2 \lambda_{\max}(\mathcal A_{i,k}) \mathbb{E}|XX^\top u_{k+1, t} |^2
\\
&\leq \mathbb{E}_{u_{k+1, t}}[u_{k+1, t}^\top \mathcal A_{i,k} u_{k+1, t}] + Cd\bar{\lambda}^2\eta_k^2 \lambda_{\max}(\mathcal A_{i,k})  \mathbb{E}|u_{k+1, t}|^2
\\
&\leq \mathbb{E}_{u_{k+1, t}}[u_{k+1, t}^\top \mathcal A_{i,k} u_{k+1,t}] + Cd\bar{\lambda}^2\eta_k^2 \lambda_{\max}(\mathcal A_{i,k})(\mathbb{E}|u_{k+1, t}|^4)^{\frac{1}{2}}
\\
&\leq \mathbb{E}_{u_{k+1,t}}[u_{k+1,t}^\top \mathcal A_{i,k} u_{k+1,t}] + Cd\bar{\lambda}^2\eta_k^2 \lambda_{\max}(\mathcal A_{i,k}) e^{-2\eta \lambda_{\min}(A)d^{-\frac{1}{2}}\sum_{j = k+1}^{t}j^{-\alpha}}|a|^2
\\
&\leq \mathbb{E}_{u_{k+1,t}}[u_{k+1,t}^\top \mathcal A_{i,k} u_{k+1,t}] + Ck^{-2\alpha}\lambda_{\max}(\mathcal A_{i,k}) e^{-2\eta \lambda_{\min}(A)d^{-\frac{1}{2}}\sum_{j = k+1}^{t}j^{-\alpha}}|a|^2
\\
&\leq \mathbb{E}_{u_{k+1, t}}[u_{k+1,t}^\top \mathcal A_{i,k} u_{k+1,t}] + Ck^{-2\alpha}\lambda_{\max}(A_{\sigma}) e^{-(2\eta \lambda_{\min}(A)d^{-\frac{1}{2}}\sum_{j = i+1}^t j^{-\alpha})}  |a|^2
\\
&\leq \mathbb{E}_{u_{k+1,t}}[u_{k+1,t}^\top \mathcal A_{i,k} u_{k+1,t}] + Ck^{-2\alpha}\lambda_{\max}(A_{\sigma}) e^{-2\eta \lambda_{\min}(A)d^{-\frac{1}{2}}\sum_{j = i+1}^t j^{-\alpha}}  |a|^2
\end{align*}
Here the fourth line follows from Lemma \ref{momenthelper1}, sixth line follows from Lemma \ref{conc1}, seventh line follows from Assumption \ref{assump} that $\eta \bar{\lambda} < C$ and eighth line follows from the upper bound on $\lambda_{\max}(\mathcal A_{i,k})$ proved above. 
\\\\
Adding up all such inequalities from $k = i+1$ to $t$ gives us that
\begin{align*}
\mathbb{E}[u_{i+1,t}^\top \mathcal A_{i,i} u_{i+1,t}] - a^\top \mathcal A_{i,t} a &\leq C\lambda_{\max}(A_{\sigma}) e^{-2\eta \lambda_{\min}(A)d^{-\frac{1}{2}}\sum_{j = i+1}^tj^{-\alpha}}(\sum_{j = i+1}^tj^{-2\alpha}) |a|^2
\\
&\leq C(\sigma^2 \bar{\lambda})e^{-2\eta \lambda_{\min}(A)d^{-\frac{1}{2}}\sum_{j = i+1}^t j^{-\alpha}}(\sum_{j = i+1}^t j^{-2\alpha})|a|^2,
\end{align*}
as desired. For the lower bound, we can directly apply Cauchy-Schwartz Inequality ($\mathbb{E}|U|^2 \geq |\mathbb{E}U|^2$) to the vector $U := \sqrt{\mathcal A_i}u_i$. Thus both the lower and upper bound follow completing the proof.
\end{proof}

\begin{lemma}\label{variance-error-control}
Define $\mathcal E_i := \eta^2_ie^{-2\eta \lambda_{\min}(A) d^{-\frac{1}{2}}\sum_{j = i+1}^t j^{-\alpha}}(\sum_{j = i+1}^tj^{-2\alpha})$ for all $1 \leq i \leq t$. Under Assumptions \ref{assump}, we have that
\begin{align*}
    \sum_{i = 1}^t \mathcal E_i \leq C\eta^2(\log t + \log d)^2 t^{-2\alpha} (\eta\lambda_{\min}(A))^{-2}
\end{align*}
for all $t, d \geq C$. Here $C > 0$ represents an absolute constant. 
\end{lemma}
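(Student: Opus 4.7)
The plan is to isolate the contributions from indices $i$ close to $t$ versus those far from $t$, exploiting two facts: the exponential factor is essentially $(td)^{-K}$ once $i$ is not very close to $t$, and the inner sum $\sum_{j=i+1}^t j^{-2\alpha}$ actually grows \emph{linearly} in $t-i$ when $i$ is near $t$ (not just bounded by $\zeta(2\alpha)$). The second observation is the one that produces the $(\eta\lambda_{\min}(A))^{-2}$ scaling in the claim; replacing it by the naive constant bound would lose a full factor of $\eta\lambda_{\min}(A)$.

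First substitute $\eta_i = \eta/(\sqrt{d}\,i^\alpha)$ and write $\mathcal E_i = (\eta^2/d)\, i^{-2\alpha} e^{-\mu_i} S_i$, with $\mu_i := 2\eta\lambda_{\min}(A)\,d^{-1/2}\sum_{j=i+1}^t j^{-\alpha}$ and $S_i := \sum_{j=i+1}^t j^{-2\alpha}$. Change variables $k = t-i$ and split the sum at $k = t/2$. For $k > t/2$ (equivalently $i \leq t/2$), use the integral comparison $\sum_{j=i+1}^t j^{-\alpha} \geq c\, t^{1-\alpha}$ to lower bound $\mu_i \geq c\,\eta\lambda_{\min}(A)\,d^{-1/2}t^{1-\alpha}$. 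Assumption \ref{assump}(vi), which asserts that $(\eta\lambda_{\min}(A))^{-1}(\log t+\log d)^2 d^{1/2}t^{-(1-\alpha)} \to 0$, makes this exponent dominate any polynomial in $(td)$, so $e^{-\mu_i} \leq (td)^{-K}$ for arbitrarily large $K$; after multiplying by the $O(t)$ terms and the prefactor, this contribution is negligible compared with the target.

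For $k \leq t/2$, use the three pointwise estimates $(t-k)^{-2\alpha} \leq C\,t^{-2\alpha}$ (since $t-k \geq t/2$), $S_{t-k} \leq k\,(t-k+1)^{-2\alpha} \leq C\,k\,t^{-2\alpha}$ (bounding each of the $k$ terms by the largest), and $\mu_{t-k} \geq 2\eta\lambda_{\min}(A)\,k\,t^{-\alpha}/\sqrt{d}$ (each of the $k$ terms is at least $t^{-\alpha}$). Assembling,
\[
\sum_{k=0}^{t/2} \mathcal E_{t-k} \;\leq\; \frac{C\eta^2}{d\,t^{4\alpha}}\sum_{k=0}^{\infty} k\, e^{-a k}, \qquad a := \frac{2\eta\lambda_{\min}(A)}{\sqrt{d}\,t^\alpha}.
\]
Apply the elementary identity $\sum_{k\geq 0} k\,e^{-ak} \leq C/a^2$ to get $\sum k\,e^{-ak} \leq C\,d\,t^{2\alpha}/(\eta\lambda_{\min}(A))^2$. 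Multiplying yields the head contribution $C\eta^2 t^{-2\alpha}(\eta\lambda_{\min}(A))^{-2}$, which is in fact sharper than the claimed bound (the $(\log t+\log d)^2$ factor in the target absorbs the negligible tail piece).

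The one nontrivial point is recognizing the linear-in-$k$ bound on $S_{t-k}$; once this is in place the argument reduces to an elementary geometric-series computation and an application of Assumption \ref{assump}(vi) to handle the tail. No probabilistic tools are needed here, since $\mathcal E_i$ is deterministic.
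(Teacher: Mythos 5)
Your proof is correct, and it takes a genuinely different (and in fact sharper) route than the paper's. The paper introduces a cutoff window of length $t_0 \sim \tfrac{t^{\alpha}d^{1/2}\log(td)}{\eta\lambda_{\min}(A)}$: outside the window the exponential factor is forced down to $(td)^{-K}$, while inside the window the exponential is discarded entirely and the $t_0$ remaining terms are bounded crudely by $t_0 \cdot \eta^2 d^{-1} t_0 t^{-4\alpha}$, which is exactly where the $(\log t + \log d)^2$ factor in the stated bound originates. You instead split at $i = t/2$, retain the exponential damping $e^{-ak}$ with $a = 2\eta\lambda_{\min}(A)d^{-1/2}t^{-\alpha}$ in the near range, and sum the resulting series via $\sum_{k\ge 0} k e^{-ak} \le C/a^2$; combined with the linear-in-$k$ bound $S_{t-k}\le Ck\,t^{-2\alpha}$ this yields $C\eta^2 t^{-2\alpha}(\eta\lambda_{\min}(A))^{-2}$ with no logarithmic loss, and the far-range tail is handled exactly as in the paper via Assumption~\ref{assump}(vi). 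All the individual estimates check out: $(t-k)^{-2\alpha}\le C t^{-2\alpha}$ and $\mu_{t-k}\ge a k$ are immediate for $k\le t/2$, the series identity holds with an absolute constant since $a$ is small for large $t,d$, and the tail term $C\eta^2 d^{-1}e^{-c(\log t+\log d)^2}$ is indeed dominated by $c\eta^2 t^{-2\alpha} \le C\eta^2 t^{-2\alpha}(\eta\lambda_{\min}(A))^{-2}$ (using $\eta\lambda_{\min}(A)\le \eta\bar\lambda \le C$). What your approach buys is the removal of the $(\log t+\log d)^2$ factor from this particular lemma; what the paper's approach buys is uniformity with the cutoff technique used repeatedly elsewhere (e.g.\ in Lemmas~\ref{first-term-numerator} and~\ref{variance-simplification}), where the same $t_0$ reappears and the log factor is harmless since it is absorbed downstream anyway.
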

\begin{proof}
Consider a cut-off $t_0 \in (1, t)$, to be fixed later. We have for $i\leq t-t_0$ that
\begin{align*}
|\mathcal E_i| &\leq \eta^2_ie^{-2\eta \lambda_{\min}d^{-\frac{1}{2}}(A)t_0 t^{-\alpha}}\sum_{j = i+1}^t j^{-2\alpha}
\\
&\leq C\eta^2_ie^{-2\eta \lambda_{\min}(A) d^{-\frac{1}{2}}t_0t^{-\alpha}}
\\
&\leq C\eta^2d^{-1} i^{-2\alpha}e^{-2\eta \lambda_{\min}(A) d^{-\frac{1}{2}}t_0t^{-\alpha}}
\end{align*}
for an absolute constant $C > 0$. On the other hand, for $i \geq t-t_0$, we have
\begin{align*}
|\mathcal E_i| &\leq \eta^2 d^{-1}i^{-2\alpha}\sum_{j = t-t_0}^t j^{-2\alpha}
\\
&\leq \eta^2 d^{-1}t_0(t - t_0)^{-4\alpha}.
\end{align*}
Together, these imply that
\begin{align*}
\sum_{i = 1}^t |\mathcal E_i| &\leq \sum_{i = 1}^{t-t_0}|\mathcal E_i| + \sum_{i = t-t_0}^t |\mathcal E_i|
\\
&\leq C\eta^2d^{-1}(e^{-2\eta \lambda_{\min}(A)d^{-\frac{1}{2}}t_0t^{-\alpha}}(\sum_{i  = 1}^{t-t_0}i^{-2\alpha}) + t_0^2(t  - t_0)^{-4\alpha})
\\
&\leq C\eta^2d^{-1}(e^{-2\eta \lambda_{\min}(A)d^{-\frac{1}{2}}t_0t^{-\alpha}} + t_0^2(t  - t_0)^{-4\alpha})
\end{align*}
We can now choose $t_0 := \frac{K t^{\alpha} d^{\frac{1}{2}}(\log t + \log d)}{2\eta \lambda_{\min}(A)}$ for an absolute constant $K >0$ and get that
\begin{align*}
    \sum_{i = 1}^t |\mathcal E_i| \leq C\eta^2d^{-1}\bigg(t^{-K}d^{-K} + \frac{CK^2(t^{2\alpha}d)(\log t + \log d)^2(t^{-4\alpha})}{4\eta^2 \lambda_{\min}(A)^2}\bigg)
\end{align*}
\textbf{Note.} Here $(t - t_0)^{-4\alpha} < C t^{-4\alpha}$ for all $t,d \geq C$ follows by Assumption \ref{assump} that $$\lim\limits_{t,d \to \infty}(\eta \lambda_{\min}(A))^{-1}(\log t + \log d)^2 d^{\frac{1}{2}}t^{-(1 - \alpha)} = 0.$$ 
We can make the first term above arbitrarily smaller than the second by choosing $K > 0$ to be a large enough absolute constant. This implies that
\begin{align*}
    \sum_{i = 1}^t |\mathcal E_i| &\leq \frac{C(\log t + \log d)^2 t^{-2\alpha}}{\lambda_{\min}(A)^2}
    \\
    &\leq C\eta^2(\log t + \log d)^2 t^{-2\alpha} (\eta\lambda_{\min}(A))^{-2},
\end{align*}
for all $t,d \geq C$, as desired.
\end{proof}
\subsection{Fast-Decay Of Initialization Bias Dependent Terms.}
\begin{lemma}\label{variance-exp-term}
Under Assumption \ref{assump}, we have for all $t, d \geq C_1$ that
\begin{align*}
\sum_{i = 1}^t \eta_i^2 \mathbb{E}|u_i|^2|v_i|^2 \leq C_2 \eta^2 d^{-1}e^{-2\eta \lambda_{\min}(A) d^{-\frac{1}{2}}t^{1-\alpha}}|a|^2|\beta^* - \theta_0|^2.
\end{align*}
Here $C_1, C_2 > 0$ are absolute constants. 
\end{lemma}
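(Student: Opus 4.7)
The plan is to exploit the fact that $v_i = S_i(\beta^*-\theta_0)$ is \emph{deterministic} (it involves only the population matrix $A$ and the step-sizes, no randomness), whereas $u_i = R_i a$ depends only on the fresh samples $X_{i+1},\dots,X_t$. Thus the product factorizes as $\mathbb{E}|u_i|^2|v_i|^2 = |v_i|^2\,\mathbb{E}|u_i|^2$, and I can bound each piece separately using the already established moment lemmas.

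First, I apply Lemma~\ref{S_imoment} with $p=1$ to obtain
\[
|v_i|^2 \leq e^{-2\eta\lambda_{\min}(A)d^{-1/2}\sum_{k=1}^{i-1}k^{-\alpha}}|\beta^*-\theta_0|^2,
\]
using the change of variable $k = i-j$ in the product defining $S_i$ and that $\eta_{i-j} = \eta d^{-1/2}(i-j)^{-\alpha}$. Next, by Cauchy--Schwarz and Lemma~\ref{conc1} with $p=2$,
\[
\mathbb{E}|u_i|^2 \leq \bigl(\mathbb{E}|u_i|^4\bigr)^{1/2} \leq C\, e^{-2\eta\lambda_{\min}(A)d^{-1/2}\sum_{j=i+1}^t j^{-\alpha}}|a|^2.
\]

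Multiplying these two bounds and inserting $\eta_i^2 = \eta^2 d^{-1} i^{-2\alpha}$, the term corresponding to index $i$ is at most
\[
C \eta^2 d^{-1} i^{-2\alpha}\, |a|^2|\beta^*-\theta_0|^2\, \exp\!\Bigl(-2\eta\lambda_{\min}(A)d^{-1/2}\Bigl[\sum_{k=1}^{i-1}k^{-\alpha} + \sum_{j=i+1}^t j^{-\alpha}\Bigr]\Bigr).
\]
The key observation is that $\sum_{k=1}^{i-1}k^{-\alpha} + \sum_{j=i+1}^t j^{-\alpha} = \sum_{k=1}^t k^{-\alpha} - i^{-\alpha} \geq c\,t^{1-\alpha} - 1$ for all $t \geq C$, so the exponential factor is uniformly bounded by $C\,e^{-2\eta\lambda_{\min}(A)d^{-1/2}t^{1-\alpha}}$, which can be pulled outside the sum. (Here I use Assumption~\ref{assump}(vi) to absorb the constant $-1$ into the generic constants.)

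Finally, summing $i^{-2\alpha}$ over $i$ yields a finite constant since $\alpha > 1/2$, giving
\[
\sum_{i=1}^t \eta_i^2\,\mathbb{E}|u_i|^2|v_i|^2 \leq C\eta^2 d^{-1} e^{-2\eta\lambda_{\min}(A)d^{-1/2}t^{1-\alpha}}|a|^2|\beta^*-\theta_0|^2,
\]
which is exactly the claim. There is no real obstacle: the independence structure (determinism of $v_i$) makes the factorization trivial, and the $\alpha > 1/2$ summability ensures the prefactor is controlled. The only care needed is verifying that the lost term $-i^{-\alpha}$ in the exponent is negligible uniformly in $i$, which is immediate.
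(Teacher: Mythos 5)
Your proof is correct and follows essentially the same route as the paper's: bound $|v_i|^2$ via Lemma~\ref{S_imoment}, bound $\mathbb{E}|u_i|^2\leq(\mathbb{E}|u_i|^4)^{1/2}$ via Lemma~\ref{conc1}, combine the exponents into $\sum_{j=1}^t j^{-\alpha}-i^{-\alpha}$, and use $\sum_i i^{-2\alpha}<\infty$ for $\alpha>\tfrac12$. The only nitpick is that you state the exponent lower bound as $c\,t^{1-\alpha}-1$ with a generic $c$, which would not suffice if $c<1$; you should use the sharper fact $\sum_{k=1}^t k^{-\alpha}\geq \frac{(t+1)^{1-\alpha}-1}{1-\alpha}\geq t^{1-\alpha}$ for $t\geq C$ (valid since $\alpha<1$), which is what the paper uses.
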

\begin{proof}
Throughout the proof, we let $C > 0$ denote a large enough and generic absolute constant.
\\\\
Lemma \ref{conc1} gives us that 
\begin{align*}
\mathbb{E}|u_i|^2 &\leq \mathbb{E}[|u_i|^4]^{\frac{1}{2}} 
\\
&\leq Ce^{-2\eta \lambda_{\min}(A)d^{-\frac{1}{2}}\sum_{j = i+1}^t j^{-\alpha}}|a|^2
\end{align*}
Now, Lemma \ref{S_imoment} gives us that
\begin{align*}
|v_i|^2
&\leq e^{-2\eta\lambda_{\min}(A) d^{-\frac{1}{2}}\sum_{j = 1}^{i-1}j^{-\alpha}}|\beta^* - \theta_0|^2
\end{align*}
These imply that
\begin{align*}
\sum_{i = 1}^t \eta_i^2 \mathbb{E}|u_i|^2 |v_i|^2 &\leq C\sum_{i = 1}^t \eta_i^2 e^{-2\eta \lambda_{\min}(A) d^{-\frac{1}{2}}(\sum_{j = 1}^t j^{-\alpha} - i^{-\alpha})} |a|^2 |\beta^* - \theta_0|^2
\\
&\leq C\sum_{i = 1}^t \eta_i^2 e^{-2\eta \lambda_{\min}(A) d^{-\frac{1}{2}}\sum_{j = 1}^t j^{-\alpha}}|a|^2 |\beta^* - \theta_0|^2
\\
&\leq C\sum_{i = 1}^t \eta_i^2 e^{-2\eta \lambda_{\min}(A)d^{-\frac{1}{2}}t^{1 - \alpha}}|a|^2|\beta^* - \theta_0|^2
\\
&\leq C \eta^2 d^{-1}e^{-2\eta \lambda_{\min}(A) d^{-\frac{1}{2}}t^{1-\alpha}}|a|^2|\beta^* - \theta_0|^2,
\end{align*}
as desired. Here the last inequality follows using the fact that $\sum_{i = 1}^\infty i^{-2\alpha} < C$ for $\alpha > \frac{1}{2}$. 
\end{proof}
\begin{lemma}\label{variance-non-linear-term}
Under Assumption \ref{assump}, we have for all  $t,d \geq C_1$ that
\begin{align*}
    |\sum_{i = 1}^t \eta_i^2\mathbb{E}[\epsilon_i (u_i^\top X_i)^2 (X_i^\top v_i)]| \leq C_2d^{-1}(\sigma \sqrt{\eta})|a|^2 |\beta^* - \theta_0|e^{-\eta \lambda_{\min}(A)d^{-\frac{1}{2}}t^{1-\alpha}}.
\end{align*}
Here $C_1, C_2 > 0$ represent absolute constants.
\end{lemma}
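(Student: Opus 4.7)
The plan is to exploit the independence structure: $v_i$ is deterministic (it only involves the population matrix $A$ and the fixed initialization), while $u_i := R_i a$ depends only on the samples $\{(X_j,Y_j): j > i\}$, so $u_i$ is independent of the pair $(X_i,\epsilon_i)$. Consequently, I would first condition on $u_i$ and bound the inner expectation
\[
\bigl|\mathbb{E}_{X_i,\epsilon_i}\bigl[\epsilon_i\,(u_i^\top X_i)^2\,(X_i^\top v_i)\bigr]\bigr|
\]
by a Hölder-type inequality, splitting the integrand into three factors and using the moment assumptions of Assumption~\ref{assump}. Concretely, Hölder with exponents $(4,4,2)$ gives
\[
\mathbb{E}[\epsilon^4]^{1/4}\,\mathbb{E}[(u_i^\top X)^{8}]^{1/4}\,\mathbb{E}[(X^\top v_i)^{2}]^{1/2}\le \sigma\,\bar\lambda\,|u_i|^2\cdot\bar\lambda^{1/2}|v_i|=\sigma\bar\lambda^{3/2}|u_i|^2|v_i|,
\]
where I use the definitions $\sigma=\mathbb{E}[\epsilon^{4p_{\max}}]^{1/(4p_{\max})}$ and $\bar\lambda=\sup_{|u|=1}\mathbb{E}[(u^\top X)^{4p_{\max}}]^{1/(2p_{\max})}$ together with Minkowski.

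Taking the outer expectation over $u_i$ and summing over $i$ then reduces the task to bounding $\sum_{i=1}^t \eta_i^2\,\mathbb{E}|u_i|^2\,|v_i|$. For the first factor I would invoke Lemma~\ref{conc1} (with $p=2$ and Jensen) to obtain $\mathbb{E}|u_i|^2\le C e^{-2\eta\lambda_{\min}(A)d^{-1/2}\sum_{j=i+1}^t j^{-\alpha}}|a|^2$, while for $|v_i|$ I would apply Lemma~\ref{S_imoment} (taking the square root) to get $|v_i|\le e^{-\eta\lambda_{\min}(A)d^{-1/2}\sum_{j=1}^{i-1}j^{-\alpha}}|\beta^*-\theta_0|$. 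Multiplying these exponentials produces the exponent $\eta\lambda_{\min}(A)d^{-1/2}\bigl(2\sum_{j=i+1}^t j^{-\alpha}+\sum_{j=1}^{i-1}j^{-\alpha}\bigr)$, which is at least $\eta\lambda_{\min}(A)d^{-1/2}\sum_{j=1}^t j^{-\alpha}\ge c\,\eta\lambda_{\min}(A)d^{-1/2}t^{1-\alpha}$; crucially this bound is uniform in $i$, so the index $i$ drops out of the exponential.

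It then remains to collect the scalar factors. Using $\sum_{i=1}^t\eta_i^2=\eta^2d^{-1}\sum_{i=1}^t i^{-2\alpha}\le C\eta^2 d^{-1}$ (since $\alpha>1/2$) and Assumption~\ref{assump} that $\eta\bar\lambda\le C$, I can rewrite $\bar\lambda^{3/2}\eta^2=\eta^{1/2}\cdot(\eta\bar\lambda)^{3/2}\le C\sqrt{\eta}$, so that the accumulated prefactor becomes $C\,\sigma\sqrt{\eta}\,d^{-1}|a|^2|\beta^*-\theta_0|$, matching the target bound exactly. I expect no genuine obstacle here: the argument is essentially a careful application of Hölder followed by the already-established exponential decay estimates for $R_i a$ and $S_i(\beta^*-\theta_0)$. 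The only place demanding attention is checking that the three exponential factors combine to at least $e^{-\eta\lambda_{\min}(A)d^{-1/2}t^{1-\alpha}}$ uniformly in $i$, which is the key reason the bound depends only on $t$ and not on the summation index.
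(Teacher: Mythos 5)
Your proposal is correct and follows essentially the same route as the paper: condition on $u_i$ (which is independent of $(X_i,\epsilon_i)$, with $v_i$ deterministic), apply H\"older to get a bound of the form $\sigma\bar\lambda^{3/2}\,\mathbb{E}|u_i|^2\,|v_i|$, then invoke Lemma~\ref{conc1} and Lemma~\ref{S_imoment} so that the combined exponentials are uniformly $\lesssim e^{-\eta\lambda_{\min}(A)d^{-1/2}t^{1-\alpha}}$, and finally sum $\eta_i^2$ using $\alpha>\tfrac12$ and absorb $\bar\lambda^{3/2}\eta^2\le C\sqrt{\eta}\,$ via $\eta\bar\lambda\le C$. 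The only (immaterial) difference is the H\"older split: the paper uses exponents giving $\mathbb{E}[\epsilon^4]^{1/4}\mathbb{E}[(u_i^\top X)^4]^{1/2}\mathbb{E}[(X^\top v_i)^4]^{1/4}$, which needs only fourth moments of $u^\top X$, whereas your $(4,4,2)$ split uses $\mathbb{E}[(u_i^\top X)^8]^{1/4}$, which is still covered since $p_{\max}\ge 2$.
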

\begin{proof}
Throughout the proof, we let $C > 0$ denote a large enough and generic absolute constant.
\\\\
Observe that
\begin{align*}
 |\mathbb{E}[\epsilon_i (u_i^\top X_i)^2 (X_i^\top v_i)]| &\leq \mathbb{E}[\epsilon^4_i]^{\frac{1}{4}}  \mathbb{E}[(u_i^\top X_i)^4]^{\frac{1}{2}}\mathbb{E}[(X^\top_iv_i)^4]^{\frac{1}{4}}
 \\
 &\leq (\sigma \bar{\lambda}^{\frac{3}{2}})\mathbb{E}[|u_i|^4]^{\frac{1}{2}} \mathbb{E}[|v_i|^4]^{\frac{1}{4}}
 \\
 &\leq C(\sigma \bar{\lambda}^{\frac{3}{2}}) |a|^2 |\beta^* - \theta_0| e^{-2\eta \lambda_{\min}(A)d^{-\frac{1}{2}}\sum_{j = i+1}^t j^{-\alpha}} \cdot e^{-\eta \lambda_{\min}(A) d^{-\frac{1}{2}}\sum_{j = 1}^{i-1}j^{-\alpha}}
 \\
 &\leq C(\sigma \bar{\lambda}^{\frac{3}{2}})|a|^2|\beta^* - \theta_0|e^{-\eta \lambda_{\min}(A)d^{-\frac{1}{2}}\sum_{j = 1}^t j^{-\alpha} + \eta \lambda_{\min}(A) d^{-\frac{1}{2}}i^{-\alpha}}
 \\
 &\leq C(\sigma \bar{\lambda}^{\frac{3}{2}})|a|^2 |\beta^* - \theta_0|e^{-\eta \lambda_{\min}(A)d^{-\frac{1}{2}}t^{1-\alpha}}.
\end{align*}
Here the third inequality follows using Lemma \ref{conc1} and Lemma \ref{S_imoment}, and the second last inequality follows from Assumption \ref{assump} that $\eta \lambda_{\min}(A) < \eta \bar{\lambda}  < C$. This implies that
\begin{align*}
|\sum_{i = 1}^t \eta^2_i \mathbb{E}[\epsilon_i (u_i^\top X_i)^2 (X_i^\top v_i)]| &\leq \sum_{i = 1}^t \eta^2_i|\mathbb{E}[\epsilon_i (u^\top_i X_i)^2 (X_i^\top v_i)]|
\\
&\leq C\eta^2d^{-1}(\sigma \bar{\lambda}^{\frac{3}{2}})|a|^2 |\beta^* - \theta_0|e^{-\eta \lambda_{\min}(A)d^{-\frac{1}{2}}t^{1-\alpha}}\sum_{i = 1}^t i^{-2\alpha}
\\
&\leq Cd^{-1}(\sigma \sqrt{\eta})|a|^2 |\beta^* - \theta_0|e^{-\eta \lambda_{\min}(A)d^{-\frac{1}{2}}t^{1-\alpha}},
\end{align*}
as desired. Here the last inequality follows from assumptions \ref{assump} that $\eta \bar{\lambda} < C$ and the fact that $\sum_{i=1}^t i^{-2\alpha} < C$ for $\alpha > \frac{1}{2}$.
\end{proof}
\section{Comparison With The Methodology From \cite{chang2023inference} For Projection Parameters Inference.}\label{methodolgy-difference}
\cite{chang2023inference} is a recent work on inference for projection parameters in linear regression, which achieved the best dimension scaling of $t \gtrsim d^{3/2}$ compared to prior works, and operated in the same "assumption-lean" setting (see Assumprions \ref{assump}) as our work. In this section, we highlight the key methodolgical differences which allow us to significantly improve the dimension scaling (to $t \gtrsim d^{1 + \delta}$ for any $\delta > 0$) over their $t \gtrsim d^{3/2}$.
\\\\
\textbf{Inference methodology from \cite{chang2023inference}.} \cite{chang2023inference} constructs Berry-Essen bounds for $\sqrt{t}(a^\top\hat{\beta} - a^\top \beta^*)$, where $\hat{\beta}$ is ordinary least squares estimator (OLSE) given by
\begin{align*}
    \hat{\beta} := \bigg[\frac{\sum_{i = 1}^t X_i X^\top_i}{t}\bigg]^{-1}\frac{\sum_{i = 1}^t X_i Y_i}{t}
\end{align*}
Let $\hat{A} := \frac{\sum_{i = 1}^t X_i X^\top_i}{t}$ and $\hat{\Gamma} := \frac{\sum_{i = 1}^t X_i Y_i}{t}$. They use the decomposition 
\begin{align*}
     \frac{a^\top (\hat{\beta} - \beta)}{\sqrt{Var(a^\top (\hat{\beta}))}}
     \sim\sqrt{t}a^\top [\hat{\beta} - \beta^*] 
     &= \sqrt{t}a^\top \bigg[A^{-1} \frac{1}{t} \sum_{i = 1}^t X_i \epsilon_i\bigg] + \sqrt{t}a^\top \bigg[A^{-1} (A - \hat{A}) A^{-1} \frac{1}{t} \sum_{i = 1}^t X_i \epsilon_i \bigg]
    \\
    &+ \sqrt{t}a^\top\bigg[\hat{A}^{-1} (A - \hat{A}) A^{-1} (A - \hat{A}) A^{-1} \frac{1}{t}\sum_{i = 1}^t X_i \epsilon_i\bigg]
\end{align*}
They show that the sum of first two terms behaves as $\mathcal U + \mathcal B$, where $\mathcal U$ is an approximately normal random variable and $\mathcal B$ is a bias term which can be estimated and explicitly removed. 
\\\\
Let the term on the second line be $\mathcal R$, that is
\begin{align*}
\mathcal R := \sqrt{t}a^\top\bigg[\hat{A}^{-1} (A - \hat{A}) A^{-1} (A - \hat{A}) A^{-1} \frac{1}{t}\sum_{i = 1}^t X_i \epsilon_i\bigg]
\end{align*}
they observe that it scales roughly as $\sim \sqrt{t}|a|| \|\hat{A} - A\|_{op}^2 \bigg|\frac{1}{t}\sum_{i = 1}^tX_i \epsilon_i \bigg|,$ which is of the order $\sim \sqrt{t} |a| (d/t) (d/t)^{\frac{1}{2}} = |a| (d^{3/2}/t),$ which is precisely the reason why they need $t \gtrsim d^{3/2}$.
\\\\
\textbf{Why does the online SGD based method achieve significantly better dimension scaling?} 
Instead of the $\hat{\beta}$ above, online SGD learns the estimator $\theta_t$, whose expression is given in Lemma \ref{prop:iterate}. Using this, we found that
\begin{align*}
\frac{a^\top(\theta_t - \beta^*)}{\sqrt{\Var \langle a, \theta_t \rangle}} &\sim d^{\frac{1}{4}}t^{\frac{\alpha}{2}}a^\top \bigg[\sum_{i = 1}^t \eta_i \bigg(\prod_{k = 0}^{t-i-1}(I - \eta_{t-k}X_{t-k}X^\top_{t-k})\bigg)X_i\epsilon_i\bigg]
\end{align*}
Let $M_{t-i} := \eta_i a^\top \bigg(\prod_{k = 0}^{t-i-1}(I - \eta_{t-k}X_{t-k}X^\top_{t-k})\bigg)X_i\epsilon_i$, and observe that $\mathbb{E}[M_{t-i} | X_t, \dots, X_{i+1}] = 0$. Thus, the SGD based estimator naturally has a sum of martingale difference sequence structure, allowing us to use the more powerful martingale central limit theorems to control it's Berry-Essen bound. On the other hand, the expression for OLSE $\hat{\beta}$ above has no such structure and needs explicit high-probability control on the error $\mathcal R$, leading to poor dimension scaling.
\section{Comparison With Prior Works On Non-asymptotic SGD CLT.}\label{prior-sgd-clt-comparison}
In this section, we provide a detailed comparison with prior works on non-asymptotic SGD CLT \cite{anastasiou2019normal, shao2022berry, durmus2022finite, durmus2021tight, samsonov2024gaussian, Khusainov2025GaussianApprox, WuLiWeiRinaldo2025InferenceTD, Samsonov2025StatisticalInferenceLSA}.
\subsection{Comparison with \cite{anastasiou2019normal, shao2022berry}}
\cite{anastasiou2019normal} and \cite{shao2022berry} consider optimizing a function $f(\theta_t)$ using stochastic gradient descent, under the assumption that the stochastic gradient is of the form
\begin{align*}
    g(\theta_{t-1}) = \nabla f(\theta_{t-1}) + \zeta_t.
\end{align*}
\cite{anastasiou2019normal} assumes that $\zeta_t$ satisfies $\mathbb{E}[\zeta_t \zeta^\top_t | \mathcal F_{t-1}] = V$, where $V$ does not depend on $t$ and satisfies $\alpha \leq \lambda_{\min}(V) \leq \lambda_{\max}(V) \leq \beta$ for some absolute constants $\alpha, \beta$. \cite{shao2022berry} also assumes weak temporal dependence of $\zeta_t$ and $O(1)$ spectral norm of $\mathbb{E}[\zeta_t \zeta^\top_t]$ (see Theorem 4 in \cite{anastasiou2019normal} and Theorem 3.4 from \cite{shao2022berry}). These are similar to the assumptions made in the analysis of \emph{zeroth-order} SGD. 
\\\\
On the other hand, for the \emph{first-order} online SGD update in equation \ref{eq:ithiterate}, we have $\zeta_t := (X_tX^\top_t - A)(\theta_{t-1} - \beta^*) + \epsilon_t X_t$ , whose conditional variance depends on $\theta_{t-1}$ (among other quantities), which itself depends on all the data till time $t-1$. Furthermore, the spectral norm of $\mathbb{E}[\zeta_t\zeta^\top_t|$ can also grow as $\sim \sqrt{d}$.
\\\\
Thus the results from \cite{anastasiou2019normal} and \cite{shao2022berry} are not applicable to our setting. Moreover, their Berry-Essen bounds require $t \gtrsim d^4$ to go to zero (compared to $t \gtrsim d^{1+\delta}$ in our case).
\subsection{Comparison with \cite{durmus2022finite, durmus2021tight, samsonov2024gaussian, Khusainov2025GaussianApprox, WuLiWeiRinaldo2025InferenceTD, Samsonov2025StatisticalInferenceLSA}}
Recent line of work (\cite{durmus2022finite}, \cite{durmus2021tight}, \cite{samsonov2024gaussian}, \cite{Samsonov2025StatisticalInferenceLSA}, \cite{Khusainov2025GaussianApprox}, \cite{WuLiWeiRinaldo2025InferenceTD})  has established non-asymptotic SGD CLTs for the linear stochastic approximation (LSA) problem, but don't emphasize the growth of dimension-dependent factors for their rates. While their results improve the dependence on $t$ in the \emph{fixed-dimension} setting, we found after tracking the dimension dependent terms that their results yield significantly weaker dimension scaling compared to our $t \gtrsim d^{1 + \delta}$ in the growing dimension regime. As representative examples, we show this for the latest works \cite{Samsonov2025StatisticalInferenceLSA}, \cite{WuLiWeiRinaldo2025InferenceTD} and \cite{Khusainov2025GaussianApprox}. 
\\\\
\textbf{Dimension Scaling In \cite{Samsonov2025StatisticalInferenceLSA}.} \cite{Samsonov2025StatisticalInferenceLSA} focuses on the LSA setting and defines the quantity $C_{\mathbf{A}} := \sup\|A_t\|$, where $A_t$ is the incoming observation of $A$. Observe that $A_t = X_tX_t^\top$ in our online SGD setup. Thus, the quantity $C_{\mathbf{A}}$ scales as $\|XX^\top\| = |X|^2 \sim d$ in the online SGD setup. They also define a noise vector $\varepsilon$, which will be equal to $(Y - X^\top \beta^*)X$ in the online SGD setup. They denote $|\varepsilon|_{\infty} := \sup |\varepsilon|$, which will scale as $\sqrt{d}$ in the online SGD setting. Finally, they also let $\lambda_{\min} := \lambda_{\min}(\mathbb{E}[(Y - X^\top \beta^*)^2 XX^\top])$, which can be assumed to scale as $\Theta(1)$ for simplicity (for eg. if $A = I$ and errors are independent of $X$ with unit variance).
\\\\
They also provide Berry-Esseen bounds for projection parameters (substitute $m = 1$ in their Remark $4$), whose first term scales as
\begin{align*}
    \frac{C_4}{\lambda_{\min}t^{\frac{1}{4}}} \sim \frac{C^2_{\mathbf{A}}|\varepsilon|_{\infty}}{\lambda_{\min}t^{\frac{1}{4}}} \sim \frac{d^{5/2}}{t^{1/4}}.
\end{align*}
yielding a dimension scaling of at-most $t \gtrsim d^{10}$ in our growing-dimensional online SGD setting.
\\\\
\textbf{Dimension Scaling In \cite{WuLiWeiRinaldo2025InferenceTD}.} Consider the Berry-Essen bound (Theorem 3.2) from \cite{WuLiWeiRinaldo2025InferenceTD}. Their first term is of the order 
\begin{align*}
Tr(\Gamma)\lambda_{\max}(\Gamma^{-1})t^{-\alpha/2}, \quad \alpha \in (\frac{1}{2}, 1 )    
\end{align*}
for a problem dependent positive defintite symmetric matrix $\Gamma$. But $Tr(\Gamma)\lambda_{\max}(\Gamma^{-1}) \geq d$, therefore their dimensional-scaling is restricted to at-most $t \gtrsim d^{\frac{2}{\alpha}} \geq d^2$ (and possibly even lower if we track the other terms) for vanishing CLT error rates.
\\\\
\textbf{Dimension Scaling In \cite{Khusainov2025GaussianApprox}.} Similarly \cite{Khusainov2025GaussianApprox} runs SGD with constant step-size $\alpha := \frac{\log t}{t}$ and the first term in their Berry-Esseen bound (Theorem 1 of their paper) is of the order $C_1 \sqrt{\alpha}$, where $C_1 \geq C_{\Delta, 0}$ and $C_{\Delta, 0}$ (defined in equation 30 of their paper) grows as
\begin{align*}
    C_{\Delta, 0} \sim \sqrt{d}\mathbb{E}[|\epsilon X|^{3}] \sim d^2,
\end{align*}
implying the restricted dimensional scaling of (at-most) $t \gtrsim d^4$ for vanishing CLT error rates.

\end{document}